\renewcommand{\appendixtocname}{Appendix Contents.}
\let\oldappendix\appendices
\renewcommand{\appendices}{%
  \clearpage
  \renewcommand{\thesection}{\Roman{section}}
  % From now, everything goes to the app - file and not to the toc
  \let\tf@toc\tf@app
  \addtocontents{app}{\protect\setcounter{tocdepth}{2}}
  \immediate\write\@auxout{%
    \string\let\string\tf@toc\string\tf@app^^J
  }
  \oldappendix
}%
\newcommand{\listofappendices}{%
  \begingroup
  \renewcommand{\contentsname}{\appendixtocname}
  \let\@oldstarttoc\@starttoc
  \def\@starttoc##1{\@oldstarttoc{app}}
  \tableofcontents% Reusing the code for \tableofcontents with different \contentsname and different file handle app
  \endgroup
}
\pgfplotsset{compat=1.11}
\newlength\Origarrayrulewidth
\NewDocumentCommand\xDeclarePairedDelimiter{mmm}
{%
	\NewDocumentCommand#1{som}{%
		\IfNoValueTF{##2}
		{\IfBooleanTF{##1}{#2##3#3}{\mleft#2##3\mright#3}}
		{\mathopen{##2#2}##3\mathclose{##2#3}}%
	}%
}
\newcommand{\outcomment}[1]{}
\theoremstyle{plain}
\newtheorem{theorem}{Theorem}
\newtheorem{lemma}[theorem]{Lemma}
\newtheorem{proposition}[theorem]{Proposition}
\theoremstyle{definition}
\newenvironment{remark}
  {\pushQED{\qed}\remenv}
  {\popQED\endremenv}
\newenvironment{definition}
  {\pushQED{\qed}\defenv}
  {\popQED\endremenv}
\newenvironment{example}
  {\pushQED{\qed}\exenv}
  {\popQED\endremenv}
\newenvironment{assumption}
  {\pushQED{\qed}\assumptionenv}
  {\popQED\endremenv}
\newenvironment{numbered_notation}
  {\pushQED{\qed}\notenv}
  {\popQED\endnotenv}
\newcommand{\bbE}{\mathbb{E}}
\newcommand{\bbN}{\mathbb{N}}
\newcommand{\bbP}{\mathbb{P}}
\newcommand{\bbR}{\mathbb{R}}
\newcommand{\bbS}{\mathbb{S}}
\newcommand{\bbZ}{\mathbb{Z}}
\newcommand{\bbone}{\mathds{1}}
\newcommand{\calA}{\mathcal{A}}
\newcommand{\calF}{\mathcal{F}}
\newcommand{\calH}{\mathcal{H}}
\newcommand{\calI}{\mathcal{I}}
\newcommand{\calL}{\mathcal{L}}
\newcommand{\calN}{\mathcal{N}}
\newcommand{\calP}{\mathcal{P}}
\newcommand{\calQ}{\mathcal{Q}}
\newcommand{\calR}{\mathcal{R}}
\newcommand{\calS}{\mathcal{S}}
\newcommand{\calV}{\mathcal{V}}
\newcommand{\calW}{\mathcal{W}}
\providecommand{\bfb}{\boldsymbol{b}}
\providecommand{\bfh}{\boldsymbol{h}}
\providecommand{\bfx}{\boldsymbol{x}}
\providecommand{\bfy}{\boldsymbol{y}}
\providecommand{\bfz}{\boldsymbol{z}}
\providecommand{\bfW}{\boldsymbol{W}}
\providecommand{\bftheta}{\boldsymbol{\theta}}
\providecommand{\bfSigma}{\boldsymbol{\Sigma}}
\newcommand{\diff}{\,\mathrm{d}}
\newcommand{\quot}[1]{\enquote{#1}}
\newcommand{\equalDef}{\coloneqq}
\newcommand{\defEqual}{\eqqcolon}
\newcommand{\equivDef}{:\Leftrightarrow}
\DeclareMathOperator{\sign}{sgn}
\DeclareMathOperator{\Cov}{Cov}
\DeclareMathOperator{\erf}{erf}
\DeclareMathOperator{\sigmoid}{sigmoid}
\DeclareMathOperator{\softplus}{softplus}
\DeclareMathOperator{\RBF}{RBF}
\DeclareMathOperator{\even}{even}
\DeclareMathOperator{\odd}{odd}
\newcommand{\eps}{\varepsilon}
\newcommand{\todo}[1]{\textcolor{blue}{[TODO: #1]}}
\newcommand{\ntk}{k^{\mathrm{NTK}}}  % arguments should be activation, num layers, beta^2? Or even use \sigma_w^2 and \sigma_b^2? Same for NNGP then
\newcommand{\nngp}{k^{\mathrm{NNGP}}}
\newcommand{\kntk}{\kappa^{\mathrm{NTK}}}
\newcommand{\knngp}{\kappa^{\mathrm{NNGP}}}
\newcommand{\act}{\varphi}  %sigma is used for variances, phi for eigenfunctions
\newcommand{\scaledfn}[2]{#1_{\cdot #2}}
\newcommand{\acteven}{\act_{\mathrm{even}}}
\newcommand{\actodd}{\act_{\mathrm{odd}}}
\newcommand{\evenpart}[1]{\left(#1\right)_{\mathrm{even}}}
\newcommand{\oddpart}[1]{\left(#1\right)_{\mathrm{odd}}}
\newcommand{\feven}{f_{\mathrm{even}}}
\newcommand{\fodd}{f_{\mathrm{odd}}}
\newcommand{\geven}{g_{\mathrm{even}}}
\newcommand{\godd}{g_{\mathrm{odd}}}
\newcommand{\polyeq}{\simeq_{\mathrm{pol}}}
\newcommand{\s}{s}
\newcommand{\Eqref}[1]{Eq.~\eqref{#1}}
\newcommand\rwidehat[1]{%
\savestack{\tmpbox}{\stretchto{%
  \scaleto{%
    \scalerel*[\widthof{\ensuremath{#1}}]{\kern.1pt\mathchar"0362\kern.1pt}%
    {\rule{0ex}{\textheight}}%WIDTH-LIMITED CIRCUMFLEX
  }{\textheight}% 
}{2.4ex}}%
\stackon[-6.9pt]{#1}{\tmpbox}%
}
\renewcommand{\widehat}{\rwidehat}
\setlist[enumerate]{nosep}
\setlist[itemize]{nosep}
\newcommand\ackname{Acknowledgements}
   \newenvironment{acknowledgements}{%
       \titlepage
       \null\vfil
       \@beginparpenalty\@lowpenalty
       \begin{center}%
         \bfseries \ackname
         \@endparpenalty\@M
       \end{center}}%
      {\par\vfil\null\endtitlepage}
\DeclareMathOperator{\sgn}{sgn}
\newcommand{\scal}[2]{\left\langle #1,#2\right\rangle}			%scalar product
\newcommand{\w}[1]{\widehat{#1}}
\newcommand{\po}[2]{#1^{(#2)}}
\newcommand{\R}{\mathbb{R}}
\newcommand{\set}[1]{\{ #1\}}
\renewcommand*{\d}[1]{\, \operatorname{d}\! #1}
\newcommand{\E}{\mathbb E}
\newcommand{\Prob}{\mathbb{P}}
\xDeclarePairedDelimiter{\abs}{\lvert}{\rvert}
\xDeclarePairedDelimiter{\norm}{\lVert}{\rVert}
\newcommand{\setrange}[2]{\lbrace #1,\dots, #2\rbrace}
\xDeclarePairedDelimiter{\bra}{(}{)}
\xDeclarePairedDelimiter{\sbra}{[}{]}
\newcommand{\N}{\mathbb{N}}
\newcommand{\Z}{\mathbb{Z}}
\newcommand{\rand}[1]{Z^{#1}}
\newcommand{\ttt}{t}
\newcommand{\deven}[1]{\deg_{\text{ev}}\left(#1\right)}
\newcommand{\dodd}[1]{\deg_{\text{od}}\left(#1\right)}
\newcommand{\dseven}[1]{\deg_{\text{ev}}\left(#1\right)}
\newcommand{\dsodd}[1]{\deg_{\text{od}}\left(#1\right)}
\newcommand{\degree}[1]{\deg\left(#1\right)}
\newcommand{\dequal}{=_{\deg}}
\newcommand{\hdeven}[1]{\deg^{\text{Her}}_{\text{ev}}\left(#1\right)}
\newcommand{\hdodd}[1]{\deg^{\text{Her}}_{\text{od}}\left(#1\right)}
\newcommand{\hdegree}[1]{\deg^{\text{Her}}\left(#1\right)}
\newcommand{\hdequal}{=_{\deg}^{\text{Her}}}
\newcommand{\phdequal}{=_{\deg}^{\text{Pow,Her}}}
\newcommand{\funevenodd}{\calF_{\text{even/odd}}}
\newcommand{\Feven}{\calF_{\mathrm{even}}}
\newcommand{\Fodd}{\calF_{\mathrm{odd}}}
\newcommand{\eee}{\dseven{\w\act}}
\newcommand{\ooo}{\dsodd{\w\act}}
\renewcommand{\bibname}{References}
\begin{document}

\begingroup
\renewcommand{\thefootnote}{\fnsymbol{footnote}}

\twocolumn[

\aistatstitle{Beyond ReLU: How Activations Affect Neural Kernels and Random Wide Networks}

\aistatsauthor{David Holzmüller\footnotemark[1] \And Max David Schölpple\footnotemark[1]}	

\aistatsaddress{INRIA \And University of Stuttgart}]

\runningauthor{David Holzmüller${}^*$, Max David Schölpple${}^*$}

\footnotetext[1]{Equal contribution.}
\endgroup
\setcounter{footnote}{0}

\begin{abstract}
In recent years, the neural tangent kernel (NTK) and neural network Gaussian process kernel (NNGP) have given theoreticians tractable limiting cases of fully connected neural networks. However, the property of these kernels are poorly understood for activation functions other than powers of the ReLU. 
Our main contribution is a characterization of the RKHS of these kernels for activation functions whose only non-smoothness is at zero. 
This extends existing theory to numerous commonly used activation functions such as SELU, ELU, or LeakyReLU.
Additionally, we analyze a broad set of special cases such as missing biases, two-layer networks, or polynomial activations. 
Our results show that a broad class of not infinitely smooth activations generate equivalent RKHSs at different network depths, depending only on the degree of the non-smoothness up to equivalence. On the other hand, the RKHS generated by polynomial activations depends on the network depth. 
Finally, we derive results for the smoothness of NNGP sample paths, characterizing the smoothness of infinitely wide neural networks at initialization.
\end{abstract}

\section{INTRODUCTION}

Despite great efforts, our theoretical understanding of when and why deep learning works remains limited, and much of it is distributed across fragmented case studies that do not integrate into a bigger theory. For example, the influence of activation functions on the training dynamics and generalization behavior of deep learning methods is unclear.

One of the most prominent approaches towards deep learning theory is the study of infinite-width limits of neural networks \citep{neal_priors_1996, jacot_neural_2018, yang2021tensor}. Different parametrizations of neural networks lead to different behavior at infinite width \citep{yang2021tensor}. Here, we will focus on the neural tangent parametrization (NTP), whose infinite-width limit leads to kernel behavior \citep{jacot_neural_2018}. While the kernel regime cannot model all aspects of neural network behavior \citep{yehudai2019power,ortiz2021can,vyas2022limitations,wenger2023disconnect}, it is one of the models most amenable to theoretical analysis. In particular, at initialization, the function represented by certain infinite-width neural networks is distributed like a Gaussian process with the so-called neural network Gaussian process (NNGP) kernel \citep{neal_priors_1996, daniely_toward_2016, lee_deep_2018, matthews_gaussian_2018}. The training then follows the dynamics of kernel gradient descent with the so-called neural tangent kernel \citep[NTK,][]{jacot_neural_2018, lee_wide_2019}.

To understand the behavior of neural networks in the kernel regime, including training dynamics and generalization, it is essential to understand the properties of the associated kernels themselves. In particular, the structure of the reproducing kernel Hilbert spaces (RKHSs) corresponding to these kernels is central to further theoretical analysis. Here, we limit ourselves to fully connected neural networks, which are sufficient to study the influence of activation functions, and are still practically relevant by themselves \citep{holzmuller2024better, gorishniy2025tabm, ye2024closer, erickson2025tabarena}. While these kernels have been analyzed for (powers of) the ReLU activation and infinitely smooth activation functions \citep{chen_deep_2020, bietti_deep_2021, vakili_information_2023}, not much is known about their structure for other activation functions.

\paragraph{Contribution}

Our main contribution is \Cref{thm:main_result}, illustrated in \Cref{rem:simple_case}, where we prove a general result analyzing the structure of the RKHS corresponding to the NTK and NNGP kernels on the sphere $\bbS^d$ of fully connected neural networks. Our proof, of which an overview is given in \Cref{sec:appendix:overview}, builds upon the analysis of \cite{bietti_deep_2021}, but generalizes existing results in multiple ways:
\begin{itemize}
\item We study a significantly larger class of activations. In particular, we provide exact characterizations for typical activation functions that are infinitely smooth everywhere except at zero. We also provide new results for polynomial activation functions, and we more precisely analyze the conditions under which results for infinitely smooth activation functions hold.
\item We study a more general class of fully connected neural networks, covering the cases of normally distributed bias, zero-initialized bias, and no bias. We also study all special cases arising for two-layer networks.
\item We analyze all cases for both NNGP and NTK (except discontinuous activation functions, for which the NTK is not well-defined).
\end{itemize}

As an additional contribution, in \Cref{theorem:path_smoothness}, we show how the results on the RKHS translate to the smoothness of functions sampled from the Gaussian process with the NNGP kernel, and hence the smoothness of infinitely wide neural networks at initialization.

After discussing preliminaries (\Cref{sec:preliminaries}), we introduce our main results in \Cref{sec:main_results} and \Cref{sec:path_smoothness}. \Cref{sec:implications} discusses some implications of our results on the broader infinite-width theory, such as the (non-)benefit of depth, training dynamics, and generalization. Related work is discussed in \Cref{sec:related_work}, before \Cref{sec:conclusion} concludes with a discussion of possible extensions.

\section{PRELIMINARIES} \label{sec:preliminaries}
We denote the $d$-dimensional unit sphere as $\bbS^{d}\subset \R^{d+1}$, and the square integrable functions over a measure space $(X,\calA, \mu)$ as $L_2(X) \equalDef L_2(\mu) = \{f \mid \int_X f^2 ~d\mu < \infty\}$. For $s \geq 0$, we use $H^s(X)$ to denote the Sobolev space of order $s$, which essentially consists of the functions $f$ for which all (fractional) derivatives up to order $s$ are in $L_2(X)$.

\paragraph{Neural kernels} 
For $d \geq 1$, we consider fully connected neural networks (NNs) in neural tangent parametrization (NTP) operating on inputs $\bfx \in \bbR^{d_0}, d_0 \equalDef d+1$. Typically, neural networks use componentwise-normalized input vectors $\bfx \in \bbR^{d_0}$ such that $\|\bfx\|_2 \approx \sqrt{d_0}$. Following related theoretical literature, we instead consider inputs on the sphere, $\bfx \in \bbS^d = \{\bfz \in \bbR^{d_0} \mid \|\bfz\|_2 = 1\}$, and compensate for this by rescaling the first layer by $\sqrt{d_0}$, which leads to equivalent learning dynamics with gradient descent.
We consider the following network architecture:

\begin{definition}\label{def:network}
	Let $L \geq 2$ be the number of layers, let $\sigma_w > 0$, and let $\sigma_b, \sigma_i \geq 0$.
	The network is the function $f_{\bftheta}: \bbR^{d_0} \to \bbR^{d_L}$ given by $f_{\bftheta}(\bfx^{(0)}) \equalDef \bfz^{(L)}$, where	
	\begin{IEEEeqnarray*}{+rCl+x*}
		\bfz^{(1)} & \equalDef & \sigma_w \bfW^{(1)} \bfx^{(0)} + \sigma_b \bfb^{(1)} \in \bbR^{d_1}\\
		\bfz^{(l)} & \equalDef & \frac{\sigma_w}{\sqrt{d_{l-1}}} \bfW^{(l)} \bfx^{(l-1)} + \sigma_b \bfb^{(l)} \in \bbR^{d_l}, \\
		\bfx^{(l-1)} & \equalDef & \act(\bfz^{(l-1)}) \in \bbR^{d_l}. \qquad (l \geq 2)
	\end{IEEEeqnarray*}
	We assume that all parameters are initialized independently as $\bfW^{(l)}_{jk} \sim \calN(0, 1)$ and $b^{(l)}_j \sim \calN(0, \sigma_i^2)$.
\end{definition} 
\cref{def:network} subsumes the three cases of
\begin{itemize}
	\item bias-free networks by setting $\sigma_b = 0$,
	\item zero-initialized biases by setting $\sigma_b \neq 0$, $\sigma_i = 0$,
	\item randomly initialized biases by setting $\sigma_b, \sigma_i \neq 0$.
\end{itemize}
For $\sigma_i^2=1$, the recursive formula for NTK and NNGP kernels are derived in \cite{lee_wide_2019}.
We derive formulas for the different possible values of $\sigma_i,\sigma_b,\sigma_w$ in \cref{lemma:limit_ntk_formula}.
For the NNGP-kernel, the recursion is given by
	\begin{align*}
		\nngp_1(\bfx, \bar{\bfx}) \equalDef&\, \sigma_b^2 \sigma_i^2 + \sigma_w^2 \langle \bfx, \bar{\bfx}\rangle \\
		\nngp_{L}(\bfx, \bar{\bfx})  =&\, \sigma_b^2 \sigma_i^2 + \sigma_w^2 \bbE_{(u, v) \sim \bfSigma_{L-1}(\bfx, \bar{\bfx})} [\act(u) \act(v)]\\
		\bfSigma_{L}(\bfx, \bar{\bfx})  
		=&\, \begin{pmatrix}
			\nngp_{L}(\bfx, \bfx) & \nngp_{L}(\bfx, \bar{\bfx}) \\
			\nngp_{L}(\bar{\bfx}, \bfx) & \nngp_{L}(\bar{\bfx}, \bar{\bfx})
		\end{pmatrix}
		~,
	\end{align*}
	For the NTK, we refer to \cref{lemma:limit_ntk_formula}.

To analyze NTK and NNGP kernels, we restrict them to the sphere $\bbS^d$, where they are dot-product kernels due to the rotation-invariance induced by the Gaussian initialization of the weight matrices $\bfW^{(l)}$.
\paragraph{Dot-product kernels on the sphere} Let $\kappa: [-1, 1] \to \bbR$ such that $k \equalDef k_{\kappa, d}: \bbS^d \times \bbS^d \to \bbR, (\bfx, \bfx') \mapsto \kappa(\langle \bfx, \bfx' \rangle)$ is a (positive semidefinite) kernel. Then, $k$ is called a \emph{dot-product kernel}. 
We leverage the theory of dot-product kernels on spheres to study the RKHS $\calH_k$ associated with $k$ \citep[see e.g.][]{bietti_deep_2021, hubbert_sobolev_2023, haas_mind_2023}. In particular, these RKHSs can be characterized through the eigenvalues of the associated integral operator $T_k: L_2(\bbS^d) \to L_2(\bbS^d)$ given by  %
\begin{IEEEeqnarray*}{+rCl+x*}
(T_k f)(\bfx) = \int_{\bbS^d} k(\bfx, \bfx') f(\bfx') \diff \bfx'~.
\end{IEEEeqnarray*}
It can be diagonalized as
\begin{IEEEeqnarray*}{+rCl+x*}
T_k f & = & \sum_{l=0}^\infty \sum_{i=1}^{N_{l,d}} \mu_l Y_{l,i} \langle Y_{l,i}, f \rangle_{L_2(\bbS^d)}~,
\end{IEEEeqnarray*}
where $\{Y_{l,1}, \hdots, Y_{l, N_{l, d}}\}$ is an arbitrary orthonormal basis of the space of spherical harmonics (a subset of polynomials) of degree $l$ in $L_2(\bbS^d)$, cf.~\cite{mueller2006spherical}, but the details are not important in the following. The eigenfunctions $Y_{l, i}$ are the same for all dot-product kernels but the eigenvalues $\mu_l = \mu_l(k) = \mu_l(\kappa, d)$ can be different and characterize the RKHS $\calH_k$ associated with $k$:
\begin{IEEEeqnarray*}{+rCl+x*}
\calH_k = \left\{ \sum_{l=0}^\infty \sqrt{\mu_l} \sum_{i=1}^{N_{l,d}} a_{l,i} Y_{l,i} ~\middle|~ \sum_{l=0}^\infty \sum_{i=1}^{N_{l,d}} a_{l,i}^2 < \infty\right\}~.
\end{IEEEeqnarray*}
Its inner product is
\begin{IEEEeqnarray*}{+rCl+x*}
& & \left\langle \sum_{l=0}^\infty \sqrt{\mu_l} \sum_{i=1}^{N_{l,d}} a_{l,i} Y_{l,i}, \sum_{l=0}^\infty \sqrt{\mu_l} \sum_{i=1}^{N_{l,d}} b_{l,i} Y_{l,i}\right\rangle_{\calH_k} \\
& = & \sum_{l=0}^\infty \sum_{i=1}^{N_{l,d}} a_{l,i}b_{l,i}~.
\end{IEEEeqnarray*}
Hence, the RKHS as a set is characterized by 1) the asymptotic behavior of the eigenvalues and 2) the set of indices for which the eigenvalues are nonzero. 
By Proposition 3 in \cite{schoelpple_inside_2025}, if two RKHSs $\calH_1, \calH_2$ are equal as sets, then their norms are equivalent. In this case, we call $\calH_1, \calH_2$ \emph{equivalent}, and write $\calH_1 \cong \calH_2$. Polynomial eigenvalue decays lead to RKHSs equivalent to Sobolev spaces:
\begin{lemma}[Sobolev spaces on the sphere]\label{lemma:sobolev_spherical}
Let $d\in\N$ and $s>d/2$. The RKHS $\calH_k$ corresponding to the dot-product kernel $k$ on $\bbS^d$ is equivalent to the Sobolev space $H^s(\bbS^d)$ if and only if there exist $c, C > 0$ with $c(l+1)^{-2s} \leq \mu_l \leq C(l+1)^{-2s}$ for all $l \geq 0$. %
\end{lemma}
\begin{proof}
Depending on the definition of $H^s(\bbS^d)$, this follows either directly or from classical theory, see \cite{hubbert_sobolev_2023} and Lemma B.1 in \cite{haas_mind_2023}.
\end{proof}
Sobolev spaces on the sphere consist of those functions $f$ that locally, when composed with a smooth chart $\varphi: U \subseteq \bbR^d \to V \subseteq \bbS^d$, are Sobolev functions on $\bbR^d$, e.g.\ \cite{haas_mind_2023}, p.\ 34. %

\section{THE STRUCTURE OF NEURAL KERNELS} \label{sec:main_results}

Before we introduce our main theorem, we need to impose some assumptions on the involved activation functions.

\begin{definition}[Functions with polynomially bounded derivatives] \label{def:sinfty}
Let $I \subseteq \bbR$ be an interval. We define the set $\calS^{(\infty)}(I)$ to contain all functions $\varphi_I \in C^\infty(I)$ with polynomially bounded derivatives. In other words: For all $m \in \bbN_0$, there exist $a_m, b_m, q_m > 0$ such that for all $x \in I$, $|\varphi_I^{(m)}(x)| \leq a_m |x|^{q_m} + b_m$.
\end{definition}

Polynomial boundedness for $m \in \{0, 1\}$ is required to get the NTK formulas from \cite[Theorem 7.2, Box 1]{yang_tensor_programs_II_ntk_for_any_architecture}; for higher derivatives we only need $\varphi_I^{(m)} \in L_2(\calN(0, \sigma^2))$ for all $\sigma^2 > 0$.

\begin{assumption}[Activation function] \label{ass:act}
We assume that $\varphi: \bbR \to \bbR$ is of the form\footnote{The definition at zero ensures that $\varphi$ is continuous when $\varphi_+$ and $\varphi_-$ allow, and that its even/odd parts are zero everywhere whenever they are zero almost everywhere.}
\begin{IEEEeqnarray*}{+rCl+x*}
\varphi(x) & = & \begin{cases}
\varphi_+(x) &, x > 0 \\
\varphi_-(x) &, x < 0 \\
\frac{1}{2} (\lim_{x \searrow 0}\varphi_+(x) + \lim_{x \nearrow 0} \varphi_-(x)) &, x = 0
\end{cases}
\end{IEEEeqnarray*}
with $\varphi_- \in \calS^{(\infty)}((-\infty, 0))$ and $\varphi_+ \in \calS^{(\infty)}((0, \infty))$ and that $\varphi$ is not the zero function. 
\end{assumption}

\begin{restatable}[All common activation function satisfy \Cref{ass:act}]{proposition}{propActAssumptions}  \label{prop:act_assumptions}
\leavevmode
\begin{enumerate}[(a)]
\item If $f$ is constructed by addition, multiplication, and composition of polynomials, $\sigmoid$, $\tanh$, $\softplus$, $\sin$, $\cos$, $\RBF$ and $\Phi$, then $f \in \calS^{(\infty)}(\bbR)$. Moreover, the functions $g(x) = \exp(ax)$ are in $\calS^{(\infty)}((-\infty, 0])$ for every $a \geq 0$.
\item \Cref{ass:act} is satisfied for all activation functions from \Cref{table:acts}. %
It is also satisfied for all $\phi \in \calS^{(\infty)}(\bbR)$.
\end{enumerate}
\end{restatable}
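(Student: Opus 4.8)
The plan is to prove part (a) by showing that $\calS^{(\infty)}(\bbR)$ is closed under addition, multiplication and composition, and then checking the named building blocks one at a time; part (b) then follows with little extra work. Closure under addition is immediate, since differentiation commutes with sums and a sum of polynomially bounded functions is polynomially bounded. For products I would use the Leibniz rule $(fg)^{(m)} = \sum_{k=0}^m \binom{m}{k} f^{(k)} g^{(m-k)}$ together with the fact that a product of two functions of the form $a|x|^q + b$ is again polynomially bounded. For compositions $f \circ g$ with $f,g \in \calS^{(\infty)}(\bbR)$ I would apply Faà di Bruno's formula, which writes $(f\circ g)^{(m)}(x)$ as a finite sum of terms $c\, f^{(k)}(g(x)) \prod_j \bigl(g^{(j)}(x)\bigr)^{m_j}$ with $k \le m$; the only point that needs care is that $x \mapsto f^{(k)}(g(x))$ is polynomially bounded, which follows from $|f^{(k)}(y)| \le a_k |y|^{q_k} + b_k$, $|g(x)| \le a_0|x|^{q_0}+b_0$ and the elementary bound $(A+B)^q \le C_q(A^q+B^q)$ for $A,B \ge 0$ and $q \ge 0$. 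Multiplying by the polynomially bounded factors $\bigl(g^{(j)}(x)\bigr)^{m_j}$ and taking the finite sum preserves polynomial boundedness, so $f \circ g \in \calS^{(\infty)}(\bbR)$. I expect this composition step to be the only genuinely delicate part; everything else is bookkeeping.

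For the base cases of part (a): polynomials lie in $\calS^{(\infty)}(\bbR)$ because every derivative is a polynomial, and $\sin,\cos$ have all derivatives bounded by $1$. For $\sigmoid$ and $\tanh$ I would show by induction — using $\sigmoid' = \sigmoid(1-\sigmoid)$ and $\tanh' = 1-\tanh^2$ — that $\sigmoid^{(m)}$ and $\tanh^{(m)}$ are polynomials in $\sigmoid$ resp. $\tanh$; since these functions take values in bounded intervals, all their derivatives are bounded. Since $\softplus' = \sigmoid$, all derivatives of $\softplus$ of order $\ge 1$ are bounded, while $0 \le \softplus(x) = \log(1+e^x) \le \log 2 + |x|$ handles $m=0$. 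For $\RBF$ (a Gaussian bump) and $\Phi$, the derivatives $\RBF^{(m)}$ and $\Phi^{(m+1)}$ are Hermite polynomials times a Gaussian density and hence bounded, and $\Phi$ itself lies in $(0,1)$. Finally, for $a \ge 0$ and $x \le 0$ we have $|(\exp(a\,\cdot))^{(m)}(x)| = a^m e^{ax} \le a^m$, giving $\exp(a\,\cdot) \in \calS^{(\infty)}((-\infty,0])$. Together with the closure properties, this proves part (a).

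For part (b), first consider an activation that is $C^\infty$ on all of $\bbR$ with $\phi \in \calS^{(\infty)}(\bbR)$, $\phi \not\equiv 0$: taking $\varphi_+ \equalDef \phi|_{(0,\infty)}$ and $\varphi_- \equalDef \phi|_{(-\infty,0)}$, both lie in the corresponding $\calS^{(\infty)}$ classes by restriction, and continuity of $\phi$ makes the value prescribed at $0$ in \Cref{ass:act} equal to $\phi(0)$, so \Cref{ass:act} holds. This covers all globally smooth entries of \Cref{table:acts}, since functions like $\softplus$, $\sigmoid$, $\tanh$, GELU $= x\Phi(x)$, SiLU $= x\,\sigmoid(x)$ and Mish $= x\tanh(\softplus(x))$ are obtained from the functions in part (a) by addition, multiplication and composition and hence lie in $\calS^{(\infty)}(\bbR)$. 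For the entries with a single kink at the origin — $\relu$, LeakyReLU, ELU, SELU — I would read off $\varphi_+$ and $\varphi_-$ directly ($\varphi_+$ is a linear polynomial in all four cases, and $\varphi_-$ is either a linear polynomial or of the form $\alpha(e^x-1)$, which lies in $\calS^{(\infty)}((-\infty,0))$ by the $\exp$ base case and closure under addition), and then verify that the one-sided limits at $0$ coincide (both equal $0$), so that the convention in \Cref{ass:act} reproduces the usual value and the assumption is satisfied. The upshot is that the only nontrivial ingredient is the composition-closure argument in the first paragraph; the rest is a finite case check over the list of standard activations.
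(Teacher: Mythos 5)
Your proposal is correct, but the composition-closure step takes a genuinely different route from the paper. You apply Fa\`a di Bruno's formula to $(f\circ g)^{(m)}$ directly and then bound each term $f^{(k)}(g(x))\prod_j (g^{(j)}(x))^{m_j}$ using the elementary estimate $(A+B)^q \le C_q(A^q+B^q)$; this works, but you pay for it by having to manage the combinatorics and the nested polynomial bound $O(|x|^{q^2})$. The paper avoids Fa\`a di Bruno entirely: it sets up a hierarchy $\calS^{(m)}(I)$ with $\calS^{(\infty)} = \bigcap_m \calS^{(m)}$, observes that $f \in \calS^{(m)}$ iff $f \in \calS^{(0)}$ and $f' \in \calS^{(m-1)}$, and then proves closure under composition by a one-line induction on $m$: the case $m=0$ is exactly your polynomial-boundedness estimate for the composite function itself, and the inductive step is $(f\circ g)' = (f'\circ g)\cdot g' \in \calS^{(m-1)}$ by the induction hypothesis applied to $f' \circ g$ and $g'$. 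This is leaner because the induction on $m$ absorbs all the bookkeeping that Fa\`a di Bruno would otherwise impose. Both approaches are valid; yours is more self-contained and explicit, while the paper's buys simplicity from the recursive structure of $\calS^{(m)}$.

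The base cases and the case check in part (b) coincide in substance with the paper's: polynomials, bounded trigonometric and sigmoidal functions, $\softplus$ via its derivative, $\RBF$ and $\Phi$ via Gaussian decay, $\exp(a\cdot)$ on $(-\infty,0]$, and then reading off $\varphi_\pm$ for each table entry. Two small remarks: you mention Mish, which is not actually in \Cref{table:acts} (harmless, but extraneous); and for the Heaviside entry, it is worth saying explicitly that $\varphi_+ \equiv 1$, $\varphi_- \equiv 0$ and the prescribed value $\tfrac12$ at zero matches the table's definition, since this is the one table entry where the midpoint convention in \Cref{ass:act} is actually doing work rather than just reproducing continuity.
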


We prove \Cref{prop:act_assumptions} in \Cref{sec:integrability}.

\newcommand{\smoothness}[1]{\operatorname{smoothness}(#1)}

\begin{definition}[Smoothness of an activation function]\label{def:smoothness_of_an_activation_function}
For an activation function $\varphi$ as in \Cref{ass:act}, we define its smoothness as
\begin{IEEEeqnarray*}{+rCl+x*}
\smoothness{\act}  \equalDef  \inf&\{&m \in \bbN_0 \mid  \lim_{t \searrow 0} \act^{(m)}(t) \\
& &\neq \lim_{t \nearrow 0} \act^{(m)}(t)\} \in \bbN_0 \cup \{\infty\}. & \qedhere
\end{IEEEeqnarray*}
\end{definition}

\begin{example}[Smoothness of the ReLU activation]
	The ReLU activation function $\act(x) = \max\{0,x\}$ fulfills \cref{ass:act} and decomposes as $\act_+(x) = x\ , x>0$, $\act_- (x) = 0 \ , x<0$. Hence we have
	\begin{align*}
		\lim_{t \searrow 0} \act(t) &= 0 = \lim_{t \nearrow 0} \act(t) \\
		 \text{but} \qquad \lim_{t \searrow 0} \act'(t) & = 1 \neq 0 = \lim_{t \nearrow 0} \act'(t)~,
	\end{align*}
	implying $\smoothness{\mathrm{ReLU}} = 1$.
\end{example}

\begin{table}
	\centering
	\caption{Activation functions and their smoothness (\cref{def:smoothness_of_an_activation_function}). 
		Here, $\calP_m$ means that the function is a polynomial of degree $m$. $\calP_{-\infty}$ means that the function is zero. For a more extensive overview, we refer to \cite{dubey2022activation}.
		In the typical cases i) of \cref{thm:main_result}, a smoothness $s \in (1, \infty)$ of $\act$ yields $\calH_{\nngp_L} \cong H^{d/2+s+1/2}(\bbS^d)$ and $\calH_{\ntk_L} \cong H^{d/2 +s -1/2}$.
		} \label{table:acts}
	\resizebox{\columnwidth}{!}{
	\begin{tabular}{cccccc}
		\toprule
		Activation & \multicolumn{2}{c}{Formula} & \multicolumn{3}{c}{Smoothness $s$} \\ %
		& $x < 0$ & $x \geq 0$ & $\varphi$ & $\acteven$ & $\actodd$ \\
		\midrule
		ReLU & 0 & $x$ & 1 & 1 & $\calP_1$ \\
		LeakyReLU ($\eps \neq -1$) & $-\varepsilon x$ & $x$ & 1 & 1 & $\calP_1$ \\
		SELU & $\lambda\alpha (e^x - 1)$ & $\lambda x$ & 1 & 1 & 2 \\
		ELU ($\alpha \neq 1$) & $\alpha(e^x - 1)$ & $x$ & 1 & 1 & 2 \\
		ELU ($\alpha = 1$), CELU & $\alpha(e^{x/\alpha} - 1)$ & $x$ & 2 & 3 & 2 \\
		RePU, power $m$ even & $0$ & $x^m$ & $m$ & $\calP_m$ & $m$ \\
		RePU, power $m$ odd & $0$ & $x^m$ & $m$  & $m$ & $\calP_m$ \\
		Heaviside & \multicolumn{2}{c}{ $\frac 12 \mathds{1}_{\{0\}}(x) +\mathds{1}_{\R_{>0}}(x)$} & $0$ & $\calP_0$ & $0$ \\
		$\tanh$ & \multicolumn{2}{c}{$(e^x - e^{-x})/(e^x + e^{-x})$} & $\infty$ & $\calP_{-\infty}$ & $\infty$ \\
		Sigmoid & \multicolumn{2}{c}{$1/(1 + e^{-x})$} & $\infty$ & $\calP_{0}$ & $\infty$ \\
		GeLU &  \multicolumn{2}{c}{$\frac 12 x (1+\erf(x/\sqrt 2))$} & $\infty$& $\infty$ & $\calP_1$\\
		SiLU & \multicolumn{2}{c}{$x/(1+e^{-x})$} &$ \infty $& $ \infty $& $\infty$\\
		RBF &  \multicolumn{2}{c}{$\exp(-x^2)$} & $\infty$ & $\infty$ & $\calP_{-\infty}(\bbS^d)$. \\
		\bottomrule
	\end{tabular}
	}
\end{table}
\begin{remark}[Location of the non-smoothness]
We conjecture that our analysis could be extended to the case of non-smoothnesses at $b \ne 0$ with similar results by extending a part of our proofs, cf.\ \cref{rem:restricting_analysis_to_origin}.
\end{remark}
Our main result, \cref{thm:main_result}, contains multiple cases. 
The \quot{regular} cases are discussed afterward in \Cref{rem:simple_case}. %
In some special bias-free cases we observe a phenomenon of parity: The even/odd parts of the functions contained in the RKHSs $\calH_k$ of the neural kernels depend on the even/odd parts of the activation $\act$, commonly defined as
\begin{align*}
	\acteven(x) &\equalDef \frac{\act(x) +\act(-x)}{2}~,\\
	 \actodd(x) &\equalDef \frac{\act(x)-\act(-x)}{2}~,
\end{align*}
fulfilling $\act = \acteven + \actodd$.
For example, $\mathrm{ReLU}_{\mathrm{even}}(x) = \frac{1}{2}|x|$ and $\mathrm{ReLU}_{\mathrm{odd}}(x) = \frac{1}{2}x$. In the bias-free case ($\sigma_b^2 = 0$), the function $f_{\act}$ represented by a two-layer network with activation $\varphi$ can be written as $f_{\act} = f_{\acteven} + f_{\actodd}$, where $f_{\acteven}$ is even and $f_{\actodd}$ is odd. Hence, $(f_{\mathrm{ReLU}})_{\mathrm{odd}} = f_{\mathrm{ReLU}_{\mathrm{odd}}}$ is a linear function, which explains the results of \cite{bietti_inductive_2019} where the odd eigenvalues satisfy $\mu_1 > 0$, $\mu_3 = \mu_5 = \hdots = 0$. A much more general version of this calculation is performed in \Cref{prop:even_odd_kernels}.

By $\calP_m$ we denote the polynomials of degree at most $m$. 
 We additionally define 
\begin{align*}
	\Feven &\equalDef \{ f: \bbS^d \to \bbR \mid f \text{ is even}\}~,\\
	\Fodd &\equalDef \{f:\bbS^d \to \bbR \mid f \text{ is odd}\}~.
\end{align*}
\begin{restatable}[Main result, summary below]{theorem}{thmMainResult} \label{thm:main_result}
	Let the activation $\act$ fulfill \cref{ass:act} and let $s\equalDef \smoothness{\act}$.
	\\ 
	\textbf{NNGP:}
	\begin{enumerate}[i)]
		\item Case $\sigma_b^2\sigma_i^2>0$ or both $L\ge 3$ and $\act$ is neither even nor odd.
		\begin{enumerate}[a)]
			\item If $s=0$, then $\calH_{\nngp_L} \cong H^{d/2 +2^{1-L}}(\bbS^d)$.
			\item If $ 1\le  s <\infty$, then $\calH_{\nngp_L} \cong H^{d/2 + s + 1/2}(\bbS^d)$.
			\item If $ s=\infty$ and $\act$ is not a polynomial, then $\calH_{\nngp_L} \subset H^t(\bbS^d)$ for all $t\in\bbR$ and $\calH_{\nngp_L}$ contains all polynomials. 				
			\item If $s=\infty$ and $\act$ is a polynomial of degree $m$, then $\calH_{\nngp_L} \cong 
			\calP_{m^{L-1}}$.
		\end{enumerate}
		\item Case $\sigma_b^2\sigma_i^2 =0$ and $\act$ is even or odd. 
		\begin{enumerate}[a)]
			\item If $s=0$, then $\calH_{\nngp_L} \cong H^{d/2 +2^{1-L}}(\bbS^d) \cap \funevenodd$.
			\item If $ 1\le  s <\infty$, then $\calH_{\nngp_L} \cong H^{d/2 + s + 1/2}(\bbS^d)\cap \funevenodd$.
			\item If $ s=\infty$ and $\act$ is not a polynomial, then $\calH_{\nngp_L} \subset H^t(\bbS^d)\cap\funevenodd$ for all $t\in\bbR$ and $H_{\nngp_L}$ contains all even/odd polynomials.
			\item If $s=\infty$ and $\act$ is a polynomial of degree $m$, then 
$\calH_{\nngp_L} \cong \calP_{m^{L-1}} \cap \funevenodd$.
		\end{enumerate}
		\item 
		Case $\sigma_b^2\sigma_i^2 =0$ and $L=2$. We have
		\begin{align}
			\label{eq:nngp_complicated}
			\calH_{\nngp_L} \cong \calH_{\nngp_{\acteven,L}} \oplus \calH_{\nngp_{\actodd,L}}
		\end{align}
		where the RKHSs 
		$\calH_{\nngp_{\acteven,L}}$ and $\calH_{\nngp_{\actodd,L}}$ of the even/odd activation functions $\acteven$ and $\actodd$ can be found by Case ii) with corresponding $s\equalDef \smoothness{\acteven}$ respectively $s\equalDef \smoothness{\actodd}$.	
	\end{enumerate}
	\textbf{NTK:}
	\begin{enumerate}[i)]
		\item Case $\sigma_b^2>0$ or both $L\ge 3$ and $\act$ is neither even nor odd.
		\begin{enumerate}[a)]
			\item If $ 1\le  s <\infty$, then $\calH_{\ntk_L} \cong H^{d/2 + s - 1/2}(\bbS^d)$.
			\item If $ s=\infty$ and $\act$ is not a polynomial, then $\calH_{\ntk_L} \subset H^t(\bbS^d)$ for all $t\in \bbR$ and $\calH_{\ntk_L}$ contains all polynomials.
			\item If $s=\infty$ and $\act$ is a polynomial of degree $m$, then 
			$\calH_{\ntk_L} \cong \calP_{m^{L-1}}$. 
		\end{enumerate}
		\item Case $\sigma_b^2=0$ and $\act$ is even or odd. 
		\begin{enumerate}[a)]
			\item If $ 1\le  s <\infty$, then $\calH_{\ntk_L} \cong H^{d/2 + s - 1/2}(\bbS^d)\cap \funevenodd$.
			\item If $ s=\infty$ and $\act$ is not a polynomial, then $\calH_{\ntk_L} \subset H^t(\bbS^d)\cap\funevenodd$ for all $t\in \bbR$ and $H_{\ntk_L}$ contains all even/odd polynomials.
			\item 
			If $s=\infty$ and $\act$ is a polynomial of degree $m$, then  $\calH_{\ntk_L}= \calP_{m^{L-1}}\cap \funevenodd$.
		\end{enumerate}
		\item 
		Case $\sigma_b^2 =0$ and $L=2$. We have
		\begin{align}
			\label{eq:ntk_complicated}
			\calH_{\ntk_L}\cong \calH_{\ntk_{\acteven,L}} \oplus \calH_{\ntk_{\actodd,L}}
		\end{align}
		where the RKHSs $\calH_{\ntk_{\acteven,L}}$ and $\calH_{\ntk_{\actodd,L}}$ of the even/odd activation functions $\acteven$ and $\actodd$ and can be found by Case ii) with corresponding $s\equalDef \smoothness{\acteven}$ respectively $s\equalDef \smoothness{\actodd}$.
	\end{enumerate}
Note that $H^t(\bbS^d) \cap \Feven$ and $H^t(\bbS^d) \cap \Fodd$ are closed subspaces of $H^t(\bbS^d)$ for any $t$. We equip them with the restricted norm.
If in the sub-cases iii) $\acteven =0 $ or $\actodd =0$ occurs, the RKHS corresponding to that even/odd part in \cref{eq:nngp_complicated} respectively \cref{eq:ntk_complicated} is $\calP_{-\infty}=\{0\}$.
\end{restatable}
We show a proof sketch in \Cref{sec:proof_sketch}, a more detailed proof overview in \Cref{sec:appendix:overview}, and the proof for deriving this version of the theorem in \Cref{sec:appendix:eigenvalue_decay}.
The smoothness of many common activation functions, as well as the smoothness of their even/odd parts, can be found in \cref{table:acts}.
Note that the results of \Cref{thm:main_result} can be expressed in terms of the eigenvalues $\mu_l$ of the integral operator instead of the RKHSs as discussed in \Cref{sec:preliminaries}; the eigenvalue-based formulation can be found in \Cref{thm:kernel_ev_rates}.

\begin{figure}
\includegraphics[width=\columnwidth]{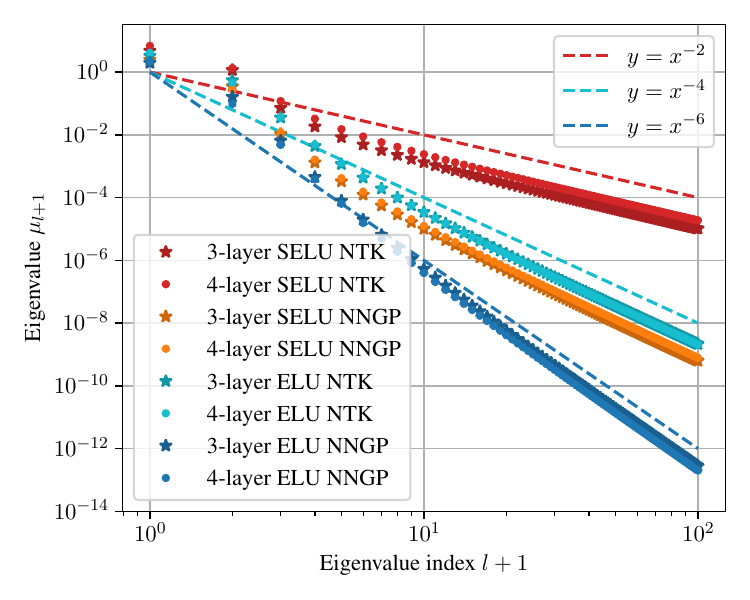}
\caption{\textbf{Eigenvalues $\mu_l$ of different neural kernels on $\bbS^2$.} We use $\sigma_w=\sigma_b=\sigma_i=1$. We use a custom method to numerically compute dual activations, described in \Cref{sec:appendix:activation_quadrature}. %
} \label{fig:deep_spectra}
\end{figure}

\begin{remark}[General takeaway] \label{rem:simple_case}
\Cref{thm:main_result} is most easy to understand for the case where
\begin{itemize}
\item the NN has $L \geq 2$ layers, contains biases ($\sigma_b^2 > 0$), and in the NNGP case the biases are initialized with nonzero variance ($\sigma_i^2 > 0$), or
\item the NN has $L \geq 3$ layers and the activation function $\varphi$ is neither even nor odd.
\end{itemize}
In this case, the decay of the eigenvalues $\mu_l$ and the corresponding RKHS depend on the smoothness $s$ of the activation function $\varphi$ as follows:
\begin{itemize}
\item If the activation function is discontinuous ($s=0$), the RKHS of the NNGP kernel is equivalent to a Sobolev space of smoothness $d/2+2^{1-L}$, while the NTK is not defined. 
\item If the activation function has finite smoothness $1 \leq s < \infty$, the RKHSs are equivalent to Sobolev spaces of order $d/2+s-1/2$ for the NTK and $d/2+s+1/2$ for the NNGP. This applies to ReLU, LeakyReLU, ELU ($\alpha \neq 1$) and SELU with $s=1$ and to CELU with $s=2$. \Cref{fig:deep_spectra} shows that the corresponding eigenvalue decays $\mu_l = d+2s-1$ for the NTK and $\mu_l = d+2s+1$ (cf. \Cref{lemma:sobolev_spherical}) are attained in numerical experiments.
Rectified power unit activations $\max\{0,x\}^s$ have smoothness $s\in\N$, yielding a family of Sobolev spaces. Our results show that this is due to the behavior at zero and that the superlinear growth of these functions for $s \geq 2$ is not necessary.
\item If the activation function is infinitely smooth ($s=\infty$) but not a polynomial, all $\mu_l$ are nonzero but decay faster than any inverse polynomial. Hence, the RKHSs are contained in all Sobolev spaces, but the kernels are universal. This applies for example to the GELU, SiLU/Swish, Mish, softplus, sigmoid, and tanh activation functions (though tanh is an odd function, so the conclusion only holds with biases).
\item If the activation function is a polynomial, for NTK and NNGP only finitely many eigenvalues $\mu_l$ are nonzero and the RKHSs only contain polynomials, whose maximum degree grows exponentially with the depth of the network. %
\qedhere
\end{itemize}
\end{remark}

\begin{figure}
\includegraphics[width=\columnwidth]{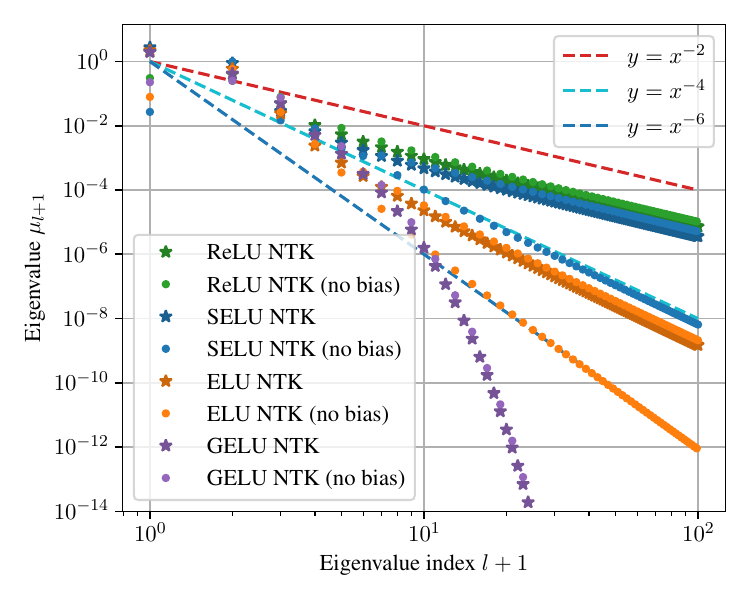}
\caption{\textbf{Eigenvalues $\mu_l$ of different two-layer NTKs on $\bbS^2$.} We use $\sigma_w=\sigma_b=\sigma_i=1$ for the case with biases and set $\sigma_b=\sigma_i=0$ for the no-bias case. We use a custom method to numerically compute dual activations, described in \Cref{sec:appendix:activation_quadrature}.
} \label{fig:shallow_spectra}
\end{figure}

\begin{example}[Special cases]
As mentioned above, two-layer networks without bias ($\sigma_b^2 = 0$ for NTK or $\sigma_b^2\sigma_i^2 = 0$ for NNGP) can have special behavior. Following \Cref{table:acts}, we obtain for CELU the NTK RKHS $(H^{d/2 + 5/2} \cap \Feven) \oplus (H^{d/2 + 3/2} \cap \Fodd)$, so the even parts of functions are smoother than the odd parts of functions. For ELU ($\alpha \neq 1$) and SELU, we obtain $(H^{d/2 + 1/2} \cap \Feven) \oplus (H^{d/2 + 3/2} \cap \Fodd)$. For ReLU and LeakyReLU, the odd parts are just linear functions. For RePU, one of the two parts are polynomials up to a certain degree. \Cref{fig:shallow_spectra} shows the corresponding behavior of the eigenvalues $\mu_l$ for the two-layer NTKs for different activations: In the bias-free case, the odd SELU eigenvalues decay faster than the even ones, while the even ELU ($\alpha = 1$) eigenvalues decay faster than the odd ones. For bias-free ReLU and GELU, the eigenvalues $\mu_l$ for odd $l \geq 3$ are zero.

The other special case is for even or odd activations and arbitrarily deep networks without bias. For example, $\tanh$ is an odd activation, hence the corresponding NNGP and NTK RKHSs will only contain odd functions.
\end{example}

\begin{remark}[NTKs and NNGPs of non-smooth activations are equivalent to Matérn kernels]
\cite{chen_deep_2020} and \cite{bietti_deep_2021} showed that on $\bbS^d$, the RKHS of the deep ReLU NTK is equivalent to that of the Laplace kernel. Theorem 1 in \cite{vakili2021uniform} further shows that the RKHSs of deep RePU NTKs and NNGPs are equivalent to the ones of Matérn kernels, which are equivalent to Sobolev spaces. Specifically, for the Matérn kernel $k_\nu$ of order $\nu > 0$ on the sphere $\bbS^d$, it holds that $\calH_{k_\nu} \cong H^{d/2 + \nu}(\bbS^d)$ (see also Proposition 5.2 (c) in \citealt{hubbert_sobolev_2023}). For other not-infinitely-smooth activations, our \Cref{thm:main_result} shows that their RKHSs are generally also equivalent to Sobolev spaces, and hence the corresponding kernels are equivalent to Matérn kernels.
\end{remark}

\Cref{sec:implications} discusses implications of our results, for example, on the equivalence of RKHSs for different network depths.

\section{PROOF SKETCH FOR THE MAIN THEOREM} \label{sec:proof_sketch}

Here, we provide a short overview of our main theorem. A more detailed overview can be found in \Cref{sec:appendix:overview}.
We obtain \cref{thm:main_result} by case distinction from \cref{thm:kernel_ev_rates}, the central technical result of this paper, which yields the eigenvalue asymptotics of the neural kernel $k$ in dependence of the smoothness of the activation function $\act$. 
Those eigenvalue asymptotics then directly yield the equivalent Sobolev space $\calH_k \cong H^s(\bbS^d)$ by \cref{lemma:sobolev_spherical}.
\cref{thm:kernel_ev_rates} is the major theoretical effort of this paper. It leverages \cref{thm:bietti_bach_adapted}, a variant of \citet[Theorem 7]{bietti_deep_2021}, which reduces the question of the eigenvalue decay to an investigation of the asymptotic behavior at $\{\pm 1\}$ of the function $\kappa:[-1,1]\to \R$ satisfying $$k(x, x') = \kappa(\langle x, x'\rangle)~.$$ This function $\kappa$ is best described with 
\emph{dual activation functions} (\cref{definition:dual_and_rescaled_activation_function}, \cref{sec:dual_activations}).
The dual activation function $\widehat\act:[-1,1]\to \R$ has been introduced in \citet{daniely_toward_2016}
and is given by a quadratic form: $\widehat\act = b(\act, \act)$, where
\begin{equation*}
	b(f,g)(t) = \E_{(u,v)\sim \calN(0,\Sigma_t)}[f(u)g(v)], \quad \Sigma_t = \begin{pmatrix}1 & t \\ t & 1\end{pmatrix}
\end{equation*}
describes the infinite-width behavior of the activation in a single hidden layer.
Recursively, the function $\knngp_l$ corresponding to the NNGP kernel of an $l$-layer network for $l\ge 2$ is then given by 
\begin{align}
		\knngp_1(t) & \equalDef  \sigma_b^2 \sigma_i^2 + \sigma_w^2 t \nonumber\\
		\knngp_{l}(t) & \equalDef  \sigma_b^2 \sigma_i^2 + \sigma_w^2 \widehat{\scaledfn{\act}{\sqrt{\alpha_{l-1}}}}(\knngp_{l-1}(t)/\alpha_{l-1}),
		\label{eq:deep_recursion_main_part}
\end{align}
where $\scaledfn{\act}{\sqrt{\alpha_{l-1}}}$ means that $\act$ rescales its input, which is explained in \Cref{definition:dual_and_rescaled_activation_function} and \Cref{lemma:restriction_to_the_unit_sphere}, but is not important as it does not change the smoothness at zero.
This recursion alongside with the corresponding formula for the NTK kernel $\kntk_l$ can be found in \cref{lemma:restriction_to_the_unit_sphere}.
The analysis of the asymptotic behavior of $\knngp_l$ is built on the analysis of the asymptotic behavior of the dual activation $\widehat\act$ at $\{\pm 1\}$. The following theorem is hence a central cornerstone for the analysis of dual activations that may be useful beyond the specific structure of fully-connected neural networks.

\begin{theorem}[\cref{thm:smoothed_dual_activation_smooth_decomposition} reformulated]	\label{thm:boundary_behavior_of_dual_main_part}
	Let $\act$ be an activation function of finite smoothness $s$. Then, there exists a constant $b_s>0$ depending only on $s$ such that for $\tau \in \{\pm 1\}$ and $t \in (0, 2)$
	\begin{align*} 
		\widehat\act(\tau (1-t)) =& \big( \act^{(s)}(0+) - \act^{(s)}(0-) \big) (-\tau)^{s+1} b_s t^{s+1/2} \\
		&+ p_\tau(t) + q_\tau(t)~,
	\end{align*}
	where $\act^{(s)}(0+), \act^{(s)}(0-)$ are right- and left-sided limits, $p_\tau(t)$ is a polynomial and $q_\tau(t)$ fulfills $q_\tau^{(n)}(t) = o(t^{(s-n)})$ for all $n\in \bbN_0$.
\end{theorem}
We prove \Cref{thm:boundary_behavior_of_dual_main_part} by decomposing
\begin{IEEEeqnarray*}{+rCl+x*}
\act(x) = \sum_{k=s}^{K-1} (\act^{(s)}(0+) - \act^{(s)}(0-)) s_k(x) + r(x)~,
\end{IEEEeqnarray*}
where $s_k(x) \equalDef \frac{1}{2k!} \sgn(x)x^k$ are analytically tractable functions of smoothness $k$ and $r$ is a remainder term of smoothness $K$.
The analysis of \citet{bietti_deep_2021} is similar to computing $\widehat s_1 = b(s_1, s_1)$, and we introduce multiple new arguments to study higher-order terms $b(s_k, s_k)$, and also mix-terms $b(s_k, s_m)$, $b(s_k, r)$, and $b(r, r)$. While \citet{bietti_deep_2021} use arguments from \citet{chen_deep_2020} based on complex analysis to control the regularity of derivatives for $q_\tau$, we circumvent this part through direct computations since it is not obvious whether functions like $b(r, r)$ are analytic.

While \cref{thm:boundary_behavior_of_dual_main_part} is only for dual activations and not the final kernels, it already provides the correct functional form ``non-integer power plus remainder terms'' needed for applying the main theorem of \citet{bietti_deep_2021}, which says that the eigenvalue decay depends on the smallest non-integer power.
In \cref{sec:appendix:boundary}, we develop a calculus that investigates how this functional form is preserved for sums, producs, and compositions of functions. 
To obtain our main technical result, \cref{thm:kernel_ev_rates}, we apply this calculus in all cases together with \Cref{prop:even_odd_kernels} for handling the bias-free special cases and a separate computation for polynomial activations in \Cref{sec:degree}.

\section{SMOOTHNESS OF NNGP SAMPLE PATHS} \label{sec:path_smoothness}
The following result connects RKHSs of kernels on the sphere to the smoothness of the paths of their Gaussian process.
\begin{restatable}{theorem}{thmPathSmoothness}\label{theorem:path_smoothness}
Let $k$ be a dot-product kernel on $\bbS^d$ whose RKHS is equivalent to a Sobolev space $H^{d+\alpha}(\bbS^d)$, $\alpha> 0$. Let $X$ be a Gaussian process on $\bbS^d$ with zero mean and covariance kernel $k$. 
\begin{enumerate}[i)]
\item  For any $\eps \ge 0$ we have $P(Y \in H^{d/2+\alpha+\eps}(\bbS^d)) = 0$ for any version $Y$ of $X$.
\item For any $0< \eps<\alpha$, there exists a version $Y$ of $X$ with $\Prob\left( Y \in H^{d/2+\alpha -\eps}\right) =1$. 
\end{enumerate}
\end{restatable}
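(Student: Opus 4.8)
The plan is to diagonalize $X$ in spherical harmonics and reduce everything to the almost sure convergence of one explicit nonnegative random series. By \Cref{lemma:sobolev_spherical}, the hypothesis $\calH_k \cong H^{d+\alpha}(\bbS^d)$ means precisely that $\mu_l = \Theta_{\forall l}((l+1)^{-2(d+\alpha)})$, and the multiplicities satisfy $N_{l,d} = \Theta_{\forall l}((l+1)^{d-1})$. Fix any jointly measurable version $Y$ of $X$. Since $\E \int_{\bbS^d} Y(\bfx)^2 \diff \bfx = \int_{\bbS^d} k(\bfx,\bfx)\diff\bfx < \infty$, Fubini gives $Y \in L_2(\bbS^d)$ almost surely, so the coefficients $a_{l,i} \equalDef \langle Y, Y_{l,i}\rangle_{L_2(\bbS^d)}$ are a.s.\ well defined. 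The linear map $f \mapsto \langle Y, f\rangle_{L_2(\bbS^d)}$ is bounded from $L_2(\bbS^d)$ into the Gaussian space of $X$, so $(a_{l,i})_{l,i}$ is a centered Gaussian family, and $T_k Y_{l,i} = \mu_l Y_{l,i}$ yields $\E[a_{l,i}a_{l',i'}] = \mu_l \delta_{(l,i),(l',i')}$. Hence $(a_{l,i})$ has the law of $(\sqrt{\mu_l}\,\xi_{l,i})$ for i.i.d.\ $\xi_{l,i}\sim\calN(0,1)$, and $\norm{Y}_{H^t(\bbS^d)}^2 \overset{d}{=} S_t \equalDef \sum_{l\ge 0}(l+1)^{2t}\mu_l \sum_{i=1}^{N_{l,d}} \xi_{l,i}^2$. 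The problem reduces to: for which $t$ does $S_t$ converge almost surely?

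For (ii) I run the expansion in reverse: set $Y \equalDef \sum_{l\ge 0}\sqrt{\mu_l}\sum_{i=1}^{N_{l,d}}\xi_{l,i}Y_{l,i}$ with i.i.d.\ $\xi_{l,i}\sim\calN(0,1)$, and take $t\equalDef d/2+\alpha-\eps>d/2$. Then $\E\, S_t = \sum_l (l+1)^{2t}\mu_l N_{l,d} = \sum_l \Theta_{\forall l}((l+1)^{2t-d-2\alpha-1})$ is finite because $2t-d-2\alpha-1<-1 \Leftrightarrow t<d/2+\alpha$; hence the series converges in $H^t(\bbS^d)$ a.s., and since $t>d/2$ it converges pointwise through the embedding $H^t(\bbS^d)\hookrightarrow C(\bbS^d)$. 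Using the addition theorem $\sum_{i=1}^{N_{l,d}} Y_{l,i}(\bfx)Y_{l,i}(\bfx') = (N_{l,d}/\abs{\bbS^d})\, P_l(\langle\bfx,\bfx'\rangle)$ (with $P_l$ the Legendre polynomial, $P_l(1)=1$) and Mercer's theorem, the covariance of the limit process equals $k$, so $Y$ is a version of $X$, and by construction $Y \in H^{d/2+\alpha-\eps}(\bbS^d)$ almost surely.

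For (i), take $t\equalDef d/2+\alpha+\eps$; since $H^{t'}(\bbS^d)\subseteq H^t(\bbS^d)$ for $t'\ge t$, it suffices to treat small $\eps$ (say $4\eps<d$). Now $\E\, S_t = \sum_l \Theta_{\forall l}((l+1)^{2\eps-1}) = \infty$. To upgrade divergence of the mean to almost sure divergence, consider the partial sums $S_t^{(n)} \equalDef \sum_{l\le n}(l+1)^{2t}\mu_l \sum_{i=1}^{N_{l,d}}\xi_{l,i}^2$: one has $\E\, S_t^{(n)} = \Theta(n^{2\eps})$ while $\Var S_t^{(n)} = \sum_{l\le n} 2N_{l,d}(l+1)^{4t}\mu_l^2 = \sum_{l\le n}\Theta((l+1)^{4\eps-d-1})$, so $\Var S_t^{(n)} = o((\E\, S_t^{(n)})^2)$. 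By Chebyshev, $S_t^{(n)}\to\infty$ in probability, and since $n\mapsto S_t^{(n)}$ is nondecreasing this forces $S_t = \lim_n S_t^{(n)} = \infty$ almost surely. Combined with the first paragraph, $\norm{Y}_{H^t(\bbS^d)}=\infty$ a.s.\ for every measurable version $Y$, i.e.\ $\Prob(Y\in H^{d/2+\alpha+\eps}(\bbS^d)) = 0$.

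The one genuinely delicate point is the ``for any version'' clause in (i): the reduction to $S_t$ must hold for an arbitrary version and not only for the Karhunen--Loève one. This is exactly what the Fubini argument in the first paragraph supplies, but it forces us to work with a jointly measurable version (or to read the probabilities as outer probabilities) and to check that Gaussianity and the covariance identity $\E[a_{l,i}a_{l',i'}] = \mu_l\delta_{(l,i),(l',i')}$ survive the $L_2$-limit defining $a_{l,i}$. Everything else --- the variance bound, the covariance computation via the addition theorem, and the Sobolev embedding turning $H^t$-convergence into pointwise convergence --- is routine given \Cref{lemma:sobolev_spherical} and standard facts about spherical harmonics.
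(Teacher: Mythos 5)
Your proposal is correct in substance but takes a genuinely different route from the paper. The paper invokes Theorem~1.2 of \citet{steinwart_gppaths_24} as a black box: $H^{d+\alpha}(\bbS^d) \hookrightarrow H^{d/2+\alpha+r}(\bbS^d)$ is Hilbert--Schmidt iff $r<0$, and that theorem directly yields both the existence statement and the \emph{for-all-versions} statement. You instead re-derive the two directions by hand: you expand in spherical harmonics, identify the coefficient law via the integral operator, reduce everything to the random series $S_t=\sum_l (l+1)^{2t}\mu_l\sum_i\xi_{l,i}^2$, and settle convergence/divergence by a first/second-moment argument. Note that $\mathbb{E}\,S_t = \sum_l N_{l,d}(l+1)^{2t}\mu_l$ is, after the substitution $t=d/2+\alpha+r$, exactly the Hilbert--Schmidt norm the paper computes (Eq.~\eqref{eq:hs_norm_embedding}), so the two proofs are doing the same arithmetic. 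What your route buys is self-containment: no appeal to an external GP sample-path theorem. What it costs is the ``for any version'' clause in part~(i). The paper gets that clause for free from the statement of Steinwart's theorem; your argument fixes a jointly measurable version to make $\int_{\bbS^d}Y(\bfx)^2\,\diff\bfx$ and the coefficients $a_{l,i}=\langle Y,Y_{l,i}\rangle_{L_2}$ meaningful, and then establishes $\Prob(\norm{Y}_{H^t}<\infty)=0$ for \emph{that} version only. You flag this yourself, and you are right to: extending from a fixed jointly measurable version to an arbitrary version is not automatic (the event $\{Y'\in H^t\}$ need not be measurable for a general version $Y'$, and you cannot simply transfer the null set). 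As written, part~(i) of your proof therefore has a gap relative to the claimed statement; it proves the conclusion for jointly measurable versions. Closing it requires either a measurability argument of the kind built into Steinwart's theorem (e.g.\ showing that on the event $\{Y'\in H^t\}$ one can compare $Y'$ to the Karhunen--Lo\`eve version via $L_2(\bbS^d)$ and land in a null set), or reading the probabilities as outer measures. Everything else --- the variance bound $\Var S_t^{(n)}=\sum_{l\le n}\Theta((l+1)^{4\eps-d-1})$, the Chebyshev-plus-monotonicity upgrade to a.s.\ divergence, the Mercer/addition-theorem check that the constructed series is a version of $X$, and the embedding $H^{d/2+\alpha-\eps}(\bbS^d)\hookrightarrow C(\bbS^d)$ --- is correct and matches the paper's numerology.
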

The requirement $\eps < \alpha$ in \Cref{theorem:path_smoothness} ii) stems from the underlying theory, which requires the investigated spaces to be RKHSs. For Sobolev spaces $H^s(\bbS^d)$ is equivalent to $s>d/2$.
\Cref{theorem:path_smoothness} is proven in \Cref{sec:appendix:path_smoothness}.
\begin{example}[Application to infinite-width NNs]
Consider a network with biases, that is $\sigma_b^2 \sigma_i^2 > 0$, and an activation of smoothness $1 \leq s < \infty$. From \Cref{thm:main_result} we know $\calH_{\ntk_L} \cong H^{d/2 + s - 1/2}(\bbS^d)$ and $\calH_{\nngp_L} \cong H^{d/2 + s + 1/2}(\bbS^d)$. If $s + 1/2 > d/2$, by \Cref{theorem:path_smoothness}, the NNGP sample paths are in $H^{s + 1/2 - \varepsilon}(\bbS^d)$ but not $H^{s+1/2+\varepsilon}(\bbS^d)$ for any $\varepsilon > 0$. We conjecture that randomly initialized finite-width networks are only in $H^s(\bbS^d)$ but not $H^{s+\varepsilon}(\bbS^d)$, so the infinite-width limit would gain an extra half-order of smoothness.
\end{example}

\begin{example}[ReLU sample paths]
For the ReLU activation, in the case with biases ($\sigma_b^2\sigma_i^2 > 0$), we know from \Cref{thm:main_result} as well as \cite{bietti_deep_2021} that $\calH_{\nngp_L} \cong H^{d/2 + 3/2}(\bbS^d)$. We have $H^{d/2 + 3/2}(\bbS^d) = H^{d + \alpha}(\bbS^d)$ with $\alpha = 3/2 - d/2$. Therefore, by \Cref{theorem:path_smoothness}, the sample paths of the ReLU NNGP essentially have the smoothness $d/2 + \alpha = 3/2$, assuming $\alpha > 0$ or, equivalently, $d \in \{1, 2\}$. 
\end{example}

This phenomenon is reminiscent of how, in a suitable small-step limit, a random walk of increasingly small step size converges to the Brownian motion, which essentially has paths of smoothness  $1/2$:
\begin{remark}[Intuition: analogy to random walks]\label{rem:intuition}
	Let $g_n: [0,1] \to \R$ be the random walk of $n$ steps, defined as $g_n(t) = \frac{1}{\sqrt n} \sum_{i=1}^{\lfloor nt \rfloor} Z_i $ with i.i.d.\ coin tosses $Z_i$.
	Then $g_n$ has $n$ non-smoothnesses of degree $s=0$ which are increasingly dense in $[0,1]$, and converges weakly to a standard Brownian motion on $[0,1]$. 
	The paths of a Brownian motion are essentially of (Hölder) smoothness $1/2$; they gain half an order of smoothness. Intuitively, one might think of this that as the non-smoothnesses become dense, they also become smaller and in this sense less severe, however covering the whole interval. 
			
	This is similar to what can be observed in a two-layer neural network: 
	Consider a shallow network with bias $\sigma_b^2\sigma_i^2>0$ and the Heaviside activation function $\varphi(x) = \mathds{1}_{\mathbb{R}_{\ge 0}}(x)$ on the one-dimensional sphere $\mathbb{S}^1$.
	The network of width $n$ has at most $n$ non-smoothnesses, and as $n$ increases, these non-smoothnesses will be increasingly dense in $\mathbb{S}^1$. So, in analogy to the random walk, the finite networks are functions of smoothness $0$ with increasingly small distances between the non-smoothnesses, and the jumps in these points shrink as $n$ grows. 
	The NNGP-RKHS of the infinite-width network is $\calH_{\nngp_L}= H^{3/2}(\mathbb{S}^1) $ by \cref{thm:main_result}. The path smoothness theorem \ref{theorem:path_smoothness} shows that the sample paths of the NNGP, that is the infinite width networks at initialization, are essentially of smoothness $1/2$. We observe the same effect: the paths gain half an order of smoothness compared to the finite networks. 
	
	The same intuition applies for activations of higher smoothness $s\ge 1$.

	For the NTK, essentially the same intuition can be envoked. Since the NTK includes derivatives of the activation function, the path paths are one level rougher, they have smoothness $s-1/2$. Note that the Heaviside activation does not allow an NTK, since it is not weakly differentiable.
\end{remark}

\section{POSSIBLE EXTENSIONS}\label{sec:extensions}
While this work considers neural networks consisting only of feedforward-layers, we briefly discuss possible extensions to more complex architectures, 
by sketching the required steps to include layer normalization and residual layers.
In a similar style, adapting our argumentation to many other architectures might be possible.

Our core strategy is to write the kernels as sums, products, and compositions of dual activations and linear functions. 
Our results from \cref{sec:dual_activations} prove the boundary behavior of dual activations, and \cref{sec:appendix:boundary} introduces a calculus for how this boundary behavior behaves under sums, products, and compositions. Hence, as long as other kernels can be written in the same form, our tools should be applicable to them, and in this case one can proceed similarly as we did to obtain \cref{thm:main_result}.

Note that a network with complex architecture remains (in distribution, in the case of finite width) rotationally invariant if the first operation in the network is a linear layer with Gaussian initialization.

\paragraph{Residual layers}
Residual layers fit well into the framework we considered:  
Similar recursive fromulas as presented in \cref{lemma:limit_ntk_formula} can be derived for a network containing residual layers using  Appendix E of \citet{yang_tensor_programs_II_ntk_for_any_architecture}.
Then, it may be possible to imitate the analysis done in the further course of the \cref{sec:appendix:neural_kernel_proofs} 
for this recursion formula, which would work similar to the analysis of the NNGP-kernel.

\paragraph{Layer normalization}
For simplicity, let us consider RMSNorm
\citep{zhang2019root}, which is popular in Transformers, without parameters.
Let the $l$-th post-activation layer for $l\ge 2$ be normalized to a unit vector as $\tilde \bfx^{(l-1)} \equalDef \act(\bfz^{(l)}) / \norm{\act(\bfz^{(l)})}_2$. In a recursive decomposition of the kernel as in \cref{lemma:limit_ntk_formula}, this leads to a linear rescaling of the kernel at each layer.
In a standard feedforward-network as in \cref{def:network}, the norm of the post activation vector \emph{without layer normalization} converges almost surely to a constant value, namely $\sqrt{k^{\text{NNGP}}_{l}(\bfx,\bfx)}$, by \cref{lemma:limit_ntk_formula}. Our analysis shows that the scale of that value does not influence the RKHSs associated to the neural network -- as an illustrative example, the RKHSs do not change when changing the activation function from $\varphi$ to $\lambda\varphi$ for $\lambda\ne 0$. 

\section{IMPLICATIONS} \label{sec:implications}

Our results can be used to obtain a more complete understanding of neural networks in the kernel regime. Below, we list some non-exhaustive ways in which our results could complement existing theory.

\paragraph{Deep vs.\ shallow networks} By showing that the RKHSs of deep and shallow ReLU NTKs are equivalent, \cite{bietti_deep_2021} concluded that the NTK regime is not sufficient to model the benefits of depth for practical neural networks. Our results show that this holds for a much larger class of not infinitely smooth activation functions: Whenever $s \in [1, \infty)$, all depths $\geq 2$ (or $\geq 3$ in the bias-free case) yield equivalent RKHSs for the NTK as well as for the NNGP. On the other hand, our results show that the RKHSs are not equivalent for polynomial activations, while the situation for other infinitely smooth activations remains unclear, as more precise results are only known in the two-layer case \citep{murray_characterizing_2023}.

\paragraph{Training dynamics} Regarding theoretical modeling, the dynamics of gradient flow are difficult to study for ReLU activations since their derivative is discontinuous at zero. The use of smoother but not infinitely smooth activations such as CELU can enable theoretical studies of gradient flow in a setting where the RKHS is well-known, without limiting results to powers of ReLU as in previous work \citep{vakili_information_2023}. For example, the gradient flow analysis of \cite{bowman2022spectral} yields finite-time bounds for the deviation from the infinite-width training trajectory but requires activation functions smoother than ReLU. The use of semi-smooth activation functions might also be interesting to the NTK analysis of physics-informed neural networks \citep{wang2022and}, since their training involves taking higher-order derivatives of neural networks. 
The faster eigenvalue decay for smoother activations can lead to slower training dynamics \citep[e.g.,][]{raskutti2014early, cao_towards_2019}.
Compared to the pure kernel case, gradient flow on infinite-width NNs in the kernel regime also has to learn to remove the random initial function \citep{lee_wide_2019}. Thanks to \Cref{theorem:path_smoothness}, we now know the smoothness of this function, which makes it possible to derive convergence rates for the regression of this function with the NTK. To this end, we refer to the proof of Theorem G.5 in \cite{haas_mind_2023} for convergence rates with regularization in the noisy case, and to Theorem 3.3 in \cite{le2006continuous} for an approach towards convergence rates of interpolation.

\paragraph{Generalization} Our main result allows the application of generalization results to more activation functions. For example, the inconsistency results of \cite{haas_mind_2023} for overfitting with NTKs and NNGPs apply whenever the RKHS is a Sobolev space, and hence by our results, they apply to a wide range of finitely smooth activation functions.
Our results also suggest that the spectral bias of neural networks \citep{rahaman2019spectral} can be influenced by changing the smoothness of the activation function. Finally, while \cite{simon2022reverse} show that a large class of kernels can be represented as NTKs and NNGPs, our results help to elucidate the properties of activation functions that realize these kernels and consider networks of any depth.

\section{RELATED WORK} \label{sec:related_work}

\paragraph{Neural kernels} Multiple results have shown that in typical settings, infinite-width neural networks behave like Gaussian processes at initialization \citep{neal_priors_1996, daniely_toward_2016, lee_deep_2018, matthews_gaussian_2018}. The associated covariance function is known as the neural network Gaussian process (NNGP) kernel or random features kernel. \cite{jacot_neural_2018} discovered that the training dynamics of such infinite-width neural networks can be described by a different kernel, the so-called neural tangent kernel (NTK). The same observation has been made by \cite{lee_wide_2019}. Follow-up work has generalized the scope of these results \citep[e.g.,][]{arora_exact_2019, yang_scaling_limits_of_wide_neural_networks, yang_tensor_programs_II_ntk_for_any_architecture}. These works do not yet give deep insights into the nature of the NTK and NNGP kernels.

\paragraph{Smoothness of neural kernels} \cite{belkin_understand_2018} noted similarities in generalization behavior between the Laplace kernel and neural networks. A follow-up line of work managed to characterize the RKHS of different NTKs on $\bbS^d$. \cite{bietti_inductive_2019} derived the structure of the RKHS of the two-layer ReLU NTK without bias ($\sigma_b^2 = 0$). \cite{basri_convergence_2019} analyzed two-layer ReLU NTKs with and without bias. \cite{geifman_similarity_2020} showed that the two-layer ReLU NTK with bias is equivalent to the Laplace kernel, and showed that the RKHS of the deep ReLU NTK is at least as large as the RKHS of the shallow one. The equivalence of these RKHSs was then shown by \cite{chen_deep_2020} for $\sigma_b^2 > 0, \sigma_i = 0, L \geq 2$. Simultaneously, \cite{bietti_deep_2021} characterized the structure of the deep ReLU NTK and NNGP without bias ($\sigma_b^2 = 0, L \geq 3$) and also showed some results for step activations as well as infinitely differentiable activations. \cite{vakili_information_2023} extended this analysis to RePU (integer powers of the ReLU activation). Finally, \cite{haas_mind_2023} formally established the connection of these RKHSs to Sobolev spaces on the sphere. \cite{murray_characterizing_2023} prove further spectral properties of the NTK, including a characterization of the RKHS of two-layer NNs for tanh and RBF activation functions that is more precise than our result for these activations. \cite{scetbon_spectral_2021} prove some results about spectral properties of dot-product kernels, with weak results concerning the NTK. However, to the best of our knowledge, we are the first to provide a characterization of the RKHS for many non-smooth activation functions like LeakyReLU, SELU, or ELU.

The NTK has also been analyzed for other architectures such as residual networks \citep{belfer2024spectral} and convolutional residual networks \citep{barzilai_kernel_2023}. \cite{dandi_understanding_2021} investigate the contributions of individual layers towards the properties of the NTK.
\cite{simon2022reverse} show that a large class of dot-product kernels on the sphere can be realized as NTK or NNGP kernels through suitable activation functions.
While most results analyze the NTK on $\bbS^d$, \cite{lai2023generalization} analyze the eigenvalue decay on $\bbR$ and \cite{li2024eigenvalue} leverage known results on $\bbS^d$ to obtain decay rates for $\bbR^{d+1}$. %
In terms of activation functions, these results do not go beyond the known results for $\bbS^d$.

\paragraph{Spectral properties} \cite{panigrahi_effect_2019} investigate minimum eigenvalues of NTK matrices and find a dependence on the smoothness of the activation function. %
\cite{nguyen_tight_2021} and \cite{karhadkar2024bounds} provide more bounds for minimum eigenvalues. These results are related to the structure of the RKHS through kernel matrix concentration inequalities but do not reveal the full structure of the RKHS.

\section{CONCLUSION} \label{sec:conclusion}

We have shown general results for the structure of the RKHS for fully connected neural networks with different activation functions, as well as results for the sample path smoothness of randomly initialized wide neural networks. We refer the reader to \Cref{sec:appendix:overview} for a short overview of the central techniques and objects of our main proof.

\paragraph{Possible extensions} Our work offers several possibilities for extensions. First, following \cite{li2024eigenvalue}, an extension of our results from $x \in \bbS^d$ to $x \in \bbR^{d+1}$ could be possible. Second, one could study different activation functions in different layers; we strongly conjecture that the behavior of the NTK and NNGP will be determined by the least smooth activation function. Third, while we study non-smoothness in zero, it might be possible to apply our proof technique to functions that are non-smooth in other points---the main obstacle is to find reference functions $s_k$ with the given non-smoothness that are similarly amenable to analysis as the functions $s_k$ we use in the appendix. Fourth, a more precise characterization of the RKHS for infinitely smooth non-polynomial activations is still open but might require stronger tools \citep[e.g.,][]{minh_mercers_2006,Azevedo_ev_2014,murray_characterizing_2023}. Finally, our results might be extensible to analyses for residual networks \citep[e.g.,][]{belfer2024spectral, tirer2022kernel}, convolutional networks \citep[e.g.,][]{geifman2022spectral, barzilai_kernel_2023}, or more general architectures including transformers \citep{yang_tensor_programs_II_ntk_for_any_architecture}.

\section*{ACKNOWLEDGMENTS}

The authors thank Ingo Steinwart, Tizian Wenzel, Jens Wirth, and Alberto Bietti for valuable discussions.

Max Sch\"olpple was funded by the Deutsche Forschungsgemeinschaft (DFG) in the project STE 1074/5-1, within the DFG priority programm SPP 2298 ``Theoretical Foundations of Deep Learning'', and by the International Max Planck Research School for Intelligent Systems (IMPRS-IS).

\section*{CONTRIBUTION STATEMENT}
D.H.\ conceived the project and the high-level proof strategy. D.H.\ and M.S.\ contributed to proof writing and paper writing.

\renewcommand{\bibname}{References}
\bibliographystyle{plainnat}
\bibliography{2020_ntk_noise}

\section*{Checklist}

\begin{enumerate}

  \item For all models and algorithms presented, check if you include:
  \begin{enumerate}
    \item A clear description of the mathematical setting, assumptions, algorithm, and/or model. [Not Applicable]
    \item An analysis of the properties and complexity (time, space, sample size) of any algorithm. [Not Applicable]
    \item (Optional) Anonymized source code, with specification of all dependencies, including external libraries. [Not Applicable]
  \end{enumerate}

  \item For any theoretical claim, check if you include:
  \begin{enumerate}
    \item Statements of the full set of assumptions of all theoretical results. [Yes] This paper is dedicated to theoretical results.
    \item Complete proofs of all theoretical results. [Yes]
    \item Clear explanations of any assumptions. [Yes]     
  \end{enumerate}

  \item For all figures and tables that present empirical results, check if you include:
  \begin{enumerate}
    \item The code, data, and instructions needed to reproduce the main experimental results (either in the supplemental material or as a URL). [Yes]
    \item All the training details (e.g., data splits, hyperparameters, how they were chosen). [Not Applicable]
    \item A clear definition of the specific measure or statistics and error bars (e.g., with respect to the random seed after running experiments multiple times). [Not Applicable]
    \item A description of the computing infrastructure used. (e.g., type of GPUs, internal cluster, or cloud provider). [Not Applicable]
  \end{enumerate}

  \item If you are using existing assets (e.g., code, data, models) or curating/releasing new assets, check if you include:
  \begin{enumerate}
    \item Citations of the creator If your work uses existing assets. [Not Applicable]
    \item The license information of the assets, if applicable. [Not Applicable]
    \item New assets either in the supplemental material or as a URL, if applicable. [Not Applicable]
    \item Information about consent from data providers/curators. [Not Applicable]
    \item Discussion of sensible content if applicable, e.g., personally identifiable information or offensive content. [Not Applicable]
  \end{enumerate}

  \item If you used crowdsourcing or conducted research with human subjects, check if you include:
  \begin{enumerate}
    \item The full text of instructions given to participants and screenshots. [Not Applicable]
    \item Descriptions of potential participant risks, with links to Institutional Review Board (IRB) approvals if applicable. [Not Applicable]
    \item The estimated hourly wage paid to participants and the total amount spent on participant compensation. [Not Applicable]
  \end{enumerate}

\end{enumerate}

\onecolumn

\begin{appendices}

\listofappendices

\numberwithin{theorem}{section}
\numberwithin{lemma}{section}
\numberwithin{corollary}{section}
\numberwithin{proposition}{section}
\numberwithin{exenv}{section}
\numberwithin{remenv}{section}
\numberwithin{defenv}{section}

\counterwithin{figure}{section}
\counterwithin{table}{section}
\counterwithin{equation}{section}

\crefalias{section}{appendix}
\crefalias{subsection}{appendix}

\newpage

\section{OVERVIEW AND NOTATION} \label{sec:appendix:overview}

\Cref{sec:appendix:overview} -- \ref{sec:appendix:neural_kernel_proofs} are dedicated to the proof of the main theorem, \Cref{thm:main_result}. They build on each other and should be read in order. \Cref{sec:integrability} and \Cref{sec:appendix:path_smoothness} contain smaller proofs that are independent of the other appendices. \Cref{sec:appendix:activation_quadrature} derives a way to numerically compute dual activations for our experiments.

\paragraph{Analysis through boundary behavior.} As mentioned before, the NTK and NNGP kernels on the sphere are dot-product kernels, i.e., they are of the form $k(x, x') = \kappa(\langle x, x'\rangle)$. To obtain eigenvalue decays for $k$, we build on the results of \cite{bietti_deep_2021}, which requires to study the behavior of $\kappa(t)$ for $t \to 1$ and $t \to -1$. To unify these analyses, we let $\tau \in \{-1, 1\}$ and study the behavior of $\kappa_\tau: (0, 2) \to \bbR, t \mapsto \kappa(\tau(1-t))$ for $t \searrow 0$. We first introduce function classes with certain boundary behavior that will be central in simplifying computations later on:

\begin{definition}[Function classes with controlled boundary behavior] \label{def:boundary_function_classes}
Let $\alpha, \beta, \gamma \in \bbR$. We define sets of functions $\calP_{\alpha, \beta}$, $\calR_{\gamma}$, and $\calQ_{\alpha, \beta}$ as follows:
\begin{enumerate}[(a)]
\item \textbf{Sums of high-enough powers:} We define $\calP_{\alpha, \beta}$ as the set of functions $f: (0, 2) \to \bbR$ of the form
\begin{IEEEeqnarray*}{+rCl+x*}
f(t) & = & \sum_{i=1}^n a_i t^{\alpha_i} \quad \text{with} \quad n \in \bbN_0, a_i, \alpha_i \in \bbR, \begin{cases}
\alpha_i > \alpha &\text{for all $i$ with } \alpha_i \in \bbZ \\
\alpha_i > \beta &\text{for all $i$ with } \alpha_i \not\in \bbZ
\end{cases}
\end{IEEEeqnarray*}
Note the use of $>$, which will be convenient later.
\item \textbf{Negligible remainder functions:} We define $\calR_\gamma$ as the set of $C^\infty$-functions $f: (0, 2) \to \bbR$ such that for all $m \in \bbN_0$, the $m$-th derivative $f^{(m)}$ satisfies
\begin{IEEEeqnarray*}{+rCl+x*}
|f^{(m)}(t)| & \leq & O(t^{\gamma-m}) \quad \text{ for }t \searrow 0~.
\end{IEEEeqnarray*}
Intuitively, $\calR_\gamma$ contains functions that are at least as benign as $t \mapsto t^\gamma$, in terms of the behavior of all derivatives for $t \to 0$.
\item \textbf{Sums of powers plus negligible remainder:} We define $\calQ_{\alpha, \beta} \equalDef \bigcap_{\gamma \in \bbR} (\calP_{\alpha, \beta} + \calR_\gamma)$. In other words, $\calQ_{\alpha, \beta}$ contains those functions that are sums of powers plus an arbitrarily \quot{high-order} remainder. Note for $f\in \calQ_{\alpha,\beta}$ that in general, the number $n$ of terms in the sum of powers when writing $f$ as $f\in \calP_{\alpha,\beta} + \calR_\gamma$ will depend on the imposed order $\gamma$ of the remainder.
\end{enumerate}
We will mostly use $\calQ_{\alpha, \beta}$ in the paper. Instead of $f \in \calQ_{\alpha, \beta}$, we also write $f(t) = \calQ_{\alpha, \beta}(t)$ analogous to common $O$-notation. We often track coefficients of leading terms separately, e.g., writing $g(t) = at^\alpha + bt^\beta + \calQ_{\alpha, \beta}(t)$.
\end{definition}

Furthermore, we require refined adapted asymptotic notation for sequences, as we need the asympototic behavior of a sequence $(a_n)$ but also need to know wether there are elements $a_n =0$, which is crucial to compare Sobolev spaces based on eigenvalues, cf.\ \cref{lemma:sobolev_spherical}.
\begin{definition}[Asymptotic notation]
For functions $\kappa$, we use standard asymptotic notation like $\kappa(t) = O(t^{1/2})$ for $t \searrow 0$. For sequences $(a_n)_{n \in I}, (b_n)_{n \in I} \subseteq \bbR_{\ge 0}$ indexed by $I \subseteq \bbN_0$, we use an index \quot{$\forall n$} for asymptotic notation to denote that it should hold for all $n$ and not only for almost all $n$, and to indicate the variable $n$ that the constant is independent of. Specifically,
\begin{IEEEeqnarray*}{+rClCl+x*}
a_n &=& O_{\forall n}(b_n) & \equivDef & \exists C > 0: \forall n \in I: a_n \leq C b_n \\
a_n &=& \Omega_{\forall n}(b_n) & \equivDef & b_n = O_{\forall n}(a_n) \\
a_n &=& \Theta_{\forall n}(b_n) & \equivDef & a_n = O_{\forall n}(b_n)\text{ and } b_n = O_{\forall n}(a_n) \\
a_n &=& o_{\forall n}(b_n) & \equivDef & a_n = O_{\forall n}(b_n) \text{ and }\forall C > 0: \exists n_0 \in \bbN_0: \forall n \geq n_0: a_n \leq Cb_n~.
\end{IEEEeqnarray*}
Using this notation, when we write $\mu_l = \Theta_{\forall l}((l+1)^{-d-2s})$, it implies that all $\mu_l$ are positive. We write $l+1$ and not $l$ to make the right-hand side well-defined for all $l \in \bbN_0$. The constant $C$ implied in this notation can depend on $d$ and $s$.
\end{definition}

A key element of our analysis is the following result, formulated using the previously defined boundary function classes:

\begin{restatable}[Adaptation of Theorem 7 in arXiv v4 of \citealt{bietti_deep_2021}]{theorem}{thmBiettiBachAdapted} \label{thm:bietti_bach_adapted}
Let $\kappa: [-1, 1] \to \bbR$ be a function that is smooth on $(-1,1)$ such that  $k_{\kappa, d}(\bfx, \bfx') = \kappa(\langle \bfx, \bfx' \rangle)$ is a positive semi-definite kernel on all spheres $\bbS^d, d \in \bbN_{\geq 1}$. Suppose that there exists $0 < \beta \in \bbR \setminus \bbZ$ and $b_{-1}, b_1 \in \bbR$ such that for $\tau \in \{-1, 1\}$,
\begin{IEEEeqnarray*}{+rCl+x*}
\kappa(\tau(1-t)) = b_\tau t^\beta + \calQ_{-1, \beta}(t)~.
\end{IEEEeqnarray*}
Then, for a given dimension $d \in \bbN_{\geq 1}$, the eigenvalues $\mu_l = \mu_l(\kappa, d)$ as defined in \Cref{sec:preliminaries} satisfy:
\begin{enumerate}[(a)]
\item For $l \in \bbN_0$ even, if $b_{-1} \neq -b_1$, then $\mu_l = \Theta_{\forall l}((l+1)^{-d-2\beta})$.
\item For $l \in \bbN_0$ even, if $b_{-1} = -b_1$, then  $\mu_l = o_{\forall l}((l+1)^{-d-2\beta})$.
\item For $l \in \bbN_0$ odd, if $b_{-1} \neq b_1$, then  $\mu_l = \Theta_{\forall l}((l+1)^{-d-2\beta})$.
\item For $l \in \bbN_0$ odd, if $b_{-1} = b_1$,  then $\mu_l = o_{\forall l}((l+1)^{-d-2\beta})$.
\end{enumerate}

\end{restatable}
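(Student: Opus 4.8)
The plan is to follow the proof of Theorem~7 in \cite{bietti_deep_2021}, pinning down exactly where the richer remainder class $\calQ_{-1,\beta}$ enters and verifying that it never affects the leading order $(l+1)^{-d-2\beta}$. Recall (see \Cref{sec:preliminaries}) that $\mu_l=\mu_l(\kappa,d)$ equals, up to a strictly positive factor depending only on $l$ and $d$, the $l$-th coefficient of $\kappa$ in its expansion in the Gegenbauer polynomials $P_{l,d}$ (normalized by $P_{l,d}(1)=1$); concretely, $\mu_l$ is proportional to $\int_{-1}^1 \kappa(t)\,P_{l,d}(t)\,(1-t^2)^{d/2-1}\diff t$. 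Since $P_{l,d}(-t)=(-1)^l P_{l,d}(t)$, a singularity of $\kappa$ located at $t=1$ and one located at $t=-1$ contribute to this integral with relative sign $(-1)^l$; this is the structural origin of the even/odd dichotomy in (a)--(d).

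First I would localize. Choose smooth cut-offs $\chi_+,\chi_-\colon[-1,1]\to[0,1]$ with $\chi_+\equiv 1$ on $[1/2,1]$, $\chi_+\equiv 0$ on $[-1,0]$, $\chi_-(t)\equalDef\chi_+(-t)$, and set $\chi_0\equalDef 1-\chi_+-\chi_-$, which is supported in $[-1/2,1/2]$. Because $\kappa$ is smooth on $(-1,1)$, the middle piece $\chi_0\kappa$ belongs to $C^\infty([-1,1])$, and after the substitution $t=\cos\theta$ its Gegenbauer coefficients become Fourier coefficients of a smooth $2\pi$-periodic function, hence decay faster than any power of $l$; thus $\chi_0\kappa$ is harmless. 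By the $t\mapsto -t$ symmetry it suffices to analyze $\chi_+\kappa$. On $\supp\chi_+\subseteq(0,1]$ the hypothesis with $\tau=1$ applies, and writing the $\calQ_{-1,\beta}$ remainder as an element of $\calP_{-1,\beta}+\calR_\gamma$ for a value $\gamma$ to be chosen large, we obtain $\kappa(t)=b_1(1-t)^\beta+\sum_i a_i(1-t)^{\alpha_i}+r(1-t)$ there, with the $\alpha_i$ the exponents permitted in $\calP_{-1,\beta}$ and $r\in\calR_\gamma$ (the number of terms grows with $\gamma$, which is fine).

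Next I would classify the four types of summand of $\chi_+\kappa$. \emph{(i)} The genuine singular term $b_1\chi_+(t)(1-t)^\beta$: the classical asymptotics of Gegenbauer coefficients of functions with an algebraic endpoint singularity---the computation underlying Theorem~7 of \cite{bietti_deep_2021}, where $\beta\notin\bbZ$ is essential---give its contribution to $\mu_l$ as $b_1 c_{\beta,d}(l+1)^{-d-2\beta}(1+o(1))$ with a constant $c_{\beta,d}\neq 0$ of fixed sign and, crucially, no $(-1)^l$. \emph{(ii)} Integer exponents $\alpha_i\in\bbN_0$: then $(1-t)^{\alpha_i}$ is a polynomial, so $\chi_+(t)(1-t)^{\alpha_i}\in C^\infty([-1,1])$ and decays faster than any power, as in the previous paragraph. \emph{(iii)} Non-integer $\alpha_i>\beta$: the same singular-coefficient asymptotics give a contribution that is $\Theta_{\forall l}((l+1)^{-d-2\alpha_i})=o_{\forall l}((l+1)^{-d-2\beta})$. \emph{(iv)} The remainder $\chi_+(t)r(1-t)$ with $r\in\calR_\gamma$ is of class $C^{\lfloor\gamma\rfloor}$ up to $t=1$ and vanishes to order $\gamma$ there, so its coefficients decay like a power of $l$ whose exponent tends to infinity with $\gamma$; choosing $\gamma$ large makes this $o_{\forall l}((l+1)^{-d-2\beta})$. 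The term $\chi_-\kappa$ is handled identically after $t\mapsto -t$, contributing $(-1)^l b_{-1}c_{\beta,d}(l+1)^{-d-2\beta}(1+o(1))$ plus lower-order terms.

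Summing everything, $\mu_l=\bigl(b_1+(-1)^l b_{-1}\bigr)c_{\beta,d}(l+1)^{-d-2\beta}(1+o(1))+o_{\forall l}((l+1)^{-d-2\beta})$, where the bracket equals $b_1+b_{-1}$ for even $l$ and $b_1-b_{-1}$ for odd $l$; since $c_{\beta,d}\neq 0$ this gives $\Theta_{\forall l}$ in cases (a), (c) and $o_{\forall l}$ in cases (b), (d). Positivity of \emph{every} $\mu_l$---needed so that $\Theta_{\forall l}$, rather than merely ``$\Theta$ for large $l$'', holds---follows from positive semidefiniteness on all $\bbS^d$ via Schoenberg's characterization: the above asymptotics force the power-series coefficients of $\kappa$ to be eventually positive in each parity class, whence $\mu_l>0$ for all $l$, and the finitely many small $l$ in the $O_{\forall l}$ halves of the $o$-statements are absorbed into the constant. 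The main obstacle is the precise two-sided Gegenbauer-coefficient asymptotics for $\chi(t)(1-t)^\alpha$ with $\alpha\notin\bbZ$ used above: the bound $O((l+1)^{-d-2\alpha})$ is a routine integration-by-parts argument, but the nonvanishing (and fixed sign) of the limiting constant $c_{\alpha,d}$---which drives the $\Theta$ lower bounds and fixes the exponent $-d-2\beta$---is the quantitative core, and is exactly what is imported from \cite{bietti_deep_2021}; the rest is careful but routine bookkeeping, checking that every exponent admissible in $\calQ_{-1,\beta}$ is strictly subleading and that multiplying by the cut-offs does not spoil smoothness.
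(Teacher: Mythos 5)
Your route is genuinely different from the paper's: you re-derive the Gegenbauer-coefficient asymptotics via localization near $t=\pm1$ and a case analysis of singular terms, whereas the paper applies Theorem 7 of \cite{bietti_deep_2021} as a black box, simply sorting the $\calQ_{-1,\beta}$ decomposition into the format that theorem expects (integer powers into the polynomial $p_\tau$, non-integer powers into the $c_{j,\tau}t^{\nu_j}$, and the $\calR_\gamma$ remainder into the $O(t^{\nu_1+1+\eps})$ error with matching derivative bounds). The genuine gap in your argument is the nonvanishing of the constant $c_{\beta,d}$. You assert it ``is exactly what is imported from \cite{bietti_deep_2021},'' but the paper explicitly records that Bietti-Bach do \emph{not} establish $C(d,\nu_1)\neq 0$: their Theorem 7 gives $\mu_k\sim(c_{1,1}\pm c_{1,-1})\,C(d,\nu_1)\,k^{-d-2\nu_1}$ with the constant unidentified, and if it vanished the $\Theta$ lower bounds in cases (a), (c) would collapse to an $o$-bound. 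The paper closes this hole with a dedicated contradiction argument: apply Bietti-Bach's theorem to the reference kernel $\tilde\kappa(t)=(1-t)^{\nu_1}$ and observe that $C(d,\nu_1)=0$ would leave $\tilde\kappa$ with only finitely many nonzero Legendre coefficients, hence a polynomial, contradicting $\nu_1\notin\bbZ$. You need this (or an explicit computation of the Gegenbauer-coefficient asymptotics of $(1-t)^\beta$) before the $\Theta$ conclusion is justified.

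A secondary slip concerns the upgrade to $\Theta_{\forall l}$. You claim that ``the above asymptotics force the power-series coefficients of $\kappa$ to be eventually positive in each parity class.'' That does not follow: the asymptotics control the \emph{eigenvalues} $\mu_l$, not the Taylor coefficients of $\kappa$. The correct step is the inheritance property in \Cref{lemma:strictly_positive_Eigenvalues}: for a kernel positive semi-definite on all spheres, $\mu_{\hat l}>0$ with $\hat l$ even (resp.\ odd) forces $\mu_l>0$ for every even (resp.\ odd) $l\leq\hat l$, so infinitely many strictly positive eigenvalues of each parity — which your asymptotics do give — make \emph{all} $\mu_l$ strictly positive. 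Your conclusion is correct, but the justification as written is a non-sequitur and should be replaced by this inheritance argument.
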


This leads to the following strategy, detailed below, for analyzing the eigenvalues of neural kernels on the sphere:
\begin{enumerate}[(1)]
\item Write neural kernels on the sphere as a composition of dual activation functions.
\item Analyze the boundary behavior of dual activation functions.
\item Analyze the boundary behavior for sums, products, and compositions of functions, which will be done in \Cref{prop:rules_for_Q}.
\item Analyze the behavior of even and odd functions to obtain stronger results for the special cases (b) and (d) in \Cref{thm:bietti_bach_adapted}.
\item Assemble everything.
\end{enumerate}

\paragraph{Dual activations and neural kernels.} To apply the theorem above, similar to \cite{bietti_deep_2021}, we derive a convenient expression for $\kappa$ using \emph{dual activation functions} introduced by \cite{daniely_toward_2016}. %

\begin{definition}[Dual and rescaled activation functions]\label{definition:dual_and_rescaled_activation_function}
For a function $\varphi \in L_2(\calN(0, 1))$, we follow \cite{daniely_toward_2016} and define the dual activation
\begin{IEEEeqnarray*}{+rCl+x*}
\widehat \varphi: [-1, 1] \to \bbR, t \mapsto \bbE_{(u, v) \sim \calN(0, \Sigma_t)}[\varphi(u)\varphi(v)], \qquad \Sigma_t \equalDef \begin{pmatrix}
1 & t \\
t & 1
\end{pmatrix}~.
\end{IEEEeqnarray*}
Moreover, for a function $\act: \bbR \to \bbR$ and $a \in \bbR$, we define the rescaled function
\begin{IEEEeqnarray*}{+rCl+x*}
\scaledfn{\act}{a}: \bbR \to \bbR, \scaledfn{\act}{a}(x) & \equalDef & \act(ax)~. & \qedhere
\end{IEEEeqnarray*}
\end{definition}

Our following result, derived in \cref{subsec:neural_kernels}, shows that the NTK and NNGP kernels restricted to the sphere can be expressed using dual rescaled activations:

\begin{restatable}[Neural kernels on the unit sphere]{lemma}{lemmaKernelsSphere} \label{lemma:restriction_to_the_unit_sphere}
	Let the activation function $\act:\R\to\R$ fulfill \cref{ass:act}.
	Consider a neural network $f_\theta:\R^{d+1}\to \R$ initialized as in \cref{def:network}.
	For $l\ge 2$ and $t \in [-1, 1]$  we recursively define
	\begin{IEEEeqnarray*}{+rCl+x*}
		\alpha_1 & \equalDef & \sigma_b^2 \sigma_i^2 + \sigma_w^2 \\
		\alpha_{l} & \equalDef & \sigma_b^2 \sigma_i^2 + \sigma_w^2 \widehat{\scaledfn{\act}{\sqrt{\alpha_{l-1}}}}(1) \\
		\knngp_1(t) & \equalDef & \sigma_b^2 \sigma_i^2 + \sigma_w^2 t \\
		\knngp_{l}(t) & \equalDef & \sigma_b^2 \sigma_i^2 + \sigma_w^2 \widehat{\scaledfn{\act}{\sqrt{\alpha_{l-1}}}}(\knngp_{l-1}(t)/\alpha_{l-1}) \\
		\kntk_1(t) & \equalDef & \sigma_b^2 (1 - \sigma_i^2) + \knngp_1(t)\\
		\kntk_{l}(t)  &\equalDef & \sigma_b^2(1-\sigma_i^2) + \knngp_{l}(t) + \sigma_w^2 \kntk_{l-1}(t) \widehat{\scaledfn{(\act')}{\sqrt{\alpha_{l-1}}}}(\knngp_{l-1}(t)/\alpha_{l-1})~.
	\end{IEEEeqnarray*}
	Then for all $l\ge 1,\bfx, \bfx' \in \bbS^{d}$ we have $\alpha_l> 0$ and
	\begin{align*}
		\nngp_l(\bfx, \bar\bfx) &= \knngp_l(\langle \bfx, \bar\bfx\rangle)\\
		\ntk_l(\bfx, \bar\bfx) &= \kntk_l(\langle \bfx, \bar\bfx\rangle)~.
	\end{align*}
\end{restatable}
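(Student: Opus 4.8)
The plan is to combine the general recursive descriptions of the limiting NNGP and NTK kernels for arbitrary inputs and general variances $\sigma_w,\sigma_b,\sigma_i$, as provided by \cref{lemma:limit_ntk_formula}, with two facts specific to the sphere: Gaussian-initialized weight matrices make every kernel appearing in the recursion rotation-invariant, hence a dot-product kernel with \emph{constant} diagonal; and once the diagonal is a known constant, each Gaussian expectation in the recursion is exactly a value of a rescaled dual activation in the sense of \cref{definition:dual_and_rescaled_activation_function}. I would run a single induction on $l$ proving simultaneously that (i) $\nngp_l(\bfx,\bfx') = \knngp_l(\langle\bfx,\bfx'\rangle)$ and $\ntk_l(\bfx,\bfx') = \kntk_l(\langle\bfx,\bfx'\rangle)$ for $\bfx,\bfx' \in \bbS^d$; (ii) $\knngp_l(1) = \alpha_l$; and (iii) $\alpha_l > 0$.

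For $l=1$, $\bfz^{(1)} = \sigma_w\bfW^{(1)}\bfx^{(0)} + \sigma_b\bfb^{(1)}$ has i.i.d.\ centered-Gaussian coordinates with covariance $\sigma_w^2\langle\bfx,\bfx'\rangle + \sigma_b^2\sigma_i^2$ since $\|\bfx\|_2 = 1$, giving $\knngp_1(t) = \sigma_w^2 t + \sigma_b^2\sigma_i^2$ and $\alpha_1 = \knngp_1(1) = \sigma_w^2 + \sigma_b^2\sigma_i^2 > 0$; the first-layer NTK local term adds the gradient contributions of $\bfW^{(1)}$ (equal to $\sigma_w^2\langle\bfx,\bfx'\rangle$ on the sphere) and of $\bfb^{(1)}$ (equal to $\sigma_b^2$, regardless of $\sigma_i$), so $\kntk_1(t) = \sigma_w^2 t + \sigma_b^2 = \knngp_1(t) + \sigma_b^2(1-\sigma_i^2)$ as claimed. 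For the step, assume the claim at level $l-1$. Then the bivariate Gaussian of layer $l-1$ governing inputs $\bfx,\bfx'$ equals $\sqrt{\alpha_{l-1}}\,(u,v)$ in distribution with $(u,v)\sim\calN(0,\Sigma_\rho)$ and $\rho \equalDef \knngp_{l-1}(\langle\bfx,\bfx'\rangle)/\alpha_{l-1} \in [-1,1]$, the containment holding because the $2\times 2$ covariance matrix of $(z^{(l-1)}(\bfx),z^{(l-1)}(\bfx'))$ is positive semidefinite with diagonal $\alpha_{l-1}$. Substituting into the NNGP update from \cref{lemma:limit_ntk_formula} and pulling $\sqrt{\alpha_{l-1}}$ into the activation turns the Gaussian expectation into $\widehat{\scaledfn{\act}{\sqrt{\alpha_{l-1}}}}(\rho)$ by the definition of the dual activation, which is the stated recursion for $\knngp_l$; it depends on $\bfx,\bfx'$ only through $\langle\bfx,\bfx'\rangle$, and at $\bfx=\bfx'$ (so $\rho=1$) it gives $\knngp_l(1) = \sigma_b^2\sigma_i^2 + \sigma_w^2\widehat{\scaledfn{\act}{\sqrt{\alpha_{l-1}}}}(1) = \alpha_l$. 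Here $\widehat{\scaledfn{\act}{\sqrt{\alpha_{l-1}}}}(1) = \bbE[\act(\sqrt{\alpha_{l-1}}u)^2] > 0$ because $x\mapsto\act(\sqrt{\alpha_{l-1}}x)$ is piecewise smooth and not identically zero, hence nonzero on a set of positive measure; with $\sigma_w>0$ this yields $\alpha_l \geq \sigma_w^2\widehat{\scaledfn{\act}{\sqrt{\alpha_{l-1}}}}(1) > 0$. The NTK step is analogous: the derivative kernel in the recursion is $\bbE[\act'(\sqrt{\alpha_{l-1}}u)\act'(\sqrt{\alpha_{l-1}}v)] = \widehat{\scaledfn{(\act')}{\sqrt{\alpha_{l-1}}}}(\rho)$ — well-defined because $\act'$ exists off the Lebesgue-null set $\{0\}$ and lies in $L_2(\calN(0,\sigma^2))$ by \cref{ass:act} — and the layer-$l$ local term contributes $\knngp_l$ plus the bias piece $\sigma_b^2(1-\sigma_i^2)$ not already inside $\knngp_l$, giving the stated recursion for $\kntk_l$. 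Finiteness of all dual activations used above rests on $\act,\act'\in L_2(\calN(0,\sigma^2))$ for every $\sigma^2>0$, which follows from \cref{ass:act} via \cref{prop:act_assumptions}.

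I expect the step requiring the most care to be the simultaneous induction establishing the dot-product structure: one must verify that the recursion of \cref{lemma:limit_ntk_formula} depends on $\bfx,\bfx'$ only through the three previous-layer values $\nngp_{l-1}(\bfx,\bfx)$, $\nngp_{l-1}(\bfx',\bfx')$, $\nngp_{l-1}(\bfx,\bfx')$, that on $\bbS^d$ the first two collapse to the single constant $\alpha_{l-1}$, and only then perform the $\sqrt{\alpha_{l-1}}$-rescaling that exposes the dual activation. A secondary delicate point is the bias bookkeeping in the NTK: the gradient with respect to $\bfb^{(l)}$ always carries the factor $\sigma_b$, whereas the NNGP kernel sees the bias only with variance $\sigma_b^2\sigma_i^2$, which is precisely the origin of the correction $\sigma_b^2(1-\sigma_i^2)$. (For discontinuous $\act$, i.e.\ $\smoothness{\act}=0$, the NTK is not genuinely well-defined; the NTK recursion should be read as valid whenever \cref{lemma:limit_ntk_formula} yields a limiting NTK.)
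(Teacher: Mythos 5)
Your proposal is correct and follows essentially the same route as the paper: apply the recursion from \cref{lemma:limit_ntk_formula}, observe that on $\bbS^d$ the layer-$(l-1)$ covariance matrix is $\alpha_{l-1}$ times a matrix with unit diagonal (so its off-diagonal entry $\rho = \knngp_{l-1}(\langle\bfx,\bfx'\rangle)/\alpha_{l-1}$ lies in $[-1,1]$), pull the $\sqrt{\alpha_{l-1}}$ rescaling into the activation to expose the dual activation, and induct. The paper treats $\alpha_l>0$ and $\alpha_l=\nngp_l(1)$ as a preliminary separate induction rather than folding them into the main one, but this is purely organizational; your simultaneous induction, the explicit justification that $\rho\in[-1,1]$, and the bias bookkeeping for the $\sigma_b^2(1-\sigma_i^2)$ term are all sound and fill in details the paper's terser proof leaves implicit.
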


\paragraph{Boundary behavior of dual activations.} 

To study the behavior of dual activations at $\pm 1$, we first need a definition:
\begin{definition}[Reference activations] \label{def:special_acts}
For $k \in \bbN_0$, we define the activation functions $\s_k: \bbR \to \bbR$ by
\begin{IEEEeqnarray*}{+rCl+x*}
\s_k(x) & \equalDef & \frac{1}{2k!} \sgn(x) x^k~, \qquad 
\sgn(x) \equalDef \begin{cases}
-1 &, x < 0 \\
0 &, x = 0 \\
1 &, x > 0~.
\end{cases}
\end{IEEEeqnarray*}
Let $\act$ be an activation function as in \Cref{ass:act}. For $k \in \bbN_0$, we define the coefficients
\begin{IEEEeqnarray*}{+rCl+x*}
\Delta_k(\act) \equalDef \act^{(k)}(0+) - \act^{(k)}(0-)~,
\end{IEEEeqnarray*}
where $\act^{(k)}(0+)$ is the right-sided limit of the $k$-th derivative of $\act$ at zero, and $\act^{(k)}(0-)$ is the left-sided limit.
\end{definition}
The motivation behind \Cref{def:special_acts} is given in \Cref{lemma:smoothing_k_times_at_zero}, which says that we can decompose
\begin{IEEEeqnarray*}{+rCl+x*}
\varphi = \sum_{k=0}^{m-1} \Delta_k(\varphi) s_k + \varphi_m~, \IEEEyesnumber \label{eq:act_decomposition}
\end{IEEEeqnarray*}
where $\varphi_m$ is $m$ times pseudo-differentiable (see \Cref{def:pseudo-derivative}). This decomposition is central to our proof. 

	\begin{remark}\label{rem:restricting_analysis_to_origin}
	The reason why our analysis is restricted to a non-smoothness at the origin
	is that the “reference activations” $s_k(x) = \frac{1}{2k!} \mathrm{sgn}(x) x^k$ introduced in \cref{def:special_acts} have their only non-smoothness in the origin. 
	For these reference activations, we can study their dual activation through analytic means (e.g., Lemmata \ref{lemma:s_0_dual} and \ref{lemma:asymptotics_at_the_boundary_of_hat_s}). Generalizing our results to non-smoothness in another point $b$ would require finding a similar family of reference functions that have their non-smoothness in $b$ while still allowing similar calculations for the dual activation. This is non-trivial, but if it can be done, the rest of our analysis should still apply in the same way.
	\end{remark}

As in \cite{daniely_toward_2016}, we use $h_n$ to denote the $n$-th normalized probabilist's Hermite polynomial, which means that $h_n$ is a polynomial of degree $n$ and $(h_n)_{n \in \bbN_0}$ forms an orthonormal basis of the Hilbert space $L_2(\calN(0, 1))$. For a function $f \in L_2(\calN(0, 1))$ and $n \in \bbN_0$, we define its $n$-th Hermite coefficient
\begin{IEEEeqnarray*}{+rCl+x*}
a_n(f) \equalDef \langle f, h_n \rangle_{L_2(\calN(0, 1))}~,
\end{IEEEeqnarray*}
such that we have the following Hermite expansion in $L_2(\calN(0, 1))$:
\begin{IEEEeqnarray*}{+rCl+x*}
f & = & \sum_{n=0}^\infty a_n(f) h_n~.
\end{IEEEeqnarray*}
\citet[]{daniely_toward_2016} show that the Hermite series of $\varphi$ is related to the Maclaurin series of $\widehat{\varphi}$ via 
\begin{align}
\widehat \varphi(t) = \sum_{n=0}^\infty a_n(\varphi)^2 t^n~. \label{eq:hermite_dual_activation_connection}
\end{align}
This also demonstrates that the \quot{dualization} of activation functions should be thought of as a (function-valued) quadratic form and not dualization. Using our decomposition in Eq.~\eqref{eq:act_decomposition}, we therefore obtain
\begin{IEEEeqnarray*}{+rCl+x*}
\widehat \varphi(t) & = & \sum_{i=0}^m \sum_{j=0}^m \sum_{n=0}^\infty \Delta_i(\varphi) \Delta_j(\varphi) a_n(s_i) a_n(s_j) t^n \\
&& ~+~ \sum_{i=0}^m \sum_{n=0}^\infty 2 \Delta_i(\varphi) a_n(s_i) a_n(\varphi_m) t^n + \sum_{n=0}^\infty a_n(\varphi_m)^2 t^n~. \IEEEyesnumber \label{eq:dual_act_decomposition}
\end{IEEEeqnarray*}
We will treat the first term involving $a_n(s_i) a_n(s_j)$ analytically, while using the smoothness of $\varphi_m$ to obtain fast decay rates for $a_n(\varphi_m)$, which will imply that the behavior at the boundary is dominated by the first term. In total, we obtain the following result at the end of \Cref{sec:appendix:general_activations}:

\begin{restatable}[Boundary behavior of dual activations]{theorem}{thmDualBoundary} \label{thm:smoothed_dual_activation_smooth_decomposition}
Let $\act$ be an activation function as in \Cref{ass:act} and let $m \in \bbN_0$ such that $\Delta_k(\act) = 0$ for all $k < m$ (see \Cref{def:special_acts}). Then, there exists $b_m > 0$ depending only on $m$ such that for $\tau \in \{\pm 1\}$,
\begin{IEEEeqnarray*}{+rCl+x*}
\widehat{\act}(\tau(1-t)) & = & \Delta_m(\act)^2 (-\tau)^{m+1} b_m t^{m+1/2}  + \calQ_{-1, m+1/2}(t)
\end{IEEEeqnarray*}
holds. %
\end{restatable}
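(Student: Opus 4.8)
The plan is to work from the decomposition in \Eqref{eq:dual_act_decomposition}, treating the three groups of terms separately and showing that the first group produces the advertised leading term $t^{m+1/2}$ while the other two contribute only to the $\calQ_{-1, m+1/2}$ remainder. To begin, I would invoke the decomposition \Eqref{eq:act_decomposition} with the given $m$: since $\Delta_k(\act) = 0$ for all $k < m$, the sum over $k$ collapses, giving $\act = \Delta_m(\act) s_m + \act_{m+1}$ (one may as well smooth once more than strictly needed, so that $\act_{m+1}$ is $(m+1)$-times pseudo-differentiable). Squaring as a quadratic form in the Hermite basis, $\widehat{\act}(t) = \Delta_m(\act)^2 \widehat{s_m}(t) + 2\Delta_m(\act) \sum_n a_n(s_m) a_n(\act_{m+1}) t^n + \widehat{\act_{m+1}}(t)$, via \Eqref{eq:hermite_dual_activation_connection}.

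The analytic core is computing the boundary behavior of $\widehat{s_m}$. The reference activation $s_m(x) = \tfrac{1}{2m!}\sgn(x) x^m$ is chosen precisely so that its dual is explicitly tractable: $\widehat{s_m}(t) = \bbE_{(u,v)\sim\calN(0,\Sigma_t)}[s_m(u)s_m(v)]$, and one can compute this integral in closed form (e.g.\ by differentiating in $t$ to bring down factors of $uv$, reducing to known Gaussian moments, or by using the known Hermite coefficients $a_n(s_m)$ and resumming). The key qualitative fact is that $s_m$ is $m$-times pseudo-differentiable but has a jump of size $1$ in its $m$-th pseudo-derivative at $0$; by the same mechanism that makes $\widehat{\relu}$ behave like $(1-t)^{3/2}$, the dual $\widehat{s_m}(\tau(1-t))$ will have an expansion in integer powers of $t$ plus a single half-integer power $t^{m+1/2}$ with a nonzero coefficient whose sign alternates as $(-\tau)^{m+1}$. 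Concretely I expect $\widehat{s_m}(\tau(1-t)) = (-\tau)^{m+1} b_m t^{m+1/2} + \calQ_{-1, m+1/2}(t)$ for an explicit $b_m > 0$; isolating this is the one genuinely computational step.

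For the two remaining groups of terms, I would use the smoothness of $\act_{m+1}$: being $(m+1)$-times pseudo-differentiable with polynomially bounded derivatives forces its Hermite coefficients $a_n(\act_{m+1})$ to decay fast enough (faster than any $n^{-(m+1)/2-\epsilon}$, by integration-by-parts against Hermite functions / the standard decay estimate for smooth functions) that the power series $\sum_n a_n(\act_{m+1})^2 t^n$ and the cross term $\sum_n a_n(s_m) a_n(\act_{m+1}) t^n$, together with all their derivatives, are controlled near $t = 1$ well enough to lie in $\calR_{\gamma}$ for every $\gamma$, hence in $\calQ_{-1, m+1/2}$. (Here I would use that $a_n(s_m)$ grows only polynomially, so it does not spoil the decay.) Finally I would assemble: the $\widehat{s_m}$ term contributes $\Delta_m(\act)^2 (-\tau)^{m+1} b_m t^{m+1/2}$, everything else is absorbed into $\calQ_{-1, m+1/2}(t)$, using the closure of $\calQ_{-1,m+1/2}$ under sums and scalar multiples (a special case of the rules in \Cref{prop:rules_for_Q}). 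The main obstacle I anticipate is the explicit evaluation of $\widehat{s_m}$ near the endpoints and, relatedly, nailing down that the half-integer coefficient $b_m$ is strictly positive with exactly the sign $(-\tau)^{m+1}$ — the sign bookkeeping between the $\tau = +1$ and $\tau = -1$ endpoints (which corresponds to the even/odd split in \Cref{thm:bietti_bach_adapted}) is where errors would most easily creep in.
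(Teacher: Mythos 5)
Your overall plan—isolate the $\widehat{s_m}$ contribution as the leading $t^{m+1/2}$ term and show the rest is a negligible remainder, with the sign $(-\tau)^{m+1}$ correctly attributed—matches the paper's strategy in spirit. However, there is a genuine gap in how you control the remainder. You decompose $\act$ at the \emph{fixed} order $m+1$, writing $\act = \Delta_m(\act) s_m + \act_{m+1}$, and then assert that $a_n(\act_{m+1})$ decays ``faster than any $n^{-(m+1)/2-\eps}$'' so that the cross term and $\widehat{\act_{m+1}}$ lie in $\calR_\gamma$ for every $\gamma$. This is not justified: $\act_{m+1}$ is $(m+1)$-times pseudo-differentiable with $\act_{m+1}^{(m+1)} \in L_2(\calN(0,1))$, but it still in general has a jump in its $(m+1)$-st derivative at zero (indeed $\Delta_{m+1}(\act_{m+1}) = \Delta_{m+1}(\act)$, which need not vanish). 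The correct decay from \Cref{cor:smooth_hermite_decay} is therefore only $a_n(\act_{m+1}) = o((n+1)^{-(m+1)/2})$, which via \Cref{lemma:power_series_differentiability} gives only \emph{finite} ($C^m$-type) smoothness of the corresponding power series up to $t = \pm1$. That is not enough to conclude membership in $\calQ_{-1,m+1/2}$, which by definition requires a $\calP_{-1,m+1/2} + \calR_\gamma$ decomposition \emph{for every} $\gamma$, with all derivative bounds $|r^{(N)}(t)| = O(t^{\gamma-N})$. The polynomial bounds on $\varphi_\pm^{(k)}$ in \Cref{ass:act} govern behavior at infinity, not regularity at $0$, and cannot rescue the integration-by-parts argument past order $m+1$.

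The missing idea, which is what the paper's proof actually does, is to let the truncation order grow with the target remainder quality: for each $\gamma$ one chooses $M_1 = M_1(\gamma) > m$ and decomposes $\act = \sum_{k=m}^{M_1-1}\Delta_k(\act) s_k + \act_{M_1}$, so that the cross and square terms involving $\act_{M_1}$ have coefficients decaying like $o((n+1)^{-3/4-m/2-M_1/2})$, small enough for that particular $\gamma$. This introduces extra mix terms $f_{k_1,k_2}$ with $(k_1,k_2) \neq (m,m)$, which you do not encounter at all with your fixed truncation; these are handled by \Cref{lemma:_product_coeffs_asymptotics} and do not disturb the leading coefficient. Moreover, verifying the $\calR_\gamma$ derivative bounds needs a \emph{second}, even deeper truncation $M_2 = M_2(N) > M_1$, because one must Taylor-expand the remainder to order $N$ (Step~2 of the paper's proof). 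In short, a single fixed smoothing cannot produce an arbitrarily high-order remainder from a function that is only finitely smooth at zero; the double $\gamma$- and $N$-dependent truncation is the mechanism that makes the $\calQ$-membership go through.
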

To deal with the derivative of the activation in the NTK, we can use the convenient formula $\widehat{\varphi'} = \widehat \varphi'$ from \cite{daniely_toward_2016}, of which a generalized formulation (for pseudo-derivatives, see \Cref{def:pseudo-derivative}) is proved in \Cref{lemma:hermite_derivative}.

\paragraph{Even and odd functions.} Let $I \subseteq \bbR$ be symmetric around zero and let $f: I \to \bbR$ be a function. We say that $f$ is even if $f(x) = f(-x)$ for all $x \in I$, and that $f$ is odd if $f(x) = -f(-x)$ for all $x \in I$. Every function $f: I \to \bbR$ can be decomposed in its even and odd part
\begin{IEEEeqnarray*}{+rCl+x*}
\feven(x) = \frac{f(x) + f(-x)}{2}, \qquad \fodd(x) = \frac{f(x) - f(-x)}{2}~.
\end{IEEEeqnarray*}
We then have $f = \feven + \fodd$, $\feven$ is even, and $\fodd$ is odd.

For $n \in \bbZ$, we define
\begin{IEEEeqnarray*}{+rCl+x*}
\even(n) \equalDef \begin{cases}
1 &, n\text{ is even} \\
0 &, n\text{ is odd,}
\end{cases} \qquad \odd(n) \equalDef \begin{cases}
0 &, n\text{ is even} \\
1 &, n\text{ is odd,}
\end{cases}
\end{IEEEeqnarray*}

The Hermite polynomials $h_n$ are even for even $n$ and odd for odd $n$. 

We show in \Cref{prop:even_odd_kernels} that two-layer bias-free neural kernels satisfy an even-odd decomposition, and deeper bias-free neural kernels can be even/odd whenever the activation function is even/odd.

\section{BOUNDARY BEHAVIOR} \label{sec:appendix:boundary}

In the following, we will prove some rules for algebraic manipulations with the function classes $\calP_{\alpha, \beta}, \calR_{\gamma}, \calQ_{\alpha, \beta}$ from \Cref{def:boundary_function_classes}. While we mostly care about $\calQ_{\alpha, \beta}$ later, we will first prove rules for $\calP_{\alpha, \beta}$ and $\calR_\gamma$ as a simpler intermediate step, before providing rules for $\calQ_{\alpha, \beta}$ in \Cref{prop:rules_for_Q}, which are summarized in \Cref{table:rules_for_P}:

\begin{lemma}[Rules for $\calP_{\alpha, \beta}$ and $\calR_\gamma$] \label{prop:properties_of_P_and_R}
\leavevmode
\begin{enumerate}[(a)]
\item For $\alpha \in \bbR$, we have $\calP_{\alpha, \alpha} \subseteq \calR_\alpha$. 
\item Let $\gamma_1, \gamma_2 \in \bbR$. Let $f_1, f_2: (0, 2) \to \bbR$ with $f_i \in \calR_{\gamma_i}$. Then, $f_1 \cdot f_2 \in \calR_{\gamma_1 + \gamma_2}(t)$.
\item If $f_i \in \calP_{\alpha_i, \beta_i}$, $\alpha_i, \beta_i \in \bbR$, $\alpha_i \leq \beta_i$, then
\begin{IEEEeqnarray*}{+rCl+x*}
f_1 \cdot f_2 \in \calP_{\alpha_1 + \alpha_2, \min\{\alpha_1 + \beta_2, \beta_1 + \alpha_2\}}~.
\end{IEEEeqnarray*}
\item Let $A, B > 0$, $g_1: (0, 2) \to (0, 2)$, $g_2: (0, 2) \to \bbR$. Suppose that $g_2 \in \calR_{\gamma_2}, \gamma_2 \in \bbR$ and that $g_1(t) = at^\alpha + \calR_{\gamma_1}(t)$, $\gamma_1 > \alpha > 0$ and $a > 0$. Then,
\begin{IEEEeqnarray*}{+rCl+x*}
g_2 \circ g_1 & \in & \calR_{\alpha \gamma_2}~.
\end{IEEEeqnarray*} 
\item Let $J \subseteq \bbR$ be an interval, let $g_1: (0, 2) \to J$, and let $g_2 \in C^\infty(J)$ with $0 \in J$. Moreover, suppose that $g_1(t) = a t^\alpha + b t^\beta + \calP_{\alpha, \beta}(t) + \calR_\gamma(t)$ with $\lim_{t \searrow 0} g_1(t) = 0$, $0 \leq \alpha \leq \beta \leq \gamma$, $\alpha \in \bbN_0$, $\beta,\gamma \in \bbR$ and $a, b \in \bbR$. Then,
\begin{IEEEeqnarray*}{+rCl+x*}
g_2(g_1(t)) & = & g_2(0) + g_2'(0)\alpha a t^\alpha + g_2'(0) \beta bt^\beta + \calP_{\alpha, \beta}(t) + \calR_\gamma(t)~.
\end{IEEEeqnarray*}
\item Let $g_1: (0, 2) \to (0, \infty)$ such that $g_1(t) = at^\alpha + \calP_{\alpha, \alpha}(t) + \calR_\gamma(t)$ for $a, \alpha > 0$ and $\gamma > \alpha$. Then, for $\delta > 0$,
\begin{IEEEeqnarray*}{+rCl+x*}
g_1(t)^\delta = a^\delta t^{\alpha \delta} + \calP_{\alpha \delta, \alpha \delta}(t) + \calR_{\gamma - \alpha + \alpha \delta}(t)~.
\end{IEEEeqnarray*}
\end{enumerate}
\end{lemma}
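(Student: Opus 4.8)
The plan is to prove the six claims in the stated order, since each is a routine bookkeeping statement about the classes $\calP_{\alpha,\beta}$ and $\calR_\gamma$ from \Cref{def:boundary_function_classes}, and the later parts reuse the earlier ones. Two principles recur throughout. First, membership in $\calR_\gamma$ is governed by the estimates $|f^{(m)}(t)|=O(t^{\gamma-m})$, which are stable under termwise differentiation, the Leibniz rule, and the Fa\`a di Bruno formula. Second, $\calP_{\alpha,\beta}$ consists of \emph{finite} sums of powers, so operations on it can be carried out explicitly, and the split in its definition between integer exponents (threshold $\alpha$) and non-integer exponents (threshold $\beta$) is exactly what is preserved under sums and products; the strict inequalities ``$>$'' there supply the slack needed to absorb lower-order contributions into error terms.

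For (a), differentiate $f(t)=\sum_i a_i t^{\alpha_i}\in\calP_{\alpha,\alpha}$ termwise: $f^{(m)}(t)$ is a finite sum of terms $c\,t^{\alpha_i-m}$ with $\alpha_i>\alpha$, and since $t\in(0,2)$ we bound $t^{\alpha_i-m}=t^{\alpha_i-\alpha}t^{\alpha-m}\le 2^{\alpha_i-\alpha}t^{\alpha-m}$, so $|f^{(m)}(t)|=O(t^{\alpha-m})$. For (b), the Leibniz rule writes $(f_1f_2)^{(m)}$ as a finite sum of products $f_1^{(k)}f_2^{(m-k)}=O(t^{\gamma_1-k})O(t^{\gamma_2-(m-k)})=O(t^{\gamma_1+\gamma_2-m})$. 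For (c), expand the product of the two finite sums; a generic term has exponent $e=\alpha_i^{(1)}+\alpha_j^{(2)}$, and one checks the required threshold by a four-way case distinction on whether each of $\alpha_i^{(1)},\alpha_j^{(2)}$ is an integer, using $\alpha_i\le\beta_i$: if both are integers then $e>\alpha_1+\alpha_2$ and $e\in\bbZ$; if exactly one is, then $e\notin\bbZ$ and $e>\alpha_1+\beta_2$ or $e>\beta_1+\alpha_2$; if both are non-integers then $e>\beta_1+\beta_2\ge\min\{\alpha_1+\beta_2,\beta_1+\alpha_2\}$, while additionally $e>\alpha_1+\alpha_2$ in the subcase where $e$ happens to be an integer.

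For (d), apply the Fa\`a di Bruno formula to $(g_2\circ g_1)^{(m)}$. From $a>0$ and $\gamma_1>\alpha$ we get $g_1(t)=\Theta(t^\alpha)$ near $0$, hence $g_2^{(j)}(g_1(t))=O(g_1(t)^{\gamma_2-j})=O(t^{\alpha(\gamma_2-j)})$, and each inner derivative satisfies $g_1^{(k)}(t)=O(t^{\alpha-k})$, since both $a t^\alpha$ and the $\calR_{\gamma_1}$-remainder contribute $O(t^{\alpha-k})$ (using $\gamma_1>\alpha$). A term with $j$ outer derivatives and inner orders summing to $m$ is then $O\bigl(t^{\alpha(\gamma_2-j)+\sum_i(\alpha-k_i)}\bigr)=O(t^{\alpha\gamma_2-m})$, giving $g_2\circ g_1\in\calR_{\alpha\gamma_2}$. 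For (e), Taylor-expand $g_2$ at $0$ to order $N$ with smooth remainder $R_N$ vanishing to order $N+1$, and substitute $g_1$: the degree-one term reproduces the displayed leading part, the degree-$k$ terms for $2\le k\le N$ are sums of products of powers (handled via (c)) whose exponents strictly exceed the thresholds — here the hypothesis $\alpha\in\bbN_0$ keeps integer exponents integer — hence lie in $\calP_{\alpha,\beta}$, while $R_N(g_1(t))$ lies in some $\calR_{\gamma'}$ with $\gamma'\to\infty$ as $N\to\infty$ by the argument of (d); taking $N$ large makes this remainder of order $\ge\gamma$. For (f), factor $g_1(t)=a t^\alpha\bigl(1+h(t)\bigr)$ with $h=(g_1-a t^\alpha)/(a t^\alpha)$; by (a) and (b) one has $h\in\calR_\eta$ for some $\eta>0$, so $(1+h(t))^\delta$ expands via the binomial series with smooth remainder and, again using (a), (b), and the argument of (d), equals $1+\delta h(t)+\calR_{2\eta}(t)$; multiplying back by $a^\delta t^{\alpha\delta}$ and regrouping $t^{\alpha\delta}h(t)$ as $t^{\alpha(\delta-1)}(\text{element of }\calP_{\alpha,\alpha}+\calR_\gamma)$ yields the claim.

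The main obstacle is the combinatorial bookkeeping in (c) and (e): one must check that \emph{every} power produced by a product or composition has exponent strictly above the relevant integer or non-integer threshold — never merely equal to it — and that the integer/non-integer classification of exponents is not spoiled; this is precisely where the strict inequalities in \Cref{def:boundary_function_classes} and the assumption $\alpha\in\bbN_0$ in (e) are used. A secondary point is that compositions $g_2\circ g_1$ are only controlled as $t\searrow 0$ and require $g_1$ to map into the domain of $g_2$; the hypotheses ($g_1$ valued in $(0,2)$, resp. $\lim_{t\searrow0}g_1(t)=0$, resp. $g_1>0$) are exactly what makes this legitimate, after which everything reduces to the $O$-calculus that is then packaged more conveniently for $\calQ_{\alpha,\beta}$ in \Cref{prop:rules_for_Q}.
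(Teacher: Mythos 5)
Your proof follows essentially the same route as the paper's: termwise differentiation for (a), the Leibniz rule for (b), a four-way case analysis on integer versus non-integer exponents for (c), a Taylor expansion with a smooth remainder for (e), and the factoring $g_1(t)=at^\alpha(1+h(t))$ for (f). In (d) you use the Fa\`a di Bruno formula where the paper proves the same bound by induction on the order of the derivative, which is an interchangeable choice.

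The one place you gloss over a real subtlety is in (e), where you dispose of the Taylor remainder by asserting that $R_N\circ g_1\in\calR_{\gamma'}$ ``by the argument of (d).'' But the argument you actually gave for (d) uses the \emph{two-sided} estimate $g_1(t)=\Theta(t^\alpha)$, which comes from the hypothesis $a>0$ there; you need the lower bound precisely to control $g_2^{(j)}(g_1(t))=O(g_1(t)^{\gamma_2-j})$ when $\gamma_2-j<0$. In (e) the statement permits $a=b=0$, so $g_1$ may vanish faster than any prescribed power and no lower bound is available — the argument of (d) does not apply as stated. The paper notices this explicitly (``cannot directly apply (d)'') and runs a variant induction whose key point is that $R_N$ is $C^\infty$ near $0$, so all of its derivatives are \emph{bounded} there; one then never needs a negative power of $g_1$. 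Your Fa\`a di Bruno computation does go through after that observation — for $j\le N+1$ the exponent $N+1-j$ is nonnegative so only an upper bound on $g_1$ is used, and for $j>N+1$ boundedness of $R_N^{(j)}$ suffices — but you should say so rather than cite (d), since the hypothesis of (d) genuinely fails here and the fix relies on a property ($C^\infty$ smoothness and hence local boundedness of derivatives) that an arbitrary $\calR_{\gamma_2}$ function does not have.
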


\begin{proof}
\leavevmode
\begin{enumerate}[(a)]
\item This is straightforward.
\item For $n \in \bbN_0$, we have
\begin{IEEEeqnarray*}{+rCl+x*}
|(f_1 \cdot f_2)^{(n)}(t)| & = & \left|\sum_{k=0}^n \binom{n}{k} f_1^{(k)}(t) f_2^{(n-k)}(t) \right| \\
& \leq & \sum_{k=0}^n O(t^{\gamma_1 - k}) O(t^{\gamma_2 - (n-k)}) = O(t^{\gamma_1 + \gamma_2 - n})~.
\end{IEEEeqnarray*}
\item Consider integers $\alpha_i' > \alpha_i$ and non-integers $\beta_i' > \beta_i$ such that $t^{\alpha_i'} = \calP_{\alpha_i, \beta_i}(t)$ and $t^{\beta_i'} = \calP_{\alpha_i, \beta_i}(t)$. We now investigate all possible products of such terms:
\begin{itemize}
\item The term $t^{\alpha_1' + \alpha_2'}$ has integer power $\alpha_1' + \alpha_2' > \alpha_1 + \alpha_2$.
\item The term $t^{\alpha_1' + \beta_2'}$ has non-integer power $\alpha_1' + \beta_2' > \alpha_1 + \beta_2$.
\item The term $t^{\beta_1' + \alpha_2'}$ has non-integer power $\beta_1' + \alpha_2' > \beta_1 + \alpha_2$.
\item The term $t^{\beta_1' + \beta_2'}$ may have integer or non-integer power. In any case, due to the assumption $\alpha_i \leq \beta_i$, we have
\begin{IEEEeqnarray*}{+rCl+x*}
\beta_1' + \beta_2' & > & \beta_1 + \beta_2 \geq \min\{\alpha_1 + \beta_2, \beta_1 + \alpha_2\} \geq \alpha_1 + \alpha_2~.
\end{IEEEeqnarray*}
\end{itemize}
\item Let $g_2 \in \calR_{\gamma_2}$. We show
\begin{IEEEeqnarray*}{+rCl+x*}
|(g_2 \circ g_1)^{(n)}(t)| = O(t^{\gamma_2 \alpha - n})
\end{IEEEeqnarray*}
by induction on $n \in \bbN_0$. Since $0 < \alpha < \gamma_1$, we have $g_1(t) = \Theta(t^\alpha)$ for $t \to 0$ and $\lim_{t \to 0} g_1(t) = 0$. Therefore,
\begin{IEEEeqnarray*}{+rCl+x*}
g_2(g_1(t)) = O(g_1(t)^{\gamma_2}) = O(t^{\gamma_2 \alpha})~.
\end{IEEEeqnarray*}
Here, we used $g_1(t) = O(t^\alpha)$ in the case $\gamma_2 > 0$ and $g_1(t) = \Omega(t^\alpha)$ in the case $\gamma_2 < 0$.

For the induction step $n \to n+1$, we use $g_2' \in \calR_{\gamma_2 - 1}$ to obtain
\begin{IEEEeqnarray*}{+rCl+x*}
|(g_2 \circ g_1)^{(n+1)}(t)| & = & \left| \frac{\diff^n}{\diff t^n} (g_2' \circ g_1)(t) \cdot g_1'(t)\right| \\
& = & \left| \sum_{k=0}^n \binom{n}{k} (g_2' \circ g_1)^{(k)}(t) g_1^{(1+(n-k))}(t) \right| \\
& \leq & \sum_{k=0}^n O(t^{(\gamma_2 - 1)\alpha - k}) O(t^{(\alpha - 1 - (n-k))}) \\
& = & O(t^{\gamma_2 \alpha - (n+1)})~.
\end{IEEEeqnarray*}

\item We decompose $g_2 = p_2 + r_2$, where $p_2$ is the degree-$(N-1)$ Taylor polynomial of $g_2$ around $0$ for some $N \geq 2$ to be defined later. Then,
\begin{IEEEeqnarray*}{+rCl+x*}
p_2(g_1(t)) = g_2(0) + g_2'(0) g_1(t) + \sum_{k=2}^{N-1} \frac{g_2^{(k)}(0)}{k!} g_1(t)^k~.
\end{IEEEeqnarray*}
Since we required $\lim_{t \searrow 0} g_1(t) = 0$, $g_1$ contains no degree-zero polynomial terms. Therefore, it follows from (a)--(c) that the powers $g_1(t)^k$ for $k \geq 2$ satisfy $g_1(t)^k = \calP_{\alpha, \beta}(t) + \calR_\gamma(t)$, which shows that
\begin{IEEEeqnarray*}{+rCl+x*}
p_2(g_1(t)) = g_2(0) + g_2'(0)a t^\alpha + g_2'(0) bt^\beta + \calP_{\alpha, \beta}(t) + \calR_\gamma(t)~.
\end{IEEEeqnarray*}

Since $\lim_{t \searrow 0} g_1(t) = 0$, we know that $g_1 \in \calR_{\tilde\beta}$ for some $\tilde\beta > 0$. For the remainder function $r_2$, we could show that $r_2 \in \calR_N$ but cannot directly apply (d) since it requires a lower bound of the form $g_1(t) = \Omega(t^{\tilde\beta})$, which does not necessarily hold if $a=b=0$. However, here, the lower bound is not needed: Since we know that $r_2 \in C^\infty$, all derivatives of $r_2$ are bounded around zero, which is not the case for all $\calR_N$ functions. Hence, here is a variant of the proof of (d) for the current setting:

In order to prove that $r_2 \circ g_1 \in \calR_{N\tilde\beta}$, we want to prove the following statement by induction on $n$: 
\begin{itemize}
\item[] For all $n \in \bbN_0$, if $h_2 \in C^\infty(J)$ and $M \in \bbN_0$ such that $h_2(0) = h_2'(0) = \hdots = h_2^{(M-1)}(0) = 0$, then $|(h_2 \circ g_1)^{(n)}(t)| = O(t^{M\tilde\beta - n})$.
\end{itemize}
By choosing $M \equalDef N$ large enough such that $M\tilde\beta \geq \gamma$ and setting $h_2 \equalDef r_2$, we can then conclude that $r_2 \circ g_1 \in \calR_\gamma$.

\textbf{Base case:} Let $n=0$. By applying Taylor's theorem, we find that
\begin{IEEEeqnarray*}{+rCl+x*}
|h_2(g_1(t))| & = & \left|\sum_{k=0}^M \frac{h_2^{(k)}(0)}{k!}g_1(t)^k + o(|g_1(t)|^M)\right| \leq \left|\frac{h_2^{(N)}(0)}{N!} g_1(t)^N\right| + o(|g_1(t)|^N) \\
& = & O(|g_1(t)|^N) = O(t^{N\tilde\beta}).
\end{IEEEeqnarray*}

\textbf{Induction step $n \to n+1$:} We apply the induction hypothesis to $\tilde h_2 \equalDef h_2' \in C^\infty(J)$, which satisfies the derivative condition for $\tilde M \equalDef \max\{0, M-1\}$, and obtain
\begin{IEEEeqnarray*}{+rCl+x*}
\left|(h_2' \circ g_1)^{(k)}(t)\right| = O(t^{\tilde N \tilde\beta - k}) \leq O(t^{(N-1)\tilde\beta - k})
\end{IEEEeqnarray*}
for all $0 \leq k \leq n$. Hence,
\begin{IEEEeqnarray*}{+rCl+x*}
|(h_2 \circ g_1)^{(n+1)}(t)| & = & \left|\frac{\diff^n}{\diff t^n} (h_2' \circ g_1)(t) \cdot g_1'(t)\right| \\
& = & \left|\sum_{k=0}^n \binom{n}{k} (h_2' \circ g_1)^{(k)}(t) \cdot g_1^{(1+n-k)}(t)\right| \\
& \leq & \sum_{k=0}^n O(t^{(M-1)\tilde\beta - k}) O(t^{\tilde\beta - 1-n+k}) \\
& = & O(t^{M\tilde\beta - (n+1)})~,
\end{IEEEeqnarray*}
which completes the induction step.

\item We obtain
\begin{IEEEeqnarray*}{+rCl+x*}
g_1(t)^\delta & = & (at^\alpha + \calP_{\alpha, \alpha}(t) + \calR_\gamma(t))^\delta \\
& = & (at^\alpha \cdot (1 + t^{-\alpha}\calP_{\alpha, \alpha}(t) + t^{-\alpha}\calR_{\gamma}(t)))^\delta \\
& \stackrel{\text{(b), (c)}}{=} & a^\delta t^{\alpha \delta} (1 + \calP_{0, 0}(t) + \calR_{\gamma - \alpha}(t))^\delta \\
& \stackrel{\text{(e)}}{=} & a^\delta t^{\alpha \delta} (1 + \calP_{0, 0}(t) + \calR_{\gamma - \alpha}(t)) \\
& = & a^\delta t^{\alpha \delta} + \calP_{\alpha \delta, \alpha \delta}(t) + \calR_{\gamma - \alpha + \alpha \delta}(t)~. & \qedhere
\end{IEEEeqnarray*}

\end{enumerate}
\end{proof}

We now obtain the following rules for $\calQ_{\alpha, \beta}$, summarized in \Cref{table:rules_for_P}:

\begin{table}
\caption{Summary of the boundary computation rules in \Cref{prop:rules_for_Q}.} \label{table:rules_for_P}
\centering \footnotesize
\begin{tabular}{ccc}
\toprule
Rule & Assumptions \\
\midrule
$\calQ_{\alpha_2, \beta_2}(t) = \calQ_{\alpha_1, \beta_1}(t)$ & $\alpha_1 \leq \alpha_2, \beta_1 \leq \beta_2$ \\
$\calQ_{\alpha_1, \beta_1}(t) + \calQ_{\alpha_2, \beta_2}(t) = \calQ_{\min\{\alpha_1, \alpha_2\}, \min\{\beta_1, \beta_2\}}(t)$ & --- \\
$\calQ_{\alpha_1, \beta_1}(t) \cdot \calQ_{\alpha_2, \beta_2}(t) = \calQ_{\alpha_1 + \alpha_2, \min\{\alpha_1 + \beta_2, \alpha_2 + \beta_1\}}(t)$ & $\alpha_i \leq \beta_i$ \\
$g_2(a t^{\alpha} + b t^{\beta} + \calQ_{\alpha, \beta}(t)) = g_2(0) + g_2'(0) a t^\alpha + g_2'(0) b t^\beta + \calQ_{\alpha, \beta}(t)$ & $g_2 \in C^\infty$ (also at $0$), \\
& $\alpha \in \bbN_0$, $\beta \geq \alpha$, \\
& $a=0$ if $\alpha=0$, $b=0$ if $\beta=0$ \\
$(a t^{\alpha} + \calQ_{\alpha, \alpha}(t))^\delta = a^\delta t^{\alpha \delta} + \calQ_{\alpha \delta, \alpha \delta}(t)$ & $a, \alpha, \delta > 0$, $a t^{\alpha} + \calQ_{\alpha, \alpha}(t) > 0$ \\
$\calQ_{\delta, \delta}(a t^{\alpha} + \calQ_{\alpha, \alpha}(t)) = \calQ_{\alpha\delta, \alpha\delta}(t)$ & $a, \alpha, \delta > 0$, $a t^{\alpha} + \calQ_{\alpha, \alpha}(t) \in (0, 2)$ \\
\bottomrule
\end{tabular}
\end{table}

\begin{proposition}[Rules for $\calQ_{\alpha, \beta}$] \label{prop:rules_for_Q}
Let $J \subseteq \bbR$ be an interval. Let $f_1, f_2: [0, 2] \to \bbR$, $g_1: [0, 2] \to J$ and $g_2: J \to \bbR$ with
\begin{IEEEeqnarray*}{+rCl+x*}
f_1(t) & = & \calQ_{\alpha_1, \beta_1}(t), \\
f_2(t) & = & \calQ_{\alpha_2, \beta_2}(t)
\end{IEEEeqnarray*}
for some $\alpha_i, \beta_i \in \bbR$.
\begin{enumerate}[(a)]
\item \textbf{Inclusion}: If $\alpha_1 \leq \alpha_2$ and $\beta_1 \leq \beta_2$, then $\calQ_{\alpha_2, \beta_2} \subseteq \calQ_{\alpha_1, \beta_1}$.
\item \textbf{Sum:} We have
\begin{IEEEeqnarray*}{+rCl+x*}
f_1(t) + f_2(t) = \calQ_{\min\{\alpha_1, \alpha_2\}, \min\{\beta_1, \beta_2\}}(t)~.
\end{IEEEeqnarray*}
\item \textbf{Product:} If $\alpha_i \leq \beta_i$, we have
\begin{IEEEeqnarray*}{+rCl+x*}
f_1(t) \cdot f_2(t) = \calQ_{\alpha_1 + \alpha_2, \min\{\alpha_1 + \beta_2, \alpha_2 + \beta_1\}}(t)~.
\end{IEEEeqnarray*}
\item \textbf{Composition with $C^\infty$:} Suppose that $g_1(t) = a t^{\alpha} + b t^{\beta} + \calQ_{\alpha, \beta}(t)$ with $g_1(0) = 0$ for some $\alpha \in \bbN_0$ and $\beta \geq \alpha$. Additionally, suppose that $g_2 \in C^\infty(J)$ with $0 \in J$. Then,
\begin{IEEEeqnarray*}{+rCl+x*}
g_2(g_1(t)) = g_2(0) + g_2'(0) a t^\alpha + g_2'(0) b t^\beta + \calQ_{\alpha, \beta}(t)~.
\end{IEEEeqnarray*}
\item \textbf{Composition with power:} Suppose that $J \subseteq (0, \infty)$ with $g_1(t) = a t^{\alpha} + \calQ_{\alpha, \alpha}(t)$ for $a, \alpha > 0$. Then, for $\delta > 0$,
\begin{IEEEeqnarray*}{+rCl+x*}
g_1(t)^\delta & = & a^\delta t^{\alpha \delta} + \calQ_{\alpha \delta, \alpha \delta}(t)~.
\end{IEEEeqnarray*}
\item \textbf{Composition with $\calQ_{\delta, \delta}$:} Suppose that $J = (0, 2)$, $g_1(t) = a t^{\alpha} + \calQ_{\alpha, \alpha}(t)$ with $a, \alpha > 0$, and $g_2(t) = \calQ_{\delta, \delta}(t)$ with $\delta \geq 0$. Then, 
\begin{IEEEeqnarray*}{+rCl+x*}
g_2(g_1(t)) & = & \calQ_{\alpha \delta, \alpha \delta}(t)~.
\end{IEEEeqnarray*}
\end{enumerate}
\end{proposition}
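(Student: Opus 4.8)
The plan is to obtain each rule in \Cref{prop:rules_for_Q} from the matching rule in \Cref{prop:properties_of_P_and_R} by carrying out the estimate ``at every remainder order''. The one structural fact behind this is that $\calP_{\alpha,\beta}+\calR_\gamma$ shrinks as $\gamma$ grows (since $\calR_{\gamma'}\subseteq\calR_\gamma$ for $\gamma'\ge\gamma$), so a function lies in $\calQ_{\alpha,\beta}=\bigcap_\gamma(\calP_{\alpha,\beta}+\calR_\gamma)$ as soon as it admits a $\calP_{\alpha,\beta}+\calR_\gamma$ decomposition for all sufficiently large $\gamma$, and the leading power terms of such a decomposition may be held fixed while only a high-order tail migrates into $\calR_\gamma$. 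Part (a) is then immediate: $\calP_{\alpha_2,\beta_2}\subseteq\calP_{\alpha_1,\beta_1}$ straight from \Cref{def:boundary_function_classes}, hence $\calP_{\alpha_2,\beta_2}+\calR_\gamma\subseteq\calP_{\alpha_1,\beta_1}+\calR_\gamma$ for all $\gamma$. For (b), split $f_i=p_i+r_i$ at order $\gamma$; then $p_1+p_2\in\calP_{\min\{\alpha_1,\alpha_2\},\min\{\beta_1,\beta_2\}}$ (each monomial of $p_i$ has integer exponent $>\alpha_i\ge\min\{\alpha_1,\alpha_2\}$ or non-integer exponent $>\beta_i\ge\min\{\beta_1,\beta_2\}$) and $r_1+r_2\in\calR_\gamma$; let $\gamma\to\infty$. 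For (c), the hypothesis $\alpha_i\le\beta_i$ gives $\calP_{\alpha_i,\beta_i}\subseteq\calP_{\alpha_i,\alpha_i}\subseteq\calR_{\alpha_i}$ by \Cref{prop:properties_of_P_and_R}(a); writing $f_i=p_i+r_i$ with $r_i\in\calR_{\gamma_i}$, parts (b) and (c) of \Cref{prop:properties_of_P_and_R} give $p_1p_2\in\calP_{\alpha_1+\alpha_2,\min\{\alpha_1+\beta_2,\alpha_2+\beta_1\}}$, $p_1r_2\in\calR_{\alpha_1+\gamma_2}$, $r_1p_2\in\calR_{\alpha_2+\gamma_1}$ and $r_1r_2\in\calR_{\gamma_1+\gamma_2}$, so choosing $\gamma_1,\gamma_2$ large enough relative to the target order $\gamma$ collects the latter three into $\calR_\gamma$.

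Parts (d) and (e) each reduce to one invocation of \Cref{prop:properties_of_P_and_R}(e), respectively (f), per order $\gamma$. In (d), split the $\calQ_{\alpha,\beta}$-part of $g_1$ at an order $\gamma\ge\beta$, so $g_1(t)=at^\alpha+bt^\beta+\calP_{\alpha,\beta}(t)+\calR_\gamma(t)$ with $\lim_{t\searrow0}g_1(t)=0$ (the hypothesis $g_1(0)=0$ forces $a=0$ when $\alpha=0$ and $b=0$ when $\beta=0$, exactly the side conditions in \Cref{table:rules_for_P}); then \Cref{prop:properties_of_P_and_R}(e) yields $g_2(g_1(t))=g_2(0)+g_2'(0)at^\alpha+g_2'(0)bt^\beta+\calP_{\alpha,\beta}(t)+\calR_\gamma(t)$, whose first three terms do not depend on $\gamma$, and we intersect over $\gamma$. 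In (e), write $g_1(t)=at^\alpha+\calP_{\alpha,\alpha}(t)+\calR_\gamma(t)$ (positivity of $g_1$ coming from $J\subseteq(0,\infty)$); \Cref{prop:properties_of_P_and_R}(f) returns $a^\delta t^{\alpha\delta}+\calP_{\alpha\delta,\alpha\delta}(t)+\calR_{\gamma-\alpha+\alpha\delta}(t)$, and taking the input order $\gamma$ large enough (at least $\alpha$, and at least the target order plus $\alpha-\alpha\delta$) pushes the remainder into the target class; again intersect over all target orders.

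Part (f) is the delicate case, since here $g_2$ is itself a $\calQ$-function rather than a smooth one. Fix a target order $\gamma$ and split $g_2=q+s$ at an order $\gamma''$ (to be chosen large), with $q\in\calP_{\delta,\delta}$ a finite sum of powers $t^e$, $e>\delta$, and $s\in\calR_{\gamma''}$. For the polynomial part, $q(g_1(t))=\sum_e c_e\,g_1(t)^e$, and part (e) of this proposition gives $g_1(t)^e=a^e t^{\alpha e}+\calQ_{\alpha e,\alpha e}(t)$; since $e>\delta$ and $\alpha>0$ we have $\alpha e>\alpha\delta$, so $t^{\alpha e}\in\calP_{\alpha\delta,\alpha\delta}$ and $\calQ_{\alpha e,\alpha e}\subseteq\calQ_{\alpha\delta,\alpha\delta}$ by part (a), whence $q(g_1(t))\in\calQ_{\alpha\delta,\alpha\delta}$ by part (b). For the remainder part, rewrite $g_1$ as $at^\alpha+\calR_{\gamma_1}(t)$ for some $\gamma_1>\alpha$ --- its $\calP_{\alpha,\alpha}$-terms have exponents $>\alpha$ and so lie in such an $\calR_{\gamma_1}$ by \Cref{prop:properties_of_P_and_R}(a) --- and apply the composition rule \Cref{prop:properties_of_P_and_R}(d) to get $s\circ g_1\in\calR_{\alpha\gamma''}$; choosing $\gamma''$ with $\alpha\gamma''\ge\gamma$ lands this in $\calR_\gamma$. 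Hence $g_2(g_1(t))\in\calQ_{\alpha\delta,\alpha\delta}+\calR_\gamma\subseteq\calP_{\alpha\delta,\alpha\delta}+\calR_\gamma$ for every $\gamma$, i.e.\ $g_2\circ g_1\in\calQ_{\alpha\delta,\alpha\delta}$.

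The hard part will be purely bookkeeping: in (c) and in (f) the auxiliary orders ($\gamma_1,\gamma_2$, and $\gamma''$) must be picked as functions of the target order $\gamma$, and one has to verify that the $\calP/\calR$ rules then really output a remainder of order $\ge\gamma$, all while respecting the warning in \Cref{def:boundary_function_classes} that the number of power terms in a $\calQ$-decomposition may grow with the demanded order. Because every operation involved mixes only finitely many terms at once, finiteness is never lost; the only genuine subtlety is to keep the quantifier order straight (first fix the target $\gamma$, then choose the auxiliary orders, then decompose the inputs).
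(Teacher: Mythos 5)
Your proposal is correct and takes essentially the same route as the paper: each part of the proposition is reduced to the corresponding part of Lemma~\ref{prop:properties_of_P_and_R} by fixing a target remainder order $\gamma$, decomposing the inputs into a $\calP$-part and an $\calR$-part at a sufficiently high auxiliary order, applying the $\calP/\calR$ rule, and then intersecting over $\gamma$. For part~(f) you use the same split $g_2 = q + s$ with $q \in \calP_{\delta,\delta}$ and $s \in \calR_{\gamma''}$, handle $q \circ g_1$ via parts~(a) and~(e), and handle $s \circ g_1$ via Lemma~\ref{prop:properties_of_P_and_R}(a),(d), which is exactly the paper's argument; your extra care with the quantifier order and the explicit rewriting $g_1(t) = at^\alpha + \calR_{\gamma_1}(t)$ with $\gamma_1 > \alpha$ just spells out what the paper compresses.
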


\begin{proof}
\leavevmode
\begin{enumerate}[(a)]
\item Follows from the definition.
\item For given $\gamma \in \bbR$, write $f_i = p_i + r_i$ with $p_i \in \calP_{\alpha_i, \beta_i}$ and $r_i \in \calR_\gamma$. Then, obviously $p_1 + p_2 \in \calP_{\min\{\alpha_1, \alpha_2\}, \min\{\beta_1, \beta_2\}}$ and $r_1 + r_2 \in \calR_\gamma$, which shows the claim.
\item Using a decomposition as in the proof of (b), we write $f_1f_2 = p_1p_2 + (p_1r_2 + r_1p_2 + r_1r_2)$. Here, it follows from \Cref{prop:properties_of_P_and_R} (c) that $p_1p_2 \in \calP_{\alpha_1 + \alpha_2, \min\{\alpha_1 + \beta_2, \alpha_2 + \beta_1\}}$. Moreover, using \Cref{prop:properties_of_P_and_R} (a) we have $p_i \in \calR_{\alpha_i}$. Using \Cref{prop:properties_of_P_and_R} (b), we obtain
\begin{IEEEeqnarray*}{+rCl+x*}
p_1r_2 + r_1p_2 + r_1r_2 \in \calR_{\gamma + \min\{\alpha_1, \alpha_2\}}~,
\end{IEEEeqnarray*}
where $\gamma \in \bbR$ was arbitrary. This shows the claim.
\item Follows from \Cref{prop:properties_of_P_and_R} (e).
\item Follows from \Cref{prop:properties_of_P_and_R} (f).
\item Consider $\gamma \in \bbR$ to be chosen later. We can then write $g_2(t) = p_2(t) + r_2(t)$ with $p_2 \in \calP_{\delta, \delta}$ and $r_2 \in \calR_\gamma$. Since $p_2$ is a sum of powers, (a) and (e) show that $p_2(g_1(t)) = \calQ_{\alpha \delta, \alpha \delta}(t)$. For $r_2$, we use \Cref{prop:properties_of_P_and_R} (a), (d) to conclude that for some $\eps > 0$,
\begin{IEEEeqnarray*}{+rCl+x*}
r_2(g_1(t)) & = & r_2(at^\alpha + \calR_{\alpha+\eps}(t)) = \calR_{\alpha \gamma}(t)~.
\end{IEEEeqnarray*}
Since $\gamma$ was arbitrary, it can be chosen such that $\alpha \gamma$ can be arbitrarily large (since $\alpha > 0$). This completes the proof. \qedhere
\end{enumerate}
\end{proof}

\section{DUAL ACTIVATIONS}\label{sec:dual_activations}
\subsection{General properties}

\begin{definition}[Pseudo-derivative] \label{def:pseudo-derivative}
Let $f, g: \bbR \to \bbR$. We call $g$ a \emph{pseudo-derivative} of $f$ if $g$ is Lebesgue integrable on compact intervals and
\begin{IEEEeqnarray*}{+rCl+x*}
f(x) & = & f(0) + \int_0^x g(t) \diff t
\end{IEEEeqnarray*}
for all $x \in \bbR$. 
\end{definition}
If $f:\R\to\R$ is continuously differentiable, then $f'$ is a pseudo-derivative of $f$. As a non-differentiable example, the Heaviside theta function $\bbone_{(0, \infty)}(x)$ is a pseudo-derivative of the ReLU function. From basic Lebesgue integration theory, it follows that pseudo-derivatives are unique up to null sets.
Note that any \emph{continuous} activation function $\act:\R\to\R$ fulfilling \cref{ass:act} is pseudo-differentiable.

The following lemma generalizes the differentiation part of Lemma 11 by \cite{daniely_toward_2016} to pseudo-differentiable functions.

\begin{lemma}[Properties of pseudo-derivatives] \label{lemma:hermite_derivative}
Let $g \in \calL_2(\calN(0, 1))$ be a pseudo-derivative of $f: \bbR \to \bbR$. Then,
\begin{enumerate}[(a)]
\item $f \in \calL_2(\calN(0, 1))$,
\item for $n \geq 1$, $a_n(f) = n^{-1/2} a_{n-1}(g)$,
\item the Hermite expansion $f = \sum_{n=0}^\infty a_n(f) h_n$ converges pointwise,
\item $\hat{f}$ is differentiable on $[-1, 1]$ with $\hat{f}' = \hat{g}$ (cf.\ \Cref{definition:dual_and_rescaled_activation_function}).
\end{enumerate}

\begin{proof}
Let $\phi$ be the p.d.f.\ of the normal distribution $\calN(0, 1)$. Without loss of generality, let $x \geq 0$. Then,
\begin{IEEEeqnarray*}{+rCl+x*}
f(x) & = & f(0) + \int_0^x g(t) \diff t = f(0) + \left\langle \frac{\bbone_{[0, x]}}{\phi}, g\right\rangle_{L_2(\calN(0, 1))} \\
& = & f(0) + \sum_{n=0}^\infty a_n(g) \left\langle \frac{\bbone_{[0, x]}}{\phi}, h_n\right\rangle_{L_2(\calN(0, 1))} \\
& = & f(0) + \sum_{n=0}^\infty a_n(g) \int_0^x h_n(t) \diff t \\
& = & f(0) + \sum_{n=0}^\infty a_n(g) \int_0^x (n+1)^{-1/2} h_{n+1}'(t) \diff t \\
& = & f(0) + \sum_{n=0}^\infty (n+1)^{-1/2} a_n(g) [h_{n+1}(x) - h_{n+1}(0)] \\
& = & f(0) + \sum_{n=1}^\infty n^{-1/2} a_{n-1}(g) [h_n(x) - h_n(0)] \\
& = & \left[f(0) - \sum_{n=1}^\infty n^{-1/2} a_{n-1}(g) h_n(0)\right]h_0(x) + \sum_{n=1}^\infty n^{-1/2} a_{n-1}(g) h_n(x)~. \IEEEyesnumber \label{eq:integral_hermite}
\end{IEEEeqnarray*}
In the last step, we used $h_0(x) = 1$ for all $x$. Moreover, splitting the series in the last step is valid since the extracted series is absolutely summable:
\begin{IEEEeqnarray*}{+rCl+x*}
\sum_{n=1}^\infty |n^{-1/2} a_{n-1}(g) h_n(0)| & = & \sum_{n=1}^\infty n^{-1/2} |a_{n-1}(g)| \even(n) \frac{(n-1)!!}{\sqrt{n!}} \\
& \stackrel{\text{\Cref{lemma:double_factorial_asymptotics}}}{\leq} & \sum_{n=1}^\infty |a_{n-1}(g)| O(n^{-3/4}) \\
& = & \left\langle (|a_{n-1}(g)|)_{n \geq 1}, O(n^{-3/4}) \right\rangle_{\ell_2(\bbN_{\geq 1})} \\
& \stackrel{\text{Cauchy-Schwarz}}{<} & \infty~.
\end{IEEEeqnarray*}
Here, we have used that $(a_n(g))_{n \geq 1} \in \ell_2(\bbN_{\geq 1})$ since $g \in \calL_2(\calN(0, 1))$.
An analogous argument shows that \eqref{eq:integral_hermite} also equals $f(x)$ for $x < 0$. Since $|n^{-1/2} a_{n-1}(g)| \leq |a_{n-1}(g)|$, we know that the series representation \eqref{eq:integral_hermite} not only converges pointwise but also in $L_2(\calN(0, 1))$, and these limits must be identical. This shows (a), (b) and (c). In order to show (d), we compute
\begin{IEEEeqnarray*}{+rCl+x*}
\hat{f}(t) & = & \sum_{n=0}^\infty a_n(f)^2 t^n \stackrel{\text{(b)}}{=} a_0(f)^2 + \sum_{n=1}^\infty n^{-1} a_{n-1}(g)^2 t^n
\end{IEEEeqnarray*}
for $t \in [-1, 1]$, which implies
\begin{IEEEeqnarray*}{+rCl+x*}
\hat{f}'(t) & = & \sum_{n=1}^\infty n^{-1} a_{n-1}(g)^2 nt^{n-1} = \sum_{n=0}^\infty a_n(g)^2 t^n = \hat{g}(t)~.
\end{IEEEeqnarray*}
This also holds at the boundary $t \in \{-1, 1\}$: Thanks to Abel's theorem, $\hat f$ and $\hat g$ are continuous on $[-1, 1]$. Without loss of generality, let $t = 1$. The mean value theorem of integration yields $\hat f(1) - \hat f(1-h) = \int_{1-h}^1 \hat f'(t) \diff t = h f'(\xi_h)$ for some $\xi_h \in (1-h, 1)$, thus
\begin{IEEEeqnarray*}{+rCl+x*}
\hat f'(1) & = & \lim_{h \searrow 0} \hat f'(\xi_h) = \lim_{u \nearrow 1} \hat f'(u) = \lim_{u \nearrow 1} \hat g(u) = \hat g(1)~. & \qedhere
\end{IEEEeqnarray*}
\end{proof}
\end{lemma}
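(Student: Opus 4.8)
The plan is to push everything through the Hermite expansion of $g$ together with the antiderivative recursion $h_{n+1}' = \sqrt{n+1}\,h_n$, generalizing Lemma~11 of \cite{daniely_toward_2016} from genuine derivatives to pseudo-derivatives. First I would fix $x \geq 0$ (the case $x < 0$ being symmetric, with $\bbone_{[x,0]}$ and a sign) and rewrite the defining identity as an inner product: with $\phi$ the standard Gaussian density, $\int_0^x g(t)\diff t = \langle \bbone_{[0,x]}/\phi,\, g\rangle_{L_2(\calN(0,1))}$, and $\bbone_{[0,x]}/\phi \in L_2(\calN(0,1))$ since $1/\phi$ is bounded on $[0,x]$. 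Expanding $g = \sum_{n\geq0} a_n(g) h_n$ in $L_2(\calN(0,1))$ and using continuity of the inner product gives $f(x) = f(0) + \sum_{n\geq0} a_n(g)\int_0^x h_n(t)\diff t$; the antiderivative identity $\int_0^x h_n = (n+1)^{-1/2}\big(h_{n+1}(x) - h_{n+1}(0)\big)$ then yields, after re-indexing, $f(x) = f(0) + \sum_{n\geq1} n^{-1/2} a_{n-1}(g)\big(h_n(x) - h_n(0)\big)$.

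Next I would separate the constant term using $h_0 \equiv 1$, writing $f(x) = \big(f(0) - \sum_{n\geq1} n^{-1/2} a_{n-1}(g) h_n(0)\big) h_0(x) + \sum_{n\geq1} n^{-1/2} a_{n-1}(g) h_n(x)$. To legitimize this rearrangement I would establish absolute summability of $\sum_{n\geq1} n^{-1/2}|a_{n-1}(g)|\,|h_n(0)|$: since $h_n(0) = 0$ for odd $n$ and $|h_n(0)| = \Theta(n^{-1/4})$ by the classical double-factorial (Wallis-type) asymptotics, the weights are $O(n^{-3/4}) \in \ell_2$, and Cauchy--Schwarz against $(a_{n-1}(g))_{n\geq1} \in \ell_2$ closes it. The resulting series is then exactly the Hermite expansion of $f$: reading off coefficients proves (b), namely $a_n(f) = n^{-1/2} a_{n-1}(g)$ for $n \geq 1$; the estimate $|a_n(f)| \leq |a_{n-1}(g)|$ makes the coefficient sequence square-summable, which gives (a); and since the same series converges both pointwise (by the construction above) and in $L_2(\calN(0,1))$ with identical limits, we obtain (c).

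For (d) I would substitute (b) into the dual-activation formula $\widehat{\varphi}(t) = \sum_{n\geq0} a_n(\varphi)^2 t^n$ from \Eqref{eq:hermite_dual_activation_connection}, obtaining $\widehat f(t) = a_0(f)^2 + \sum_{n\geq1} n^{-1} a_{n-1}(g)^2 t^n$, a power series whose coefficients are dominated by the summable sequence $(a_{n-1}(g)^2)_n$; hence it converges on $[-1,1]$ and may be differentiated termwise on $(-1,1)$, giving $\widehat f'(t) = \sum_{n\geq1} a_{n-1}(g)^2 t^{n-1} = \sum_{n\geq0} a_n(g)^2 t^n = \widehat g(t)$. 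At the endpoints $t = \pm 1$ termwise differentiation is no longer automatic, so I would invoke Abel's theorem to get continuity of $\widehat f$ and $\widehat g$ on $[-1,1]$ and then, at $t=1$ say, combine $\widehat f(1) - \widehat f(1-h) = \int_{1-h}^1 \widehat f'(t)\diff t = h\,\widehat f'(\xi_h)$ (mean value theorem for integrals, $\xi_h \in (1-h,1)$) with $\widehat f' = \widehat g$ near $1$ and the continuity of $\widehat g$ to conclude $\widehat f'(1) = \widehat g(1)$.

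The main obstacle is not a single deep idea but the care needed to make all term-by-term manipulations rigorous under the weak hypothesis that $g$ is merely a pseudo-derivative: justifying the interchange of sum and integral, splitting off the $h_0$-term via the $h_n(0)$ asymptotics, and securing simultaneous pointwise and $L_2$ convergence of the expansion of $f$. The endpoint analysis in (d) is the second delicate point, since there one must patch termwise differentiation of the power series with Abel's theorem and a mean-value argument rather than differentiating directly.
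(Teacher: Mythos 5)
Your proposal reproduces the paper's argument essentially verbatim: the same rewriting of $\int_0^x g$ as an $L_2(\calN(0,1))$ inner product, the same Hermite antiderivative identity and re-indexing, the same splitting off of the $h_0$-term justified via the $|h_n(0)| = \Theta(n^{-1/4})$ double-factorial asymptotics and Cauchy--Schwarz, and the same termwise differentiation plus Abel/mean-value-theorem patch at $t=\pm1$ for part (d). Nothing is missing and nothing is materially different.
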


While the dualization $\act \mapsto \hat{\act}$ from \Cref{definition:dual_and_rescaled_activation_function} is a quadratic and not a linear mapping, it still interchanges nicely with the even-odd decomposition (since even and odd functions are orthogonal w.r.t.\ the corresponding \quot{inner product}):

\begin{lemma}[General properties of dual activations] \label{lemma:dual_properties}  %
	Let $\act \in \calL_2(\calN(0, 1))$. Then, 
	\begin{enumerate}[(a)]
	\item $\acteven, \actodd \in \calL_2(\calN(0, 1))$ and
	\begin{IEEEeqnarray*}{+rCl+x*}
		(\hat \act)_{\mathrm{even}} = \widehat \acteven, \qquad (\hat \act)_{\mathrm{odd}} = \widehat \actodd~.
	\end{IEEEeqnarray*}
	\item $\w\act$, $\w \acteven$ and $\w\actodd$ are nonnegative and increasing on $[0, 1]$. Moreover, if $\act$ is not almost surely constant, then $\w \act$ is strictly increasing on $[0, 1]$.
	\item We have
	\begin{align}
		\label{eq:dual_act_has_max_at_one} \abs{\w\act(t)}\le \w\act(|t|) \leq \w\act(1),
	\end{align}
	for all $t \in [-1, 1]$. If $\act$ is not almost surely constant, then $\abs{\w\act(t)} < \w\act(1)$ for $t \in (-1, 1)$. Moreover, $|\w\act(-1)| = \w\act(1)$ if and only if $\act$ is almost surely even or almost surely odd.
	\item 	We have $\w \act |_{(-1,1)} \in C^\infty((-1,1))$.
	\end{enumerate}
\end{lemma}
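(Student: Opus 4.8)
The plan is to derive all four statements from the single power-series identity $\w\act(t) = \sum_{n=0}^\infty a_n(\act)^2 t^n$ (Eq.~\eqref{eq:hermite_dual_activation_connection}), exploiting that its coefficients $a_n(\act)^2$ are nonnegative and sum to $\|\act\|_{L_2(\calN(0,1))}^2 < \infty$.

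For (a), I would first note that $\acteven, \actodd \in \calL_2(\calN(0,1))$ because the standard Gaussian is symmetric, so $x \mapsto \act(-x)$ has the same $L_2(\calN(0,1))$-norm as $\act$, and the triangle inequality closes the estimate. The identity then comes from a parity computation of Hermite coefficients: since $h_n$ has the same parity as $n$ and $\calN(0,1)$ is symmetric, an even function is $L_2(\calN(0,1))$-orthogonal to $h_n$ for odd $n$ and an odd function is orthogonal to $h_n$ for even $n$; hence $a_n(\acteven) = \even(n)\, a_n(\act)$ and $a_n(\actodd) = \odd(n)\, a_n(\act)$. Substituting into the power series gives $\widehat{\acteven}(t) = \sum_{n \text{ even}} a_n(\act)^2 t^n$, which is exactly the sum of the even-degree monomials of $\w\act$, i.e.\ $(\w\act)_{\mathrm{even}}$; the odd case is identical.

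For (b), nonnegativity and monotonicity on $[0,1]$ are immediate from the power series: for $0 \le t_1 < t_2 \le 1$ one has $0 \le t_1^n \le t_2^n$ term by term. For the strict statement I would observe that $\act$ is almost surely constant iff $a_n(\act) = 0$ for all $n \ge 1$; if not, fixing $n_0 \ge 1$ with $a_{n_0}(\act) \neq 0$ yields $\w\act(t_2) - \w\act(t_1) \ge a_{n_0}(\act)^2(t_2^{n_0} - t_1^{n_0}) > 0$. The even/odd variants have the same nonnegative-coefficient structure, so the same argument applies to $\w\acteven$ and $\w\actodd$. For (c), the triangle inequality gives $|\w\act(t)| \le \sum_n a_n(\act)^2 |t|^n = \w\act(|t|)$, and $\w\act(|t|) \le \w\act(1)$ by (b); if $\act$ is not almost surely constant and $t \in (-1,1)$, then $|t| < 1$ and the strict monotonicity of (b) upgrades the last inequality to a strict one. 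For the final equivalence I would write $E \equalDef \sum_{n \text{ even}} a_n(\act)^2$ and $O \equalDef \sum_{n \text{ odd}} a_n(\act)^2$, so that $\w\act(1) = E + O$ and $\w\act(-1) = E - O$ with $E, O \ge 0$; then $|\w\act(-1)| = \w\act(1)$ iff $|E - O| = E + O$ iff $\min\{E, O\} = 0$. By part (a), $O = 0$ iff $\actodd = 0$ in $L_2(\calN(0,1))$ iff $\act$ is almost surely even (two $L_2(\calN(0,1))$-functions agree a.e.\ iff they have the same Hermite coefficients, by completeness of the Hermite basis), and symmetrically $E = 0$ iff $\act$ is almost surely odd.

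Finally, (d) follows because $\sum_n a_n(\act)^2 t^n$ converges at $t = 1$, hence has radius of convergence at least $1$, hence defines a real-analytic (in particular $C^\infty$) function on $(-1,1)$ agreeing with $\w\act$ there. I do not expect a genuine obstacle; the only points requiring a little care are the orthogonality argument behind the parity of the Hermite coefficients in (a), and the step in (c) that $\actodd = 0$ in $L_2(\calN(0,1))$ is equivalent to $\act$ being almost surely even, which again rests on the completeness of the Hermite basis.
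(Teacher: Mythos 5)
Your proposal is correct and takes essentially the same approach as the paper: both derive all four parts from the power series $\w\act(t) = \sum_{n\ge 0} a_n(\act)^2 t^n$ with nonnegative coefficients, together with the parity of Hermite polynomials. The only cosmetic difference is in the last part of (c), where the paper argues via sharpness of the triangle inequality for $\w\acteven \pm \w\actodd$, while you work directly with the even and odd partial sums $E$ and $O$ — the two arguments reduce to the same condition $\min\{\w\acteven(1), \w\actodd(1)\} = 0$.
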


\begin{proof}
\leavevmode
\begin{enumerate}[(a)]
	\item It is easy to see that $\acteven, \actodd \in \calL_2(\calN(0, 1))$. As mentioned in \Cref{sec:appendix:overview}, the Hermite polynomials are even for even $n$ and odd for odd $n$. Hence,
	\begin{IEEEeqnarray*}{+rCl+x*}
		\acteven & = & \sum_{n=0}^\infty \even(n) a_n(\act) h_n \\
		\actodd & = & \sum_{n=0}^\infty \odd(n) a_n(\act) h_n~.
	\end{IEEEeqnarray*} 
	We also know that the dual activation satisfies $\hat \act(t) = \sum_{n=0}^\infty a_n(\act)^2 t^n$. 
	Since the monomials $t^n$ are also even for even $n$ and odd for odd $n$, we obtain
	\begin{IEEEeqnarray*}{+rCl+x*}
		(\hat \act)_{\mathrm{even}}(t)& = & \sum_{n=0}^\infty \even(n) a_n(\act)^2 t^n = \widehat \acteven(t) \\
		(\hat \act)_{\mathrm{odd}}(t) & = & \sum_{n=0}^\infty \odd(n) a_n(\act)^2 t^n = \widehat \actodd(t)~. \IEEEyesnumber \label{eq:dual_act_evenodd}
	\end{IEEEeqnarray*}
	Alternatively, this statement can also be proven directly using the definition of the dual activation.
	
	\item It follows from the Hermite expansions in \Cref{eq:dual_act_evenodd} that $\w\acteven$ and $\w\actodd$ are nonnegative and increasing on $[0, 1]$, and therefore this also holds for $\w\act$. If $\act$ is not almost surely constant, we have $a_n(\act) \neq 0$ for some $n \geq 1$. Therefore, the Hermite expansion $\w\act(t) = \sum_{n=0}^\infty a_n(\act)^2 t^n$ implies that $\w\act$ is strictly increasing on $[0, 1]$.
	\item For $t \in [0, 1]$, the statement follows from (b). For $t \in [-1, 0)$, (b) implies
	\begin{IEEEeqnarray*}{+rCl+x*}
	|\w\act(t)| & = & |\w\acteven(-t) - \w\actodd(-t)| \\
	& \leq & |\w\acteven(-t)| + |\w\actodd(-t)| = \w\acteven(-t) + \w\actodd(-t) = \w\act(-t) = \w\act(|t|) \leq \w\act(1)~.
\end{IEEEeqnarray*}
If $\act$ is not almost surely constant, then $\w\act$ is strictly increasing on $[0, 1]$ by (b) and the previous inequality implies $\abs{\w\act(t)} < \w\act(1)$ for $t \in (-1, 1)$. Moreover, the inequality $|\w\acteven(-t) - \w\actodd(-t)| \leq |\w\acteven(-t)| + |\w\actodd(-t)|$ is sharp iff $\w\acteven(-1) = 0$ or $\w\actodd(-1) = 0$, which is the case iff $\act$ is almost surely even or almost surely odd. 
\item By \cref{eq:hermite_dual_activation_connection} the dual activation $\w\act$ is a convergent power series on $[-1,1]$.
\qedhere
\end{enumerate}
\end{proof}	

\subsection{Dominating terms}

\begin{definition}[Double factorial] \label{def:double_factorial}
Following \cite{daniely_toward_2016}, we define the double factorial for $n \in \bbZ$ as
\begin{IEEEeqnarray*}{+rCl+x*}
n!! & \equalDef & \begin{cases}
1 &, n \leq 0 \\
n \cdot (n-2) \cdots 4 \cdot 2 &, n > 0 \text{ even} \\
n \cdot (n-2) \cdots 3 \cdot 1 &, n > 0 \text{ odd}.
\end{cases} & \qedhere
\end{IEEEeqnarray*}
\end{definition}

\begin{proposition} \label{prop:special_hermite_coefficients}
For all $k \in \bbN_0$, the reference activation $\s_k$ from \Cref{def:special_acts} satisfies $\s_k \in L_2(\calN(0, 1))$ and $\s_k$ is a pseudo-derivative of $\s_{k+1}$. Moreover, for all $n \in \bbN_0$, the Hermite coefficients satisfy
\begin{IEEEeqnarray*}{+rCl+x*}
a_n(\s_k) & = & \odd(n-k) \frac{(-1)^{\frac{\max(1, n-k)-1}{2}}(n-k-2)!!}{(k-n)!!\sqrt{2\pi n!}}
\end{IEEEeqnarray*}

\begin{proof}
The first two statements are straightforward to show, hence we only show the formula for the Hermite coefficients. We first note that for $X \sim \calN(0, 1)$, we have for odd $k$ the central absolute moments following \citep[see e.g.][]{winkelbauer_2012_moments}:
\begin{IEEEeqnarray*}{+rCl+x*}
\bbE[|X|^k] & = & 2^{k/2} \Gamma\left(\frac{k+1}{2}\right) \pi^{-1/2} = 2^{k/2} ((k-1)/2)! \pi^{-1/2} = 2^{1/2} (k-1)!! \pi^{-1/2} \\
& = & \sqrt{\frac{2}{\pi}} (k-1)!!~.
\end{IEEEeqnarray*}
This allows us to show the statement for $n=0$:
\begin{IEEEeqnarray*}{+rCl+x*}
a_0(\s_k) & = & \langle h_0, \s_k \rangle_{\calL_2(\calN(0, 1))} = \int_{\bbR} \s_k(x) \frac{1}{\sqrt{2\pi}} e^{-x^2/2} \diff x = \odd(k) \frac{1}{2k!} \int_{\bbR} |x^k| \frac{1}{\sqrt{2\pi}} e^{-x^2/2} \diff x \\
& = & \odd(k)\frac{(k-1)!!}{k!\sqrt{2\pi}} = \odd(k)\frac{1}{k!!\sqrt{2\pi}} = \odd(0-k) \frac{(-1)^{\frac{\max(1, 0-k)-1}{2}}(0-k-2)!!}{(k-0)!!\sqrt{2\pi 0!}}~, %
\end{IEEEeqnarray*}

Finally, we show the formula for all $n, k$ via induction on $k$. (We do not use induction on $n$.) For the base case $k=0$, we note that $h_0 \equiv 1$ and $\s_0 = 2^{-1/2} \sqrt{2} \bbone_{(0, \infty)} - \frac{1}{2}h_0$ almost everywhere. \cite{daniely_toward_2016} show in Section 8 that
\begin{IEEEeqnarray*}{+rCl+x*}
a_n(\sqrt{2} \bbone_{(0, \infty)}) & = & \begin{cases}
2^{-1/2} &, n = 0 \\
\frac{(-1)^{\frac{n-1}{2}}(n-2)!!}{\sqrt{\pi n!}} &, n\text{ odd} \\
0 &, 2 \leq n\text{ even.}
\end{cases}
\end{IEEEeqnarray*}
Therefore, we obtain for $n \geq 1$:
\begin{IEEEeqnarray*}{+rCl+x*}
a_n(\s_0) & = & 2^{-1/2} a_n(\sqrt{2}\bbone_{(0, \infty)}) = \odd(n) \frac{(-1)^{\frac{n-1}{2}}(n-2)!!}{\sqrt{2 \pi n!}} \\
& = & \odd(n-0) \frac{(-1)^{\frac{\max(1, n-0)-1}{2}}(n-0-2)!!}{(0-n)!!\sqrt{2\pi n!}}~,
\end{IEEEeqnarray*}
which completes the base case $k=0$ since the case $n=0$ has already been treated above. 

For the induction $k \to k+1$, we note that $s_k$ is a pseudo-derivative of $s_{k+1}$ and use \Cref{lemma:hermite_derivative} to obtain for all $n \geq 0$:
\begin{IEEEeqnarray*}{+rCl+x*}
a_{n+1}(\s_{k+1}) & = & (n+1)^{-1/2} a_n(\s_k) = \odd(n-k) \frac{(-1)^{\frac{\max(1, n-k)-1}{2}}(n-k-2)!!}{(k-n)!!\sqrt{2\pi (n+1)!}} \\
& = & \odd((n+1)-(k+1)) \frac{(-1)^{\frac{\max(1, (n+1)-(k+1))-1}{2}}((n+1)-(k+1)-2)!!}{((k+1)-(n+1))!!\sqrt{2\pi (n+1)!}}~,
\end{IEEEeqnarray*}
which completes the induction.
\end{proof}
\end{proposition}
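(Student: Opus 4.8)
The plan is to dispatch the two structural claims by hand and then obtain the Hermite coefficients by reducing, via the pseudo-derivative recursion, to two boundary computations: the case $n=0$ and the case $k=0$.

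First, $\s_k \in L_2(\calN(0,1))$ is immediate since $|\s_k(x)| \le \tfrac{1}{2k!}|x|^k$ and the standard Gaussian has finite moments of every order. For the pseudo-derivative claim I would simply integrate: $\s_{k+1}(0)=0$, and splitting $\int_0^x \s_k(t)\diff t$ into the cases $x>0$ and $x<0$ (on which $\s_k(t)$ equals $\tfrac{1}{2k!}t^k$ and $-\tfrac{1}{2k!}t^k$ respectively) one checks in both cases that $\s_{k+1}(0)+\int_0^x \s_k(t)\diff t = \tfrac{1}{2(k+1)!}\sgn(x)x^{k+1} = \s_{k+1}(x)$; since $\s_k$ is locally integrable, this is exactly the definition of a pseudo-derivative. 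In particular \Cref{lemma:hermite_derivative}(b) applies and yields the recursion $a_{n+1}(\s_{k+1}) = (n+1)^{-1/2} a_n(\s_k)$ for all $n,k\in\bbN_0$.

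This recursion determines every coefficient from (i) the full sequence $(a_n(\s_0))_n$ and (ii) the boundary values $(a_0(\s_k))_k$, which are exactly the entries the recursion never produces. For (ii): $a_0(\s_k) = \bbE[\s_k(X)] = \tfrac{1}{2k!}\bbE[\sgn(X)X^k]$ vanishes for even $k$ by symmetry, and for odd $k$ equals $\tfrac{1}{2k!}\bbE[|X|^k] = \tfrac{1}{2k!}\sqrt{2/\pi}\,(k-1)!!$ by the classical absolute-moment formula for $\calN(0,1)$; using $k! = k!!\,(k-1)!!$ for odd $k$ this becomes $\tfrac{1}{k!!\sqrt{2\pi}}$, which one checks agrees with the claimed expression once $\odd(0-k)$, $\max(1,0-k)$ and $(0-k-2)!!$ are simplified. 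For (i), the base case $k=0$: since $\s_0 = 2^{-1/2}\bigl(\sqrt2\,\bbone_{(0,\infty)}\bigr) - \tfrac12 h_0$ almost everywhere, and the $h_0$ term only changes $a_0$, I would invoke the Hermite coefficients of $\sqrt2\,\bbone_{(0,\infty)}$ computed by \cite{daniely_toward_2016} to get $a_n(\s_0) = 2^{-1/2} a_n(\sqrt2\,\bbone_{(0,\infty)})$ for $n\ge1$, which is precisely the stated formula at $k=0$ ($n=0$ being already covered by (ii)).

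Finally one runs an induction on $k$ checking that the closed form is invariant under the recursion: given the formula for all $a_n(\s_k)$, multiply by $(n+1)^{-1/2}$ and observe that $n-k$ is unchanged under $(n,k)\mapsto(n+1,k+1)$, so $\odd(n-k)$, $(-1)^{(\max(1,n-k)-1)/2}$ and $(n-k-2)!!$ pass through verbatim, while $(k-n)!!\sqrt{2\pi n!}$ combines with $(n+1)^{-1/2}$ into $((k+1)-(n+1))!!\sqrt{2\pi(n+1)!}$; together with the $n=0$ seed this closes the induction. The main obstacle is the bookkeeping in this last step: one must be careful that the convention $m!!=1$ for $m\le0$ and the parity indicator $\odd(\cdot)$ of a possibly negative integer interact correctly in the regime $n<k$, and that the $\max(1,n-k)$ guard in the sign exponent is exactly what keeps the formula consistent there — no single manipulation is deep, but this is where an error would hide.
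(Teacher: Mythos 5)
Your proposal is correct and follows essentially the same route as the paper's proof: compute $a_0(\s_k)$ via Gaussian absolute moments, obtain $a_n(\s_0)$ from the coefficients of $\sqrt{2}\,\bbone_{(0,\infty)}$ in \cite{daniely_toward_2016}, and close the argument by induction on $k$ through the pseudo-derivative recursion $a_{n+1}(\s_{k+1}) = (n+1)^{-1/2}a_n(\s_k)$ from \Cref{lemma:hermite_derivative}. The only difference is that you spell out the verification that $\s_k$ is a pseudo-derivative of $\s_{k+1}$ and the $L_2$ bound, which the paper dismisses as ``straightforward''.
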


\begin{lemma}\label{lemma:s_0_dual}
We have $\widehat{\s_0}(t) = \frac{1}{4} - \frac{1}{2\pi} \arccos(t)$.

\begin{proof}
By Section 8 in \cite{daniely_toward_2016}, the function $f \equalDef \sqrt{2} \bbone_{(0, \infty)}$ has the dual activation $\hat{f}(t) = 1 - \frac{1}{\pi} \arccos(t)$. Hence, $g \equalDef \bbone_{(0, \infty)}$ has the dual activation
\begin{IEEEeqnarray*}{+rCl+x*}
\hat{g}(t) = \frac{1}{2} - \frac{1}{2\pi} \arccos(t)~.
\end{IEEEeqnarray*}
Now, since $\s_0 = g - 1/2$ almost everywhere, we have $a_n(\s_0) = a_n(g)$ for all $n \neq 0$, and hence
\begin{IEEEeqnarray*}{+rCl+x*}
\widehat{\s_0}(t) = \sum_{n=0}^\infty a_n(\s_0)^2 t^n = C + \sum_{n=0}^\infty a_n(g)^2 t^n = C + \frac{1}{2} - \frac{1}{2\pi} \arccos(t)
\end{IEEEeqnarray*}
for some constant $C \in \bbR$. Since $\s_0$ is odd, $\widehat{\s_0}$ must be odd by \Cref{lemma:dual_properties}, which yields $C = -1/4$.
\end{proof}
\end{lemma}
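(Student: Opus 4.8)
The plan is to reduce everything to the computation $\widehat{\sqrt{2}\,\bbone_{(0,\infty)}}(t) = 1 - \tfrac{1}{\pi}\arccos(t)$ from Section~8 of \cite{daniely_toward_2016}. First I would exploit that the map $\act \mapsto \w\act$ is \emph{quadratic}, not linear: from $\w\act(t) = \sum_{n\ge 0} a_n(\act)^2 t^n$ (equivalently, directly from $\w\act(t) = \bbE_{(u,v)\sim\calN(0,\Sigma_t)}[\act(u)\act(v)]$) one gets $\widehat{c\act} = c^2\w\act$ for any scalar $c$. Applying this with $c = 1/\sqrt 2$ to $g \equalDef \bbone_{(0,\infty)} = \tfrac{1}{\sqrt 2}\,\bigl(\sqrt 2\,\bbone_{(0,\infty)}\bigr)$ yields $\w g(t) = \tfrac 12 - \tfrac{1}{2\pi}\arccos(t)$.

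Second, I would compare $\s_0$ with $g$. For $x \neq 0$ we have $\s_0(x) = \tfrac12\sgn(x) = \bbone_{(0,\infty)}(x) - \tfrac12 = g(x) - \tfrac12$, so $\s_0 = g - \tfrac12$ almost everywhere, hence also as elements of $L_2(\calN(0,1))$. Since the constant function $\tfrac12$ is a multiple of $h_0$ alone, subtracting it perturbs only the zeroth Hermite coefficient; thus $a_n(\s_0) = a_n(g)$ for all $n \ge 1$. Feeding this into $\w\varphi(t) = \sum_{n\ge 0} a_n(\varphi)^2 t^n$ gives $\widehat{\s_0}(t) = \w g(t) + C$, where $C \equalDef a_0(\s_0)^2 - a_0(g)^2 \in \bbR$ does not depend on $t$.

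Third, I would pin down $C$. Because $\s_0$ is odd, its mean vanishes, so $a_0(\s_0) = \langle \s_0, h_0\rangle_{L_2(\calN(0,1))} = 0$ (this is also the case $\odd(0) = 0$ in \Cref{prop:special_hermite_coefficients}, and it matches \Cref{lemma:dual_properties}(a), which forces $\widehat{\s_0}$ to be odd and hence to vanish at $0$). Evaluating $\widehat{\s_0}(t) = \w g(t) + C$ at $t = 0$ and using $\arccos(0) = \pi/2$ gives $0 = \tfrac12 - \tfrac14 + C$, so $C = -\tfrac14$, which is exactly the claimed identity $\widehat{\s_0}(t) = \tfrac14 - \tfrac{1}{2\pi}\arccos(t)$. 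I do not expect a genuine obstacle here; the only two points that deserve an explicit sentence are the quadratic (rather than linear) scaling of dualization in the first step, and the fact that an almost-everywhere constant shift changes only the zeroth Hermite coefficient in the second step — both are immediate from the power-series formula $\w\varphi(t) = \sum_n a_n(\varphi)^2 t^n$.
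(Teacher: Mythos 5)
Your proposal is correct and follows essentially the same route as the paper: reduce to the Daniely et al.\ computation for $\sqrt{2}\,\bbone_{(0,\infty)}$ via the quadratic scaling $\widehat{c\act} = c^2\widehat{\act}$, observe that $\s_0 = \bbone_{(0,\infty)} - \tfrac12$ a.e.\ so only the zeroth Hermite coefficient changes, and pin down the resulting additive constant using the oddness of $\s_0$. The only cosmetic difference is that you fix the constant by evaluating at $t=0$ (using $a_0(\s_0)=0$), whereas the paper invokes the oddness of $\widehat{\s_0}$ as a global constraint; these are trivially the same fact.
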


\begin{lemma} \label{lemma:asymptotics_at_the_boundary_of_hat_s}
For all $k \in \bbN_0$, there exists $b_k > 0$ such that for $\tau \in \{\pm 1\}$,
\begin{IEEEeqnarray*}{+rCl+x*}
\w{\s_k}(\tau(1-t)) & = & (-\tau)^{k+1} b_k t^{k+1/2} + \calQ_{-1, k+1/2}(t)~.
\end{IEEEeqnarray*}

\begin{proof}
\textbf{Step 1: Analysis of $s_0$.} First, consider the case $\tau=1$.
We have
\begin{IEEEeqnarray*}{+rCl+x*}
\frac{\diff}{\diff t} \arccos(1-t) = \frac{1}{\sqrt{1 - (1-t)^2}} = t^{-1/2} (2-t)^{-1/2}~.
\end{IEEEeqnarray*}
It is easily seen using induction that
\begin{IEEEeqnarray*}{+rCl+x*}
\frac{\diff^n}{\diff t^n} (2-t)^{-1/2} = \frac{(2n-1)!!}{2^n} (2-t)^{-\frac{2n+1}{2}}~.
\end{IEEEeqnarray*}
Since $t \mapsto (2 - t)^{-1/2}$ is analytic in a neighborhood of $t=0$, it is equal to its Taylor expansion:
\begin{IEEEeqnarray*}{+rCl+x*}
(2-t)^{-1/2} & = & \sum_{n=0}^\infty \frac{(2n-1)!!}{2^{2n + \frac{1}{2}} n!} t^n~.
\end{IEEEeqnarray*}
Since all coefficients are positive, we can apply the monotone convergence theorem to obtain
\begin{IEEEeqnarray*}{+rCl+x*}
\arccos(1-t) & = & \arccos(1-0) + \int_0^t \frac{\diff}{\diff u} \arccos(1-u) \diff u \\
& = & \int_0^t \sum_{n=0}^\infty \frac{(2n-1)!!}{2^{2n + \frac{1}{2}} n!} u^{n-1/2} \diff u \\
& = & \sum_{n=0}^\infty \int_0^t \frac{(2n-1)!!}{2^{2n + \frac{1}{2}} n!} u^{n-1/2} \diff u \\
& = & \sum_{n=0}^\infty \frac{(2n-1)!!}{2^{2n + \frac{1}{2}} n!} \left[\frac{1}{n+1/2} u^{n+1/2}\right]_0^t \\
& = & \sum_{n=0}^\infty \frac{(2n-1)!!}{2^{2n + \frac{1}{2}} n! (n+1/2)} t^{n+1/2}~.
\end{IEEEeqnarray*}
For arbitrary $1 \leq N \in \bbN_0$, this yields using \cref{lemma:s_0_dual}, the identity
\begin{IEEEeqnarray*}{+rCl+x*}
\widehat{\s_0}(1-t) & = & \frac{1}{4} - \sum_{n=0}^\infty \frac{(2n-1)!!}{2^{2n + \frac{3}{2}} n! (n+1/2) \pi} t^{n+1/2} \\
& = & \frac{1}{4} - \frac{1}{\pi \sqrt{2}} t^{1/2} + \underbrace{\sum_{n=1}^{N-1} \frac{(2n-1)!!}{2^{2n + \frac{3}{2}} n! (n+1/2) \pi} t^{n+1/2}}_{=\calP_{0, 1/2}(t)} \\
&& ~+~ \underbrace{t^{N+1/2}}_{= \calR_{N+1/2}(t)} \underbrace{\sum_{n=N}^\infty \frac{(2n-1)!!}{2^{2n + \frac{3}{2}} n! (n+1/2) \pi} t^{n-N}}_{= \calR_0(t)}.
\end{IEEEeqnarray*}
\textbf{Step 2: Induction on $k$.} We now show the lemma via induction on $k \in \bbN_0$. For arbitrary $\gamma \in \bbR$, by choosing $N$ with $N+1/2 \geq \gamma$, we obtain using \cref{prop:properties_of_P_and_R}:
\begin{IEEEeqnarray*}{+rCl+x*}
\widehat{\s_0}(1-t) & = & (-1)^1 b_0 t^{0 + 1/2} + \calP_{-1, 1/2}(t) + \calR_{\gamma}(t)~.
\end{IEEEeqnarray*}
For the induction step $k \to k+1$, we note that since $\s_k$ is a pseudo-derivative of $\s_{k+1}$ (see \Cref{def:pseudo-derivative}), we have $\widehat{\s_{k+1}}' = \w{\s_{k+1}'} = \widehat{\s_k}$ by \Cref{lemma:hermite_derivative}, which yields
\begin{IEEEeqnarray*}{+rCl+x*}
\widehat{\s_{k+1}}(1-t) & = & \widehat{\s_{k+1}}(1) - \int_0^t \widehat{\s_{k+1}}'(1-u) \diff u \\
& = & \widehat{\s_{k+1}}(1) - \int_0^t \widehat{\s_k}(1-u) \diff u \\
& = & \widehat{\s_{k+1}}(1) - \int_0^t (c_k + (-1)^{k+1} b_k u^{k+1/2} + \calP_{0, k+1/2}(u) + \calR_\gamma(u)) \diff u \\
& = & c_{k+1} + (-1)^{k+2} b_{k+1} t^{(k+1)+1/2} + \calP_{0, (k+1)+1/2}(t) + \calR_{\gamma+1}(t)
\end{IEEEeqnarray*}
for suitable constants $c_{k+1}, b_{k+1} > 0$. This shows the claim for $\tau = 1$. For $\tau=-1$, we note that $\s_k$ is odd for even $k$ and even for odd $k$. By \cref{lemma:dual_properties}, the same holds for $\widehat{\s_k}$. This shows
\begin{IEEEeqnarray*}{+rCl+x*}
\widehat{\s_k}(-1+t) & = & (-1)^{k+1} \widehat{\s_k}(1-t) \\
& = & b_k t^{k+1/2} + \calP_{-1, k+1/2}(t) + \calR_\gamma(t)~. & \qedhere
\end{IEEEeqnarray*}
\end{proof}
\end{lemma}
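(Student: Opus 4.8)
The plan is to induct on $k$, establishing the case $\tau=1$ first and then deducing $\tau=-1$ from the parity of $\s_k$. The base case $k=0$ will use the explicit formula $\widehat{\s_0}(t)=\tfrac14-\tfrac1{2\pi}\arccos(t)$ from \Cref{lemma:s_0_dual}, and the inductive step will use that $\s_k$ is a pseudo-derivative of $\s_{k+1}$ (\Cref{prop:special_hermite_coefficients}) together with the identity $\widehat{f'}=\widehat{f}'$ for pseudo-derivatives (\Cref{lemma:hermite_derivative}(d)). Throughout, the bookkeeping of leading terms versus remainders is done inside the boundary-function calculus of \Cref{prop:properties_of_P_and_R}.

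\textbf{Base case.} Since $\tfrac{\diff}{\diff t}\arccos(1-t)=\bigl(t(2-t)\bigr)^{-1/2}=t^{-1/2}(2-t)^{-1/2}$ and $t\mapsto(2-t)^{-1/2}$ is analytic near $0$ with Taylor series $\sum_{n\ge0}c_nt^n$ where all $c_n>0$ and $c_0=2^{-1/2}$, monotone convergence justifies integrating termwise to obtain $\arccos(1-t)=\sum_{n\ge0}\tfrac{c_n}{n+1/2}\,t^{n+1/2}$. For a prescribed $\gamma$, truncating after $N$ terms with $N+1/2\ge\gamma$ writes the tail as $t^{N+1/2}$ times an analytic function, hence as an element of $\calR_{N+1/2}\subseteq\calR_\gamma$; the finitely many intermediate powers lie in $\calP_{0,1/2}$, and the constant $\tfrac14$ together with the leading $t^{1/2}$ coefficient is kept explicit. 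This yields $\widehat{\s_0}(1-t)=(-1)\,b_0\,t^{1/2}+\calQ_{-1,1/2}(t)$ with $b_0=\tfrac1{2\pi}\cdot\tfrac{c_0}{1/2}>0$, which is the claim for $k=0$, $\tau=1$, since $(-1)^{0+1}=-1$.

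\textbf{Inductive step.} Assuming the statement for $k$ with $\tau=1$ in the refined form $\widehat{\s_k}(1-u)=c_k+(-1)^{k+1}b_ku^{k+1/2}+\calP_{0,k+1/2}(u)+\calR_\gamma(u)$ with $c_k=\widehat{\s_k}(1)$, the pseudo-derivative relation and \Cref{lemma:hermite_derivative}(d) give $(\widehat{\s_{k+1}})'=\widehat{\s_k}$ on $[-1,1]$, hence $\widehat{\s_{k+1}}(1-t)=\widehat{\s_{k+1}}(1)-\int_0^t\widehat{\s_k}(1-u)\,\diff u$. Integrating termwise — using that antidifferentiation raises every exponent by one, so it sends $\calP_{0,k+1/2}$ and the constant $c_k$ into $\calP_{0,(k+1)+1/2}$, and $\calR_\gamma$ into $\calR_{\gamma+1}$ — produces $\widehat{\s_{k+1}}(1-t)=c_{k+1}+(-1)^{k+2}b_{k+1}t^{(k+1)+1/2}+\calP_{0,(k+1)+1/2}(t)+\calR_{\gamma+1}(t)$ with $b_{k+1}=b_k/\bigl((k+1)+1/2\bigr)>0$ and $c_{k+1}=\widehat{\s_{k+1}}(1)$ finite by \Cref{lemma:dual_properties}. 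As $\gamma$ was arbitrary, absorbing $c_{k+1}$ and the $\calP$-part into $\calQ_{-1,(k+1)+1/2}$ closes the induction.

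\textbf{The case $\tau=-1$ and the main obstacle.} By the Hermite-coefficient formula in \Cref{prop:special_hermite_coefficients}, $a_n(\s_k)=0$ unless $n$ and $k$ have opposite parity, so $\widehat{\s_k}(t)=\sum_na_n(\s_k)^2t^n$ is an odd function for even $k$ and an even function for odd $k$; equivalently $\widehat{\s_k}(-s)=(-1)^{k+1}\widehat{\s_k}(s)$ (which also follows from \Cref{lemma:dual_properties}(a)). Hence $\widehat{\s_k}(-(1-t))=(-1)^{k+1}\widehat{\s_k}(1-t)=(-1)^{k+1}\bigl((-1)^{k+1}b_kt^{k+1/2}+\calQ_{-1,k+1/2}(t)\bigr)=b_kt^{k+1/2}+\calQ_{-1,k+1/2}(t)$, which is exactly the claim for $\tau=-1$ since $(-(-1))^{k+1}=1$. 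I do not anticipate a deep obstacle: the argument is bookkeeping within the $\calP/\calR/\calQ$ calculus, and the only step needing genuine care is the base case — justifying the termwise integration of the $\arccos$ series and checking that its tail lands in $\calR_\gamma$ for \emph{every} $\gamma$, so that the whole remainder collapses into $\calQ_{-1,1/2}$ — after which the induction runs mechanically via $\widehat{f'}=\widehat{f}'$ and the order-shift-by-one property of integration.
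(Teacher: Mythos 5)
Your proposal is correct and follows essentially the same route as the paper's proof: deriving the power-series expansion of $\arccos(1-t)$ for the $s_0$ base case and truncating at a $\gamma$-dependent order, inducting on $k$ via the pseudo-derivative identity $\widehat{s_{k+1}}' = \widehat{s_k}$ and termwise integration, and then handling $\tau=-1$ by the parity of $\widehat{s_k}$. The refined inductive form with the explicit constant $c_k = \widehat{s_k}(1)$ that you carry through is exactly what the paper does as well, and your bookkeeping in the $\calP/\calR/\calQ$ calculus (including the sign bookkeeping giving $(-1)^{k+2}$ from the minus sign in the integral) matches.
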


We will also need the asymptotic decay of $a_n(s_k)$, which will be studied in the following two lemmas.

\begin{lemma} \label{lemma:double_factorial_asymptotics}
For $p \in \bbN_0$ and odd $m \in \bbN_0$, we have
\begin{IEEEeqnarray*}{+rCl+x*}
\frac{m!!}{\sqrt{(m+p)!}} = \Theta_{\forall m}(m^{1/4-p/2})~.
\end{IEEEeqnarray*}
\begin{proof}
For even $n = 2k \geq 2$, we have $n!! = n(n-2)\cdots 2 = (2\cdot k)(2 \cdot (k-1)) \cdots (2 \cdot 1) = 2^k k! = 2^{n/2} (n/2)!$. Using Stirling's formula, we obtain
\begin{IEEEeqnarray*}{+rCl+x*}
\frac{(n-1)!!}{\sqrt{n!}} & = & \frac{n!}{n!!\sqrt{n!}} = \frac{\sqrt{n!}}{2^{n/2} (n/2)!} \sim \frac{(2 \pi n)^{1/4} n^{n/2} e^{-n/2}}{2^{n/2} (\pi n)^{1/2} (n/2)^{n/2} e^{-n/2}} = \Theta_{\forall n}(n^{-1/4})~.
\end{IEEEeqnarray*}
By setting $n \equalDef m+1$, we obtain
\begin{IEEEeqnarray*}{+rCl+x*}
\frac{m!!}{\sqrt{(m+p)!}} & = & \Theta_{\forall m}(m^{1/2-p/2}) \frac{m!!}{\sqrt{(m+1)!}} = \Theta_{\forall m}(m^{1/2-p/2}) \Theta_{\forall m}(m^{-1/4}) \\
& = & \Theta_{\forall m}(m^{1/4-p/2})~. & \qedhere
\end{IEEEeqnarray*}
\end{proof}
\end{lemma}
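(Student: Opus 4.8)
The plan is to reduce the whole statement to the single case $p=1$ --- that is, to the estimate
\[
\frac{(n-1)!!}{\sqrt{n!}} = \Theta_{\forall n}(n^{-1/4}) \qquad \text{for even } n \geq 2,
\]
and then to absorb the difference between $m+1$ and $m+p$ using elementary polynomial factors. For the displayed case, the key point is that for even $n$ one has the exact identity $n!! = 2^{n/2}(n/2)!$, so that $(n-1)!! = n!/n!! = n!/\bigl(2^{n/2}(n/2)!\bigr)$ and hence $\frac{(n-1)!!}{\sqrt{n!}} = \frac{\sqrt{n!}}{2^{n/2}(n/2)!}$. Substituting Stirling's formula in the non-asymptotic form $\ell! = \sqrt{2\pi\ell}\,(\ell/e)^{\ell}e^{r_\ell}$ with $0 < r_\ell < 1/(12\ell)$, the exponential factors cancel exactly, because $2^{n/2}\,(n/(2e))^{n/2} = (n/e)^{n/2}$, and one is left with $\frac{(2\pi n)^{1/4}}{\sqrt{\pi n}}\,e^{r_n/2 - r_{n/2}}$, which equals $\Theta(1)\cdot n^{-1/4}\cdot e^{r_n/2-r_{n/2}}$; since $e^{r_n/2-r_{n/2}}$ is bounded above and below by positive constants uniformly in $n$, and the quantity is positive for every finite $n$, this yields the $\Theta_{\forall n}$ bound.

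Next I would set $n \equalDef m+1$, which is even exactly when $m$ is odd, and write
\[
\frac{m!!}{\sqrt{(m+p)!}} = \frac{m!!}{\sqrt{(m+1)!}}\cdot\sqrt{\frac{(m+1)!}{(m+p)!}}.
\]
The first factor is $\Theta_{\forall m}(m^{-1/4})$ by the case above. For $p \geq 1$ the second factor equals $\bigl(\prod_{j=2}^{p}(m+j)\bigr)^{-1/2}$, the reciprocal square root of a fixed number $p-1$ of linear factors (the empty product, hence $1$, when $p=1$), so it is $\Theta_{\forall m}(m^{-(p-1)/2}) = \Theta_{\forall m}(m^{1/2-p/2})$; for $p=0$ it equals $\sqrt{m+1} = \Theta_{\forall m}(m^{1/2}) = \Theta_{\forall m}(m^{1/2-p/2})$. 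Multiplying the two factors gives $\Theta_{\forall m}(m^{-1/4+1/2-p/2}) = \Theta_{\forall m}(m^{1/4-p/2})$, where I use that odd $m \in \bbN_0$ forces $m \geq 1$, so that all powers of $m$ that appear are well-defined and positive and multiplicativity of $\Theta_{\forall m}$ applies.

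The argument is essentially routine, so I do not expect a genuine obstacle; the only step requiring a little care is upgrading Stirling's asymptotic equivalence to two-sided bounds that are uniform in $n$, as demanded by the $\Theta_{\forall n}$ notation, which is why I would invoke the explicit error term $r_\ell \in (0, 1/(12\ell))$ rather than merely $\ell! \sim \sqrt{2\pi\ell}(\ell/e)^\ell$.
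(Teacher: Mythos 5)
Your proof is correct and follows essentially the same route as the paper: reduce to the even-$n$ identity $n!! = 2^{n/2}(n/2)!$, apply Stirling to get $\frac{(n-1)!!}{\sqrt{n!}} = \Theta_{\forall n}(n^{-1/4})$, then set $n = m+1$ and absorb the factor $\sqrt{(m+1)!/(m+p)!} = \Theta_{\forall m}(m^{1/2-p/2})$. The only difference is that you make explicit two points the paper treats implicitly---the verification of the polynomial factor for general $p$, and the upgrade from Stirling's $\sim$ to the uniform two-sided bound demanded by $\Theta_{\forall n}$---which is a minor gain in rigor rather than a different argument.
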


\begin{lemma} \label{lemma:s_hermite_decay}
We have $|a_n(\s_k)| = \Theta_{\forall n}(\odd(n-k)(n+1)^{-3/4-k/2})$.

\begin{proof}
Since $(k-n)!! = 1$ for $n \geq k$ by our definition of the double factorial in \Cref{def:double_factorial}, we obtain
\begin{IEEEeqnarray*}{+rCl+x*}
|a_n(\s_k)| & \stackrel{\text{\Cref{prop:special_hermite_coefficients}}}{=} & \Theta_{\forall n}\left(\odd(n-k) \frac{(n-k-2)!!}{\sqrt{2\pi n!}}\right) \\
& \stackrel{\text{\Cref{lemma:double_factorial_asymptotics}}}{=} & \Theta_{\forall n}(\odd(n-k) (n+1)^{-3/4-k/2})~. & \qedhere
\end{IEEEeqnarray*}
\end{proof}
\end{lemma}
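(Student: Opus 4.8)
The plan is to reduce the claim directly to the explicit Hermite-coefficient formula of \Cref{prop:special_hermite_coefficients} together with the double-factorial asymptotics of \Cref{lemma:double_factorial_asymptotics}. First I would take absolute values in
\[
a_n(\s_k) = \odd(n-k)\,\frac{(-1)^{\frac{\max(1,\,n-k)-1}{2}}\,(n-k-2)!!}{(k-n)!!\,\sqrt{2\pi\, n!}}~,
\]
which makes the sign factor drop out. The factor $\odd(n-k)$ shows that $a_n(\s_k)=0$ exactly when $n-k$ is even, matching the right-hand side of the claim, so it suffices to control $|a_n(\s_k)|$ when $n-k$ is odd.

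For $n\ge k$, the convention of \Cref{def:double_factorial} gives $(k-n)!!=1$, so $|a_n(\s_k)|=\odd(n-k)\,(n-k-2)!!/\sqrt{2\pi\,n!}$. If in addition $n-k$ is odd and $n\ge k+3$, set $m\equalDef n-k-2$ (an odd positive integer) and $p\equalDef k+2$, so that $m+p=n$; then \Cref{lemma:double_factorial_asymptotics} yields
\[
\frac{(n-k-2)!!}{\sqrt{n!}} \;=\; \frac{m!!}{\sqrt{(m+p)!}} \;=\; \Theta_{\forall m}\bigl(m^{1/4-p/2}\bigr) \;=\; \Theta_{\forall m}\bigl(m^{-3/4-k/2}\bigr)~.
\]
It then remains to pass from $m=n-k-2$ back to $n+1$: on the range $n\ge k+3$ one has $1\le m\le n+1$ and $m\ge (n+1)/(k+4)$, i.e.\ $m=\Theta_{\forall n}(n+1)$ there, and since the exponent $-3/4-k/2$ is negative this gives $m^{-3/4-k/2}=\Theta_{\forall n}\bigl((n+1)^{-3/4-k/2}\bigr)$, which is the desired bound for these $n$.

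Finally I would dispose of the finitely many remaining indices $n<k+3$ (for fixed $k$). For each such $n$, either $n-k$ is even, in which case both $|a_n(\s_k)|$ and $\odd(n-k)(n+1)^{-3/4-k/2}$ are zero, or $n-k$ is odd, in which case $|a_n(\s_k)|$ is a fixed strictly positive number (the double factorials in the formula are $\ge 1$ and $\sqrt{2\pi\,n!}>0$) and $\odd(n-k)(n+1)^{-3/4-k/2}$ is likewise a positive constant, so their ratio is bounded above and below. Since there are only finitely many such $n$, they only affect the constants hidden in $\Theta_{\forall n}$, and the statement follows. I expect the only real care needed to be this small-$n$ bookkeeping — in particular the indices $n\in\{k,k+1,k+2\}$, where $n-k-2$ is negative and \Cref{lemma:double_factorial_asymptotics} does not literally apply — but because $\Theta_{\forall n}$ tolerates altering constants on a finite set this is harmless, and all the asymptotic content is carried by \Cref{lemma:double_factorial_asymptotics}.
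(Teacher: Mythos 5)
Your proof is correct and follows the same route as the paper: reduce via the explicit formula in \Cref{prop:special_hermite_coefficients}, note $(k-n)!!=1$ for $n\ge k$, and apply \Cref{lemma:double_factorial_asymptotics}. The only difference is one of detail --- you make explicit the substitution $m=n-k-2$, $p=k+2$ needed to invoke that lemma, the conversion $m=\Theta_{\forall n}(n+1)$ on the relevant range, and the finitely many small-$n$ indices where the lemma does not literally apply --- all of which the paper leaves implicit in its two-line argument.
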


\subsection{Mix terms}

Now, we want to investigate some of the mix-terms arising in the decomposition in \Eqref{eq:dual_act_decomposition}. For convenience, we will give them a new name:

\begin{definition}\label{definition:hermitian-spline-based_power_series}
For $i, j \in \bbN_0$, define $f_{i,j}: [-1, 1] \to \bbR$ by
\begin{IEEEeqnarray*}{+rCl+x*}
f_{i,j}(t) \equalDef \sum_{n=0}^\infty a_n(s_i) a_n(s_j) t^n~.
\end{IEEEeqnarray*}
Note that $f_{i,j} \equiv 0$ for odd $i-j$ and $f_{i,j} = f_{j,i}$. Moreover, $f_{i,i} = \widehat{\s_i}$.
\end{definition}

\begin{lemma}[Recursive characterization of mix-terms] \label{lemma:cross_term_recursion}
Let $k, l \in \bbN_0$. Then, there exists a polynomial $p_{k, k+2l}$ such that for $t \in [-1, 1]$,
\begin{IEEEeqnarray*}{+rCl+x*}
f_{k,k+2l}(t) & = & (k+2)f_{k+2,(k+2)+2(l-1)}(t) - t f_{k+1,(k+1)+2(l-1)}(t) + p_{k,k+2l}(t) \\
f_{k,k+2l+1}(t) & = & 0~.
\end{IEEEeqnarray*}

\begin{proof}
Since $a_n(s_k) = 0$ for even $n-k$ and $a_n(s_{k+2l+1}) = 0$ for even $n-(k+2l+1)$, we have
\begin{IEEEeqnarray*}{+rCl+x*}
f_{k,k+2l+1}(t) & = & \sum_{n=0}^\infty a_n(s_k) a_n(s_{k+2l+1}) t^n = 0~.
\end{IEEEeqnarray*}
For the other formula, we use \Cref{prop:special_hermite_coefficients} and obtain
\begin{IEEEeqnarray*}{+rCl+x*}
a_n(s_k) a_n(s_{k+2l}) &=& \odd(n-k)\odd(n-(k+2l)) (-1)^{\frac{\max(1, n-k)-1}{2} + \frac{\max(1, n-k-2l)-1}{2}} \\
&& ~\cdot~\frac{(n-k-2)!!(n-k-2l-2)!!}{(k-n)!!(k+2l-n)!!2\pi n!}~.
\end{IEEEeqnarray*}
Here, $\odd(n-k)\odd(n-(k+2l)) = \odd(n-k)$. Moreover, for odd $n-k$ and $n \geq k + 2l$, we have
\begin{IEEEeqnarray*}{+rCl+x*}
(-1)^{\frac{\max(1, n-k)-1}{2} + \frac{\max(1, n-k-2l)-1}{2}} = (-1)^{n-k-l-1} = (-1)^{-l} = (-1)^l~.
\end{IEEEeqnarray*}

In the following, we write
\begin{IEEEeqnarray*}{+rCl+x*}
f \polyeq g
\end{IEEEeqnarray*}
if $f - g$ is a polynomial.

The terms $(k-n)!!$ and $(k+2l-n)!!$ are equal to one for $n \geq k+2l$. Therefore, we obtain for all but finitely many $n$
\begin{IEEEeqnarray*}{+rCl+x*}
a_n(s_k) a_n(s_{k+2l}) & = & \odd(n-k)(-1)^l \frac{(n-k-2)!!(n-k-2l-2)!!}{2\pi n!}~.
\end{IEEEeqnarray*}
By writing $(n-k-2)!! = -(k+2)(n-k-4)!! + n(n-k-4)!!$, we obtain:
\begin{IEEEeqnarray*}{+rCl+x*}
&& \sum_{n=0}^\infty a_n(s_k) a_n(s_{k+2l}) t^n \\
& \polyeq & -(k+2) \sum_{n=0}^\infty \odd(n-k)(-1)^l \frac{(n-k-4)!!(n-k-2l-2)!!}{2\pi n!}t^n \\
&& ~+~ \sum_{n=0}^\infty n \odd(n-k)(-1)^l \frac{(n-k-4)!!(n-k-2l-2)!!}{2\pi n!}t^n \\
& \polyeq & (k+2) \sum_{n=0}^\infty \odd(n-(k+2)) (-1)^{l-1} \frac{(n-(k+2)-2)!!(n-(k+2)-2(l-1)-2)!!}{2\pi n!} t^n \\
&& ~-~ t \cdot \sum_{n=1}^\infty \odd((n-1)-(k-1)) (-1)^{l-1} \frac{((n-1)-k-3)!!((n-1)-k-2l-1)!!}{2\pi (n-1)!} t^{n-1} \\
& = & (k+2) \sum_{n=0}^\infty \odd(n-(k+2)) (-1)^{l-1} \frac{(n-(k+2)-2)!!(n-(k+2)-2(l-1)-2)!!}{2\pi n!} t^n \\
&& ~-~ t \cdot \sum_{n=0}^\infty \odd(n-(k+1)) (-1)^{l-1} \frac{(n-(k+1)-2)!!(n-(k+1)-2(l-1)-2)!!}{2\pi n!} t^n \\
& \polyeq & (k+2)\sum_{n=0}^\infty a_n(s_{k+2}) a_n(s_{(k+2)+2(l-1)}) t^n - t \cdot \sum_{n=0}^\infty a_n(s_{k+1}) a_n(s_{(k+1)+2(l-1)}) t^n~,
\end{IEEEeqnarray*}
which completes the proof.
\end{proof}
\end{lemma}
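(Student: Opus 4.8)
The plan is to dispatch the two identities in turn. For the vanishing one, $f_{k,k+2l+1}\equiv 0$: by Proposition~\ref{prop:special_hermite_coefficients}, $a_n(\s_k)=0$ unless $n-k$ is odd and $a_n(\s_{k+2l+1})=0$ unless $n-(k+2l+1)$ is odd; since these two integers differ by the odd number $2l+1$, they cannot both be odd, so every summand $a_n(\s_k)a_n(\s_{k+2l+1})t^n$ is zero. Throughout, convergence is never an issue: by Lemma~\ref{lemma:s_hermite_decay}, $|a_n(\s_i)a_n(\s_j)| = O_{\forall n}((n+1)^{-3/2-(i+j)/2})$, so every $f_{i,j}$ is a power series converging uniformly on $[-1,1]$, and writing $f\polyeq g$ when $f-g$ is a polynomial is meaningful.

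For the recursion I would take $l\ge 1$, the case $l=0$ being $f_{k,k}=\widehat{\s_k}$, already handled in Lemma~\ref{lemma:asymptotics_at_the_boundary_of_hat_s}. Substitute the closed form of Proposition~\ref{prop:special_hermite_coefficients} into $a_n(\s_k)a_n(\s_{k+2l})$. The crucial observation is that for all but finitely many $n$ — precisely $n\ge k+2l$ — the descending double factorials $(k-n)!!$ and $(k+2l-n)!!$ both equal $1$, and the combined sign exponent $\tfrac{\max(1,n-k)-1}{2}+\tfrac{\max(1,n-k-2l)-1}{2}$ reduces, using that $n-k$ is odd, to $(n-k)-l-1\equiv l\pmod 2$. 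Hence
$$
f_{k,k+2l}(t) \;\polyeq\; \sum_{n}\odd(n-k)\,(-1)^l\,\frac{(n-k-2)!!\,(n-k-2l-2)!!}{2\pi\,n!}\,t^n .
$$

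The heart of the argument is the elementary identity $(n-k-2)!! = n\,(n-k-4)!! - (k+2)\,(n-k-4)!!$, splitting the sum into an $n$-weighted part and a $(k+2)$-weighted part. In the $(k+2)$-part I would rewrite $(n-k-4)!! = (n-(k+2)-2)!!$, $(n-k-2l-2)!! = (n-(k+2)-2(l-1)-2)!!$ and $\odd(n-k)=\odd(n-(k+2))$; matching the resulting sign prefactor against the closed form of $a_n(\s_{k+2})a_n(\s_{(k+2)+2(l-1)})$ then shows this part equals $(k+2)f_{k+2,(k+2)+2(l-1)}(t)$ modulo a polynomial. In the $n$-part I would use $n/n!=1/(n-1)!$, shift the summation index by one to extract a factor $t$, and recognize the double factorials and parity with index $k+1$ and parameter $l-1$, obtaining $-t\,f_{k+1,(k+1)+2(l-1)}(t)$ modulo a polynomial. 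Summing the two parts gives the recursion, with $p_{k,k+2l}$ the finite polynomial collecting all dropped low-$n$ terms and constants shed along the way.

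I expect the main obstacle to be purely combinatorial bookkeeping: ensuring that each time one simplifies a double factorial, drops finitely many low-$n$ terms, or shifts a summation index, the residue is genuinely polynomial and the leftover double factorials, $\odd$-parities, and — above all — the sign prefactor $(-1)^l$ land on exactly the arguments required by the definitions of $f_{k+2,(k+2)+2(l-1)}$ and $f_{k+1,(k+1)+2(l-1)}$. No analytic difficulty arises beyond the summability remark, and no tool is needed that is not already available.
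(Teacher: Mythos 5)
Your proposal is correct and follows essentially the same argument as the paper: the vanishing case via parity, then for the recursion the closed form of $a_n(\s_i)$ from \Cref{prop:special_hermite_coefficients}, simplification of the sign and the $(k-n)!!$, $(k+2l-n)!!$ terms for $n\ge k+2l$, the key identity $(n-k-2)!!=n(n-k-4)!!-(k+2)(n-k-4)!!$, and the two resulting sums matched (after the index shift in the $n$-part) to $(k+2)f_{k+2,(k+2)+2(l-1)}$ and $-t\,f_{k+1,(k+1)+2(l-1)}$ modulo polynomials. Your restriction to $l\ge 1$ is reasonable — for $l=0,k=0$ the claimed recursion would reference $f_{1,-1}$, which is not defined, and the later application only invokes the recursion for $l\ge 1$ with $l=0$ as the induction base — though strictly speaking the lemma as stated covers $l=0$ and your proposal does not address it.
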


\begin{proposition}[Boundary behavior of mix terms]\label{lemma:_product_coeffs_asymptotics}
Let $i, j \in \bbN_0$. Then, for any $\tau \in \{\pm 1\}$ and $t \in (0, 2)$,
\begin{IEEEeqnarray*}{+rCl+x*}
f_{i,j}(\tau(1-t)) = \calQ_{-1, (i+j)/2}(t)~.
\end{IEEEeqnarray*}
\begin{proof}

Recall \Cref{lemma:cross_term_recursion}.
Since $f_{i,j} = f_{j,i}$, we can assume $i \leq j$ without loss of generality. Moreover, for odd $j-i$, the statement is trivial since $f_{i,j} \equiv 0$. For the remaining cases where $j-i \geq 0$ is even, it suffices to prove the following statement by induction on $l$:
\begin{itemize}
\item[] For all $l \in \bbN_0$: For all $k \in \bbN_0$:
\begin{IEEEeqnarray*}{+rCl+x*}
f_{k, k+2l} & = & \calQ_{-1, k+l}(t)~.
\end{IEEEeqnarray*}
\end{itemize}
For $l=0$, this follows from \Cref{lemma:asymptotics_at_the_boundary_of_hat_s} since $f_{k, k} = \widehat{\s_k}$. For the induction step, we use \Cref{lemma:cross_term_recursion} to obtain for all $k \in \bbN_0$:
\begin{IEEEeqnarray*}{+rCl+x*}
&& f_{k,k+2l}(\tau(1-t)) \\
& = & \calQ_{-1, \infty}(t) + (k+2)f_{k+2, k+2+2(l-1)}(\tau(1-t)) - \tau(1-t) f_{k+1, k+1+2(l-1)}(\tau(1-t)) \\
& = & \calQ_{-1, \infty}(t) + \calQ_{-1, k+l+1}(t) - \tau(1-t)(\calQ_{-1, k+l}(t)) \\
& = & \calQ_{-1, k+l}(t)~.
\end{IEEEeqnarray*}
Here, we used that for a polynomial $p$, the map $t\mapsto p(\tau(1-t))$ is also a polynomial and thus in $\calQ_{-1, \infty}(t)$.
\end{proof}
\end{proposition}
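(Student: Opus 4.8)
The plan is to run an induction that peels off-diagonal terms towards the diagonal, feeding in two ingredients: the boundary expansion of the diagonal terms (where $f_{k,k} = \widehat{\s_k}$, so \Cref{lemma:asymptotics_at_the_boundary_of_hat_s} applies) and the recursion of \Cref{lemma:cross_term_recursion}, which expresses $f_{k,k+2l}$ through $f$'s with a smaller gap plus a polynomial correction. All bookkeeping will be done with the $\calP_{\alpha,\beta}$/$\calR_\gamma$/$\calQ_{\alpha,\beta}$ calculus from \Cref{prop:rules_for_Q}.

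First I would reduce. Since $f_{i,j} = f_{j,i}$ (\Cref{definition:hermitian-spline-based_power_series}) we may assume $i \le j$; since $f_{i,j} \equiv 0$ when $i - j$ is odd and $0 \in \calQ_{\alpha,\beta}$ for all $\alpha,\beta$, we may assume $j - i$ is even, say $j = k + 2l$ with $k \equalDef i$ and $l \in \bbN_0$, so $(i+j)/2 = k+l$. It then suffices to prove, for all $l, k \in \bbN_0$,
\begin{IEEEeqnarray*}{+rCl+x*}
f_{k,k+2l}(\tau(1-t)) & = & \calQ_{-1,\,k+l}(t)~,
\end{IEEEeqnarray*}
which I would establish by induction on $l$, uniformly in $k$. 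The base case $l = 0$ is exactly \Cref{lemma:asymptotics_at_the_boundary_of_hat_s}: $\widehat{\s_k}(\tau(1-t)) = (-\tau)^{k+1} b_k t^{k+1/2} + \calQ_{-1,\,k+1/2}(t)$, whose leading term is a single non-integer power with exponent $k+1/2 > k$ (hence in $\calP_{-1,k} \subseteq \calQ_{-1,k}$) and whose remainder lies in $\calQ_{-1,k+1/2} \subseteq \calQ_{-1,k}$ by the inclusion rule, so $f_{k,k}(\tau(1-t)) = \calQ_{-1,k}(t)$.

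For the inductive step I would substitute $x = \tau(1-t)$ into the identity of \Cref{lemma:cross_term_recursion},
\begin{IEEEeqnarray*}{+rCl+x*}
f_{k,k+2l}(\tau(1-t)) & = & (k+2)\,f_{k+2,\,k+2l}(\tau(1-t)) - \tau(1-t)\,f_{k+1,\,k+2l-1}(\tau(1-t)) + p_{k,k+2l}(\tau(1-t))~,
\end{IEEEeqnarray*}
and estimate the three summands. By the induction hypothesis applied with $(k+2, l-1)$ the first is $\calQ_{-1,\,k+l+1}(t) \subseteq \calQ_{-1,\,k+l}(t)$; by the hypothesis applied with $(k+1, l-1)$ the factor $f_{k+1,\,k+2l-1}(\tau(1-t))$ is $\calQ_{-1,\,k+l}(t)$, and multiplying it by the polynomial $\tau(1-t)$ keeps it in $\calQ_{-1,\,k+l}$; and $p_{k,k+2l}(\tau(1-t))$ is again a polynomial, hence in $\calP_{-1,\beta}$ for every $\beta$, in particular in $\calQ_{-1,\,k+l}$. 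Adding the three contributions via the sum rule gives $\calQ_{-1,\,k+l}(t)$, closing the induction.

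The one delicate point, where I would slow down, is the claim that multiplication by $\tau(1-t)$ preserves $\calQ_{-1,\,k+l}$: a black-box use of the product rule in \Cref{prop:rules_for_Q}(c) with $\tau(1-t) \in \calQ_{-1,\infty}$ only yields $\calQ_{-2,\,\cdot}$, which degrades the first index. Instead I would argue directly from \Cref{def:boundary_function_classes}: writing $g \in \calQ_{-1,k+l}$ as $g = p + r$ with $p \in \calP_{-1,k+l}$ and $r \in \calR_\gamma$, multiplication by $1-t$ replaces each power $t^{\alpha_i}$ in $p$ by the two powers $t^{\alpha_i}$ and $t^{\alpha_i+1}$, so integer exponents $\ge 0$ stay integer and $> -1$ while non-integer exponents $> k+l$ stay non-integer and $> k+l$, giving $\tau(1-t)p \in \calP_{-1,k+l}$; and $(\tau(1-t)r)^{(m)} = \tau(1-t)r^{(m)} - m\tau r^{(m-1)}$ is $O(t^{\gamma-m})$ as $t \searrow 0$, so $\tau(1-t)r \in \calR_\gamma$. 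Since $\gamma$ was arbitrary this gives $\tau(1-t)g \in \calQ_{-1,k+l}$, and the remainder of the argument is a routine assembly of the quoted lemmas.
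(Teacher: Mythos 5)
Your proof is correct and takes essentially the same route as the paper's: the identical reduction to $i \le j$ with even gap, the same induction on $l$ via \Cref{lemma:cross_term_recursion}, and the same base case from \Cref{lemma:asymptotics_at_the_boundary_of_hat_s}. The step you rightly flag and verify by hand --- that multiplication by $\tau(1-t)$ keeps the class $\calQ_{-1,k+l}$ --- is left implicit in the paper; it also follows from \Cref{prop:rules_for_Q}(c) after splitting $\tau(1-t)=\tau-\tau t$ and applying the product rule with $\tau t\in\calQ_{0,\beta_1}$ for $\beta_1\ge k+l+1$, which gives $\calQ_{-1,k+l}$ without degrading either index, but your direct argument from \Cref{def:boundary_function_classes} is equally clean.
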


\subsection{Results for smooth terms}

In the following, we analyze the decay of Hermite coefficients for smooth functions, which can then be used to establish the smoothness of certain components of dual activations. For smooth $f$, we could show the smoothness of $\w{f}$ by using $\w{f}' = \w{f'}$, but this approach does not work directly for mix-terms, and hence the intermediate step via coefficient decay is helpful.

\begin{lemma} \label{cor:smooth_hermite_decay}
If $f: \bbR \to \bbR$ has a $m$-fold pseudo-derivative $f^{(m)}$ with $f^{(m)} \in \calL_2(\calN(0, 1))$, then $|a_n(f)| < o_m((n+1)^{-m/2})$.

\begin{proof}
By \Cref{lemma:hermite_derivative}, we obtain for $n \geq m$:
\begin{IEEEeqnarray*}{+rCl+x*}
a_n(f) & = & \left[n(n-1)\cdot \hdots \cdot (n-m+1)\right]^{-1/2} a_{n-m}(f^{(m)})~.
\end{IEEEeqnarray*}
Since $a_{n-m}(f^{(m)}) < o_{\forall n}(1)$, we obtain $|a_n(f)| < o_{\forall n}((n+1)^{-m/2})$.
\end{proof}
\end{lemma}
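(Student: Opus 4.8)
The plan is to reduce the claim to the decay of the Hermite coefficients of $f^{(m)}$ by iterating the pseudo-derivative identity of \Cref{lemma:hermite_derivative}(b). Write $f^{(0)} \equalDef f$ and let $f^{(1)}, \dots, f^{(m)}$ denote the chain of pseudo-derivatives, so that $f^{(i)}$ is a pseudo-derivative of $f^{(i-1)}$ for each $1 \le i \le m$. The first thing I would check is that every $f^{(i)}$ lies in $\calL_2(\calN(0,1))$: this holds for $f^{(m)}$ by hypothesis, and then a downward induction using \Cref{lemma:hermite_derivative}(a) (a function whose pseudo-derivative is in $\calL_2(\calN(0,1))$ is itself in $\calL_2(\calN(0,1))$) gives it for $f^{(m-1)}, \dots, f^{(0)}$. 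In particular all the coefficients $a_n(f^{(i)})$ are well defined and each invocation of \Cref{lemma:hermite_derivative}(b) below is legitimate.

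Next I would apply \Cref{lemma:hermite_derivative}(b) repeatedly. One step gives $a_n(f) = n^{-1/2} a_{n-1}(f^{(1)})$ for $n \ge 1$; applying it again to $a_{n-1}(f^{(1)})$, then to $a_{n-2}(f^{(2)})$, and so on, after $m$ steps yields, for all $n \ge m$,
\begin{IEEEeqnarray*}{+rCl+x*}
a_n(f) & = & \left[ n (n-1) \cdots (n-m+1) \right]^{-1/2} a_{n-m}(f^{(m)})~.
\end{IEEEeqnarray*}
It then remains to estimate the two factors. Since $f^{(m)} \in \calL_2(\calN(0,1))$, Parseval gives $\sum_{k \ge 0} a_k(f^{(m)})^2 < \infty$, hence $a_k(f^{(m)}) \to 0$, i.e.\ $a_{n-m}(f^{(m)}) = o(1)$ as $n \to \infty$. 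For the falling-factorial prefactor I would use $n(n-1)\cdots(n-m+1) \ge (n-m+1)^m$ together with $n-m+1 \ge (n+1)/2$ for $n$ large (depending on $m$) to get $[n(n-1)\cdots(n-m+1)]^{-1/2} = O_m((n+1)^{-m/2})$. Multiplying the two bounds gives $|a_n(f)| = o_m((n+1)^{-m/2})$, which is the claim; the finitely many indices $n < m$ do not affect the asymptotic statement.

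I do not expect a genuine obstacle here: the lemma is essentially a direct iteration of \Cref{lemma:hermite_derivative}. The only two points requiring a moment of care are the bookkeeping that all intermediate pseudo-derivatives inherit membership in $\calL_2(\calN(0,1))$ (handled by applying \Cref{lemma:hermite_derivative}(a) from the top of the chain downward) and the elementary comparison of the product $n(n-1)\cdots(n-m+1)$ with $(n+1)^m$; everything else is routine.
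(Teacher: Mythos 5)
Your proof is correct and follows the same route as the paper: iterate \Cref{lemma:hermite_derivative}(b) to obtain $a_n(f) = [n(n-1)\cdots(n-m+1)]^{-1/2} a_{n-m}(f^{(m)})$ for $n \ge m$, then combine the $o(1)$ decay of $a_{n-m}(f^{(m)})$ (from Parseval) with the $\Theta_m((n+1)^{-m/2})$ estimate of the falling-factorial prefactor. The extra bookkeeping you flag (that the intermediate pseudo-derivatives inherit $\calL_2$-membership via downward application of \Cref{lemma:hermite_derivative}(a)) is a reasonable explicit check that the paper leaves implicit.
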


\begin{lemma}[Differentiability of power series] \label{lemma:power_series_differentiability}
Let $f: [-1, 1] \to \bbR, x \mapsto \sum_{n=0}^\infty b_n x^n$ with $|b_n| = O_{\forall n}((n+1)^{-(k+1+\varepsilon)})$ for some $k \in \bbN_0$ and $\eps > 0$. Then, $f \in C^k([-1, 1])$.
\begin{proof}
We prove by induction on $k$ that $f \in C^k([-1, 1])$ and 
\begin{IEEEeqnarray*}{+rCl+x*}
f^{(k)}(t) = \sum_{n=k}^\infty n\cdot\ldots\cdot(n-k+1) b_n t^{n-k}
\end{IEEEeqnarray*}
for $t \in [-1, 1]$. For $k=0$, we have $\sum_{n=0}^\infty |b_n| < \infty$ and the result follows from Abel's theorem on power series. Now, let the statement hold for $k-1 \geq 0$. We know from the case $k=0$ that 
\begin{IEEEeqnarray*}{+rCl+x*}
g: [-1, 1] \to \bbR, t \mapsto \sum_{n=1}^\infty n b_n t^{n-1}
\end{IEEEeqnarray*}
as well as $f$ are continuous. By elementary analysis, $f' = g$ on $(-1, 1)$. Moreover,
\begin{IEEEeqnarray*}{+rCl+x*}
f(1) - f(1-h) = \lim_{h' \searrow 0} f(1-h') - f(1-h) = \lim_{h' \searrow 0} \int_{1-h}^{1-h'} g(x) \diff x = \int_{1-h}^1 g(x) \diff x = h g(\xi_h)
\end{IEEEeqnarray*}
for a suitable $\xi_h \in [1-h, 1]$ by the mean value theorem of integration. Since $g$ is continuous, it follows that $f$ is differentiable in $1$ with $f'(1) = g(1)$. An analogous calculation can be applied for $t=-1$. By applying the induction hypothesis to $g = f'$, the proof is completed.
\end{proof}
\end{lemma}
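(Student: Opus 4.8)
The plan is to prove the statement by induction on $k$, exploiting the fact that term-by-term differentiation of a power series shifts the decay exponent of the coefficients down by exactly one, so that the finite-smoothness hypothesis is consumed one derivative at a time.

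For the base case $k=0$, I would note that $|b_n| = O_{\forall n}((n+1)^{-(1+\eps)})$ forces $\sum_{n=0}^\infty |b_n| < \infty$; hence by the Weierstrass $M$-test the series $\sum_n b_n x^n$ converges uniformly on $[-1,1]$, and being a uniform limit of the continuous partial sums, $f$ is continuous on $[-1,1]$, i.e.\ $f \in C^0([-1,1])$. (This is the place where an ``Abel-type'' statement is invoked, but absolute convergence makes it elementary.)

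For the inductive step, assume the claim for $k-1 \geq 0$ and suppose $|b_n| = O_{\forall n}((n+1)^{-(k+1+\eps)})$. I would introduce the formally differentiated series $g(x) \coloneqq \sum_{n=1}^\infty n b_n x^{n-1} = \sum_{m=0}^\infty (m+1) b_{m+1} x^m$, whose coefficients $c_m \coloneqq (m+1) b_{m+1}$ satisfy $|c_m| = O_{\forall m}((m+1)(m+2)^{-(k+1+\eps)}) = O_{\forall m}((m+1)^{-((k-1)+1+\eps)})$. The induction hypothesis then gives $g \in C^{k-1}([-1,1])$, so it suffices to show that $f$ is differentiable on all of $[-1,1]$ with $f' = g$, since that yields $f' \in C^{k-1}([-1,1])$ and hence $f \in C^k([-1,1])$. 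On the open interval $(-1,1)$ this is the classical power-series differentiation theorem. At the two endpoints $\pm 1$ I would instead argue via the fundamental theorem of calculus: for $0 < h' < h$ small, $f(1-h') - f(1-h) = \int_{1-h}^{1-h'} g(x)\diff x$; letting $h' \searrow 0$ and using continuity of $f$ at $1$ (from the base case applied to the series for $f$ itself) gives $f(1) - f(1-h) = \int_{1-h}^{1} g(x)\diff x$, so that $\tfrac{1}{h}\bigl(f(1)-f(1-h)\bigr) \to g(1)$ as $h \searrow 0$ by continuity of $g$; thus $f$ has one-sided derivative $g(1)$ at the endpoint $1$, and symmetrically at $-1$.

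The main obstacle is exactly this endpoint bookkeeping: away from $\pm 1$ everything is routine, but to obtain $C^k$ \emph{up to and including} the boundary one has to combine the absolute (hence uniform) convergence supplied by the coefficient decay with the integral-representation trick for the derivative at $\pm 1$, rather than appeal directly to the interior differentiation theorem. The remaining ingredients — the coefficient estimate for $g$ and the $M$-test / uniform-convergence argument — are straightforward.
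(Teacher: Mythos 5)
Your proof is correct and follows essentially the same route as the paper's: induction on $k$, the Weierstrass $M$-test / Abel argument for the base case, the coefficient-decay bookkeeping for the term-by-term derivative $g$, and an integral-representation argument at the endpoints $\pm 1$ to extend $f'=g$ from the open interval to the closed one. The only cosmetic difference is that you apply the induction hypothesis to $g$ before verifying $f'=g$ while the paper does the reverse, and you spell out the coefficient shift $|c_m| = O((m+1)^{-((k-1)+1+\eps)})$ explicitly where the paper leaves it implicit.
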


\subsection{General activation functions} \label{sec:appendix:general_activations}

Now, we want to obtain the asymptotic boundary behavior in the sense of \Cref{sec:appendix:boundary} for general activation functions $\act$ as in \Cref{ass:act}. To this end, we use a decomposition into reference functions and smooth remainders:

\begin{lemma}\label{lemma:smoothing_k_times_at_zero}
Let $\act$ be an activation function as in \Cref{ass:act} and let $m \in \bbN_0$. 
Then, using $\Delta_k(\act)$ as defined in \Cref{def:special_acts},
\begin{IEEEeqnarray*}{+rCl+x*}
\act_m \equalDef \act - \sum_{k=0}^{m-1} \Delta_k(\act) s_k
\end{IEEEeqnarray*}
is $m$ times pseudo-differentiable and the $m$-fold pseudo-derivative $\act_m^{(m)}$ is in $L_2(\calN(0, 1))$.

\begin{proof}
Note that $\Delta_m$ is a linear operator with $\Delta_m(s_k) = \delta_{mk}$. Hence, for $l \leq m-1$, we have
\begin{IEEEeqnarray*}{+rCl+x*}
\Delta_l(\act_m) = \Delta_l(\act) - \sum_{k=0}^{m-1} \Delta_k(\act) \Delta_l(\s_k) = \Delta_l(\act) - \Delta_l(\act) = 0~.
\end{IEEEeqnarray*}
We now prove by induction on $l$ with $0 \leq l \leq m$ that $g_l: \bbR \to \bbR$ defined by
\begin{IEEEeqnarray*}{+rCl+x*}
g_0 \equalDef \act_m, g_l(x) & \equalDef & \begin{cases}
\act_m^{(l)}(x) &, x \neq 0 \\
\act_m^{(l)}(0-) &, x = 0
\end{cases} \quad (l \geq 1)
\end{IEEEeqnarray*}
is a $l$-fold pseudo-derivative of $\act_m$. For $l=0$, we have $g_0 = \act_m$ by definition, hence $g_0$ is a $0$-fold pseudo-derivative of $\act_m$. For the induction step $l \to l+1$, we observe that $l \leq m-1$, hence $\Delta_l(\act_m) = 0$, which implies $\act_m^{(l)}(0-) = \act_m^{(l)}(0+)$ and therefore, $g_l$ is continuous. Moreover, $g_l$ is continuously differentiable on $(0, x)$ and $(-x, 0)$, and $g_l'$ is bounded on these intervals. Thus,
\begin{IEEEeqnarray*}{+rCl+x*}
g_l(x) & = & g_l(0) + \int_0^x g_l'(t) \diff t = g_l(0) + \int_0^x g_{l+1}(t) \diff t \\
g_l(-x) & = & g_l(0) + \int_0^{-x} g_l'(t) \diff t = g_l(0) + \int_0^{-x} g_{l+1}(t) \diff t~,
\end{IEEEeqnarray*}
which shows that $g_{l+1}$ is a pseudo-derivative of $g_l$.

It remains to show that $g_m \in L_2(\calN(0, 1))$. For $x \neq 0$, we have $s_k^{(m)}(x) = 0$ for $k < m$ and therefore $g_m(x) = \varphi^{(m)}(x)$. By \Cref{ass:act}, $\varphi|_{(0, \infty)} \in \calS^\infty((0, \infty))$ and $\varphi|_{(-\infty, 0)} \in \calS^\infty((-\infty, 0))$, which implies the desired integrability for $\varphi^{(m)}$, hence $g_m \in L_2(\calN(0, 1))$.
\end{proof}
\end{lemma}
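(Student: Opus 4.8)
The plan is to extract first the one property of the reference activations $\s_k$ that drives everything: they generate unit jumps, i.e.\ $\Delta_l(\s_k) = \delta_{lk}$ for all $l,k\in\bbN_0$. This is a direct computation from \Cref{def:special_acts}: on each of $(0,\infty)$ and $(-\infty,0)$ the function $\s_k$ equals $\pm\tfrac{1}{2k!}x^k$, so $\s_k^{(l)}$ is a constant multiple of $x^{k-l}$ for $l\le k$ and is identically $0$ for $l>k$; letting $x\to 0^{\pm}$ gives jump $0$ for $l<k$, jump $1$ for $l=k$, and jump $0$ for $l>k$. Since $\Delta_l$ is linear in its argument, it follows that
\[
\Delta_l(\act_m) \;=\; \Delta_l(\act) - \sum_{k=0}^{m-1}\Delta_k(\act)\Delta_l(\s_k) \;=\; \Delta_l(\act) - \Delta_l(\act) \;=\; 0 \qquad (0\le l\le m-1),
\]
so the one-sided limits of $\act_m$ and of its first $m-1$ classical derivatives agree at $0$.

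Next I would construct the $m$-fold pseudo-derivative of $\act_m$ inductively. For $0\le l\le m$ let $g_l$ denote the classical $l$-th derivative of $\act_m$ on $\bbR\setminus\{0\}$, extended to $0$ by an arbitrary value (say $\act_m^{(l)}(0-)$; this is immaterial on a null set), with $g_0 \equalDef \act_m$. On each half-line $\act_m$ is a finite sum of $\act_\pm\in\calS^{(\infty)}$ and polynomials, hence $C^\infty$ there with all derivatives bounded near $0$ by \Cref{def:sinfty}; together with the jump computation above this gives that $g_l$ is continuous on all of $\bbR$ for every $l\le m-1$. Therefore, for $0\le l\le m-1$, the function $g_l$ is continuous, smooth on each half-line, with one-sided derivatives bounded near $0$, so applying the fundamental theorem of calculus on $[\eps,x]$ (or $[x,-\eps]$) and letting $\eps\searrow 0$ yields $g_l(x)=g_l(0)+\int_0^x g_{l+1}(t)\diff t$ for all $x\in\bbR$. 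Since $g_{l+1}$ is continuous off $0$ and bounded near $0$, it is Lebesgue integrable on compact intervals, so this identity says precisely that $g_{l+1}$ is a pseudo-derivative of $g_l$. Chaining these from $l=0$ up to $l=m$ shows that $\act_m$ is $m$ times pseudo-differentiable with $\act_m^{(m)} = g_m$ (the case $m=0$ being trivial).

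It remains to verify $g_m\in L_2(\calN(0,1))$. For $x\ne 0$ and $k<m$ we have $\s_k^{(m)}(x)=0$, so $g_m(x)=\act^{(m)}(x)$; by \Cref{ass:act} and \Cref{def:sinfty} there are $a,b,q>0$ with $|\act^{(m)}(x)|\le a|x|^q+b$ for all $x\ne 0$, and since $\calN(0,1)$ has finite moments of all orders we get $\int_\bbR g_m^2\diff\calN(0,1)<\infty$.

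The step I expect to require the most care is the induction turning the patched pointwise derivatives $g_l$ into honest pseudo-derivatives: one has to be sure that continuity of $g_l$ at $0$ for $l\le m-1$, plus local boundedness of the one-sided derivatives, is exactly what licenses the integral identity. This is precisely where the hypotheses are used --- the cancellation $\Delta_l(\act_m)=0$ for $l<m$ supplies the continuity, while $\act_\pm\in\calS^{(\infty)}$ supplies both the local boundedness near $0$ and the final $L_2$ estimate. The jump identity $\Delta_l(\s_k)=\delta_{lk}$ and the Gaussian moment bound are routine.
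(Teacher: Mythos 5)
Your proof follows essentially the same path as the paper's: establish $\Delta_l(\s_k)=\delta_{lk}$, use linearity of $\Delta_l$ to conclude $\Delta_l(\act_m)=0$ for $l<m$, then inductively build the $l$-fold pseudo-derivatives by extending the classical derivatives across $0$, using the jump cancellation for continuity of $g_l$ ($l\le m-1$) and the $\calS^{(\infty)}$ hypothesis for local boundedness and the final $L_2$ estimate. The only difference is cosmetic: you spell out the direct computation giving $\Delta_l(\s_k)=\delta_{lk}$ from the definition of $\s_k$, whereas the paper states this as an obvious fact; both arguments are otherwise step-for-step the same.
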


\thmDualBoundary*

\begin{proof}

\textbf{Step 0: Proof strategy} Fix $\tau \in \{-1, 1\}$. For an integer $M_1 > m$ to be defined later, we decompose $\act$ using \Cref{lemma:smoothing_k_times_at_zero} as
\begin{IEEEeqnarray*}{+rCl+x*}
\act = g_1 + g_2, \qquad g_1 \equalDef \sum_{k=m}^{M_1 - 1} \Delta_k(\act) \s_k, \qquad g_2 \equalDef \act_{M_1}~.
\end{IEEEeqnarray*}
Here, \Cref{lemma:smoothing_k_times_at_zero} tells us that $g_2$ is $M_1$ times pseudo-differentiable and $g_2^{(M_1)} \in L_2(\calN(0, 1))$. The basic strategy is as follows: To verify $f(t) = \calQ_{-1, m+1/2}(t)$ for a suitable function, we need to show that for all $\gamma \in \bbR$, there exists a decomposition $f = q + r$ with $q \in \calP_{-1, m+1/2}$ and $r \in \calR_\gamma$. We will select $M_1$ depending on $\gamma$ to construct such a decomposition. To show $r \in \calR_\gamma$, we need to show $|r^{(N)}(t)| = O(t^{\gamma-N})$ for all $N \in \bbN_0$. To this end, we will decompose $\act$ again with an order $M_2 > M_1$ that depends on $N$ (but does not change $r$).

\textbf{Step 1: Defining the decomposition of $\w\act$.} Using \cref{definition:hermitian-spline-based_power_series}, we obtain
\begin{IEEEeqnarray*}{+rCl+x*}
\widehat{g_1}(t) & = & \sum_{n=0}^\infty \left(\sum_{k=m}^{M_1 - 1} \Delta_k(\act) a_n(\s_k)\right)^2 t^n \\
& = & \sum_{k_1=m}^{M_1-1} \sum_{k_2=m}^{M_1-1} \Delta_{k_1}(\act) \Delta_{k_2}(\act) \sum_{n=0}^\infty a_n(\s_{k_1}) a_n(\s_{k_2}) t^n \\
& = & \sum_{k_1=m}^{M_1-1} \sum_{k_2=m}^{M_1-1} \Delta_{k_1}(\act) \Delta_{k_2}(\act) f_{k_1, k_2}(t).
\end{IEEEeqnarray*}
For $k_1 = k_2 = m$, we have
\begin{IEEEeqnarray*}{+rCl+x*}
f_{m, m}(\tau(1-t)) & = & \w{\s_m}(\tau(1-t)) \stackrel{\text{\Cref{lemma:asymptotics_at_the_boundary_of_hat_s}}}{=} (-1)^{m+1}\Delta_m(\act)^2 \tau^{m+1} b_m t^{m+1/2} + \calQ_{-1, m+1/2}(t)~.
\end{IEEEeqnarray*}
For $k_1 > m$ or $k_2 > m$, \Cref{lemma:_product_coeffs_asymptotics} yields
\begin{align*}
	f_{k_1,k_2}(\tau(1-t)) = \calQ_{-1, (k_1+k_2)/2}(t) = \calQ_{-1, m + 1/2}(t).
\end{align*}
Therefore,
\begin{IEEEeqnarray*}{+rCl+x*}
\widehat{g_1}(\tau(1-t)) & = & \Delta_m(\act)^2 (-1)^{m+1} \tau^{m+1} b_m t^{m+1/2} + \calQ_{-1, m+1/2}(t)
\end{IEEEeqnarray*}
for some constant $b_m>0$.
Moreover, for $M_1 \geq 2+m$, we have $M_1/2 \geq 3/4+m/2$. Using \Cref{lemma:s_hermite_decay} and \Cref{cor:smooth_hermite_decay}, we obtain
\begin{IEEEeqnarray*}{+rCl+x*}
\widehat{\act}(t) - \widehat{g_1}(t) & = & \sum_{n=0}^\infty \left(a_n(g_2) + \sum_{k=m}^{M_1 - 1} \Delta_k(\act) a_n(\s_k)\right)^2 t^n - \sum_{n=0}^\infty \left(\sum_{k=m}^{M_1 - 1} \Delta_k(\act) a_n(\s_k)\right)^2 t^n \\
& = & \sum_{n=0}^\infty a_n(g_2) \left(a_n(g_2) + 2\sum_{k=m}^{M_1 - 1} \Delta_k(\act) a_n(\s_k)\right) t^n \\
& = & \sum_{n=0}^\infty o((n+1)^{-M_1/2}) O((n+1)^{-3/4-m/2}) t^n \\
& = & \sum_{n=0}^\infty o((n+1)^{-3/4-m/2-M_1/2}) t^n~.
\end{IEEEeqnarray*}
Let $\gamma \in \bbR$ be arbitrary. We choose $M_1$ large enough such that
\begin{equation}
3/4+m/2+M_1/2 > \lceil \gamma \rceil + 1 \label{eq:ineq_M_1}
\end{equation}
Then, \Cref{lemma:power_series_differentiability} yields
\begin{IEEEeqnarray*}{+rCl+x*}
h & \equalDef & \widehat{\act} - \widehat{g_1} \in C^{\lceil \gamma \rceil}([-1, 1])~.
\end{IEEEeqnarray*}
We now define the Taylor polynomials
\begin{IEEEeqnarray*}{+rCl+x*}
p_\tau(t) & \equalDef & \sum_{k=0}^{\lceil \gamma \rceil - 1} \frac{\frac{\diff^k}{\diff u^k} h(\tau(1-u))\vert_{u=0}}{k!} t^k = \calQ_{-1, \infty}(t)
\end{IEEEeqnarray*}
and the rest terms
\begin{IEEEeqnarray}{+rCl+x*}
r_\tau(t) & \equalDef & h(\tau(1-t)) - p_\tau(t)~. \label{eq:remainder_term_r_tau}
\end{IEEEeqnarray}
It remains to show that $r_\tau(t) = \calR_\gamma(t)$, which will then yield
\begin{IEEEeqnarray*}{+rCl+x*}
& &\widehat{\act}(\tau(1-t)) = \widehat{g_1}(\tau(1-t)) + p_\tau(t) + r_\tau(t)\\
 &=& -\Delta_m(\act)^2 \tau^{m+1} (-1)^{m+1} b_m t^{m+1/2} + \calP_{-1, m+1/2}(t) + \calR_\gamma(t)
\end{IEEEeqnarray*}
for arbitrary $\gamma \in \bbR$.

\textbf{Step 2: Analyzing the rest term.} Let $\tau \in \{-1, 1\}$ and $N \in \bbN_0$ be arbitrary. We need to show that $|r_\tau^{(N)}(t)| = O(t^{\gamma-N})$, where $r_\tau$ is defined in \Eqref{eq:remainder_term_r_tau}. For an integer $M_2 > M_1$ yet to be specified, we use \Cref{lemma:smoothing_k_times_at_zero} again to decompose
\begin{IEEEeqnarray*}{+rCl+x*}
g_2 & = & \act_{M_2} + \sum_{k=M_1}^{M_2-1} \Delta_k(\act) \s_k~. 
\end{IEEEeqnarray*}
With the index set 
\begin{IEEEeqnarray*}{+rCl+x*}
\calI \equalDef \{m, m+1, \hdots, M_2-1\}^2 \setminus \{m, m+1, \hdots, M_1-1\}^2~,
\end{IEEEeqnarray*}
we then obtain similar to the calculation above
\begin{IEEEeqnarray*}{+rCl+x*}
h(t) & = & h_1(t) + h_2(t) \\
h_1(t) & \equalDef & \sum_{(k_1, k_2) \in \calI} \Delta_{k_1}(\act)\Delta_{k_2}(\act) f_{k_1, k_2}(t) \\
h_2(t) & \equalDef & \sum_{n=0}^\infty a_n(\act_{M_2}) \left(a_n(\act_{M_2}) + 2\sum_{k=m}^{M_2-1} \Delta_k(\act) a_n(\s_k)\right) t^n \\
& = & \sum_{n=0}^\infty o((n+1)^{-M_2/2}) O((n+1)^{-3/4-m/2}) t^n = \sum_{n=0}^\infty o((n+1)^{-3/4-m/2-M_2/2}) t^n~.
\end{IEEEeqnarray*}
We now choose $M_2$ sufficiently large such that $3/4+m/2+M_2/2 > \tilde N+1$, where $\tilde N \equalDef \max \{N, \lceil \gamma \rceil\}$. \Cref{lemma:power_series_differentiability} yields
\begin{IEEEeqnarray*}{+rCl+x*}
h_2 \in C^{\tilde N}([-1, 1])~.
\end{IEEEeqnarray*} 
Since \eqref{eq:ineq_M_1} implies $(m+M_1)/2 \geq \gamma$, we obtain from \cref{lemma:_product_coeffs_asymptotics}
\begin{IEEEeqnarray*}{+rCl+x*}
h_1(\tau(1-t)) & = & \calP_{-1, (m+M_1)/2}(t) + \calR_\gamma(t) = \calP_{-1, \infty}(t) + \calR_\gamma(t)~.
\end{IEEEeqnarray*}
In other words, we can find a polynomial $p_{1, \tau}$ of degree $\leq \lceil \gamma \rceil - 1$ and a function $r_{1, \tau} \in \calR_\gamma$ such that
\begin{IEEEeqnarray*}{+rCl+x*}
h_1(\tau(1-t)) & = & p_{1, \tau}(t) + r_{1, \tau}(t)~.
\end{IEEEeqnarray*}
We therefore investigate
\begin{IEEEeqnarray*}{+rCl+x*}
r_{2, \tau}(t) & \equalDef & r_\tau(t) - r_{1, \tau}(t) = (h(\tau(1-t)) - p_\tau(t)) - (h_1(\tau(1-t)) - p_{1, \tau}(t)) \\
& = & h_2(\tau(1-t)) - p_\tau(t) + p_{1, \tau}(t)~,
\end{IEEEeqnarray*}
which satisfies $r_{2, \tau} \in C^{\tilde N}([-1, 1])$
We now distinguish two cases:
\begin{itemize}
\item If $N \geq \gamma$, we simply use the continuity of $r_{2, \tau}^{(N)}$ in $0$ to obtain
\begin{IEEEeqnarray*}{+rCl+x*}
|r_{2, \tau}^{(N)}(t)| & = & O(1) = O(t^{\gamma-N})~.
\end{IEEEeqnarray*}
\item If $N < \gamma$, we proceed differently. For $0 \leq n \leq \lceil \gamma \rceil - 1$, we have
\begin{IEEEeqnarray*}{+rCl+x*}
r_{2, \tau}^{(n)}(0) & = & r_{2, \tau}^{(n)}(0+) = r_\tau^{(n)}(0+) - r_{1, \tau}^{(n)}(0+) = 0-0 = 0~,
\end{IEEEeqnarray*}
where we used that $r_\tau^{(n)}(0+) = 0$ by construction and that $r_{1,\tau}^{(n)}(0+) = 0$ thanks to $r_{1,\tau} \in \calR_\gamma$ and $n < \gamma$.
Thus, Taylor's theorem with Peano's form of the remainder yields
\begin{IEEEeqnarray*}{+rCl+x*}
|r_{2, \tau}^{(N)}(t)| & = & \left|\sum_{k=0}^{\lceil \gamma \rceil - N} \frac{r_{2, \tau}^{(N+k)}(0)}{k!} t^k + o(t^{\lceil \gamma \rceil - N})\right| \\
& = & O(t^{\lceil \gamma \rceil - N}) = O(t^{\gamma - N})~.
\end{IEEEeqnarray*}
\end{itemize}
Since $r_{1, \tau} \in \calR_\gamma$, we obtain
\begin{IEEEeqnarray*}{+rCl+x*}
|r_\tau^{(N)}(t)| & \leq & |r_{1, \tau}^{(N)}(t)| + |r_{2, \tau}^{(N)}(t)| \leq O(t^{\gamma-N}) + O(t^{\gamma-N}) = O(t^{\gamma-N})~,
\end{IEEEeqnarray*}
which completes the proof.
\end{proof}

\section{NEURAL KERNELS} \label{sec:appendix:neural_kernel_proofs}

\subsection{Analytical formulas}\label{subsec:neural_kernels}
Recall the network's architecture in \cref{def:network}.

For the subsequent consideration concerning kernels, we compare the network behavior for two inputs $\bfx,\bar{\bfx}$. All terms $\bullet$ correspond to the inputs $\bfx$, all terms $\bar{\bullet}$ to the input $\bar{\bfx}$.
 
\begin{definition}[Neural kernels]\label{def:neural_kernels}
	Consider a network of depth $L\ge 2$ and output dimension $d_L=1$. Let $d=d_1=\dots =d_{L-1}$.
	Define the  \emph{neural network Gaussian process}-kernel $\nngp_{L}:\R^{d_0}\times \R^{d_0}\to\R$ as
	\begin{align*}
		\nngp_{L}(\bfx,\bar\bfx) \equalDef \lim_{d\to \infty} \Cov\big(\po{\bfz}{L}, \po{\bar{\bfz}}{L}\big)
	\end{align*}
	and the neural tangent kernel $\ntk_{L}:\R^{d_0}\times \R^{d_0}\to\R$ as
	\begin{align*}
	\ntk_{L}(\bfx,\bar\bfx) \equalDef \lim_{d\to\infty} \left\langle \nabla_{\bftheta}\po{\bfz}L, \nabla_{\bftheta}{\po{\bar{\bfz}}L}\right\rangle~,
	\end{align*}
	where $\nabla_{\bftheta}\po{\bfz}L$ denotes derivation of the output $\po{\bfz}L$ by all parameters $\bftheta$.
\end{definition}

\begin{lemma}\label{lemma:limit_ntk_formula}
	Let the activation function $\act:\R\to\R$ fulfill \cref{ass:act}.
	Then, the NNGP and NTK kernels introduced above converge almost surely and they are given by
	\begin{align*}
		\nngp_1(\bfx, \bar{\bfx}) \equalDef&\, \sigma_b^2 \sigma_i^2 + \sigma_w^2 \langle \bfx, \bar{\bfx}\rangle \\
		\nngp_{L}(\bfx, \bar{\bfx})  =&\, \sigma_b^2 \sigma_i^2 + \sigma_w^2 \bbE_{(u, v) \sim \bfSigma_{L-1}(\bfx, \bar{\bfx})} [\act(u) \act(v)] \\
		\ntk_{L}(\bfx,\bar{\bfx})=&\,\sigma_b^2(1-\sigma_i^2)+\nngp_{L}(\bfx,\bar{\bfx}) +\sigma_w^2
		\E_{(u,v)\sim\calN(0,\bfSigma_{L-1}(\bfx,\bar{\bfx}))} \sbra{\act'(u)\act'(v)}\ntk_{L-1}(\bfx,\bar{\bfx}) \\
		\bfSigma_{L}(\bfx, \bar{\bfx})  
		=&\, \begin{pmatrix}
			\nngp_{L}(\bfx, \bfx) & \nngp_{L}(\bfx, \bar{\bfx}) \\
			\nngp_{L}(\bar{\bfx}, \bfx) & \nngp_{L}(\bar{\bfx}, \bar{\bfx})
		\end{pmatrix}
	\end{align*}
	where we define 
	$\ntk_1(\bfx, \bar{\bfx})  \equalDef \sigma_b^2 (1 - \sigma_i^2) + \nngp_1(\bfx, \bar{\bfx}) $.
\end{lemma}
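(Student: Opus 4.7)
My approach is induction on the depth $L \geq 1$, invoking the tensor-programs machinery of \cite{yang_tensor_programs_II_ntk_for_any_architecture} (specifically Theorem 7.2 there) as the main technical engine that lifts finite-width identities to the infinite-width limit. Assumption \ref{ass:act} is designed precisely to supply the polynomial boundedness of $\act$ and of its pseudo-derivative $\act'$ (in the sense of Definition \ref{def:pseudo-derivative}, well-defined almost everywhere under Assumption \ref{ass:act}) that Yang's theorem requires; the paper already comments on this integrability point just after Definition \ref{def:sinfty}. Almost-sure convergence of both kernels as $d_1,\dots,d_{L-1} \to \infty$ will then follow directly, and the remaining work is purely to identify the constants in the recursion for the generic parametrization $(\sigma_b,\sigma_i,\sigma_w)$.

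\textbf{Base case and inductive step.} For $L=1$, the preactivation $\bfz^{(1)} = \sigma_w \bfW^{(1)}\bfx + \sigma_b \bfb^{(1)}$ is exactly Gaussian, and a direct variance computation gives $\knngp_1(\bfx,\bar\bfx) = \sigma_w^2 \langle \bfx,\bar\bfx\rangle + \sigma_b^2\sigma_i^2$; the gradient inner product splits as $\sigma_w^2\langle \bfx,\bar\bfx\rangle$ from $\bfW^{(1)}$ plus $\sigma_b^2$ (independent of $\sigma_i$) from $\bfb^{(1)}$, which rearranges to $\kntk_1 = \sigma_b^2(1-\sigma_i^2) + \knngp_1$. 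For the inductive step, conditional on $(\bfx^{(L-1)},\bar\bfx^{(L-1)})$ the next preactivation $\bfz^{(L)}$ is Gaussian with covariance $(\sigma_w^2/d_{L-1})\langle\bfx^{(L-1)},\bar\bfx^{(L-1)}\rangle + \sigma_b^2\sigma_i^2$, and the tensor-programs law of large numbers identifies the limit of $(1/d_{L-1})\langle\bfx^{(L-1)},\bar\bfx^{(L-1)}\rangle$ with $\bbE_{(u,v)\sim\bfSigma_{L-1}}[\act(u)\act(v)]$, yielding the NNGP recursion. For the NTK, I would decompose $\nabla_\bftheta z^{(L)}$ by layer via the chain rule: the final layer's gradient contribution reproduces $\knngp_L$ plus the bias-gradient correction $\sigma_b^2(1-\sigma_i^2)$, while the joint contribution of all earlier layers factorizes through backpropagated Jacobians whose squared norm converges to $\bbE_{(u,v)\sim\bfSigma_{L-1}}[\act'(u)\act'(v)]\cdot \kntk_{L-1}$, again by Yang's result applied to the augmented program that computes both forward and backward signals.

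\textbf{Main obstacle.} The principal subtlety is that under Assumption \ref{ass:act} the activation need not be classically differentiable at zero (ReLU, LeakyReLU, SELU, ELU), so $\act'$ must be read as the pseudo-derivative. Plugging the pseudo-derivative into Yang's framework is legitimate because the single non-smooth point is a null set under every non-degenerate Gaussian measure (so it is avoided almost surely by the relevant preactivations) and because Assumption \ref{ass:act} forces the pseudo-derivative to be polynomially bounded on $\bbR\setminus\{0\}$, hence in $L_2(\calN(0,\sigma^2))$. A secondary bookkeeping nuisance is that the formulas in \cite{lee_wide_2019} tacitly fix $\sigma_i = 1$, while here the recursion must simultaneously cover bias-free ($\sigma_b = 0$), zero-initialized bias ($\sigma_i = 0$), and fully random bias ($\sigma_b\sigma_i \neq 0$); re-deriving the $\sigma_i$-dependence from scratch is exactly what produces the $(1-\sigma_i^2)$ factor in the NTK recursion, and writing it with $\knngp_L$ as an explicit stand-alone summand is what keeps all three cases under one formula.
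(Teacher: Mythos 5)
Your proposal follows essentially the same route as the paper's proof: decompose the finite-width gradient inner product layer by layer (weights and biases separately), identify the base case $L=1$ by direct computation, and invoke Theorem 7.2 / Box 1 of Yang's Tensor Programs II as the engine that converts the finite-width master-theorem identities into almost-sure infinite-width limits, then recover the stated recursions by induction on depth; your handling of the $(1-\sigma_i^2)$ bookkeeping and of the pseudo-derivative at the single non-smooth point also matches the paper's justification. The only thing you leave implicit that the paper spells out is the explicit construction of the \texttt{NETSOR}$^\top$ program (initial vectors, the auxiliary forward/backward vectors $\bfh^{(l)}, \bfz^{(l)}, \mathrm{d}\bfh^{(l)}, \mathrm{d}\bfz^{(l)}$ and the limiting kernels $B^{(l)}, C^{(l)}, D^{(l,L)}, E^{(l)}$), which is needed to apply Yang's theorem rigorously rather than by analogy.
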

\begin{proof}
Denote $\tilde d_l\equalDef d$ if $l\in\setrange{1}{L-1}$ and $\tilde d_l= 1$ otherwise. Consider the inputs $\bfx,\bar{\bfx}\in\R^{d_0}$.
For a fixed hidden layer width $d\in\N$, the finite version of the NTK kernel is given by
\begin{align}
	 \nonumber
	 &\left\langle \nabla_{\bftheta}\bfz^{(L)},  \nabla_{\bftheta}\bar\bfz^{(L)} \right\rangle
	 = \sum_{l=1}^{L} \left\langle \nabla_{\bfW^{(l)}} \po{\bfz}{L} ,\nabla_{\bfW^{(l)}} { \po{\bar\bfz}{L}} \right\rangle
	  +\sum_{l=1}^{L} \left\langle \nabla_{\bfb^{(l)}} \po{\bfz}{L} ,\nabla_{{\bfb}^{(l)}} \po{\bar\bfz}{L} \right\rangle\,,\\
	  \label{eq:ntk_terms_1}&\nabla_{\bfb^{(l)}} \po{\bfz}{L} = \sigma_b \nabla_{\po{\bfz}{l}}\po{\bfz}{L}\,,\\
	 \nonumber  &\left\langle \nabla_{\bfW^{(l)}} \po{\bfz}{L}, \nabla_{\bfW^{(l)}} { \po{\bar\bfz}{L}} \right\rangle 
	   = \frac{\sigma_w^2}{{\tilde d_{l-1}}}\left\langle \nabla_{\po{\bfz}{l}} \po{\bfz}{L} {\bfx}^{(l-1) \top }, \nabla_{\po{\bar\bfz}{l}} \po{\bar\bfz}{L} \bar {\bfx}^{(l-1) \top } \right\rangle,
\end{align}
where inner products involving matrices are Frobenius inner products. Using the outer product structure we can rewrite the weight-matrix gradient as
\begin{align}
	\label{eq:ntk_terms_2}
	&\left\langle \nabla_{\bfW^{(l)}} \po{\bfz}{L}, \nabla_{\bfW^{(l)}} { \po{\bar\bfz}{L}} \right\rangle \\
	\nonumber=&\,  \sigma_w^2\left\langle \nabla_{\po{\bfz}{l}} \po{\bfz}{L},\nabla_{\po{\bar\bfz}{l}} \po{\bar\bfz}{L}\right\rangle  \frac{1}{{\tilde d_{l-1}}} \left \langle {\po{\bfx}{l-1}},{ \po{\bar {\bfx}}{l-1}} \right \rangle \\
	\nonumber=&\,  \frac{\sigma_w^2}{d_l}\left\langle \sqrt{d_l}\nabla_{\po{\bfz}{l}} \po{\bfz}{L},\sqrt{d_l}\nabla_{\po{\bar\bfz}{l}} \po{\bar\bfz}{L}\right\rangle  \frac{1}{{\tilde d_{l-1}}} \left \langle {\po{\bfx}{l-1}},{ \po{\bar {\bfx}}{l-1}} \right \rangle ~.
\end{align}

 The occurring backpropagation terms can be recursively unrolled as
\begin{align*}
	\po{\nabla_{\po{\bfz}{L}} \bfz }{l}&=
	\frac{\sigma_w}{\sqrt{d_l}}\bfW^{(l+1)\top} \po{\nabla_{\po{\bfz}{l+1}} \bfz }{L} \odot \act'(\po{\bfz}{l}) &&1\le l\le L-1~,\\
	\po{\nabla_{\po{\bfz}{L}} \bfz}{L}&= 1,&&
\end{align*}
where $\odot$ denotes component-wise multiplication. 

In order to rigorously calculate the NNGP and the NTK, we use the simplified \texttt{NETSOR$^\top$} program from Section 7 in \cite{yang_tensor_programs_II_ntk_for_any_architecture}. 
To this end, define the set of $d$-dimensional initial vectors $\calV \equalDef \{ \sigma_w \po{\bfW}{1}\bfx, \sigma_w \po{\bfW}1 \bar\bfx, \bfW^{(L)\top} , \sigma_i\sigma_b\po {\bfb}1,\sigma_i\sigma_b\po {\bfb}2, \dots,\sigma_i\sigma_b\po {\bfb}{L-1}\}$.
For $1\le i\le d$, 
the process
$(v_i)_{v\in\calV}$ 
is a centered Gaussian process distributed as $(\rand v )_{v\in\calV}$, where\footnote{Note that $\rand v$ is a real-valued random variable for $v\in \calV$.} $\Cov(\rand{\po{\bfW}{1}\bfy},\rand{
	\po{\bfW}1 \bar\bfy})= \scal{\bfy}{\bar\bfy}$ for $\bfy,\bar \bfy\in \{ \bfx,\bar\bfx\}$, $\Cov(\rand{\po{\bfb}{l}},\rand v) = \bbone_{\{\po{\bfb}{l}\}}(v)   $ and $\Cov\left( \rand{\bfW^{(L)\top}},v\right)= \bbone_{\{\bfW^{(L)\top}\}}(v)$. 
Define the set of $\bbR^{d\times d}$ random matrices $\calW \equalDef \{{\tilde \bfW}^{(2)},\dots,{\tilde \bfW}^{(L-1)}\}$ by ${\tilde \bfW}^{(i)}_{j k}\sim \calN(0,\sigma_w^2/d)$.
Now we recursively define the vectors 
\begin{alignat}{2}
	\label{eq:tensor_program}
	\po{\bfh}{1}&\equalDef \sigma_w \po{\bfW}{1}\bfx,&&\,\po{\bfh}{l}\equalDef \po{\tilde{\bfW}}{l} \po{\bfx}{l-1}\\
	\nonumber
	\po{\bfz}{l} &\equalDef \po{\bfh}{l} +\sigma_b\sigma_i \po{\bfb}{l},&&\,\po{\bfx}{l}\equalDef \act(\bfz^{(l)}) ,\\
	\nonumber
	\d\po{\bfz}{L-1}&\equalDef \bfW^{(L)\top} \odot\act'(\po{\bfz}{L-1}) ,\qquad\qquad\qquad\quad&& \d \po{\bfh}{l}\equalDef{\tilde{\bfW}}^{(l+1)\top}\d \po{\bfz}{l+1}, \\
	\nonumber
	\d {\bfz}^{(l)} &\equalDef \d \po{\bfh}{l} \odot \act'(\po{\bfz}{l}),&&
\end{alignat}
where the index $l$ has the range $1\le l\le L-1$ for $\po{\bfx}{l}$, $1\le l\le L-2$ for $\d \po{\bfh}{l}, \d \po{\bfz}l$ and  $2\le l\le L-1$ for $\po{\bfh}{l},\po{\bfz}{l}$. 
Also note $ \d {\bfz}^{(l)} =  \frac{1}{\sqrt {d}} \po{\nabla_{\po{\bfz}{l}} \bfz }{L}$ for $1\le l\le L-1$ due to the \glqq missing normalization\grqq in the definition of $\d \bfz^{(L-1)}$.
The individual vectors in \cref{eq:tensor_program} are given by a non-linear operation or a matrix operation as in \cite[Box 1 p. 7]{yang_tensor_programs_II_ntk_for_any_architecture}.
This enables applying Theorem 7.2 and Box 1 from \cite{yang_tensor_programs_II_ntk_for_any_architecture}, which states that limits of the kernels
\begin{align*}
	B^{(l)}(\bfx,\bar\bfx)&\equalDef \lim_{d\to\infty} \frac{1}{d} \scal{\po{\bfz}{l}}{\po{\bar\bfz}{l}}, \\
	C^{(l)}(\bfx,\bar\bfx)&\equalDef \lim_{d\to\infty} \frac{1}{d}\scal{\po{\bfx}{l}}{\po{\bar\bfx}{l}}, \\
	D^{(l,L)}(\bfx,\bar\bfx)&\equalDef \lim_{d\to\infty} \frac{1}{d}\scal{\po{\d\bfz}{l}}{\po{\d\bar\bfz}{l}},\\
	E^{(l)}(\bfx,\bar\bfx) &\equalDef \lim_{d\to \infty} \frac{1}{d}\scal{\act'(\po{\bfz}{l})}{\act'(\po{\bar\bfz}{l})}.
\end{align*}
exist almost surely and furthermore yields recursive formulas for those limits.
Note that the backpropagation terms $D^{(l,L)}$ depend on the network's depth $L$.
By the independence of the parameters in the network we have 
\begin{align*}
	&\nngp_{L}(\bfx,\bar\bfx) = \lim_{d\to \infty} \Cov\big( \po{\bfz}{L} , \po{\bar\bfz}{L}\big) \\
	&=\, \lim_{d\to \infty} \frac{\sigma_w^2}{d} \E \big[ \bfx^{(L-1)\top} \bfW^{(L)^\top} \bfW^{(L)} \po{\bar\bfx}{L-1}\big] + \sigma_i^2\sigma_b^2 \E \big[(\bfb^{(L)})^2\big]\\
	=&\, \lim_{d\to\infty} \sigma_w^2\E\frac{1}{d}  \scal{\bfx^{(L-1)}}{{\bar {\bfx}}^{(L-1)}} + \sigma_i^2\sigma_b^2\\
	=&\, \sigma_w^2 C^{(L-1)}(\bfx,\bar\bfx)+ \sigma_i^2\sigma_b^2~.
\end{align*}
Here, the convergence of the expectation follows from \citet[Proposition G.4]{yang2019tensor}.
By \cref{eq:ntk_terms_1,eq:ntk_terms_2} the NTK-kernel can be expressed as
\begin{align*}
	\ntk_{L}(\bfx,\bar\bfx )&= \sigma_w^2 \sum_{l=1}^{L}D^{(l)}(\bfx,\bar\bfx) C^{(l-1)}(\bfx,\bar\bfx) + \sigma_b^2 \sum_{l=1}^{L} D^{(l)}(\bfx,\bar\bfx)~.
\end{align*}
For a kernel $k$ denote $\Sigma_k (\bfy,\bar\bfy) \equalDef \begin{psmallmatrix}
	k(\bfy,\bfy) & k(\bfy,\bar\bfy)\\
	k(\bfy,\bar\bfy) & k(\bar\bfy,\bar\bfy)
\end{psmallmatrix}
$.
Using \cref{eq:tensor_program} and \cite{yang_tensor_programs_II_ntk_for_any_architecture}, the above kernels are recursively given by
\begin{alignat}{2}
	\label{eq:ntk_associated_kernels_recursion}
	B^{(l)} (\bfx,\bar\bfx) &= \sigma_w^2 C^{l-1} (\bfx,\bar\bfx)+ \sigma_b^2\sigma_i^2,&&\\
	\nonumber C^{(l)}(\bfx,\bar\bfx) &= \E_{(u,v)\sim\calN(0,\Sigma_{B^{(l)}}(\bfx,\bar \bfx))} \sbra{\act(u)\act(v)}, \quad &&(l\ge 1)\\
	\nonumber C^{(0)}(\bfx,\bar\bfx) &= \scal{\bfx}{\bar\bfx},&&\\
	\nonumber D^{(l,L)}(\bfx,\bar\bfx)&= \sigma_w^2 D^{{(l+1)}}(\bfx,\bar\bfx) E^{(l)}(\bfx,\bar\bfx),\quad &&l\le L-1,\\
	\nonumber D^{(L,L)}(\bfx,\bar\bfx)&=1,&&\\
	\nonumber E^{(l)} (\bfx,\bar\bfx) &= \E_{(u,v)\sim\calN(0,\Sigma_{B^{(l)}}(\bfx,\bar \bfx))} [\act'(u)\act'(v)],&&
\end{alignat}
and a recursive formula for $D^{(l,L)}$ over the depth $L$ follows as
\begin{equation*}
D^{{(l,L+1)}}= \sigma_w^{2} E^{(l)} \cdot \hdots \cdot \sigma_w^2 E^{(L)} = \sigma_w^2 E^{(L)} D^{(l,L)}~.
\end{equation*}
The recursive formula for $\nngp_{L}(\bfx,\bar\bfx)$ can directly be derived from \cref{eq:ntk_associated_kernels_recursion}. We investigate $\ntk_{L}(\bfx,\bar\bfx)$ by induction. 
For $L=1$ there is nothing to show. 
Assume that the claimed formula holds for $L\in \N$. Then we have
\begin{align*}
	\ntk_{L+1}(\bfx,\bar\bfx) &= 
	\sigma_w^2 \sum_{l=1}^{L+1} D^{(l,L+1)} (\bfx,\bar\bfx)C^{(l-1)}(\bfx,\bar\bfx) x
	+ \sigma_b^2 \sum_{l=1}^{L} D^{(l,L+1)}(\bfx,\bar\bfx)\\
	&= \sigma_w^2 E^{(L)}(\bfx,\bar\bfx)
	\bra{ \sigma_w^2\sum_{l=1}^{L} D^{(l,L)} (\bfx,\bar\bfx)C^{(l-1)}(\bfx,\bar\bfx)+\sigma_b^2  \sum_{l=1}^{L} D^{(l,L)}(\bfx,\bar\bfx) }  \\
	&~~~~+  \sigma_w^2 C^{(L)}(\bfx,\bar\bfx)   + \sigma_b^2\\
	&= \sigma_b^2(1-\sigma_i^2)+\nngp_{L+1}(\bfx,\bar{\bfx}) +\sigma_w^2
	\E_{(u,v)\sim\calN(0,\bfSigma_{L})} \sbra{\act'(u)\act'(v)}\ntk_{L}(x,x'). \quad \qedhere
\end{align*} 
\end{proof}
A dot-product kernel $k$ on the sphere $\bbS^d$ is conveniently described by a function $\kappa:[-1,1] \to \R$ by defining $\kappa(\scal xy) \equalDef k(x,y)$. We translate the NNGP- and NTK-recursion of \cref{lemma:limit_ntk_formula} to a recursion for the corresponding functions $\kappa$ describing the restriction of these kernels to the sphere.
\lemmaKernelsSphere*
	
	\begin{proof}
		As $\sigma_w^2>0$ and $\act$ is not almost surely equal to zero, $\alpha_l>0$ follows for all $l$. By a simple induction we see that $\alpha_l = \nngp_l(1)$ holds for all $l \ge 1$. The identities for the NNGP can be shown by induction on $l$ as well. The induction base $\nngp_{1}(\bfx,\bar \bfx)= \knngp_{1}(\scal{\bfx}{\bar\bfx})$ is straightforward. 		
		For the induction step
		\cref{lemma:limit_ntk_formula} yields
		\begin{align*}
			\nngp_{l}(\bfx, {\bar\bfx}) &= \sigma_b^2\sigma_i^2 +\sigma_w^2 \E_{(u,v)\sim\calN(0,\bfSigma_{l-1}(\bfx,\bar{\bfx}))}\sbra{ \act(u)\act (v)},\\
			\bfSigma_{l-1}(\bfx,\bar{\bfx})&=\begin{psmallmatrix}
				\nngp_{l-1}(\bfx,\bfx) & \nngp_{l-1}(\bfx,\bfx')\\
				\nngp_{l-1}(\bfx,\bfx')& \nngp_{l-1}(\bfx',\bfx')
			\end{psmallmatrix}
			= {\alpha_{l-1}} \cdot \begin{psmallmatrix}
				1& \knngp_{l-1}(\scal {\bfx}{\bar\bfx})/\alpha_{l-1}\\
				\knngp_{l-1}(\scal {\bfx}{\bar\bfx})/\alpha_{l-1}&1
			\end{psmallmatrix}
		\end{align*}
		and the identity $\nngp_l(\bfx,\bar{\bfx})= \knngp_l(\scal{\bfx}{\bar{\bfx}})$ follows directly from the definition of the dual activation (\ref{definition:dual_and_rescaled_activation_function}).
		A similar argument shows $\ntk_{l}(\bfx, \bar\bfx) = \kntk_{l}(\scal {\bfx}{\bar\bfx})$.
	\end{proof}

\subsection{Even and odd parts of neural kernels}
\begin{lemma}[Even/odd algebra] \label{lemma:even_odd_algebra}
	Let $f, g: \bbR \to \bbR$. Then,
	\begin{enumerate}[(a)]
		\item $\evenpart{f+g} = \feven + \geven$ and $\oddpart{f+g} = \fodd + \godd$.
		\item $\evenpart{f \cdot g} = \feven \cdot \geven + \fodd \cdot \godd$ and $\oddpart{f \cdot g} = \feven \cdot \godd + \fodd \cdot \geven$.
		\item If $g$ is odd, $\evenpart{f \circ g} = \feven \circ g$ and $\oddpart{f \circ g} = \fodd \circ g$. If $g$ is even, $f \circ g$ is even.
		\item If $f'$ is a weak derivative of $f$, then $\evenpart{f'}$ is a weak derivative of $\fodd$ and $\oddpart{f'}$ is a weak derivative of $\feven$. %
	\end{enumerate}
	\begin{proof}
		Statements (a) -- (c) are straightforward to prove. For (d), we obtain for $x \in \bbR$:
		\begin{IEEEeqnarray*}{+rCl+x*}
			\int_0^x \evenpart{f'}(t) \diff t & = & \int_0^x \frac{f'(t) + f'(-t)}{2} \diff t \\
			& = & \frac{1}{2} \left(\int_0^x f'(t) \diff t + \int_0^x f'(-t) \diff t\right) \\
			& = & \frac{1}{2} \left(\int_0^x f'(t) \diff t - \int_0^{-x} f'(t) \diff t\right) \\
			& = & \frac{1}{2} ((f(x) - f(0)) - (f(-x) - f(0))) \\
			& = & \fodd(x)~.
		\end{IEEEeqnarray*}
		This shows that $\evenpart{f'}$ is a weak derivative of $\fodd$, and a similar computation shows that $\oddpart{f'}$ is a weak derivative of $\feven$.
	\end{proof}
\end{lemma}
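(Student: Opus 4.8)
The plan is to verify each of the four claims by direct computation from the definitions $\feven(x) = \tfrac12(f(x) + f(-x))$, $\fodd(x) = \tfrac12(f(x) - f(-x))$, exploiting that the even/odd decomposition is pointwise. For (a) I would simply invoke linearity: evaluating $\evenpart{f+g}$ and $\oddpart{f+g}$ at $x$ and regrouping the four resulting terms gives $\feven + \geven$ and $\fodd + \godd$. For (b) I would expand the pointwise product $\feven(x)\geven(x) + \fodd(x)\godd(x)$ into the four products $f(\pm x)g(\pm x)$ over $4$; the cross terms $f(x)g(-x)$ and $f(-x)g(x)$ cancel, leaving $\tfrac12\big(f(x)g(x) + f(-x)g(-x)\big) = \evenpart{fg}(x)$, and the odd identity follows by the same cancellation with the opposite sign. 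For (c), if $g$ is odd then $g(-x) = -g(x)$, so $(f\circ g)(-x) = f(-g(x))$ and hence the even (resp.\ odd) part of $f\circ g$ at $x$ equals $\feven(g(x))$ (resp.\ $\fodd(g(x))$); if $g$ is even then $(f\circ g)(-x) = f(g(-x)) = f(g(x)) = (f\circ g)(x)$, so $f\circ g$ is even.

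The only part requiring genuine (though still routine) care is (d). Here I would unfold the definition of weak derivative, namely $f(x) = f(0) + \int_0^x f'(t)\diff t$ for all $x$, and note $\fodd(0) = 0$, so it suffices to show $\fodd(x) = \int_0^x \evenpart{f'}(t)\diff t$. Splitting $\int_0^x \tfrac12\big(f'(t) + f'(-t)\big)\diff t$ and applying the change of variables $t\mapsto -t$ to the second summand turns it into $-\int_0^{-x} f'(u)\diff u$; combining with the weak-derivative identity evaluated at $x$ and at $-x$ then gives $\tfrac12\big((f(x) - f(0)) - (f(-x) - f(0))\big) = \fodd(x)$. The symmetric computation, where the substitution produces a sum rather than a difference, shows that $\oddpart{f'}$ is a weak derivative of $\feven$.

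The main obstacle is purely bookkeeping: keeping track of the minus sign introduced by the substitution $t\mapsto-t$ in the inner integral and of the $f(0)$ base-point terms (which conveniently cancel for $\fodd$ and survive harmlessly for $\feven$). There is no conceptual difficulty, and parts (a)--(c) are immediate, so I would state those in one line and give the short displayed computation only for (d).
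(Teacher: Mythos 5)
Your proposal is correct and matches the paper's own proof essentially line for line: parts (a)--(c) are dispatched by direct pointwise computation (the paper calls them straightforward), and for (d) you unfold the weak-derivative identity, split the integral, substitute $t\mapsto -t$, and recognize the result as $\fodd(x)$, exactly as the paper does. No gaps and no meaningful deviation.
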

\begin{proposition}[Special cases for NNGP/NTK with even/odd functions] \label{prop:even_odd_kernels}
	Let the activation function $\act$ fulfill \cref{ass:act}.
	\begin{enumerate}[(a)]
		\item Let $\sigma_b^2 \sigma_i^2 = 0$. If $\act$ is even/odd, then $\knngp_l$ is even/odd for all $l \geq 2$. 
		\item Let $\sigma_b^2 \sigma_i^2 = 0$. 
		Then, $\evenpart{\knngp_{2, \act}} = \knngp_{2, \acteven}$ and $\oddpart{\knngp_{2, \act}} = \knngp_{2, \actodd}$.
	\end{enumerate}
	\begin{enumerate}[(a)]
		\item[(c)] Let $\sigma_b^2 = 0$. If $\act$ is even/odd, then $\kntk_l$ is even/odd for all $l \geq 2$.
		\item[(d)] Let $\sigma_b^2 = 0$. Then, $\evenpart{\kntk_{2, \act}} = \kntk_{2, \acteven}$ and $\oddpart{\kntk_{2, \act}} = \kntk_{2, \actodd}$.
	\end{enumerate}
Here we denote the activation function in the index to clarify the network architecture the kernels belong to.
	\begin{proof}
		\leavevmode
		\begin{enumerate}[(a)]
			\item Since $\sigma_b^2 \sigma_i^2 = 0$, $\knngp_1$ is odd. If $\act$ is even/odd, then so is $\scaledfn{\act}{\sqrt{\alpha_l}}$ and by \Cref{lemma:dual_properties} also $\widehat{\scaledfn{\act}{\sqrt{\alpha_l}}}$. The claim then follows by induction using \Cref{lemma:even_odd_algebra} (c). %
			\item We have $\knngp_{2, \act}(t) = \sigma_w^2 \widehat{\scaledfn{\act}{\sqrt{\alpha_1}}}(t)$. Note that $\alpha_1$ does not depend on $\varphi$. By \Cref{lemma:dual_properties}, we obtain
			\begin{IEEEeqnarray*}{+rCl+x*}
				\evenpart{\knngp_{2, \act}}(t) = \sigma_w^2 \widehat{\evenpart{\scaledfn{\act}{\sqrt{\alpha_1}}}}(t) = \sigma_w^2\widehat{(\acteven)_{\cdot\sqrt{\alpha_1}}}(t) = \knngp_{2, \acteven}(t)
			\end{IEEEeqnarray*}
			and similar for the odd part. 
			\item
			Since $\sigma_b^2 = 0$, $\kntk_1$ is odd. Moreover,
			\begin{IEEEeqnarray*}{+rCl+x*}
				\kntk_{l}(t) & = &  \sigma_w^2 \bra{\knngp_{l}(t) + \kntk_{l-1}(t) \cdot \widehat{(\act')_{\cdot \sqrt{\alpha_{l-1}}}}(\knngp_{l-1}(t)/\alpha_{l-1})} 
			\end{IEEEeqnarray*}
			holds and we obtain the claim for $l=2$.  
			We use \Cref{lemma:even_odd_algebra} and \Cref{lemma:dual_properties} to obtain that $\widehat{(\act')_{\sqrt{\alpha_l}}}$ is odd/even if $\act$ is even/odd, and the claim holds for $l\ge 3$ by induction since $\knngp_{l}(t)$ is even/odd by (a).
			\item Just as in (c), this follows from \Cref{lemma:dual_properties}, \Cref{lemma:even_odd_algebra} and (b). \qedhere
		\end{enumerate}
	\end{proof}
\end{proposition}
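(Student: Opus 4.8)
The plan is to run an induction on the layer index $l$, using the recursions for $\knngp_l$ and $\kntk_l$ from \Cref{lemma:restriction_to_the_unit_sphere} together with two facts: that dualization commutes with the even/odd decomposition (\Cref{lemma:dual_properties}(a)), and the elementary algebra of even/odd functions under sums, products, compositions, and pseudo-derivatives (\Cref{lemma:even_odd_algebra}). The feature exploited throughout is the behavior of the base layer: when $\sigma_b^2\sigma_i^2 = 0$ one has $\knngp_1(t) = \sigma_w^2 t$ and $\alpha_1 = \sigma_w^2$, so $\knngp_1$ is odd and the inner argument $\knngp_1(t)/\alpha_1$ at level $2$ is simply $t$.

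For (a): if $\act$ is even (resp.\ odd), then the rescaled activation $\acts{\sqrt{\alpha_{l-1}}}$ is even (resp.\ odd), and hence so is its dual $\widehat{\acts{\sqrt{\alpha_{l-1}}}}$ by \Cref{lemma:dual_properties}(a) (the Hermite/Maclaurin link forces only even, resp.\ odd, powers of $t$). When $\act$ is even, $\knngp_l = \sigma_w^2\, \widehat{\acts{\sqrt{\alpha_{l-1}}}}\circ(\knngp_{l-1}/\alpha_{l-1})$ is even for every $l \ge 2$ because a composition whose outer function is even is even (\Cref{lemma:even_odd_algebra}(c)). When $\act$ is odd, $\knngp_2$ is the composition of the odd function $\widehat{\acts{\sqrt{\alpha_1}}}$ with the odd map $t\mapsto t$, hence odd, and inductively $\knngp_{l-1}$ odd forces $\knngp_l$ odd as a composition of two odd functions. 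For (b): with $\sigma_b^2\sigma_i^2 = 0$ the recursion collapses to $\knngp_{2,\act}(t) = \sigma_w^2\,\widehat{\acts{\sqrt{\alpha_1}}}(t)$, and since $\alpha_1 = \sigma_w^2$ does not depend on $\act$, taking the even/odd part and applying \Cref{lemma:dual_properties}(a) to the rescaled activation (using $(\acts{a})_{\mathrm{even}}(x) = \acteven(ax)$ and similarly for the odd part) gives $\evenpart{\knngp_{2,\act}} = \sigma_w^2\,\widehat{(\acteven)_{\cdot\sqrt{\alpha_1}}} = \knngp_{2,\acteven}$ and $\oddpart{\knngp_{2,\act}} = \knngp_{2,\actodd}$.

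For (c) and (d): with $\sigma_b^2 = 0$ one has $\kntk_1(t) = \sigma_w^2 t$, which is odd and independent of $\act$, and the recursion reads $\kntk_l = \knngp_l + \sigma_w^2\,\kntk_{l-1}\cdot\big(\widehat{(\act')_{\cdot\sqrt{\alpha_{l-1}}}}\circ(\knngp_{l-1}/\alpha_{l-1})\big)$. If $\act$ is even, then its pseudo-derivative $\act'$ is odd (\Cref{lemma:even_odd_algebra}(d)), hence $\widehat{(\act')_{\cdot\sqrt{\alpha_{l-1}}}}$ is odd; at level $2$ the inner argument is $t$, so the product term is odd times odd, i.e.\ even, which together with $\knngp_2$ even from (a) makes $\kntk_2$ even; for $l \ge 3$ the inner argument $\knngp_{l-1}/\alpha_{l-1}$ is even by (a), so the composition is even and the product term is even times even, i.e.\ even, whence $\kntk_l$ is even. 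The case $\act$ odd is symmetric, using that then $\act'$ is even. Part (d) then follows at $l = 2$ by the same bookkeeping as in (b): split $\kntk_{2,\act}$, treat the $\knngp_{2,\act}$ summand with (b), extract the even/odd parts of the product $\kntk_1\cdot\widehat{(\act')_{\cdot\sqrt{\alpha_1}}}$ via \Cref{lemma:even_odd_algebra}(b) (using that $\kntk_1$ is odd), and rewrite $\big(\widehat{(\act')_{\cdot\sqrt{\alpha_1}}}\big)_{\mathrm{odd}} = \widehat{((\acteven)')_{\cdot\sqrt{\alpha_1}}}$ and $\big(\widehat{(\act')_{\cdot\sqrt{\alpha_1}}}\big)_{\mathrm{even}} = \widehat{((\actodd)')_{\cdot\sqrt{\alpha_1}}}$ via \Cref{lemma:even_odd_algebra}(d) and \Cref{lemma:dual_properties}(a); the pieces reassemble into $\kntk_{2,\acteven}$ and $\kntk_{2,\actodd}$, respectively.

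I expect the only delicate point to be the parity bookkeeping at the base layer: in (a) and (c) the inner argument of the outer dual changes parity between $l = 2$, where it is the odd identity $t\mapsto t$, and $l \ge 3$, where it is the even $\knngp_{l-1}/\alpha_{l-1}$, so the induction must be seeded at $l = 2$ and one must keep the two composition regimes (even outer function, giving an even composition, versus odd composed with odd, giving odd) cleanly apart. For activations of smoothness $1$ one must also invoke \Cref{lemma:even_odd_algebra}(d) in its pseudo-derivative form, so that $\act'$ need not exist classically.
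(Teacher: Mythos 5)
Your proof is correct and follows essentially the same route as the paper's: induct on the layer $l$ via the recursion of \Cref{lemma:restriction_to_the_unit_sphere}, use \Cref{lemma:dual_properties}(a) to push the even/odd decomposition through dualization, and use \Cref{lemma:even_odd_algebra} for the parity bookkeeping under sums, products, compositions, and (pseudo-)derivatives. The only difference is presentational: you make the base cases and the $l=2$ versus $l\geq 3$ distinction explicit, whereas the paper's proof is terser, but the underlying argument is identical.
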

\subsection{Adapting the main theorem from \cite{bietti_deep_2021}}
Here, we derive \Cref{thm:bietti_bach_adapted}, an adapted version of the main theorem of \cite{bietti_deep_2021} that is closer to our notation. First, we restate the original theorem with minor adaptations (using $\tau$, expanding at $t=0$ instead of $t \in \{-1, 1\}$, rewriting derivatives of powers, using $\bbS^d$ instead of $\bbS^{d-1}$). Here, the notation $a_n \sim b_n$ means $\lim_{n \to \infty} a_n/b_n = 1$.

\begin{theorem}[Theorem 7 in \cite{bietti_deep_2021}, arXiv version 4]\label{thm:bietti_bach_original}
	Let $k:[-1,1]\to\R $ be a function that is $C^\infty$ on $(-1,1)$ and has for $\tau\in\set{1,-1} $ the following expansion for $t\searrow0$\,\emph :
	\begin{align}
		\label{eq:bietti_bach_main_decomposition}
		k(\tau(1-t))=p_{\tau}(t) +\sum_{j=1}^r c_{j,\tau} t^{\nu_j} + O(t^{\nu_1+1+\varepsilon}),
	\end{align}
	where $p_\tau$ are polynomials and $0<\nu_1<\dots<\nu_r$ are not integers and $0 < \varepsilon < \nu_2-\nu_1$. Assume further that the derivatives $\po ks$ have for any $s\in\N_0$ the following expressions for $t\searrow0$, where $\bra{ t^{\nu_j}}^{(s)}$ is the $s$-th derivative of $t \mapsto t^{\nu_j}$:
	\begin{align}
		\label{eq:bietti_bach_main_decomposition2}
		\po ks(\tau(1-t))= p_{s,\tau} (t) +\sum_{j=1}^{r} c_{j,\tau} \bra{ t^{\nu_j}}^{(s)} + O\bra{t^{\nu_1+1+\varepsilon-s}}
	\end{align}
	for some polynomials $p_{s,\tau}$. %
	Then, for $d \geq 1$, the eigenvalues $\mu_k = \mu_k(\kappa, d)$ as defined in \Cref{sec:preliminaries} satisfy, for an absolute constant $C(d,\nu_1)$,
	\begin{itemize}
		\item For $k$ even, if $c_{1,1} \ne -c_{1,-1}$: $\mu_k \sim (c_{1,1} +c_{1,-1})C(d,\nu_1)k^{-d-2\nu_1}$;
		\item For $k$ even, if $c_{1,1}= -c_{1,-1}$: $\mu_k= o(k^{-d-2\nu_1})$;
		\item For $k$ odd, if $c_{1,1} \ne c_{1,-1}$: $\mu_k \sim (c_{1,1} - c_{1,-1})C(d,\nu_1)k^{-\tilde d-2\nu_1}$;
		\item For $k$ odd, if $c_{1,1}= c_{1,-1}$: $\mu_k= o(k^{-d-2\nu_1})$.
	\end{itemize}
\end{theorem}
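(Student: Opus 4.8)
This is Theorem~7 of \cite{bietti_deep_2021} (arXiv v4), restated in the present notation; below is the strategy of that proof. The plan is to pass to the Gegenbauer/Funk--Hecke description of the eigenvalues and show that their $l\to\infty$ behaviour is governed entirely by the boundary expansions of $\kappa$ at $\pm1$. With $\lambda\equalDef\tfrac{d-1}{2}$ and $C_l^{\lambda}$ the Gegenbauer polynomials, the Funk--Hecke formula gives, for a positive dimensional constant $c_d$,
\[
\mu_l \;=\; c_d \int_{-1}^{1} \kappa(x)\,\frac{C_l^{\lambda}(x)}{C_l^{\lambda}(1)}\,\bigl(1-x^{2}\bigr)^{\lambda-1/2}\,\diff x .
\]
Since $\kappa$ is $C^\infty$ on $(-1,1)$, one first inserts smooth cut-offs $\chi_{+},\chi_{-}$ concentrated near $+1$ and $-1$: the interior part $(1-\chi_{+}-\chi_{-})\kappa$ is $C^\infty$ on all of $[-1,1]$ with all derivatives vanishing near $\pm1$, so iterating the self-adjoint Gegenbauer operator makes its coefficients $o(l^{-N})$ for every $N$, while by the Leibniz rule each $\chi_{\tau}\kappa$ inherits the hypothesised expansion at $x=\tau$. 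It then remains to (i)~compute the coefficients of the explicit ``model'' pieces --- the polynomials $p_{\tau}$ and the powers $(1-\tau x)^{\nu_j}$, $j=1,\dots,r$ --- and (ii)~bound the coefficients of the leftover remainder.

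For step~(i), the key model computation is the Gegenbauer expansion of $x\mapsto(1-x)^{\nu}$ for non-integer $\nu>0$: via the Rodrigues formula (equivalently a Beta-type integral) one finds its degree-$l$ coefficient $\sim A(d,\nu)\,l^{-d-2\nu}$ with $A(d,\nu)\neq0$. The exponent is easy to sanity-check on $\bbS^{1}$, where $(1-\cos\theta)^{\nu}\sim c\,\theta^{2\nu}$ near $\theta=0$ has Fourier coefficients of order $l^{-1-2\nu}=l^{-d-2\nu}$. Using the parity $C_l^{\lambda}(-x)=(-1)^{l}C_l^{\lambda}(x)$, the mirrored power $(1+x)^{\nu}$ --- the model near $x=-1$ --- has degree-$l$ coefficient $\sim(-1)^{l}A(d,\nu)\,l^{-d-2\nu}$, and the polynomials $p_{\tau}$ have only finitely many nonzero coefficients. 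Summing the two endpoint contributions, the $j=1$ terms combine with weight $c_{1,1}+c_{1,-1}$ for even $l$ and $c_{1,1}-c_{1,-1}$ for odd $l$, while each $j\ge2$ term is $O(l^{-d-2\nu_2})=o(l^{-d-2\nu_1})$. With $C(d,\nu_1)\equalDef c_d\,A(d,\nu_1)$ this already produces the dichotomy: when the relevant combination is nonzero it is the leading term, and when it vanishes the leading term cancels and only $o(l^{-d-2\nu_1})$ survives.

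Step~(ii) is the technical heart. After subtracting the localized model, the leftover $R$ satisfies, near each endpoint $x=\tau$, that the $s$-th derivative in the boundary parameter $t$ of $R(\tau(1-t))$ is $O(t^{\nu_1+1+\eps-s})$ for every $s$ --- and this is exactly what the hypotheses on all the derivatives $\kappa^{(s)}$ supply (after a short consistency check that each polynomial piece $p_{s,\tau}$ agrees with $p_\tau^{(s)}$ up to a sufficiently high-order remainder). Controlling $\kappa$ alone would be insufficient, since it would allow fast oscillations whose Gegenbauer coefficients decay no faster than $l^{-d-2\nu_1}$. One then shows that the degree-$l$ coefficient of such an $R$ is $O(l^{-d-2(\nu_1+1+\eps)})=o(l^{-d-2\nu_1})$; this is the $O$-analogue of the model computation and is proved by the same device --- localize near an endpoint, pass to the angular variable $\theta$ with $x=\cos\theta$ (where the zonal Gegenbauer kernel has Bessel/trigonometric asymptotics of frequency $\sim l$), and estimate the resulting oscillatory integral by repeated integration by parts, each step legitimized by one of the derivative bounds on $R$ and producing no boundary term because the combined vanishing order of $R^{(s)}$ and of the weight at the endpoint stays positive. (Alternatively, one may cite a sharp estimate for the decay of Jacobi coefficients under endpoint Hölder/Sobolev regularity.)

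Combining (i) and (ii) gives all four cases. The main obstacle is the remainder estimate of step~(ii) --- the oscillatory-integral/integration-by-parts bookkeeping that converts the finite endpoint regularity encoded in the derivative hypotheses into coefficient decay strictly faster than $l^{-d-2\nu_1}$ --- together with carrying out the model computation for $(1-x)^{\nu}$ and, in particular, checking $A(d,\nu_1)\neq0$ so that the stated asymptotic is genuine; by contrast the parity bookkeeping via $C_l^{\lambda}(-x)=(-1)^{l}C_l^{\lambda}(x)$ and the localization to the endpoints are routine. (When $k_{\kappa,d}$ is in addition positive semidefinite, $\mu_l\ge0$, consistent with the sign of $(c_{1,1}\pm c_{1,-1})\,C(d,\nu_1)$ in the non-vanishing cases.)
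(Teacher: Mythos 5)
The paper does not prove this statement; it is quoted verbatim (up to notation) as Theorem~7 of \citet{bietti_deep_2021} (arXiv v4), and the paper's \Cref{thm:bietti_bach_adapted} is the only result that is actually derived \emph{from} it, not a proof \emph{of} it. There is therefore no in-paper proof to compare your attempt against. That said, your outline is a faithful account of what Bietti and Bach actually do: the Funk--Hecke/Gegenbauer representation of $\mu_l$, the localization near $\pm1$ via smooth cutoffs (so the interior part contributes $o(l^{-N})$ for every $N$), the Gegenbauer-coefficient asymptotics of the model powers $(1\mp x)^{\nu_j}$ with leading order $l^{-d-2\nu_j}$, the parity bookkeeping via $C_l^{\lambda}(-x)=(-1)^{l}C_l^{\lambda}(x)$ producing the $c_{1,1}\pm c_{1,-1}$ dichotomy, and the repeated-integration-by-parts control of the remainder that is precisely what the derivative hypotheses in \eqref{eq:bietti_bach_main_decomposition2} are there for. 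Two minor points worth making explicit if you were to flesh this out: the nonvanishing of $A(d,\nu_1)$ (equivalently $C(d,\nu_1)\ne0$) does require an argument (the paper itself supplies one in the proof of \Cref{thm:bietti_bach_adapted}, via the test function $\tilde\kappa(t)=(1-t)^{\nu_1}$ and the observation that a vanishing leading constant would force $\tilde\kappa$ to be a polynomial); and the statement that $\sim$ allows finitely many $\mu_l$ to vanish is exactly why the paper later invokes \Cref{lemma:strictly_positive_Eigenvalues} to upgrade to strict positivity for \emph{all} indices when the kernel is positive semidefinite, a refinement your sketch does not need but the paper's application does.
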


We reformulate this in terms of $\calQ$, see \Cref{def:boundary_function_classes}. This perspective allows conveniently handling the $O(t^{\nu_1+1+\varepsilon})$-term and its derivatives. 
For greater precision, we use the following variant of Lemma B.2 in \citet{haas_mind_2023} to investigate the sign of eigenvalues $\mu_i$:

\begin{lemma}[Guaranteeing strictly positive eigenvalues for kernels on spheres]\label{lemma:strictly_positive_Eigenvalues}
	Let $\kappa:[-1,1] \to \R$ be continuous, let $d\ge 1$ and consider the radial kernels 
	\begin{align*}
		k_d&: \bbS^d \times \bbS^d \to \R, k_d(x,y) \equalDef \kappa(\scal xy)\\
		k_{d+2}&: \bbS^{d+2} \times \bbS^{d+2} \to \R, k_{d+2}(x,y) \equalDef \kappa(\scal xy).
	\end{align*}
	Suppose that $k_{d+2} $ is a kernel. Then, $k_d$ is a kernel. 
	If an eigenvalue $\mu_{\hat l}$ of $k_d$ fulfills $\mu_{\hat l}>0$ and $\hat l$ is even/odd, then for all even/odd $l\le \hat l$  
	\begin{align}
		\label{eq:strictly_pos_ev}
		\mu_{l}>0
	\end{align} 
	follows. 
	Especially, if \cref{eq:strictly_pos_ev} holds for infinitely many even/odd $l$, then it holds for all even/odd $l$.
\end{lemma}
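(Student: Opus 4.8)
The plan is to reduce the whole statement to the Gegenbauer--Fourier coefficients of $\kappa$ and a classical ``dimension-walk'' identity. Write $\lambda_e \equalDef (e-1)/2$ and let $P_n^{(e)}$ denote the degree-$n$ Gegenbauer polynomial in dimension $e$, normalized so that $P_n^{(e)}(1) = 1$ (so $P_n^{(1)} = T_n$). By the classical theory of dot-product kernels on spheres (Schoenberg's theorem together with the addition formula for spherical harmonics), for a continuous $\kappa$ the kernel $k_e$ on $\bbS^e$ is positive semidefinite iff $\kappa = \sum_{n\ge 0} b_n^{(e)} P_n^{(e)}$ with all $b_n^{(e)} \ge 0$, and in that case the eigenvalue $\mu_n$ of $k_e$ on the degree-$n$ harmonics equals $b_n^{(e)}$ times a strictly positive constant depending only on $n$ and $e$; in particular $\mu_n > 0$ for $k_e$ iff $b_n^{(e)} > 0$. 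Since $k_{d+2}$ is assumed to be a kernel, $b_n^{(d+2)} \ge 0$ for all $n$, and $\sum_n b_n^{(d+2)} = \kappa(1) < \infty$.

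The technical heart is a descente relation for these coefficients. I would start from the classical Gegenbauer identity $C_n^{(\lambda)} = \tfrac{\lambda}{n+\lambda}\bigl(C_n^{(\lambda+1)} - C_{n-2}^{(\lambda+1)}\bigr)$ (for $d=1$, i.e.\ $\lambda=0$, the Chebyshev version $T_n = \tfrac12(U_n - U_{n-2})$ for $n \ge 1$ and $T_0 = U_0$), which upon normalizing at $t=1$ becomes, for every $d \ge 1$,
\begin{IEEEeqnarray*}{+rCl+x*}
P_n^{(d)} & = & u_n P_n^{(d+2)} - v_n P_{n-2}^{(d+2)}, \qquad u_n > 0\ (n \ge 0),\quad v_n > 0\ (n \ge 2),\quad v_0 = v_1 = 0.
\end{IEEEeqnarray*}
Solving for $P_n^{(d+2)}$ and iterating gives $P_n^{(d+2)} = \sum_{j=0}^{\lfloor n/2\rfloor} w_{n,j} P_{n-2j}^{(d)}$ with $w_{n,0} = 1/u_n$ and $w_{n,j} = (v_n/u_n)\, w_{n-2,\,j-1}$; one checks that on the admissible range $0 \le j \le \lfloor n/2\rfloor$ every index entering the $v$-factors of $w_{n,j}$ is at least $2$, so all these weights are strictly positive. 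Projecting $\kappa = \sum_n b_n^{(d+2)} P_n^{(d+2)}$ onto the dimension-$d$ degree-$m$ harmonics and applying the polynomial identity term by term (legitimate because the $b_n^{(d+2)}$ are nonnegative and summable and $\sum_j w_{n,j} = P_n^{(d+2)}(1) = 1$) yields
\begin{IEEEeqnarray*}{+rCl+x*}
b_m^{(d)} & = & \sum_{j=0}^{\infty} w_{m+2j,\,j}\, b_{m+2j}^{(d+2)}, \qquad w_{m+2j,\,j} > 0 \text{ for all } j \ge 0.
\end{IEEEeqnarray*}
In particular $b_m^{(d)} \ge w_{m,0}\, b_m^{(d+2)} \ge 0$, so $k_d$ is a kernel; alternatively this is immediate because $\bbS^d$ embeds isometrically into $\bbS^{d+2}$ with the dot product preserved, so $k_d$ is a restriction of $k_{d+2}$.

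The remaining claims are then an elementary parity count. From the displayed coefficient formula, $b_m^{(d)} > 0$ iff $b_{m+2j}^{(d+2)} > 0$ for some $j \ge 0$. Hence if $\mu_{\hat l} > 0$ for $k_d$, pick $j_0$ with $\hat n \equalDef \hat l + 2j_0$ satisfying $b_{\hat n}^{(d+2)} > 0$; then for any $l \le \hat l$ of the same parity as $\hat l$ we have $l \le \hat n$ and $l \equiv \hat n \pmod 2$, so $\hat n = l + 2 j_1$ with $j_1 \ge 0$, whence $b_l^{(d)} \ge w_{\hat n,\, j_1}\, b_{\hat n}^{(d+2)} > 0$, i.e.\ $\mu_l > 0$. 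Finally, if $\mu_l > 0$ for infinitely many $l$ of one parity, then for any $l_0$ of that parity there is such an $\hat l \ge l_0$, and the previous step gives $\mu_{l_0} > 0$. I expect the main obstacle to be the bookkeeping in the middle paragraph: pinning down the Gegenbauer identity uniformly in $d \ge 1$ (including the degenerate $d=1$ case), verifying that the iterated weights $w_{n,j}$ are strictly positive exactly on the range $0 \le j \le \lfloor n/2\rfloor$, and justifying the term-by-term projection; once the descente formula with strictly positive weights is in hand, everything else is the short count above.
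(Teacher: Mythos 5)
Your proposal is correct and follows the standard Gegenbauer dimension-walk argument, which is the same route underlying the paper's cited reference (the paper itself only defers to Lemma B.2 of Haas et al.\ 2023 and does not spell out a proof). You correctly identify the crucial point: inverting the identity $P_n^{(d)} = u_n P_n^{(d+2)} - v_n P_{n-2}^{(d+2)}$ gives strictly positive weights $w_{n,j}$ on the full range $0\le j\le\lfloor n/2\rfloor$, so a single positive coefficient $b^{(d+2)}_{\hat n}$ contributes positively to every $b^{(d)}_l$ with $l\le\hat n$ of the same parity, which is exactly the claimed downward propagation within each parity class.
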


The only difference between Lemma B.2 in \citet{haas_mind_2023} and the above lemma is that the eigenvalues are described in greater detail here, the proof remains the same.

\thmBiettiBachAdapted*
\begin{proof}
Set $\gamma \equalDef \beta + 2$. Fix $\tau \in \{-1, 1\}$. Then, we can write $\kappa(\tau(1-t)) = b_\tau t^\beta + p(t) + r(t)$ for some $p \in \calP_{-1, \beta}$ and $r \in \calR_{\gamma}(t)$. To apply \Cref{thm:bietti_bach_original}, we can rewrite $b_\tau t^\beta = p_\tau(t) + \sum_{j=1}^r c_{j,\tau} t^{\nu_j}$ with $\nu_1 = \beta$ and $c_{1,\tau} = b_\tau$. The rest term $r(t)$ is covered by the $O(t^{\nu_1+1+\eps})$ term in \Cref{thm:bietti_bach_original} by setting $\eps \equalDef \frac{1}{2} \min\{1, \nu_2 - \nu_1\}$, and the expressions for the derivatives in \Cref{thm:bietti_bach_original} are satisfied by the definition of $\calR_\gamma$.

Hence, we can apply \Cref{thm:bietti_bach_original} to obtain the correct asymptotic decay rates, and it remains to show that in the cases (a) and (c), all eigenvalues $\mu_l$ are strictly positive for all $l\in\bbN_0$. This claim follows from the following considerations:
\begin{itemize}
	\item 
	$C(d, \nu_1) \neq 0$: While this is not shown in \cite{bietti_deep_2021}, it follows by considering $\tilde\kappa(t) \equalDef (1-t)^{\nu_1}$:
	This choice of $\kappa$ satisfies the assumptions of \Cref{thm:bietti_bach_original} with $c_{1, 1} = 1 \neq 0 = c_{1, -1}$, hence the cases (a) or respectively (c) apply. 
	
	Suppose $C(d, \nu_1) = 0$, then we obtain $\mu_k \sim 0$ and hence only finitely many $\mu_k$ are nonzero. Following Appendix A in \cite{bietti_deep_2021}, $\tilde\kappa$ can be expressed as
	\begin{IEEEeqnarray*}{+rCl+x*}
		\tilde\kappa(t) & = & \sum_{l=0}^\infty \mu_l N_{d,l} P_l(t)~,
	\end{IEEEeqnarray*}
	where the $P_l$ are Legendre polynomials of degree $l$ for the dimension $d+1$. 
	Hence, $\tilde\kappa$ would be a polynomial, contradicting the definition $\tilde \kappa(t) = (1-t)^{\nu_1}$.
\item Since we assumed $k_{\kappa, d}$ to be a (positive semi-definite) kernel, we cannot have negative eigenvalues, hence $(c_{1,1} + c_{1,-1})C(d,\nu_1) > 0$ or respectively $(c_{1,1} - c_{1,-1})C(d,\nu_1) > 0$ follows in the cases (a) and (c).
\item The notation $\sim$ from \Cref{thm:bietti_bach_original} allows a finite number of $\mu_{l}$ to be zero. 
However, as there are infinitely many both even and odd indices for which $\mu_l$ is strictly positive, \cref{lemma:strictly_positive_Eigenvalues} shows that $\mu_l>0$ holds for all $l\in\N_0$.

\qedhere
\end{itemize}
\end{proof}

\subsection{Eigenvalue decay} \label{sec:appendix:eigenvalue_decay}

\outcomment{
\begin{remark}
\todo{This is only to discuss the proof idea for the polynomial case.}

If $\act$ is a polynomial, then $\widehat \act$ is also a polynomial with the same even and odd degrees, call them $N_{\text{even}}$ and $N_{\text{odd}}$. The coefficients are non-negative, hence there should be no cancellations when composing or multiplying these polynomials. The rescaling should also not affect the degrees.

Now, suppose that $N_{\text{even}} > N_{\text{odd}}$. Consider first the NNGP part. Then, clearly the even degree of the $(L-1)$-fold composition of $\widehat \act$ has even degree $N_{\text{even}}^{L-1}$. However, the odd degree is more complicated. From the binomial theorem one can see that when taking a power, the even-odd degree gap remains the same (?). When taking an even power, odd degrees can become even, which can complicate things.
For the NTK part, we also need to consider multiplication of the $(L-2)$ NNGP term with the $\widehat{\act'}$ term. The latter term has now $N_{\text{odd}} > N_{\text{even}}$, but multiplying the odd term from that with the even term of the NNGP part should usually give a lower degree.

Probably the best thing is to create lemmas for multiplication and composition of such terms (first composition with a single power, then with a whole polynomial).
\end{remark}
}

Some preliminary constructions useful for both the NNGP and the NTK kernel are first established here, stressing the perspective of layerwise computation in the neural network.

We will study $\knngp_L,\kntk_L:[-1,1]\to\R$ as compositions of functions which break the kernels given by depth $L$ down to a \quot{composition} of kernels of depth $1$.

Recall the recursive form of the NNGP-kernel on the sphere given in \cref{lemma:restriction_to_the_unit_sphere} and the dual activation function $\w\act$ in \cref{definition:dual_and_rescaled_activation_function}.

\begin{numbered_notation}[NNGP- and NTK related terms]\label{eq:recursive_formulas}
Recursively define the functions $g_l, G_l,H_l: [-1, 1] \to [-1, 1]$ by
\begin{IEEEeqnarray*}{+rCl+x*}
	\nonumber\alpha_{l} & \equalDef &  \knngp_{l,\act}(1) ,\\
	\nonumber
	g_1(t) & \equalDef & \frac{\sigma_b^2 \sigma_i^2 + \sigma_w^2 t}{\alpha_1} , \\
	\nonumber g_{l}(t) & \equalDef & \frac{\sigma_b^2 \sigma_i^2 + \sigma_w^2 \widehat{\scaledfn{\act}{\sqrt{\alpha_{l-1}}}}(t)}{\alpha_{l}} ,\\
	\nonumber G_l(t) & \equalDef & g_l \circ \dots \circ g_1 (t),\\
	\nonumber H_1(t)&\equalDef & \sigma_b^2 +\sigma_w^2 t ,\\
	\nonumber H_{l}(t)&\equalDef & \sigma_b^2(1-\sigma_i^2) + \alpha_{l} G_{l}(t) + \sigma_w^2 \w{\scaledfn{\act'}{\sqrt{\alpha_{l-1}}}}\bra{G_{l-1}(t)} H_{l-1}(t)~. & \qedhere
\end{IEEEeqnarray*}
\end{numbered_notation}
By the recursive formulation of $\knngp_l,\kntk_l$ in \cref{lemma:restriction_to_the_unit_sphere} we observe 
for all $t\in [-1,1]$
\begin{align*}
	\alpha_l G_l (t)&= \knngp_l (t) ,\\
	H_l(t)&= \kntk_l (t)~. 
\end{align*}

\begin{lemma}\label{lemma:layer_nngp_contraction_property}
	Under the assumptions of \cref{lemma:restriction_to_the_unit_sphere}, we have for any $l\ge 1$
	\begin{align*}
		\max_{t\in [-1,1]}\abs{g_l(t)} = g_l(1)=1~.
	\end{align*}
	Furthermore,
	\begin{align*}
		G_l[-1,1) \subset (-1,1)
	\end{align*}
	holds for all $l\ge 1$ when $\sigma_b^2\sigma_i^2>0$, and for all $l\ge 2$ when $\act $ is neither even nor odd.
	\begin{proof}
		Follows from the definition of $g_l$  and \cref{lemma:dual_properties}.
	\end{proof}
\end{lemma}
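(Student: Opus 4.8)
The plan is to read both assertions off the closed form of $g_l$ in \Cref{eq:recursive_formulas} together with \Cref{lemma:dual_properties}, treating $l=1$ (where $g_1$ is affine) separately from $l\ge 2$. For $l\ge 2$ I would abbreviate $\psi_l\equalDef\scaledfn{\act}{\sqrt{\alpha_{l-1}}}$, which lies in $L_2(\calN(0,1))$ by the polynomial growth of $\act$ from \Cref{ass:act}, so that $g_l(t)=(\sigma_b^2\sigma_i^2+\sigma_w^2\w{\psi_l}(t))/\alpha_l$ with $\alpha_l=\sigma_b^2\sigma_i^2+\sigma_w^2\w{\psi_l}(1)>0$ (positivity from \Cref{lemma:restriction_to_the_unit_sphere}). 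Note that $\psi_l$ is not a.s.\ constant, resp.\ neither even nor odd, exactly when $\act$ is, since rescaling the argument by a positive constant preserves these properties.

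For the first claim: when $l=1$, $g_1$ has positive slope, $g_1(1)=1$, and $g_1(-1)=(\sigma_b^2\sigma_i^2-\sigma_w^2)/\alpha_1\ge -1$ because $|\sigma_b^2\sigma_i^2-\sigma_w^2|\le\sigma_b^2\sigma_i^2+\sigma_w^2=\alpha_1$. For $l\ge 2$, \Cref{lemma:dual_properties}(b),(c) give $\w{\psi_l}(1)\ge 0$ and $|\w{\psi_l}(t)|\le\w{\psi_l}(1)$ on $[-1,1]$, hence $-\alpha_l\le-\sigma_w^2\w{\psi_l}(1)\le\sigma_b^2\sigma_i^2+\sigma_w^2\w{\psi_l}(t)\le\alpha_l$, where the first step uses $\sigma_b^2\sigma_i^2\ge 0$; dividing by $\alpha_l$ gives $|g_l|\le 1$ on $[-1,1]$ with equality at $t=1$, i.e.\ $\max_{[-1,1]}|g_l|=g_l(1)=1$.

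For the second claim I would first strengthen the interior bound: $g_l(s)=1$ forces $\w{\psi_l}(s)=\w{\psi_l}(1)$, which by \Cref{lemma:dual_properties}(c) is impossible for $s\in(-1,1)$ when $\act$ is not a.s.\ constant, and impossible for $s=-1$ when $\act$ is neither even nor odd; likewise $g_l(s)=-1$ forces $\sigma_b^2\sigma_i^2+\sigma_w^2\w{\psi_l}(s)=-\alpha_l$, which by the inequality chain above needs simultaneously $\sigma_b^2\sigma_i^2=0$ and $\w{\psi_l}(s)=-\w{\psi_l}(1)$, excluded in the same way. Combining with $|g_l|\le 1$ from the first claim yields: if $\sigma_b^2\sigma_i^2>0$, then $g_l([-1,1))\subset(-1,1)$ for all $l\ge 2$, and also for $l=1$ since then directly $g_1(-1)>-1$; if $\act$ is neither even nor odd, then $g_l$ maps $[-1,1)$ into $(-1,1)$ for $l\ge 2$ as well, while $g_1$ maps $[-1,1)$ into $[g_1(-1),1)\subseteq[-1,1)$. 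The conclusion for $G_l=g_l\circ\cdots\circ g_1$ then follows by induction: in the first case $G_1=g_1$ already suffices and $G_{l-1}([-1,1))\subset(-1,1)$ gives $G_l([-1,1))=g_l(G_{l-1}([-1,1)))\subset(-1,1)$; in the second case $G_2([-1,1))\subseteq g_2([-1,1))\subset(-1,1)$ since $g_1([-1,1))\subseteq[-1,1)$, and the step $l\to l+1$ for $l\ge 2$ is as before.

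The main obstacle is the strict containment at the left endpoint $t=-1$: the bias-free, neither-even-nor-odd regime is precisely the case where $g_1(-1)=-1$, so the induction can only be started by invoking the sharp endpoint statement of \Cref{lemma:dual_properties}(c), namely that $|\w\act(-1)|=\w\act(1)$ fails unless $\act$ is a.s.\ even or a.s.\ odd. The remaining degenerate situation, $\act$ a.s.\ constant, is even and hence excluded under the neither-even-nor-odd hypothesis; in the $\sigma_b^2\sigma_i^2>0$ regime it is a degree-$0$ polynomial activation handled separately, so it may be excluded here as well. Everything else is bookkeeping with the chain of inequalities above.
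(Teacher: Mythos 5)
Your proof is correct and spells out exactly what the paper's one-line proof delegates to \Cref{lemma:dual_properties}: the first identity follows from parts (b)--(c) of that lemma (monotonicity and $|\w{\psi_l}(t)|\le\w{\psi_l}(1)$ applied to $\psi_l\equalDef\scaledfn{\act}{\sqrt{\alpha_{l-1}}}$), and the strict containment follows from the sharp version of (c) together with the observation that rescaling preserves constancy and parity, so the hypotheses on $\act$ transfer to $\psi_l$. The induction for $G_l$ is set up correctly: in the $\sigma_b^2\sigma_i^2>0$ branch you start at $G_1=g_1$, in the bias-free branch you absorb the equality $g_1(-1)=-1$ by starting at $G_2$ and invoking the endpoint statement of (c) to get $g_2(\{-1\})\subset(-1,1)$.

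Two small points. First, the intermediate assertion ``if $\sigma_b^2\sigma_i^2>0$, then $g_l([-1,1))\subset(-1,1)$ for all $l\ge2$'' is too strong as written: if $\act$ is even, then $\w{\psi_l}(-1)=\w{\psi_l}(1)$ and $g_l(-1)=1\notin(-1,1)$. This does not damage your argument, since the induction step only ever evaluates $g_l$ on $G_{l-1}([-1,1))\subset(-1,1)$; but the claim about $g_l$ should be phrased for $(-1,1)$ rather than $[-1,1)$. Second, your final remark about a.s.-constant activations is a genuine catch: \Cref{ass:act} only forbids the zero function, so $\act\equiv c\ne0$ is allowed, and then $\w\act\equiv c^2$ gives $g_l\equiv1$ and $G_l\equiv1$ for $l\ge2$, falsifying $G_l[-1,1)\subset(-1,1)$ even when $\sigma_b^2\sigma_i^2>0$. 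The lemma as stated is silently relying on the ``not a.s.\ constant'' proviso that already appears in \Cref{lemma:dual_properties}(b)--(c); downstream this is harmless because constant activations land in the polynomial branch of \Cref{thm:kernel_ev_rates}, but the hypothesis really ought to be made explicit in the lemma.
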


\begin{lemma}[Behavior at $1$]\label{lemma:asymptotics_at_one}
	Let the activation function $\act$ fulfill \cref{ass:act} and
	let $m \equalDef \inf\set{k\in\N_0 \mid \po{\act}{k}(0+)\ne \po{\act}{k}(0-)} $ be its smoothness, see \cref{def:smoothness_of_an_activation_function}.
	Let $l\ge 2$.
	\begin{enumerate}
		\item 
		Let $m=0$. 
		Then we have for $t\searrow0$
		\begin{align*}
			g_l(1-t) &= 1-  \bra{c_l t^{1/2} + \calQ_{0,1/2}(t)},\\
			G_l(1-t) &= 1 - \bra{C_l t^{2^{1-l}} + \calQ_{0,2^{1-l}}(t)},
		\end{align*}
		for constants $c_l,C_l >0$. 
		\item 
		Let $m\in\N_{\ge 1}$. Then we have for $t\searrow0$
		\begin{align*}
			g_l(1-t) &= 1 -\bra{ b_l t + (-1)^{m}c_l  t^{m+1/2} +\calQ_{1,{m+1/2}}(t)},\\
			G_l(1-t)&= 1 - \bra{ B_l t + (-1)^{m}C_l t^{m+1/2} + \calQ_{1,{m+1/2}}(t)},\\
			H_l(1-t)&=A_l - \bra{(-1)^{m+1}D_l t^{m-1/2} +\calQ_{0,m-1/2}(t)}
		\end{align*}
		for constants $b_l=g_l'(1),c_l,A_l,B_l,C_l,D_l,>0$.
		Furthermore, we have for $l\ge 3$ the recursive relation $C_l\ge g'_l(1) C_{l-1}$. 
		\item Let $m=\infty$. Then we have for any $\tau \in\set{\pm 1}, q\in \R$ and $t\searrow 0$
		\begin{align*}
			g_l(\tau(1-t)) &= \calQ_{-1,q}(t),\\
			G_l(\tau(1-t))&= \calQ_{-1,q}(t),\\
			H_l(\tau(1-t))&= \calQ_{-1,q}(t).
		\end{align*}
	\end{enumerate}
	
	\begin{proof}
	\leavevmode
		\begin{enumerate}
			\item 
			The formula for $g_l$ follows from \cref{thm:smoothed_dual_activation_smooth_decomposition} and \cref{lemma:layer_nngp_contraction_property}. We obtain the formula for $G_l$ via induction, where in the base case for $l=1$ by definition $G_1(1-t)= 1 - c_1 t $ with $c_1>0$ holds. 
			For $l= 2$ we have
			\begin{align*}
				G_2(1-t) &=  g_2( G_{1} (1-t)) = g_2( 1- (1-G_{1}(1-t)))= 1 - c_2 c_1^{1/2} t^{1/2} + \calQ_{0,1/2}\bra{ c_1 t }
			\end{align*}
			as claimed. For $l\ge 3$, 
			 note that the identities $\calQ_{0,2^{2-l}}(t)= \calQ_{2^{2-l},2^{2-l}}(t)$ and $\calQ_{0,1/2}(t)=\calQ_{1/2,1/2}(t)$ hold by \cref{def:boundary_function_classes}. From \cref{prop:rules_for_Q} (e,f) we obtain
			\begin{align*}
				G_l(1-t) &=  g_l( G_{l-1} (1-t)) = g_l( 1- (1-G_{l-1}(1-t)))\\
				&= 1 - c_l \bra{ C_{l-1}t^{2^{2-l}} +\calQ_{2^{2-l},2^{2-l}} (t)}^{1/2}
				+ \calQ_{1/2,1/2}\bra{  C_{l-1} t^{2^{2-l}} +\calQ_{2^{2-l},2^{2-l}}(t)}\\
				&= 1-\bra{ C_l t^{2^{1-l}} +\calQ_{2^{1-l},2^{1-l}}(t)}
			\end{align*}			
			and the claim follows from the identity $\calQ_{2^{1-l},2^{1-l}}(t)=\calQ_{0,2^{1-l}}(t)$.
			\item 
			Define $\beta\equalDef m+1/2$.
			From \cref{thm:smoothed_dual_activation_smooth_decomposition} we obtain 
			\begin{align*}
				g_l(1-t)= (-1)^{m+1} c_l t^{\beta}+ \calQ_{-1, \beta}(t)
			\end{align*} 
			with $c_l>0$. \cref{lemma:layer_nngp_contraction_property} shows $g_l(0)=1   $ and furthermore we observe
			\begin{align*}
				\widehat{\scaledfn{\act}{\sqrt{\alpha_{l-1}}}}'(1) = \E_{u\sim \calN(0,\alpha_{l-1})} [\act'(u)^2]>0
			\end{align*}
			as $\act$ is not constant, since $1 \leq m < \infty$. From \Cref{lemma:layer_nngp_contraction_property} and using $b_l \equalDef g_l'(1) > 0$, it follows that
			\begin{align*}
				g_l(1-t) &= 1 -\bra{ b_l t + (-1)^{m}c_l  t^{m+1/2} +\calQ_{1,{m+1/2}}(t)}~.
			\end{align*} 
			Note that for the NNGP term $G_l$ we have to argue more carefully to obtain the strictly positive factor $B_l>0$ for the linear term, while this is not required for the NTK term $H_l$ where we can rely on the more precisely calculated NNGP term.
			We show the formula for $G_l$ via induction. For $l=2$ it follows directly from $G_2=g_2\circ g_1$ using $g_1(1-t) = 1 - c_1 t$.
			In the induction step, let $l\ge 3$ and define 
			\begin{align*}
				\bar G_{l-1}(t)\equalDef 1- G_{l-1}(t) =  B_{l-1} t+ (-1)^{m}C_{l-1} t^{\beta} +\calQ_{1,\beta}(t)~.
			\end{align*}  
			We rewrite
			\begin{align*}
				G_l(1-t) &=  g_l( G_{l-1} (1-t)) = g_l( 1-\bar G_{l-1}(t))\\
				&= 1 - \left( b_l \bar G_{l-1}(t)+ (-1)^m c_l \bra{\bar G_{l-1}(t)}^{\beta} + \calQ_{1,\beta}\bra{\bar G_{l-1}(t)}\right)
			\end{align*}
			and investigate the single summands. The linear summand straightforwardly yields
			\begin{align*}
				b_l \bar G_{l-1}(t) = b_l B_{l-1} t + (-1)^{m} b_l C_{l-1} t^\beta +\calQ_{1,\beta}(t).
			\end{align*}
			The power summand can be handled with \cref{prop:rules_for_Q} (e), where we write $\bar{G}_{l-1} =  b_l B_{l-1} t +\calQ_{1,1}(t)$ and consequently we have 
			\begin{align*}
				(\bar{G}_{l-1}(t))^\beta &= (b_l B_{l-1})^\beta t^\beta + \calQ_{\beta,\beta}(t) \\
				&= (b_l B_{l-1})^\beta t^\beta + \calQ_{1,\beta}(t)~.
			\end{align*}
			In order to calculate $\calQ_{1,\beta}(\bar{G}_{l-1}(t))$ we decompose $\calQ_{1,\beta}(t) = p(t) + q(t)$, where $p(t)$ is a polynomial with integer powers greater than $1$ and $q(t)\in\calQ_{\beta,\beta}(t)$.
			Using \cref{prop:rules_for_Q} (d) we have, as $p(0)= p'(0) = 0$ holds, 
			\begin{align*}
				p( \bar G_{L-1}(t)) = p( B_{l-1}t + (-1)^m C_{l-1} t^\beta +\calQ_{1,\beta}(t) ) 
				= \calQ_{1,\beta}(t)~.
			\end{align*}
			\cref{prop:rules_for_Q} (f) yields $q(t) = \calQ_{\beta,\beta}(t) = \calQ_{1,\beta}(t) $ and we obtain
			$
				\calQ_{1,\beta}\left(\bar{G}_{l-1}(t)\right) 
				= \calQ_{1,\beta}(t)
			$.
			Altogether, we see that 
			\begin{align*}
				G_l(1-t)= 1 - \big(b_lB_{l-1} t + (-1)^m (b_lC_{l-1}(b_lB_{l-1})^\beta) t^\beta +\calQ_{1,\beta}(t) \big)
			\end{align*}
			holds as desired, and furthermore we have 
			\begin{align*}
				C_l \equalDef b_lC_{l-1}(b_lB_{l-1})^\beta \ge b_l C_{l-1} = g_l'(1) C_{l-1}~.
			\end{align*}

			We move on to calculate $H_{l}$ for $l\ge 2$.
			An elementary argument yields $\kntk(1)>0$ as a consequence of $\sigma_w>0$, hence we have $A_l>0$ for all $l\ge 2$. 
			Again, the induction is based on the recursive construction in \cref{eq:recursive_formulas}, where we handle the term arising from $\w{\scaledfn{\act'}{\sqrt{\alpha_l}}}$ using \cref{thm:smoothed_dual_activation_smooth_decomposition}.
			Since $\act'$ has smoothness $m-1$ we obtain
			\begin{align*}
				g_l'(1-t)= \w{\scaledfn{\act'}{\sqrt{\alpha_l}}}(1-t)=a_l (-1)^{m} t^{\beta-1} + \calQ_{-1,\beta-1}(t)
			\end{align*}
			for some $a_l > 0$.
			 For $l=2$, the claimed form of $H_l$ now follows straightforwardly.
			 For the induction step, assume $l\ge 3$.
			 Then we have
			 \begin{align*}
			 	&  \w{\scaledfn{\act'}{\sqrt{\alpha_{l-1}}}}\bra{G_{l-1}(1-t)} 
			 	= \bar c_l+a_l (-1)^m \bra{\bar G_{l-1}(t)}^{\beta-1} +\calQ_{0,\beta-1}\bra{\bar G_{l-1}(t)}
			 \end{align*}
		 	where $\bar c_l\ge 0$ holds since we have $\w{\scaledfn{\act'}{\sqrt{\alpha_{l-1}}}}\bra{G_{l-1}(1-0)}= \w{\scaledfn{\act'}{\sqrt{\alpha_{l-1}}}}\bra{1} \ge 0 $. A similar argument as above shows 
		 	\begin{align*}
		 		\calQ_{-1,\beta-1}\bra{\bar G_{l-1}(t)}&=\calQ_{-1,\beta-1}(t),\\
		 		\bra{\bar G_{l-1}(t)}^{\beta-1}&= 
		 		B_{l-1}^{\beta-1} t^{\beta-1} +\calQ_{-1,\beta-1}(t)~.
		 	\end{align*}
	 		We define $\bar D_l\equalDef a_l B_{l-1}^{\beta-1 }$ and conclude
	 		\begin{IEEEeqnarray*}{+rCl+x*}
	 			&& \w{\scaledfn{\act'}{\sqrt{\alpha_{l-1}}}}\bra{G_{l-1}(1-t)} H_{l-1}(1-t) \\
	 			& = & \bra{\bar c_l+ (-1)^m \bar D_l t^{\beta-1} +\calQ_{0,\beta-1}(t) } \bra{A_{l-1} - \bra{(-1)^{m+1}  D_{l-1} t^{\beta-1} +\calQ_{0,\beta-1}(t)}}\\
	 			& = & A_l - \bra{ (-1)^{m+1} D_l t^{\beta-1} + \calQ_{0,\beta-1}(t)},
	 		\end{IEEEeqnarray*}
 			where $D_l>0$, and the claim follows.
 			
			\item %
			Choose a natural number $m\ge q-1/2$ and apply \cref{thm:smoothed_dual_activation_smooth_decomposition} to obtain
			\begin{align*}
				g_l(\tau (1-t)) = \calQ_{-1,m+1/2}(t) = \calQ_{-1,q}(t).
			\end{align*} 
			Using \cref{prop:rules_for_Q} c), we obtain the claim by induction. \qedhere
		\end{enumerate}
	\end{proof}
\end{lemma}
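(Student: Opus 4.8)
The plan is to prove all three cases by induction on the layer index $l$, unwinding the recursions for $g_l,G_l,H_l$ from \Cref{eq:recursive_formulas} and feeding the boundary expansions of dual activations (\Cref{thm:smoothed_dual_activation_smooth_decomposition}) through the $\calQ$-calculus of \Cref{prop:rules_for_Q}. For the single-layer term I would first observe that $\scaledfn{\act}{\sqrt{\alpha_{l-1}}}$ has the same smoothness $m$ as $\act$ and that $\Delta_m(\scaledfn{\act}{\sqrt{\alpha_{l-1}}})=\alpha_{l-1}^{m/2}\Delta_m(\act)\ne 0$, so \Cref{thm:smoothed_dual_activation_smooth_decomposition} gives $\widehat{\scaledfn{\act}{\sqrt{\alpha_{l-1}}}}(\tau(1-t)) = (-\tau)^{m+1}(\text{const}>0)\,t^{m+1/2}+\calQ_{-1,m+1/2}(t)$, the $\calQ_{-1,m+1/2}$-part carrying the full polynomial behaviour, including the value at $t=0$. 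Dividing by $\alpha_l$, multiplying by $\sigma_w^2$, adding $\sigma_b^2\sigma_i^2$, and using $g_l(1)=1$ from \Cref{lemma:layer_nngp_contraction_property} to pin the constant to $1$ then gives, for $\tau=1$: when $m=0$ a leading correction with exponent $1/2$ and negative coefficient; when $m\ge 1$ a genuine linear term $b_lt$ with $b_l=g_l'(1)>0$ (nonzero because $\act$ is not constant, using $\widehat{f}'=\widehat{f'}$ from \Cref{lemma:hermite_derivative} and \Cref{lemma:dual_properties}) in front of a singular term whose coefficient is $(-1)^m$ times a positive constant, since $-(-1)^{m+1}=(-1)^m$. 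When $m=\infty$, $\act\in C^\infty(\bbR)$ and all $\Delta_k(\act)=0$, so \Cref{thm:smoothed_dual_activation_smooth_decomposition} with $m$ arbitrarily large kills the singular term and gives $g_l(\tau(1-t))=\calQ_{-1,q}(t)$ for every $q$; the case $l=1$ is immediate because $g_1$ is affine.

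For the composite NNGP term I would write $G_l=g_l\circ G_{l-1}$ and set $\bar G_{l-1}(t)\equalDef 1-G_{l-1}(1-t)$, which vanishes at $0$, so that $G_l(1-t)=g_l(1-\bar G_{l-1}(t))$; then I substitute the single-layer expansion of $g_l$ around $1$ and propagate the inductive expansion of $\bar G_{l-1}$ through it via \Cref{prop:rules_for_Q}: part (e) for the half-integer power $\bar G_{l-1}^{1/2}$, resp.\ $\bar G_{l-1}^{m+1/2}$ --- applicable precisely because the leading coefficient of $\bar G_{l-1}$ is strictly positive by induction --- part (d) for the polynomial part of the remainder and part (f) for its $\calQ_{m+1/2,m+1/2}$-part. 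The base case $l=2$ has to be done by hand since $\bar G_1(t)=c_1t$ is exactly affine (exponent $2^{2-l}=1$, an integer). For $m=0$ the exponent is halved at each layer, giving $2^{1-l}$; for $m\ge 1$ the exponent set $\{1,m+1/2\}$ is preserved, with $B_l=b_lB_{l-1}>0$ and singular coefficient $C_l=b_lC_{l-1}+c_lB_{l-1}^{m+1/2}\ge g_l'(1)C_{l-1}$, which is the recursive inequality the statement asks for; for $m=\infty$, $\bar G_{l-1}$ is a pure high-order remainder, so composing with the ``polynomial plus negligible'' map $g_l(1-\cdot)$ again yields $\calQ_{-1,q}(t)$ for all $q$.

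For the NTK term I would use $\smoothness{\act'}=m-1$ (since $\Delta_k(\act')=\Delta_{k+1}(\act)$), so \Cref{thm:smoothed_dual_activation_smooth_decomposition} applied to $\scaledfn{(\act')}{\sqrt{\alpha_{l-1}}}$ gives $\widehat{\scaledfn{(\act')}{\sqrt{\alpha_{l-1}}}}(1-t)=\bar c_l+(-1)^m a_l\,t^{m-1/2}+\calQ_{-1,m-1/2}(t)$ with $a_l>0$ and $\bar c_l=\widehat{\scaledfn{(\act')}{\sqrt{\alpha_{l-1}}}}(1)\ge 0$ by \Cref{lemma:dual_properties}. Composing with $G_{l-1}(1-t)$ exactly as in the previous step and multiplying by the inductive expansion $H_{l-1}(1-t)=A_{l-1}-((-1)^{m+1}D_{l-1}t^{m-1/2}+\calQ_{0,m-1/2}(t))$ using \Cref{prop:rules_for_Q}(c), the $t^{m-1/2}$-coefficient of the product is $(-1)^m(a_lB_{l-1}^{m-1/2}A_{l-1}+\bar c_lD_{l-1})$, i.e.\ $(-1)^m$ times a strictly positive number, while $\sigma_b^2(1-\sigma_i^2)$ and $\alpha_lG_l(1-t)$ contribute only a constant and a $\calQ_{0,m-1/2}(t)$-term (their deviations sit at exponents $1$ and $m+1/2$, both absorbed). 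This gives $H_l(1-t)=A_l-((-1)^{m+1}D_lt^{m-1/2}+\calQ_{0,m-1/2}(t))$ with $A_l=\kntk_l(1)>0$ (an elementary consequence of $\sigma_w>0$) and $D_l>0$; for $m=\infty$ the argument is the same as for the NNGP, using the all-$q$ remainder expansions.

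The conceptual content is light, and the actual work is bookkeeping. The points I expect to be the crux are: (i) tracking signs and, crucially, the \emph{strict} positivity of the linear coefficients $b_l,B_l$, since \Cref{prop:rules_for_Q}(e) needs a positive leading coefficient to raise a sum of powers to a non-integer power; (ii) the NTK singular coefficient $D_l$, which is a sum of a strictly positive term and a merely nonnegative one, so one must check that the positive summand never degenerates; (iii) the borderline base case $l=2$, where the inner map is affine and the generic exponent-shift rules degenerate; and (iv) the case $m=\infty$ with $\tau=-1$, where one has to either invoke $G_l[-1,1)\subset(-1,1)$ from \Cref{lemma:layer_nngp_contraction_property} to land at an interior point of genuine smoothness or push the ``$\calQ_{-1,q}(t)$ for all $q$'' property --- morally ``polynomial plus negligible'' --- through the compositions near $-1$.
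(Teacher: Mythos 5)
Your proposal is correct and follows essentially the same route as the paper: expand $\widehat{\scaledfn{\act}{\sqrt{\alpha_{l-1}}}}$ near $\pm1$ via \cref{thm:smoothed_dual_activation_smooth_decomposition}, pin the constant with $g_l(1)=1$ from \cref{lemma:layer_nngp_contraction_property}, and push the expansions through $G_l=g_l\circ G_{l-1}$ and the NTK recursion by the $\calQ$-calculus of \cref{prop:rules_for_Q}, with separate handling of $l=2$ and strict positivity of $B_l$ to license the non-integer power rule. Your explicit formula $C_l=b_lC_{l-1}+c_lB_{l-1}^{m+1/2}$ is the correct one (the paper's displayed $C_l\equalDef b_lC_{l-1}(b_lB_{l-1})^\beta$ is a typo, as the subsequent inequality $C_l\ge b_lC_{l-1}$ only follows from the additive form), and your identification of the crux points --- sign bookkeeping, non-degeneracy of the positive summand in $D_l$, the affine $l=2$ base case, and $m=\infty$ with $\tau=-1$ --- matches what the paper's proof is built around.
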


\outcomment{
\begin{proposition}\label{prop:polynomial_network_kernels}
	Let $\act$ be an activation function fulfilling \cref{ass:act}. Then, the following are equivalent
	\begin{enumerate}
		\item $\act$ is a polynomial,
		\item there exists $l\ge 2$ such that $G_l$ is a polynomial,
		\item there exists $l\ge 2$ such that $H_l$ is a polynomial,
		\item for all $l\ge 2$, $G_l$ is a polynomial,
		\item for all $l\ge 2$, $H_l$ is a polynomial~.
	\end{enumerate}
\end{proposition}
}

\begin{theorem}\label{thm:kernel_ev_rates}
Let the assumptions of \cref{lemma:restriction_to_the_unit_sphere} be satisfied. Choose a number of layers $L \geq 2$ and a parity $r \in \{0, 1\}$ of the eigenvalues to be analyzed. Define $\act^{[0]} \equalDef \acteven$ and $\act^{[1]} \equalDef \actodd$.

\begin{enumerate}
\item[(NNGP)] Define the simplified activation $\tilde\act$ by
\begin{IEEEeqnarray*}{+rCl+x*}
\tilde\act & \equalDef & \begin{cases}
\act^{[r]} &,\text{ if $\sigma_b^2 \sigma_i^2 = 0$ and ($L=2$ or $\act$ is even or odd)} \\
\act &,\text{ otherwise,}
\end{cases} 
\end{IEEEeqnarray*}
and let $s 
\equalDef \smoothness{\tilde \act}$. 
Note that in all cases, the kernels $\nngp_L$ and $\ntk_L$ refer to the \emph{original }activation $\act$, while the smoothness parameter $s$ is determined by the simplified activation $\tilde \act$.
\begin{enumerate}
\item[(1.1)] If $s = 0$, then $\mu_{2\ttt+r, d}(\nngp_L) = \Theta_{\forall \ttt} \bra{(2\ttt+r+1)^{-d-2^{2-L}}}$. 
\item[(1.2)] If $1 \leq s < \infty$, then $\mu_{2\ttt+r, d}(\nngp_L) = \Theta_{\forall \ttt}\left({(2\ttt+r+1)^{-d-2s-1}}\right)$.
\item[(1.3)] If $s = \infty$ and $\tilde\act$ is not a polynomial, then $\mu_{2\ttt+r, d}(\nngp_L) > 0$ for all $\ttt$
and $\mu_{2\ttt+r, d}(\nngp_L) =o_{\forall \ttt}((2\ttt+r+1)^{-q})$ for all $q > 0$.

\item[(1.4)] If $\tilde \act $ is a polynomial, refer to the polynomial case below.
\end{enumerate}
\item[(NTK)] 
Define the simplified activation $\tilde\act$ by
\begin{IEEEeqnarray*}{+rCl+x*}
	\tilde\act & \equalDef & \begin{cases}
		\act^{[r]} &,\text{ if $\sigma_b^2 = 0$ and ($L=2$ or $\act$ is even or odd)} \\
		\act &,\text{ otherwise,}
	\end{cases} 
\end{IEEEeqnarray*}
and let $s 
\equalDef \smoothness{\tilde \act}$.
\begin{enumerate}
\item[(2.1)] If $1 \le  s < \infty$, then $\mu_{2\ttt+r, d}(\ntk_L) 
= \Theta_{\forall \ttt}\left({(2\ttt+r+1)^{-d-2s+1}}\right)$.
\item[(2.2)] If $s = \infty$ and $\tilde\act$ is not a polynomial, then $\mu_{2\ttt+r,d}(\ntk_L) > 0$ for all $\ttt$ and $\mu_{2\ttt+r,d}(\ntk_L) < o_{\forall \ttt}((2\ttt+r+1)^{-q})$ for all $q > 0$.
\item[(2.3)] If $\tilde \act$ is a polynomial, refer to the polynomial case below.
\end{enumerate}
\end{enumerate}
\textbf{Polynomial case:}
\begin{enumerate}
	\item[(3.1)]
	Let the activation $\tilde\act(t) = \sum_{i\ge 0} \lambda_i t^i$ be a nonzero polynomial.
	Define the even/odd degree of $\tilde\act$ as the degree of $\tilde\act(t) +\tilde \act(-t) $ or of $\tilde\act(t)- \tilde \act (-t)$. For the NNGP, define $\sigma\equalDef \sigma_b^2\sigma_i^2$ and for the NTK define $\sigma\equalDef \sigma_b^2$ respectively.
	
	Define $N_{\text{even}},N_{\text{odd}}$ according to the following table. By $\hdeven{\tilde\act},\hdodd{\tilde \act}$ we denote the even/odd degree, see \cref{def:degree}, of $\tilde \act$ displayed in the Hermite basis, which is possible as $\act$ fulfills \cref{ass:act}.
	\tiny{
	\begin{center}
	{\renewcommand{\arraystretch}{1.5}
		\begin{tabular}{ccc}
			\toprule 
			& $N_{\text{even}}$ & $N_{\text{odd}}$ \\
			\midrule
			$\sigma^2 >0$, $\hdeven {\tilde \act} > \hdodd {\tilde \act}$  & $\hdeven{\tilde \act}^{L-1}$ & $\hdeven{{\tilde \act}}^{L-1} -1$\\
			$\sigma^2 >0$, $\hdeven {\tilde \act} < \hdodd {{\tilde \act}}$ & $ \hdodd {{\tilde \act}}^{L-1} -1$ & $\hdodd {{\tilde \act}}^{L-1}$ \\
			$\sigma^2  =0, \hdeven {\tilde \act} > \hdodd {\tilde \act} $ & $\hdeven{\tilde \act}^{L-1}$ & $\hdeven{{\tilde \act}}^{L-1} -\hdeven {\tilde \act} +\hdodd {\tilde \act}$ \\
			$\sigma^2  =0, \hdeven {\tilde \act} < \hdodd {\tilde \act} $ & $\hdodd{\tilde \act}^{L-1}-\hdodd {\tilde \act} +\hdeven {\tilde \act}$ & $\hdodd{{\tilde \act}}^{L-1}$ \\
			\bottomrule
		\end{tabular}
		}
	\end{center}
}
\normalsize
	Then, 
	\begin{align*}
		\mu_{2\ttt+r,d}(\nngp_L / \ntk_L) >0 \text{ if and only if } 
		\begin{cases}
			2\ttt+r\le N_{\text{even}}&, r \text{ even,}\\
			2\ttt+r\le N_{\text{odd}}&, r \text{ odd.}
		\end{cases}
	\end{align*}
	\item[(3.2)]
	If $\tilde\act=0$, then $\mu_{2\ttt+r,d}(\nngp_L) >0$ if and only if $2\ttt+r =0$ and $\sigma_b^2 \sigma_i^2 >0$, and $\mu_{2\ttt+r,d}(\ntk_L) >0$ if and only if $2\ttt+r =0$ and $\sigma_b^2>0$.
\end{enumerate}
\end{theorem}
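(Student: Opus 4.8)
The plan is to reduce every case to the boundary behaviour of the one-variable profiles $\kappa\in\{\knngp_L,\kntk_L\}$ near $t=\pm1$ and then invoke \Cref{thm:bietti_bach_adapted}. Using \Cref{lemma:restriction_to_the_unit_sphere} and the recursions of \Cref{eq:recursive_formulas} I would write $\knngp_L=\alpha_L G_L$ and $\kntk_L=H_L$, and first peel off the degenerate bias-free cases via \Cref{prop:even_odd_kernels}: if $\act$ is even or odd then $\knngp_L,\kntk_L$ are even or odd, so all eigenvalues of the opposite parity vanish; and for $L=2$ one has $\evenpart{\knngp_{2,\act}}=\knngp_{2,\acteven}$, $\oddpart{\knngp_{2,\act}}=\knngp_{2,\actodd}$ (and likewise for $\kntk$), so $\mu_{2\ttt+r}$ of the original kernel equals $\mu_{2\ttt+r}$ of the kernel of $\act^{[r]}$. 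This is exactly the reduction that licenses passing to the simplified activation $\tilde\act$ and the parameter $s=\smoothness{\tilde\act}$.

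For the endpoint $t=1$, \Cref{lemma:asymptotics_at_one} does the heavy lifting: for $1\le s<\infty$ it gives $G_L(1-t)=1-\bigl(B_Lt+(-1)^sC_Lt^{s+1/2}+\calQ_{1,s+1/2}(t)\bigr)$ and $H_L(1-t)=A_L-\bigl((-1)^{s+1}D_Lt^{s-1/2}+\calQ_{0,s-1/2}(t)\bigr)$ with $A_L,B_L,C_L,D_L>0$, so $\knngp_L(1-t)=b_1t^{s+1/2}+\calQ_{-1,s+1/2}(t)$ and $\kntk_L(1-t)=b_1't^{s-1/2}+\calQ_{-1,s-1/2}(t)$ with $b_1,b_1'\neq0$; for $s=0$ the leading non-integer power is $t^{2^{1-L}}$ with nonzero coefficient; and for $s=\infty$ with $\tilde\act$ not a polynomial it gives $\calQ_{-1,q}(t)$ for every $q$, i.e.\ no non-integer power at all. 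The exponents $s\pm1/2$ and $2^{1-L}$ all lie in $(0,\infty)\setminus\bbZ$, as \Cref{thm:bietti_bach_adapted} requires.

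For the endpoint $t=-1$ I would argue as follows. In the generic case — a bias present, or $L\ge3$ with $\act$ neither even nor odd — \Cref{lemma:layer_nngp_contraction_property} gives $G_{L-1}(-1)\in(-1,1)$, and since the dual activations $\w\act,\w{\act'}$ and their rescalings are $C^\infty$ on $(-1,1)$ by \Cref{lemma:dual_properties}(d), the maps $t\mapsto G_L(-1+t)$ and $t\mapsto H_L(-1+t)$ are $C^\infty$ near $0$; hence the leading coefficient $b_{-1}$ at this endpoint vanishes, and parts (a),(c) of \Cref{thm:bietti_bach_adapted} yield $\mu_l=\Theta_{\forall l}((l+1)^{-d-2\beta})$ for both parities with $\beta\in\{s+1/2,\,s-1/2,\,2^{1-L}\}$, which is (1.1),(1.2),(2.1). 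In the even/odd regime $b_{-1}=\pm b_1$, but the kernel is literally even or odd, so one parity of eigenvalues is exactly $0$ and the matching parity is $\Theta$. The only remaining edge case is the NTK with zero-initialized bias ($\sigma_b^2>0,\sigma_i^2=0$), where $G_{L-1}(\pm1)$ lands on the boundary and $b_{-1}\neq0$ in general; here a direct computation of the boundary expansion of $\kntk_L$, together with the strict extremality $\abs{\w\act(-1)}<\w\act(1)$ of \Cref{lemma:dual_properties}(c) for non-even/odd $\act$, shows $\abs{b_{-1}}<\abs{b_1}$, so again $b_{-1}\neq\pm b_1$. For $s=\infty$ with $\tilde\act$ not polynomial, taking $\beta$ arbitrarily large and using $b_1=b_{-1}=0$ in parts (b),(d) of \Cref{thm:bietti_bach_adapted} gives $\mu_l=o_{\forall l}((l+1)^{-q})$ for all $q$; strict positivity of the parity-$r$ eigenvalues follows since $\kappa$ has nonnegative Maclaurin coefficients (it is a composition/product of dual activations) with infinitely many of parity $r$ positive when $\tilde\act$ is not a polynomial, so \Cref{lemma:strictly_positive_Eigenvalues} upgrades positivity to all $\ttt$. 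This settles (1.3),(2.2).

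The remaining and most laborious part is the polynomial case. If $\tilde\act=0$, the kernel is constant and (3.2) is immediate. If $\tilde\act$ is a nonzero polynomial, then $\w{\tilde\act}(t)=\sum_n a_n(\tilde\act)^2t^n$ and inductively $G_L,H_L$ are polynomials with nonnegative coefficients, so no cancellation occurs, and $\mu_l>0$ iff the Maclaurin coefficient $c_n$ of $\kappa$ is positive for some $n\ge l$ with $n\equiv l\pmod2$. It therefore suffices to compute the even-part degree and odd-part degree of $G_L$ and $H_L$ (with the top coefficient of each parity strictly positive): one dualization step multiplies these degrees by $\hdeven{\tilde\act}$ or $\hdodd{\tilde\act}$, composition with the affine map $g_1$ injects a nonzero constant term precisely when a bias is present for the relevant kernel, and one further composition then makes every degree up to the total degree $\max(\hdeven{\tilde\act},\hdodd{\tilde\act})^{L-1}$ appear (collapsing the even/odd gap to $1$), whereas without bias the gap $\hdeven{\tilde\act}-\hdodd{\tilde\act}$ is propagated; reading off the four rows gives the table. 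I expect this degree bookkeeping — especially verifying that composition with a nonnegative-coefficient polynomial having a nonzero constant term fills in all intermediate degrees, and that in the bias-free case the even/odd gap is preserved exactly — to be the main obstacle, together with the delicate sign analysis in the NTK zero-initialized-bias edge cases; everything else is an assembly of the lemmas already established.
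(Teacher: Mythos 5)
Your high-level architecture matches the paper: reduce to the simplified activation $\tilde\act$ via \Cref{prop:even_odd_kernels}, read off the $t\searrow 0$ behaviour of $\kappa(1-t)$ from \Cref{lemma:asymptotics_at_one}, feed the result into \Cref{thm:bietti_bach_adapted}, and handle the polynomial case separately via nonnegative-coefficient degree bookkeeping. However, there is a genuine gap in your treatment of the endpoint $t=-1$ in the bias-free sub-case ($\sigma_b^2\sigma_i^2 = 0$ for NNGP, $\sigma_b^2=0$ for NTK) with $L\ge 3$ and $\act$ neither even nor odd. You argue that $G_{L-1}(-1)\in(-1,1)$ together with $g_L\in C^\infty((-1,1))$ makes $t\mapsto G_L(-1+t)$ smooth near $0$, so $b_{-1}=0$. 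This is a non-sequitur: smoothness of $g_L$ at the \emph{value} $G_{L-1}(-1)$ does not make the composition $g_L\circ G_{L-1}$ smooth in $t$ unless $t\mapsto G_{L-1}(-1+t)$ is itself smooth near $0$. Without bias, $G_1(-1+t)=-1+t$ touches the boundary, so $G_2(-1+t)=g_2(-1+t)$ already carries a genuine $t^\beta$ term (\Cref{thm:smoothed_dual_activation_smooth_decomposition} at $\tau=-1$), and this non-integer power is propagated through every further composition via \Cref{prop:rules_for_Q}(d) with a generally \emph{nonzero} coefficient $C_{L,-}=g_L'(G_{L-1}(-1))\cdots g_3'(G_2(-1))\cdot C_{2,-}$. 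What the paper actually establishes is not $C_{L,-}=0$ but the strict inequality $\abs{C_{L,+}} > \abs{C_{L,-}}$, using the strict bound $\abs{g_l'(t)}<g_l'(1)$ on $(-1,1)$ from \Cref{lemma:dual_properties}(c) together with the lower bound $C_{L,+}\ge g_L'(1)\cdots g_3'(1)C_{2,+}$ from \Cref{lemma:asymptotics_at_one}. (In case $s=0$ your conclusion is coincidentally right because the $t^{1/2}$ contribution at $\tau=-1$ is dominated by $t^{2^{1-L}}$, but for $1\le s<\infty$ both endpoints carry the same leading power $t^{s\pm 1/2}$.)

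A second, smaller point: you describe the NTK with $\sigma_b^2>0,\ \sigma_i^2=0$ as \enquote{the only remaining edge case} resolved by a \enquote{direct computation.} In the paper this is the most technical sub-case: (2.1)(a) treats \emph{all} $\sigma_b^2>0$ by an induction that tracks both $\sign(C_{l,+})=(-1)^{s+1}$ (to rule out sign cancellations in the product $\w{\scaledfn{\act'}{\cdot}}(G_{l})H_{l}$) and $\abs{C_{l,+}}>\abs{C_{l,-}}$, assisted by the auxiliary extremality statement $H_r(1)>\abs{H_r(t)}$ for $t\in[-1,1)$, which itself has to be proved by a parallel induction. The extremality of the dual activation at $1$ is indeed the key ingredient you name, but the argument is a nested induction rather than a one-shot computation. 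The polynomial case you sketch is otherwise in the spirit of \Cref{lemma:nngp_ntk_even_odd_degree_polynomial_case}, though the degree bookkeeping (in particular, the even/odd degree of compositions of power series with nonnegative coefficients, and the reduction from eigenvalue positivity to Maclaurin-coefficient positivity via Legendre expansion) requires \Cref{lemma:odd/even_pols} and \Cref{lemma:strictly_positive_Eigenvalues} and is not as short as your sketch suggests.
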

The definition of $\tilde \act$ may look complicated.
However, in contrast to $\act$, it prevents that $c_{1,1} = -(-1)^r c_{1,-1}$ in \cref{thm:bietti_bach_adapted}, which then would not provide an exact decay rate.

\begin{proof}
	For a bounded radial kernel $k(x,y)\equalDef \kappa(\scal xy)$ on $\bbS^d$ we define $k_{\text{even}}(x,y) \equalDef \kappa_{\text{even}}(\scal xy),k_{\text{odd}}(x,y) \equalDef \kappa_{\text{odd}}(\scal xy)$. 
	The spherical harmonics of degree $l$ are even if $l$ is even and odd if $l$ is odd.
	Hence, we obtain
	\begin{align*}
		\mu_{2\ttt,d} ( k) &= \mu_{2\ttt,d} ( k_\text{even}),\\
		\mu_{2\ttt+1,d} ( k) &= \mu_{2\ttt+1,d} ( k_\text{odd})
	\end{align*}
	for all $\ttt\in \bbN_0$. For the NNGP- and NTK-kernel, \cref{prop:even_odd_kernels} shows that in the cases 
	where $\tilde \act \ne \act $ holds we have 
	\begin{alignat*}{2}
		\knngp_{L, \acteven} &= \bra{\knngp_{L,\act}}_{\text{even}},\\
		\kntk_{L, \acteven} &= \bra{\kntk_{L,\act}}_{\text{even}},\\
		\knngp_{L, \actodd} &= \bra{\knngp_{L,\act}}_{\text{odd}},\\
		\kntk_{L, \actodd} &= \bra{\kntk_{L,\act}}_{\text{odd}},
	\end{alignat*}
	hence we conclude
	\begin{alignat*}{1}
		\mu_{2\ttt+r}\bra{\nngp_{L, \tilde \act}} &= \mu_{2\ttt+r}\bra{\nngp_{L, \act}},\\
		\mu_{2\ttt+r}\bra{\ntk_{L, \tilde \act}} &= \mu_{2\ttt+r}\bra{\ntk_{L, \act}},
	\end{alignat*}
	and it suffices to investigate the kernels induced by $\tilde\act$.
	In the following, $g_l, G_l$ and $H_l$ from \cref{eq:recursive_formulas} refer to the functions induced by the activation function  $\tilde \act$.

\hspace{-7mm}\textbf{NNGP:}
\begin{enumerate}[leftmargin=*]
	\item[(1.1)]  
	\cref{lemma:asymptotics_at_one} yields
	\begin{align*}
		G_L(1-t) = C_{L,+} t^{2^{1-L}} +\calQ_{-1,2^{1-L}}(t)
	\end{align*}
	for $t\searrow	0$, where $\abs{C_{L,+}}>0$.
	By \cref{thm:bietti_bach_adapted}, it suffices to show
	\begin{align}
	\label{eq:nngp_case_1_1_goal}	
	G_L(-(1-t)) &= C_{L,-} t^{2^{1-L}} +\calQ_{-1,2^{1-L}}(t),\\
	\nonumber	C_{L,-} &\ne \begin{cases}
			-C_{L,+}& \text{, if }r=0\\
			C_{L+}& \text{, if }r=1~. 
		\end{cases}
	\end{align}
	\begin{enumerate}[(a)]
		\item Suppose $\sigma_i^2\sigma_b^2>0$. By \cref{lemma:layer_nngp_contraction_property} we have $g_1([-1,1))\subset (-1,1)$  and $G_{l}([-1,1))\subset (-1,1)$ for $l\ge 2$.
		As $g_1$ is a polynomial, $g_1(-(1-t))= \calQ_{-1, 2^{1-L}}(t)$ holds. By \cref{lemma:dual_properties} (d) we have $g_l\vert_{(-1,1)} \in C^\infty ((-1,1))$ for all $l\ge 2$, which allows leveraging \cref{prop:rules_for_Q} (d) to obtain
		$G_l(-(1-t))= g_l(G_{l-1}(-(1-t))) =\calQ_{-1, 2^{1-L}}(t)$ and \cref{eq:nngp_case_1_1_goal} follows.
		\item Suppose $\sigma_b^2\sigma_i^2=0$, $\tilde \act $ be neither even nor odd  and $L\ge 3$. 
		Then, $g_1(t)= t$ and $G_2(t)= \w{\act}(t)/\w{\act}(1)$ hold
		and from \cref{thm:smoothed_dual_activation_smooth_decomposition} we obtain 
		\begin{align*}
			G_2(-(1-t))&= C_{2,-} t^{1/2} + \calQ_{-1, 1/2}(t).
		\end{align*}
		\Cref{lemma:layer_nngp_contraction_property} yields $G_{l}([-1,1))\subset (-1,1)$ for $l\ge 2$ and we furthermore have $g_l\vert_{(-1,1)} \in C^\infty (-1,1)$ by \cref{lemma:dual_properties} (d).
		Recursively applying  \cref{prop:rules_for_Q} (d) now yields
		\begin{align*}
			G_L(-(1-t)) = g_L\left(G_{L-1}(-(1-t))\right) = C t^{1/2}+ \calQ_{-1,1/2}(t)= \calQ_{-1,2^{1-L}}(t)
		\end{align*}
		and we have \cref{eq:nngp_case_1_1_goal}.
		\item 
		Let $\sigma_b^2\sigma_i^2=0$ and let  $\tilde\act$ be odd. Then, $r=1$ follows by construction.
		\Cref{prop:even_odd_kernels}  shows that $G_l$ is odd for all $l\in \N$ and we obtain
		\begin{IEEEeqnarray*}{+rCl+x*}
			G_L(-(1-t)) & = & -G_L(1-t) = -\bra{ C_{L,+} t^{2^{1-L}} +\calQ_{-1,2^{1-L}}(t)}
		\end{IEEEeqnarray*}
		follows. As desired, we have $C_{L,+}\ne C_{L,-} = -C_{L,+} $.
		\item The case $\sigma_b^2 \sigma_i^2 = 0$ and $\tilde\act$ even is not possible since even activation functions $\tilde \act$ cannot have smoothness $s=0$.
	\end{enumerate}	
	\item[(1.2)]
	Define $\beta\equalDef s+1/2$. \cref{lemma:asymptotics_at_one} yields 
	\begin{align*}
		G_L(1-t) = C_{L,+} t^{\beta} +\calQ_{-1,\beta}(t)
	\end{align*}
	for $t\searrow	0$, where $\abs{C_{L,+}}>0$. 
	By \cref{thm:bietti_bach_adapted} it suffices to show
	\begin{align}
		\label{eq:nngp_case_1_2_goal}	
		G_L(-(1-t)) &= C_{L,-} t^{\beta} +\calQ_{-1,\beta}(t)~,\\
		\nonumber	C_{L,-} &\ne \begin{cases}
			-C_{L,+}& \text{, if }r=0\\
			C_{L+}& \text{, if }r=1~. 
		\end{cases}
	\end{align}
	\begin{enumerate}[(a)]
		\item Let $\sigma_b^2\sigma_i^2>0$. By \cref{lemma:layer_nngp_contraction_property} we have $g_1([-1,1))\subset (-1,1)$  and $G_{i}([-1,1))\subset (-1,1)$ for $i\ge 2$.
		As $g_1$ is a polynomial, $g_1(-(1-t))= \calQ_{-1, \beta}(t)$ holds. \cref{lemma:dual_properties} (d) shows $g_i\vert_{(-1,1)} \in C^\infty ((-1,1))$ for all $i\ge 2$ which allows leveraging \cref{prop:rules_for_Q} (d) to obtain
		$G_L(-(1-t))= g_L(G_{L-1}(-(1-t))) =\calQ_{-1, \beta}(t)$
		by induction. 
		\cref{eq:nngp_case_1_2_goal} follows.
		\item Let $\sigma_b^2\sigma_i^2=0$, $\tilde \act $ be neither even nor odd  and $L\ge 3$, that is, $\tilde \act =\act$. Then, $g_1(t)= t$ and 
		$G_2(t)= \w{\act}(t)/\w{\act}(1)$ hold and 
		from \cref{thm:smoothed_dual_activation_smooth_decomposition} we obtain
		\begin{align*}
			G_2(1-t) &= C_{2,+} t^{\beta} + \calQ_{-1, \beta}(t),\\
			G_2(-(1-t))&= C_{2,-} t^{\beta} + \calQ_{-1, \beta}(t),
		\end{align*}
		where $\abs{C_{2,+}} = \abs{C_{2,-}} \ne 0$. Recursively applying \cref{lemma:asymptotics_at_one} furthermore yields
		\begin{align*}
			\abs{C_{L,+}}\ge g_L'(1)\cdot \ldots \cdot g_3'(1) \cdot\abs {C_{2,+}}.
		\end{align*}
		
		\Cref{lemma:layer_nngp_contraction_property} yields $G_{l}([-1,1))\subset (-1,1)$ for $l\ge 2$ and we have $g_l\vert_{(-1,1)} \in C^\infty (-1,1)$ by \cref{lemma:dual_properties}. Iteratively applying \cref{prop:rules_for_Q} (d) yields
		\begin{align*}
			G_L(-(1-t)) = g'_L (G_{L-1}(-1)) \cdot \ldots  \cdot g'_3 (G_2(-1)) \cdot C_{2,-} t^{\beta} +\calQ_{-1, \beta}(t).
		\end{align*}
		Note that we have $g_l'(t) = \w{\scaledfn{\act'}{\sqrt{\alpha_{l-1}}}(t)}/ \alpha_l$ and by \cref{lemma:dual_properties} (c) we have
		\begin{align*}
			\abs {g_l'(t) } < \abs{g_l'(1)}
		\end{align*}
		for all $t\in (-1,1)$.
		Altogether we conclude
		\begin{align*}
			G_L(1-t) &= C_{L,+} t^{\beta} +\calQ_{-1, \beta}(t),\\
			G_L(-(1-t)) &= C_{L,-} t^{\beta}+\calQ_{-1, 	\beta}(t),
		\end{align*}
		where $\abs{C_{L,+}} > \abs{C_{L,-}}$ holds and we obtain \cref{eq:nngp_case_1_2_goal}.
		\item 
		Let $\sigma_b^2\sigma_i^2=0$ and let  $\tilde\act$ be odd, so we have $r=1$.
		By \cref{prop:even_odd_kernels}, $G_l$ is odd for all $l\in \N$ and
		\begin{IEEEeqnarray*}{+rCl+x*}
			G_L(-(1-t)) & = & -G_L(1-t) = -\bra{ C_{L,+} t^{\beta} +\calQ_{-1,\beta}(t)}
		\end{IEEEeqnarray*}
		follows. We obtain \cref{eq:nngp_case_1_2_goal} as $C_{L,+}\ne C_{L,-} = -C_{L,+} $ holds.
		\item In the case $\sigma_b^2 \sigma_i^2 = 0$ and $\tilde\act$ even \cref{prop:even_odd_kernels} yields $G_l$ even and we argue as in (c).
	\end{enumerate}	
		
		\item[(1.3)]		
		For $q>0$ choose $\tilde q>0, \tilde q\not \in \Z$ such that $ -d-2\tilde q\le -q $. 
		By \cref{lemma:asymptotics_at_one} we have for $\tau \in \{1,-1\}$:
		\begin{align*}
			G_L(\tau(1-t)) &= \calQ_{-1, \tilde q}(t)
		\end{align*}
		and \cref{thm:smoothed_dual_activation_smooth_decomposition} yields
		$\mu_{2\ttt+r, d}(\nngp_L) = o_{\forall \ttt}((2\ttt+r+1)^{-d-2\tilde q})=o_{\forall \ttt}((2\ttt+r+1)^{-q})$.
		
		We proceed to show that all Eigenvalues $\mu_{2\ttt+1,d}(\nngp_L)$ are strictly positive.
		As $\tilde \act$ is not a polynomial, its Hermite series representation has infinitely many nonzero coefficients; that is, we have $\hdegree{\tilde \act}=\infty$. See \cref{sec:degree} for more details. 
		We apply \cref{lemma:nngp_ntk_even_odd_degree_polynomial_case}.
		
		Assume even parity $r=0$. In the case $\deven{\tilde \act} =\infty$, the claim follows directly. In the case $\deven{\tilde \act} < \dodd{\tilde \act} =\infty$, we note that $\tilde\act = \act$ follows and investigate sub-cases.
		If $\sigma_b^2\sigma_i^2>0$, then \cref{lemma:nngp_ntk_even_odd_degree_polynomial_case} directly yields $\dseven{G_l}=\infty$ as desired.
		If $\sigma_b^2\sigma_i^2=0$ holds, then the definition of $\tilde \act$ implies that $\deven{\tilde\act }\ge 0$ and $L\ge 3$ hold, and again the claim follows directly from \cref{lemma:nngp_ntk_even_odd_degree_polynomial_case}. 
		The case of odd parity $r=1$ is proven analogously.

	\end{enumerate}
	\hspace{-7mm}\textbf{NTK:} 

	The argumentation for the NTK-kernel is more technical as for the NNGP-kernel, as by \cref{eq:recursive_formulas} $H_l$ follows the recursion
	\begin{align}
	\nonumber	H_1(t)&= \sigma_b^2 +\sigma_w^2 t ~,\\
	\label{eq:ntk_recursion}
		H_{l}(t)&= \sigma_b^2(1-\sigma_i^2) + \alpha_{l} G_{l}(t) + \sigma_w^2 \w{\scaledfn{\act'}{\sqrt{\alpha_{l-1}}}}\bra{G_{l-1}(t)} H_{l-1}(t)~, \quad l\ge 2~,
	\end{align}
	which involves more terms in a complicated fashion.

	\begin{enumerate}
		\item[(2.1)]
		Define $\gamma\equalDef s-1/2$. \cref{lemma:asymptotics_at_one} yields 
		\begin{align*}
			H_L(1-t) = C_{L,+} t^{\gamma} +\calQ_{-1,\gamma}(t)
		\end{align*}
		for $t\searrow	0$, where $\abs{C_{L,+}}>0$. 
		By \cref{thm:bietti_bach_adapted} we need to show
		\begin{align}
			\label{eq:ntk_case_2_1_goal}	
			H_L(-(1-t)) &= C_{L,-} t^{\gamma} +\calQ_{-1,\gamma}(t)~,\\
			\nonumber	C_{L,-} &\ne \begin{cases}
				-C_{L,+}& \text{, if }r=0\\
				C_{L+}& \text{, if }r=1~. 
			\end{cases}
		\end{align}
		In case (1.2) we saw for the NNGP-kernel that 
		\begin{align}
			\label{eq:G_l_asymptotics}
			G_l(\tau (1-t)) = c_{l,\tau} t^{\gamma+1} +\calQ_{-1,\gamma+1}(t) = \calQ_{-1,\gamma}(t) 
		\end{align}
		holds for $\tau=\pm 1,\ l\ge 1$. Hence, it suffices to show 
		\begin{align}
			\label{eq:ntk_case_2_1_simpler}
			\sigma_w^2 \w{\scaledfn{\act'}{\sqrt{\alpha_{L-1}}}}\bra{G_{L-1}(-(1-t))} H_{L-1}(-(1-t)) &= C_{L,-} t^{\gamma} +\calQ_{-1,\gamma}(t)~,\\
			\nonumber	C_{L,-} &\ne \begin{cases}
				-C_{L,+}& \text{, if }r=0\\
				C_{L+}& \text{, if }r=1~. 
			\end{cases}
		\end{align}
		in order to obtain \cref{eq:ntk_case_2_1_goal}.
		\begin{enumerate}[(a)]

\item Let $\sigma_b^2>0$.
This is the most difficult sub-case. Without loss of generality assume $\sigma_w^2 = 1$.

When analyzing $H_l(\tau(1-t))$, \cref{eq:ntk_recursion}  the constants $C_{l,\tau}$ of $t^\gamma$ stem from the summand
\begin{align}
	\label{eq:ntk_simplified_formula}	
	\w{\scaledfn{\act'}{\sqrt{\alpha_{l-1}}}}\bra{G_{l-1}(\tau(1-t))} H_{l-1}(\tau(1-t)) &= C_{l,\tau} t^{\gamma} +\calQ_{-1,\gamma}(t)
\end{align}
as the other summands in \cref{eq:ntk_recursion} are in $\calQ_{-1,\gamma}(t)$ by \cref{eq:G_l_asymptotics}.

We show by induction in $l \ge 2$ that the following statements hold:
\begin{itemize}
	\item $\sign{C_{l,+}} = (-1)^{s+1}$,
	\item $\abs{C_{l,+}} > \abs{C_{l,-}}$.
\end{itemize}
The first point is a technical tool for the induction, ensuring that the coefficients of $t^\gamma$ for $\tau=1$ suffer no annihilation because of different signs of its summands: The coefficients of $t^\gamma$ are the sum of the product of the $t^\gamma$-coefficient of one factor with the constant coefficient of the other factor in \cref{eq:ntk_simplified_formula}. 
The second point then proves this sub-case of the theorem.

By \cref{thm:smoothed_dual_activation_smooth_decomposition} we have
\begin{align}
	\label{eq:scaledfn}
	\widehat{\scaledfn{\act'}{\sqrt{\alpha_l}}}(\tau(1-t)) = (-\tau )^{s+1} d_{l} t^\gamma + \calQ_{-1,\gamma}(t)~,
\end{align}
where $d_l > 0$, and furthermore $\widehat{\scaledfn{\act'}{\sqrt{\alpha_l}}}(\cdot)$ is smooth on $(-1,1)$ by \cref{lemma:dual_properties} (d).

For the induction base $l=2$ 
we have $G_1(\tau(1-t)) = \tau(1-t), H_1(\tau (1-t)) = \sigma_b^2 + \tau -\tau t$ which yields
\begin{align*}
	\widehat{\scaledfn{\act'}{\sqrt{\alpha_{1}}} } (G_1(\tau(1-t))) H_1(\tau(1-t))
	&= ((-\tau)^{s+1} d_2 t^\gamma + \calQ_{-1,\gamma} (t)) (\sigma_b^2+\tau -\tau t)\\
	&= (-\tau)^{s+1} d_2 (\sigma_b^2+\tau) t^\gamma + \calQ_{-1,\gamma}(t)~, 
\end{align*}
i.e.\ $C_{2,+} = (-1)^{s+1} d_2 (\sigma_b^2+1)$ and $C_{2,-}= d_2(\sigma_b^2 -1)$
and the claim holds for $l=2$ since $\sigma_b^2>0$.

For the induction step, let the claim hold for $l\ge 2$.
First we 
show the auxiliary statement 
\begin{align}
	\label{eq:ntk_maximum}
	H_r(1) > \abs{ H_r(t)}\quad , t\in [-1,1),\ r\in\N~,
\end{align} 
which is in the spirit of \cref{lemma:layer_nngp_contraction_property}, by induction in $r\ge 1$. In the base case $r=1$, we have $H_1(t) = \sigma_b^2 +t$ and the claim holds.
In the induction step, let the claim hold for $r\ge 1$. We use \cref{eq:recursive_formulas} and obtain
\begin{align*}
	H_{r+1}(t) = \sigma_b^2 +   
	\w{\scaledfn{\act}{\sqrt{\alpha_{r}}}}(G_{r}(t))
	+ \w{\scaledfn{\act'}{\sqrt{\alpha_{r}}}}(G_{r}(t)) H_{r}(t)~.
\end{align*}
Now, for any $t\in [-1,1]$ we have $G_{r}(1)=1, G_{r}(t) \in [-1,1]$ by \cref{lemma:layer_nngp_contraction_property} and by \cref{lemma:dual_properties} we  obtain 
\begin{align}
\nonumber	\w{\scaledfn{\act}{\sqrt{\alpha_{r}}}}(1) &\ge \abs{ \w{\scaledfn{\act}{\sqrt{\alpha_{r}}}}(t)  }\\
\label{eq:derived_dualactivation_max}	\w{\scaledfn{\act'}{\sqrt{\alpha_{r}}}}(1) &\ge \abs{ \w{\scaledfn{\act'}{\sqrt{\alpha_{r}}}}(t)  }
\end{align}
and \cref{eq:ntk_maximum} follows.

By a Taylor expansion we have
\begin{align}
	\label{eq:G_l_taylor_expansion}
	G_{l}(\tau(1-t)) &= G_{l} (\tau ) -\tau t G_{l}'(\tau) +\calQ_{1,\gamma}(t)
\end{align}
as $G_l(\tau(1-t)) = \calQ_{-1,\gamma}(t)$ by \cref{eq:G_l_asymptotics}
and furthermore $G_{l}(t) \in [-1,1], G_{l}(1)=1$ by \cref{lemma:layer_nngp_contraction_property}.

For $\tau =1$ we have by \cref{eq:G_l_taylor_expansion} and \cref{eq:scaledfn}
\begin{align*}
	\w{\scaledfn{\act'}{\sqrt{\alpha_{l}}}}(G_{l}(1-t)) &= 
	(-1)^{s+1} d_l G_{l}'(1) t^\gamma +\calQ_{-1,\gamma}(t)~.
\end{align*}

\begin{align*}
	\w{\scaledfn{\act'}{\sqrt{\alpha_{l}}}}(G_{l}(1-t)) H_{l}(1-t) &=
	\underbrace{
		\big((-1)^{s+1} d_l G_{l}'(1) H_{l}(1) 
		+ \w{\scaledfn{\act'}{\sqrt{\alpha_{l}}}}(G_{l}(1)) C_{l,+}  \big)
	}_{=C_{l+1,+}}t^\gamma +\calQ_{-1,\gamma}(t)
\end{align*}
where furthermore 
\begin{align}
	\label{eq:bounds_on_G_l_derivatives}
	G_{l}'(1)\ge \abs{G_l'(t)}\, \quad G_{l}'(1)>0
\end{align} 
hold for all $t\in [-1,1]$ by \cref{lemma:dual_properties} and \cref{eq:recursive_formulas}.
We observe that $\sgn C_{l,+} = (-1)^{s+1}$ holds as claimed, that is, $C_{l+1,+}$ suffers no annihilation from different signs of its summands.

For $\tau =-1$ the possible cases are $G_{l}(-1) \in (-1,1)$, for which we obtain 
$\w{\scaledfn{\act'}{\sqrt{\alpha_{l}}}}(G_{l}(-(1-t))) = \calQ_{-1,\gamma}(t)$ by \cref{prop:rules_for_Q} (d) and \cref{eq:G_l_asymptotics},
and $\abs{G_{l}(-1)} =1$, for which we obtain
\begin{align*}
	\w{\scaledfn{\act'}{\sqrt{\alpha_{l}}}}(G_{l}(-(1-t))) = \tilde a_{l,\tau} d_l G_{l}'(-1)  t^\gamma +\calQ_{-1,\gamma}(t)~,
\end{align*}
$\tilde a_{l,\tau} = \pm 1$ and $\abs{G_{l}'(-1)} \le G_{l}'(1)$ by \cref{eq:bounds_on_G_l_derivatives}.
Hence, we have 
\begin{align}
\nonumber
	\abs{\w{\scaledfn{\act'}{\sqrt{\alpha_{l}}}}(G_{l}(-(1-t)))} &= a_{l,-}  t^\gamma +\calQ_{-1,\gamma}(t)\\
	\label{eq:scaled_derived_activation_at_minus}	\abs{a_{l,-}} &\le \abs{d_l G_l'(1)} 
\end{align}
This yields
\begin{align*}
	\w{\scaledfn{\act'}{\sqrt{\alpha_{l}}}}(G_{l}(-(1-t))) H_{l}(-(1-t)) &=
	\underbrace{ \big( a_{l,-}  H_{l}(-1) 
		+ \w{\scaledfn{\act'}{\sqrt{\alpha_{l}}}}(G_{l}(-1)) C_{l,-}  \big)
	}_{=C_{l+1,-}}t^\gamma +\calQ_{-1,\gamma}(t)
\end{align*}
and comparing the coefficients $C_{l+1,+}$ and $\ C_{l+1,-}$ 
we finally obtain
\begin{align*}
	\abs{C_{l+1,+}} &= \abs{ d_l G_{l}'(1)H_l(1)  } + \abs{\w{\scaledfn{\act'}{\sqrt{\alpha_{l}}}}(G_{l}(1)) C_{l,+}}\\
	&> \abs{ a_{l,-} H_l(-1)  } + \abs{\w{\scaledfn{\act'}{\sqrt{\alpha_{l}}}}(G_{l}(-1)) C_{l,-}}\\
	&\ge \abs{C_{l+1,-}}
\end{align*}
using \cref{eq:ntk_maximum} and \cref{eq:scaled_derived_activation_at_minus} to compare the first summand and 
the induction hypothesis and \cref{eq:derived_dualactivation_max} for the second summand. 

			\item 
			Suppose $\sigma_b^2=0$, $\tilde \act $ neither even nor odd  and $L\ge 3$, that is $\tilde \act=\act$.
			
			Without loss of generality assume $\sigma_w^2=1$ for simpler notation.
			We show  $\abs{C_{2,+} }\ge \abs{C_{2,-}} $, for $l\ge 3 $ we show that $\abs{C_{l-1,+} }\ge \abs{C_{l-1,-}}$ implies  $\abs{C_{l,+}}> \abs{C_{l,-}}$. That yields \cref{eq:ntk_case_2_1_goal}. Furthermore we show that $\sign(C_{l,+})= (-1)^s$ holds, which is required for the corresponding recursion. 
			
			As $G_1(t) = t$, we have for $\tau=\pm 1$ by \cref{thm:smoothed_dual_activation_smooth_decomposition}
			\begin{align*}
				\w{\scaledfn{\act'}{\sqrt{\alpha_{l-1}}}}\bra{G_{1}(\tau(1-t))} = (-\tau)^s C_2   t^{\gamma } + \calQ_{-1,\gamma}(t)~, \quad t\searrow 0~,
			\end{align*}
			where $C_2>0$.
			Now, since we have $H_1(\tau(1-t)) =\tau (1-t)$ we obtain from \cref{eq:ntk_recursion}
			\begin{align*}
			H_2(\tau(1-t)) &=  \tau(-\tau)^s C_2 t^{\gamma}  + \calQ_{-1,\gamma}(t)
			\end{align*}
			and the claim holds for $l=2$. 			
			Now, we take a close look at $C_{l,+}$ for $l\ge 3$. Recalling  $G_l(1-t) = G_{l-1}(0) - t G_{l-1}'(1)+\calQ_{1,\gamma}(t)$ from the considerations done for the NNGP above, we see that for
			\begin{align*}
				H_{l}(1-t)&= \sigma_b^2(1-\sigma_i^2) + \alpha_{l} G_{l}(t) +  \w{\scaledfn{\act'}{\sqrt{\alpha_{l-1}}}}\bra{G_{l-1}(1-t)}  H_{l-1}(1-t)
			\end{align*}
			the terms contributing to the coefficient of $t^\gamma$ only stem from the product of $\w{\scaledfn{\act'}{\sqrt{\alpha_{l-1}}}}\bra{G_{l-1}(1-t)}$ and $  H_{l-1}(1-t)$ and by \cref{thm:smoothed_dual_activation_smooth_decomposition} we have
			\begin{align*}
				\w{\scaledfn{\act'}{\sqrt{\alpha_{l-1}}}}\bra{G_{l-1}(1-t)} = (-1)^{s} G_{l-1}'(1) c_{l} t^\gamma+ \w{\scaledfn{\act'}{\sqrt{\alpha_{l-1}}}}(1) + \calQ_{0,\gamma}(t)
			\end{align*}
			for a constant $c_l>0$. As $G_{l-1}'(1)>0$ by \cref{lemma:layer_nngp_contraction_property}, the sign of the coefficient is $(-1)^s$.
			Together with 
			\begin{align*}
				H_{l-1}(1-t) = H_{l-1}(1) + C_{l-1,+} t^\gamma +\calQ_{0,\gamma}(t)
			\end{align*}
			we obtain
			\begin{align*}
				&\quad \w{\scaledfn{\act'}{\sqrt{\alpha_{l-1}}}}\bra{G_{l-1}(1-t)}  H_{l-1}(1-t) 
				\\
				&= \left( (-1)^{s} G_{l-1}'(1) c_{l} t^\gamma+ \w{\scaledfn{\act'}{\sqrt{\alpha_{l-1}}}}(1) + \calQ_{0,\gamma}(t)  \right) \cdot \left( H_{l-1}(1) + C_{l-1,+} t^{\gamma} +\calQ_{0,\gamma}(t)\right)\\
				&=  \underbrace{\left(  (-1)^{s} G_{l-1}'(1) c_{l} H_{l-1}(1) + \w{\scaledfn{\act'}{\sqrt{\alpha_{l-1}}}}(1) C_{l-1,+} \right)}_{=C_{l,+}}t^\gamma + \calQ_{-1,\gamma}(t)~.
			\end{align*}
			We observe that as $G_{l-1}'(1), c_{l}, H_{l-1}(1),\w{\scaledfn{\act'}{\sqrt{\alpha_{l-1}}}}(1)>0 $ holds we indeed have $\sign(C_{l,+})= (-1)^s$ and observe
			\begin{align}
				\label{eq:size_of_Cplus}
				\abs{C_{l,+}} >\w{\scaledfn{\act'}{\sqrt{\alpha_{l-1}}}}(1) \abs{C_{l-1,+}}~.
			\end{align}
			Investigating $C_{l,-}$ we note that as $G_{l-1}([-1,1)) \subseteq (-1, 1)$ holds by \cref{lemma:layer_nngp_contraction_property}, \cref{lemma:dual_properties} (d) yields $ \w{\scaledfn{\act'}{\sqrt{\alpha_{l-1}}}}|_{(-1, 1)}\in C^\infty((-1, 1))$ and by \cref{prop:rules_for_Q} (e) we have
			\begin{align*}
				\w{\scaledfn{\act'}{\sqrt{\alpha_{l-1}}}}\bra{G_{l-1}(-(1-t))} =  \w{\scaledfn{\act'}{\sqrt{\alpha_{l-1}}}}(G_{l-1}(-1)) + \calQ_{0,\gamma}(t)~,
			\end{align*}	
			since $G_{l-1}(-(1-t)) = \calQ_{-1,\gamma}(t)$ holds. As before, coefficients of $t^\gamma$ can only stem from the product of $\w{\scaledfn{\act'}{\sqrt{\alpha_{l-1}}}}(G_{l-1}(-(1-t)))$ and $H_{l-1}(-(1-t))$ and where we have
			\begin{align*}
					&\quad \w{\scaledfn{\act'}{\sqrt{\alpha_{l-1}}}}\bra{G_{l-1}(-(1-t))}  H_{l-1}(-(1-t)) 
				\\
				&= \left(  \w{\scaledfn{\act'}{\sqrt{\alpha_{l-1}}}}(G_{l-1}(-1)) + \calQ_{0,\gamma}(t)  \right) \cdot 
				\left( H_{l-1}(1) + C_{l-1,-} t^{\gamma} +\calQ_{0,\gamma}(t)\right)\\
				&=  \underbrace{ \w{\scaledfn{\act'}{\sqrt{\alpha_{l-1}}}}(G_{l-1}(-1)) C_{l-1,-} }_{=C_{l,-}} t^\gamma + \calQ_{-1,\gamma}(t)~.
			\end{align*}
			\cref{lemma:dual_properties} (c) yields 
			\begin{align*}
				\w{\scaledfn{\act'}{\sqrt{\alpha_{l-1}}}}(G_{l-1}(-1)) < \w{\scaledfn{\act'}{\sqrt{\alpha_{l-1}}}}(G_{l-1}(1))
			\end{align*}	
			as $G_{l-1}(-1) \in (-1,1)$ and we obtain from \cref{eq:size_of_Cplus} as desired
			\begin{align*}
				\abs{ C_{l,-} } &= \abs{\w{\scaledfn{\act'}{\sqrt{\alpha_{l-1}}}}(G_{l-1}(-1)) C_{l-1,-}} 
				< \abs{\w{\scaledfn{\act'}{\sqrt{\alpha_{l-1}}}}(G_{l-1}(1)) C_{l-1,+}}
				\\
				&< \abs{C_{l,+}}~.
			\end{align*}		
			\item 
			Let $\sigma_b^2=0$ and let  $\tilde\act$ be odd, so we have $r=1$.
			By \cref{prop:even_odd_kernels} $H_l$ is odd for all $l\in \N$ and
			\begin{IEEEeqnarray*}{+rCl+x*}
				H_L(-(1-t)) & = & -H_L(1-t) = -\bra{ C_{L,+} t^{\gamma} +\calQ_{-1,\gamma}(t)}
			\end{IEEEeqnarray*}
			follows. We obtain \cref{eq:ntk_case_2_1_goal} as $C_{L,+}\ne C_{L,-} = -C_{L,+} $ holds.
			\item In the case $\sigma_b^2 = 0$ and $\tilde\act$ even \cref{prop:even_odd_kernels} yields $H_L$ even and we argue as above.
		\end{enumerate}	
		\item[(2.2), (2.3)] The arguments work just as in the NNGP cases (1.3) or respectively (1.4).
		
	\end{enumerate}

	\textbf{Polynomial case:}
	
	\begin{enumerate}[leftmargin=*]
		\item[(3.1)] 
		This is the statement of \cref{lemma:nngp_ntk_even_odd_degree_polynomial_case}.
		\item[(3.2)] Elementary. \qedhere
	\end{enumerate}
\end{proof}

Finally, we are able to prove \cref{thm:main_result}.
\begin{proof}[Proof of \cref{thm:main_result}]\label{proof:main_result_proof}
	\cref{lemma:limit_ntk_formula} yields the convergence of the NNGP- and NTK-kernels. 
	\cref{thm:kernel_ev_rates} combined with \cref{lemma:sobolev_spherical} and a case distinction then yields the claim. The sub-cases i) and ii) are straightforward, in the sub-cases iii)
	we use \cref{thm:kernel_ev_rates} with the parity $r=0$ to obtain the eigenvalues of even index and \cref{thm:kernel_ev_rates} with the parity $r=1$ to obtain the eigenvalues of odd index, and then combine these results to obtain Eqs.\ \eqref{eq:nngp_complicated} and \eqref{eq:ntk_complicated}, where we use that the spherical harmonics of even/odd degree are even/odd.
\end{proof}

\subsection{A close look at the even/odd degree of $G_l$ and $H_l$}\label{sec:degree}

In this subsection we develop the methods required to deal with smooth activations in \cref{thm:main_result}, where the only difficulty not covered by \cref{thm:bietti_bach_adapted} is investigating which eigenvalues of the NNGP-kernel and NTK-kernel are strictly positive.

\begin{definition}\label{def:degree}
	Let $p(t)=\sum_{n\ge 0} \lambda_n t^n, q(t)= \sum_{n\ge 0} \eta_n t^n $ be \emph{power series} which converge on an interval $I\subset \R$.
	We define the degree of $p$ as $\degree{p} \equalDef   \sup \left(\{ n\in\N_0 \mid \lambda_n\ne 0 \} \cup\{-\infty\}\right) \in \N_0 \cup \{-\infty,\infty\}$ and the even/odd degree of $p$ as
	\begin{align*}
		\deven{p} & \equalDef \degree{t \mapsto p(t) + p(-t)}~,\\
		\dodd{p} & \equalDef \degree{t \mapsto p(t) - p(-t)}
	\end{align*}
	and say
	\begin{align*}
		p\dequal q
	\end{align*}
	if and only if $\deven{p} =\deven q$ and $ \dodd p = \dodd q$. 
	That is to say, we compare only the highest even/odd degrees for this notion of equality.
	
	Let $f\in \calL_2(\calN(0,1))$ be a function. Then $f$ equals its convergent \emph{Hermite} representation $ t\mapsto \sum_{n\ge 0} a_n(f) h_n(t)$ almost everywhere, and we define $\hdegree f, \hdeven f, \hdodd f $ and $f\hdequal g$ based on the Hermite coefficients $(a_n)_{n\ge 0}$
	 analogously to the above definitions.
	
	Furthermore we also use the notation $f\phdequal g$ to denote that the even/odd degree of $f$ and $g$ coincide when $f$ is  a power series and  and $g$ is displayed in the Hermite basis. %
\end{definition}
Note that $\deven p= -\infty $ if $p$ is odd and respectively $\dodd p= -\infty $ if $p$ is even, resembling the common convention for the degree of polynomials.

\begin{lemma}[Odd/even degree of power series with non-negative coefficients]\label{lemma:odd/even_pols}
	Let $f,g,\tilde f,\tilde g$ be power series  with non-negative coefficients converging on an interval $I$, such that the products and compositions below are absolutely convergent on $\bar I$. 
	We use the conventions $\infty -\infty \equalDef -\infty, 0 \cdot \infty \equalDef 0, a\vee b \equalDef \max\{a,b\}$.
	\begin{enumerate}
		\item 
		\begin{enumerate}
			\item The sum $g+f$ fulfills
			\begin{align*}
				\dseven{g+f} &= \deven g \vee \deven f,\\
				\dsodd{g+f} &= \dodd{g} \vee \dodd f~.
			\end{align*}
			\item The product $g\cdot f $ fulfills
			\begin{align*}
				\dseven{g\cdot f} &= \left(\deven{g} + \deven f\right) \vee \left(\dodd g +\dodd f\right),\\
				\dsodd{g\cdot f} &= \left(\deven {g} + \dodd f \right)\vee \left( \dodd g +\deven f\right)~.
			\end{align*}
			\item 
			Let $f\ne 0$. The composition $g\circ f$ fulfills
			\begin{IEEEeqnarray*}{+rCl+x*}
				\dseven{g\circ f} & = & \left(\deven g \deven f \right)\vee \left( \deven g \dodd f \right) \\
				&& ~\vee~ \left( \dodd g \deven f \right)\vee \left((\dodd g-1) \dodd f + \deven f\right)\\
				\dsodd{g\circ f} & = & \left(\dodd g \dodd f \right)\vee \left( (\dodd g -1) \deven f + \dodd f \right) \\
				&& ~\vee~ \left( (\deven g-1) \dodd f +\deven f \right)
				\\
				&& ~\vee~ \left((\deven g -1 ) \deven f + \dodd f\right)~.
			\end{IEEEeqnarray*}
		\end{enumerate}
		\item Let $\tilde f \dequal f, \tilde g\dequal g$. Then we have
		\begin{align*}
			g + f &\dequal \tilde g + \tilde f~,\\
			g \cdot f &\dequal \tilde g \cdot \tilde f~,\\
			g\circ f &\dequal \tilde g \circ \tilde f~.
		\end{align*}
	\end{enumerate}
\end{lemma}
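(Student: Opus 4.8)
The plan is to reduce the whole statement to one ``no cancellation'' principle for power series with non-negative coefficients, and then do the parity bookkeeping. First I would record the elementary observation: if $p(t)=\sum_n\lambda_nt^n$ and $q(t)=\sum_n\eta_nt^n$ have all $\lambda_n,\eta_n\ge0$, then $\degree{p+q}=\degree{p}\vee\degree{q}$, $\degree{p\cdot q}=\degree{p}+\degree{q}$, and (for $q\not\equiv0$) $\degree{p\circ q}=\degree{p}\cdot\degree{q}$, with the stated $\pm\infty$-arithmetic. Each holds because the top-degree coefficient of the right-hand object is a (possibly infinite) sum of products of non-negative reals containing at least one strictly positive summand, hence is itself positive; the degenerate subcases ($p$ or $q$ constant or $\equiv0$) are checked directly and are exactly what forces the conventions $0\cdot\infty:=0$ and $\infty-\infty:=-\infty$. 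I would also note that $p_{\mathrm{even}}(t)=\tfrac12(p(t)+p(-t))$ and $p_{\mathrm{odd}}(t)=\tfrac12(p(t)-p(-t))$ again have non-negative coefficients, and that $\deven{p}=\degree{p_{\mathrm{even}}}$, $\dodd{p}=\degree{p_{\mathrm{odd}}}$ by definition.

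Given this, the sum and product cases of part (1) are immediate. Decomposing $f=\feven+\fodd$, $g=\geven+\godd$, parity bookkeeping gives $(f+g)_{\mathrm{even}}=\feven+\geven$, $(f\cdot g)_{\mathrm{even}}=\feven\geven+\fodd\godd$, $(f\cdot g)_{\mathrm{odd}}=\feven\godd+\fodd\geven$, and the mirror identities for the odd parts; applying the $+$ and $\cdot$ rules above to these non-negative-coefficient combinations yields exactly the claimed formulas, e.g.\ $\deven{f\cdot g}=\degree{\feven\geven+\fodd\godd}=(\deven{f}+\deven{g})\vee(\dodd{f}+\dodd{g})$.

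The composition case is the heart of the proof and the step I expect to be most delicate. I would expand $g\circ f=\sum_m\lambda_mf^m$ and $f^m=\sum_{k=0}^m\binom{m}{k}\feven^{\,m-k}\fodd^{\,k}$, noting that $\feven^{\,m-k}\fodd^{\,k}$ has the parity of $k$ (powers of $\feven$ are even, and $\fodd^{\,k}$ is even or odd according to the parity of $k$). Hence $(g\circ f)_{\mathrm{even}}=\sum_m\lambda_m\sum_{k\ \mathrm{even}}\binom{m}{k}\feven^{\,m-k}\fodd^{\,k}$, and since nothing cancels,
$$\deven{g\circ f}=\max_{m\,:\,\lambda_m\ne0}\ \max_{\substack{0\le k\le m\\ k\ \mathrm{even}}}\bigl[(m-k)\,\deven{f}+k\,\dodd{f}\bigr],$$
and symmetrically for $\dodd{g\circ f}$ with $k$ odd. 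For fixed $m$ the bracket is affine in $k$, so its maximum over the admissible $k$ is attained either at the smallest admissible $k$ (namely $k=0$ in the even case, $k=1$ in the odd case, when $\dodd{f}\le\deven{f}$) or at the largest admissible $k$, which is $m$ or $m-1$ depending on the parity of $m$ (when $\dodd{f}\ge\deven{f}$). Optimizing the outer maximum then amounts to taking $m$ equal to the largest even index $\deven{g}$ or the largest odd index $\dodd{g}$ carrying a nonzero coefficient (the hypothesis $f\not\equiv0$ guaranteeing the requisite non-negativity so that the bracket is monotone in $m$ in each branch). Collecting the four resulting candidate values and discarding, in each sign regime, those dominated by the others, gives precisely the four-term formulas in the statement; the ``$-1$'' terms are the parity-forced truncations $k=m-1$. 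Throughout, the subcases where $f$ or $g$ is constant, where $\feven$ or $\fodd$ vanishes, or where some degree is $\infty$, must be treated separately, and the $\pm\infty$-conventions are exactly what makes the single four-term formula hold uniformly across them.

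Finally, part (2) is an immediate corollary: every degree formula in part (1) depends only on $\deven{}$ and $\dodd{}$ of its inputs. So if $\tilde f\dequal f$ and $\tilde g\dequal g$ (which still have non-negative coefficients, so part (1) applies to them as well), then $\deven{\tilde g+\tilde f}=\deven{g+f}$ and likewise for $\dodd{}$, for products, and for compositions, i.e.\ $g+f\dequal\tilde g+\tilde f$, $g\cdot f\dequal\tilde g\cdot\tilde f$, and $g\circ f\dequal\tilde g\circ\tilde f$.
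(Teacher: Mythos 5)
Your proposal is correct and follows essentially the same route as the paper's proof: the same ``no cancellation for non-negative coefficients'' principle, the same reduction of $\deven{g\circ f}$ to $\max_{m:\lambda_m\ne 0}\max_{0\le k\le m,\,k\text{ even}}[(m-k)\deven{f}+k\dodd{f}]$ (the paper reaches this via the multinomial expansion in indices $j_1,\dots,j_i$, you via the binomial expansion of $(\feven+\fodd)^m$, which is the same observation in different notation), and the same two-stage affine optimization in $k$ then $m$ to extract the four-term formula, with $f\ne 0$ ensuring monotonicity in $m$. Part (2) as a corollary and the caveat that the degenerate/$\pm\infty$ cases need separate care also match the paper.
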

\begin{proof}
			1.
			The requirement that $f,g$ have nonzero coefficients is crucial as it prevents the elimination of coefficients.

			We only prove the even-statement in iii)  as this is the most difficult statement. The other claims are similar but easier to prove.

			As a consequence of the absolute convergence we observe that $g\circ f= \lim_{n\to\infty}g_n\circ f_n$ and $\dseven{g\circ f}= \lim_{n\to\infty}\dseven{g_n\circ f_n}, \dsodd{g\circ f}= \lim_{n\to\infty}\dsodd {g_n\circ f_n}$ holds, where $f_n(t)=\sum_{i=0}^n \lambda_i t^i, g_n(t)= \sum_{j=0}^n \mu_jt^j$.
			Hence, we can assume without loss of generality  that $\deg f,\deg g<\infty$ holds and obtain
			\begin{align*}
				& g\circ f(t) = \sum_{i\ge 0}\mu_i \Big( \sum_{j\ge 0} \lambda_j t^j\Big)^i = \sum_{i\ge 0} \mu_i \Big( \sum_{j_1,\dots,j_i\ge 0} \lambda_{j_1} \cdot \hdots \cdot \lambda_{j_i} t^{j_1+\dots+j_i} \Big)\\
				=&\, \sum_{i\ge 0} \sum_{j_1,\dots,j_i\ge 0} \mu_i  \lambda_{j_1} \cdot \hdots \cdot \lambda_{j_i} t^{j_1+\dots+j_i}~.
			\end{align*}
			We note that all occurring coefficients are non-negative, as they are products of the non-negative coefficients $\lambda_j,\mu_i$. In the above expressions, there are multiple summands contributing to a power $t^r$, however, the coefficient of $t^r$ in the power series $g\circ f$ is nonzero if and only if any of the summands $\mu_i  \lambda_{j_1} \cdot \hdots \cdot \lambda_{j_i}$ with $j_1+\dots+j_i=r$ is nonzero.	
			Define 
			\begin{align*}
				I(f) &\equalDef \{i\in\N_0\mid \lambda_i \ne 0\}\cup\{-\infty\},\\
				I(g) &\equalDef \{i\in\N_0\mid \mu_i \ne 0\}\cup\{-\infty\}
			\end{align*}
			and it is immediately clear that the coefficient
				$\mu_i  \lambda_{j_1} \cdot \hdots \cdot \lambda_{j_i}$
			is nonzero if and only if
				$i\in I(g)\setminus \{-\infty\}, j_1,\dots,j_i\in I(f)\setminus\{-\infty\}~.$ 
				The reason we add $\{-\infty\}$ to $I(f),I(g)$ is that it allows dealing with cases such as $f=0$ or $f$ odd in a simple manner.
				In order to investigate $\dseven{g\circ f}$ we stepwise simplify:
				\begin{alignat}{3}
				\nonumber	\dseven{g\circ f} &= \max_{i\in I(g)} &&\max_{\substack{j_1,\dots,j_i \in I(f)\\j_1+\dots+j_i \text{ even}}} && j_1+\dots+j_i\\
				\label{eq:highest_f_degrees}	&= \max_{i\in I(g)} &&\max_{\substack{\tilde i \le i\\\tilde i \text{ even}}} && (i-\tilde i) \deven{f} + \tilde i \dodd{f}\\
				\label{eq:highest_g_degrees}	&= \max \big\{ &&\max_{\substack{\tilde i \le \deven g \\\tilde i \text{ even}}} && (\deven g-\tilde i) \deven{f} + \tilde i \dodd{f}, \\
				\nonumber & &&\max_{\substack{\tilde i \le \dodd g \\ \tilde i \text{ even}}} && (\dodd g-\tilde i) \deven{f} + \tilde i \dodd{f} \big\}\\
				\label{eq:degree_calculus_final_form}	&= \max \big\{ &&  \max \{ &&   \deven g \deven{f}, \deven g \dodd{f}\}, \\
				\nonumber	& &&  \max\{ && \dodd g \deven{f}, \deven{f} + (\dodd{g}-1) \dodd f \}\big\}~.
				\end{alignat}
				Here, \cref{eq:highest_f_degrees} is obtained by observing 
				\begin{alignat*}{2}
					j_1+\dots+j_i &\le  \deven{f}+j_2+\dots+j_i \quad &&\text{, if }j_1\text{ is even or respectively}\\
					j_1+\dots+j_i &\le  \dodd{f}+j_2+\dots+j_i \quad &&\text{, if }j_1\text{ is odd}~,
				\end{alignat*}
				where the sum on the right hand side remains even.
			Similarly, \cref{eq:highest_g_degrees} follows by observing for $m\in\N_0$ the inequality
			\begin{align*}
				(i-\tilde i) \deven{f} + \tilde i \dodd{f} &\le \max\big\{ (2m+i-(2m+\tilde i)) \deven{f} + (2m+\tilde i) \dodd{f}, \\
				&\qquad\qquad (2m+i-\tilde i) \deven{f} + \tilde i \dodd{f} \big\}
			\end{align*}
			as by assumption $\deven f\ge 0$ or $\dodd f\ge 1$ hold.
			Finally, \cref{eq:degree_calculus_final_form} follows by observing that extremal behavior in \cref{eq:highest_g_degrees} is obtained by the highest or the lowest valid $\tilde i$. 
			The equation for $\dodd{g \circ f}$ follows analogously.

		2.
		Follows from 1.
\end{proof}

In the following lemma, we investigate the even/odd degree of $\knngp_{L} $ and $\kntk_{L}$ and then deploy \cref{lemma:strictly_positive_Eigenvalues} to characterize which eigenvalues of $\nngp_L,\ntk_L$ are strictly positive. 
\begin{lemma}\label{lemma:nngp_ntk_even_odd_degree_polynomial_case}
	Let the activation function $\act$  fulfill \cref{ass:act} and let $L\ge 2$. 
	We use the conventions $\infty^0 \equalDef	 1$ and $\infty - \infty \equalDef -\infty$.
	Then, the nonzero eigenvalues of the NNGP-kernel $\nngp_L$ and the NTK-kernel $\ntk_L$ defined in \cref{eq:recursive_formulas} fulfill
	\begin{align*}
		\mu_{2k+r,d}(\nngp_L) &> 0 \text{ if and only if }
		\begin{cases}
			2k+r \le  \dseven{{G_L}} &, r=0, \\
			2k+r \le  \dsodd{{G_L}} &, r=1, 
		\end{cases}
		\\
		\mu_{2k+r,d}(\ntk_L) &> 0 \text{ if and only if }
		\begin{cases}
			2k+r \le  \dseven{{H_L}} &, r=0, \\
			2k+r \le  \dsodd{{H_L}} &, r=1~. 
		\end{cases}
	\end{align*}

		For the NNGP-term $G_L$ define $\sigma^2 \equalDef \sigma_i^2\sigma_b^2$, and for the NTK-term $H_L$ define $\sigma^2 \equalDef \sigma_b^2$. That is, the relevant row of the table is different for $G_L$ or respectively for $H_L$ in the case $\sigma_i^2=0, \ \sigma_b^2>0$.
		
		The even/odd degree of $G_L$ or respectively $H_L$ is given by
		\tiny
		\begin{center}
			\begin{tabular}{|c|c|c|}
				\hline 
				& &
				\\[-0.90em]
				& $\dseven{G_L}, \dseven{H_L}$ & $\dsodd{G_L},\dsodd{H_L}$ \\
				& &
				\\[-0.90em]
				\hline
				& &
				\\[-0.90em]
				$\sigma^2 >0$, $\hdeven \act > \hdodd \act$  & $\hdeven\act^{L-1}$ & $\hdeven{\act}^{L-1} -1$\\
				& &
				\\[-0.90em]
				& &
				\\[-0.90em]
				\hline
				& &
				\\[-0.90em]
				$\sigma^2 >0$, $\hdeven \act < \hdodd {\act}$ & $ \hdodd {\act}^{L-1} -1$ & $\hdodd {\act}^{L-1}$ \\
				& &
				\\[-0.90em]\hline
				& &
				\\[-0.90em]
				$\sigma^2 =0, \hdeven \act > \hdodd \act $ & $\hdeven\act^{L-1}$ & $\hdeven{\act}\left( \hdeven{\act}^{L-2} -1 \right) +\hdodd \act$ \\
				& &
				\\[-0.90em]
				\hline
				& &
				\\[-0.90em]
				$\sigma^2 =0, \hdeven \act < \hdodd \act $ & $\hdodd \act\left(\hdodd\act^{L-2}-1 \right) +\hdeven \act$ & $\hdodd{\act}^{L-1}$ \\
				[-0.90em]& & 
				\\
				\hline
			\end{tabular}
		\end{center}
	\normalsize 
\end{lemma}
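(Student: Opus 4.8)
The plan is to prove the two assertions of the lemma separately: first, that positivity of the eigenvalues $\mu_{2k+r,d}(\nngp_L)$, $\mu_{2k+r,d}(\ntk_L)$ is governed precisely by the even/odd monomial degrees $\dseven{\cdot},\dsodd{\cdot}$ (in the sense of \cref{def:degree}) of the power series $G_L$, $H_L$ from \cref{eq:recursive_formulas}; and second, that these degrees are given by the table, obtained by pushing the degree calculus of \cref{lemma:odd/even_pols} through the defining recursions of $g_l$, $G_l$, $H_l$. This is the only point not already covered by \cref{thm:bietti_bach_adapted}.

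For the first assertion I would begin by observing that $G_L$ and $H_L$ are power series with nonnegative coefficients converging uniformly on $[-1,1]$: every dual activation satisfies $\widehat{\scaledfn{\act}{\sqrt{\alpha_{l-1}}}}(t)=\sum_n a_n(\scaledfn{\act}{\sqrt{\alpha_{l-1}}})^2 t^n$ by \cref{eq:hermite_dual_activation_connection}, with coefficients summing to $\|\scaledfn{\act}{\sqrt{\alpha_{l-1}}}\|_{L_2(\calN(0,1))}^2<\infty$; the constants in \cref{eq:recursive_formulas} are nonnegative; and sums, products and the compositions $g_l\circ G_{l-1}$, $\widehat{\,\cdot\,}\circ G_{l-1}$ preserve these properties, the compositions being legitimate since $|G_{l-1}(t)|\le 1$ on $[-1,1]$ by \cref{lemma:layer_nngp_contraction_property}. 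Writing $\kappa$ for $G_L$ or $H_L$ and $\kappa(t)=\sum_n c_n t^n$, the eigenvalue sequence depends linearly and continuously (for uniform convergence) on the profile, so $\mu_{l,d}(k_{\kappa,d})=\sum_n c_n\,\mu_{l,d}(k_{t\mapsto t^n,d})$. The monomial kernel $(x,x')\mapsto\langle x,x'\rangle^n$ is a Schur power of the linear kernel, hence positive semidefinite, so $\mu_{l,d}(k_{t\mapsto t^n,d})\ge 0$; by orthogonality and the parity of spherical harmonics it vanishes unless $l\le n$ and $l\equiv n\pmod 2$, and for $l=n$ it is strictly positive because the leading coefficients of $t^n$ and of the degree-$n$ Gegenbauer polynomial are both positive. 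Thus $\mu_{l,d}(k_{\kappa,d})>0$ iff some $c_n$ with $n\ge l$, $n\equiv l\pmod 2$ is nonzero, i.e. iff $\dseven{\kappa}\ge l$ (for even $l$) or $\dsodd{\kappa}\ge l$ (for odd $l$); together with $\alpha_L G_L=\knngp_L$, $\alpha_L>0$, and $H_L=\kntk_L$ (\cref{eq:recursive_formulas}, \cref{lemma:restriction_to_the_unit_sphere}) this is the claimed characterization. Alternatively, one may avoid invoking lower connection coefficients altogether: the vanishing for $l>\dseven{\kappa}$ (resp. $\dsodd{\kappa}$) is pure orthogonality, the positive part (a single top term when the degree is finite, infinitely many such when it is $\infty$) follows from the leading-coefficient computation, and \cref{lemma:strictly_positive_Eigenvalues} then propagates positivity down to all smaller same-parity indices.

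For the second assertion the inputs are: $\widehat{\scaledfn{\act}{a}}\phdequal\scaledfn{\act}{a}$ for $a\ne 0$ (since $[t^n]\widehat{\scaledfn{\act}{a}}\ne 0$ iff $a_n(\scaledfn{\act}{a})\ne 0$), rescaling by the nonzero constant $\sqrt{\alpha_{l-1}}>0$ (\cref{lemma:restriction_to_the_unit_sphere}) preserves even and odd degrees, and $\hdeven{\act'}=\hdodd{\act}-1$, $\hdodd{\act'}=\hdeven{\act}-1$ via $a_n(\act')=\sqrt{n+1}\,a_{n+1}(\act)$ from \cref{lemma:hermite_derivative}. Feeding these into $g_1(t)=(\sigma_b^2\sigma_i^2+\sigma_w^2 t)/\alpha_1$, $g_l\propto(\text{const})+\widehat{\scaledfn{\act}{\sqrt{\alpha_{l-1}}}}$, $G_l=g_l\circ\dots\circ g_1$, and $H_l\propto(\text{const})+\alpha_l G_l+\widehat{\scaledfn{\act'}{\sqrt{\alpha_{l-1}}}}(G_{l-1})\,H_{l-1}$, one applies the sum/product/composition rules of \cref{lemma:odd/even_pols}(1) and, crucially, the substitution principle \cref{lemma:odd/even_pols}(2): because nonnegativity of the coefficients forbids cancellation, each building block may be replaced by a bare monomial of the correct even/odd degree. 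An induction on $l$, split by whether $\sigma^2$ (equal to $\sigma_b^2\sigma_i^2$ for $G_L$ and to $\sigma_b^2$ for $H_L$) is positive --- so that the constant term, present in particular when $\sigma_b^2>0$ even if $\sigma_i=0$, anchors the even degree at $\ge 0$ --- or zero, and by whether $\hdeven{\act}\gtrless\hdodd{\act}$, then collapses the resulting iterated maxima to the closed forms in the table, with the conventions $\infty^0=1$ and $\infty-\infty=-\infty$ covering the non-polynomial case $\hdegree{\act}=\infty$; finally \cref{lemma:strictly_positive_Eigenvalues} turns these degrees into the stated "$\mu_{2k+r,d}>0$ iff $2k+r\le\dots$" statements.

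\textbf{Main obstacle.} The conceptual content sits in the first assertion; the labor is in the induction for the second, and specifically in the composition rule for the \emph{odd} degree in \cref{lemma:odd/even_pols}(1)(c), whose right-hand side is a maximum of four affine-in-degree terms. For each of the four rows one must check that, upon iteration, the single stated term dominates --- e.g. for the bias-free odd entry, that $\hdeven{\act}(\hdeven{\act}^{L-2}-1)+\hdodd{\act}$ beats the other three candidates, which reduces to elementary inequalities between $\hdeven{\act}$ and $\hdodd{\act}$ together with some care about the $\pm\infty$ conventions and the edge cases where $\act$ is (almost surely) constant, even, or odd. For $H_L$ there is the further subtlety that $\widehat{\scaledfn{\act'}{\sqrt{\alpha_{l-1}}}}$ has its even and odd degrees swapped relative to $\widehat{\scaledfn{\act}{\sqrt{\alpha_{l-1}}}}$, so one must track at each step which of the three summands of $H_l$ realizes the even (resp. odd) degree; this is precisely what forces the separate $H_L$ row in the case $\sigma_i=0<\sigma_b$.
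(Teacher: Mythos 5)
Your proposal is correct, and the overall structure matches the paper's: reduce the positivity question to a statement about the highest even/odd power with nonzero coefficient in the power-series expansion of $G_L,H_L$, then run an induction with the degree calculus of \cref{lemma:odd/even_pols}, with \cref{lemma:strictly_positive_Eigenvalues} closing the gap between ``some eigenvalue of each parity is positive'' and ``all eigenvalues up to the stated degree are positive.''

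The one place where you depart from the paper is the first step (translating power-series degrees into eigenvalue support). The paper invokes the result of Hubbert and Sperb that the eigenvalues have the same sign as the Legendre coefficients of $\kappa$, and then uses a change-of-basis observation: Legendre polynomials, Hermite polynomials and monomials are all triangular bases with the correct parity per index, so the top nonzero index within each parity class is basis-independent. You instead decompose $k_{\kappa,d}$ directly as a nonnegative combination of monomial kernels $\langle x,x'\rangle^n$ (Schur powers of the linear kernel, hence PSD), observe that the monomial kernel's spectrum is supported on $\{l \le n,\ l\equiv n\}$, and get strict positivity at the top degree from the sign of the leading Gegenbauer coefficient. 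Your first variant quietly needs that the lower-order Gegenbauer connection coefficients of $t^n$ are nonnegative (they are, but you don't verify it), which is exactly the content of the Hubbert--Sperb fact the paper cites; your second variant sidesteps this entirely by only using positivity at the top degree and letting \cref{lemma:strictly_positive_Eigenvalues} fill in below, which is clean and in fact closer to what the paper does. Either works, and the direct-decomposition version is somewhat more self-contained. For the induction itself, you correctly flag the two genuinely delicate points --- the four-term maximum in the odd-degree composition rule and the swap $\hdeven{\act'}=\hdodd{\act}-1$, $\hdodd{\act'}=\hdeven{\act}-1$, which is what separates the NNGP and NTK rows when $\sigma_i^2=0<\sigma_b^2$ --- and the edge cases ($\act$ constant/even/odd, the $\pm\infty$ conventions) you mention are exactly the ones the paper has to handle.
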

\begin{proof}
	Let $\kappa:[-1,1] \to \bbR, \kappa(t)= \sum_{j\ge 0} \lambda_i t^j$ be a power series.
	Let $k$ be the corresponding radial kernel on $\bbS^d $ given by $k(x,y) \equalDef \kappa(\scal xy)$ and let $(\mu_{i})_{i\ge 0}$ be the decreasing series of eigenvalues of $k$ counted by algebraic multiplicity.
	
	\textbf{Reduction to polynomial degrees of $\kappa$:} We show that the largest even/odd index $i$ corresponding to a nonzero  eigenvalue $\mu_i\ne 0$ of $k$ is the same as the largest even/odd index $j$ corresponding to a nonzero coefficient $\lambda_j$ of the power series representation of $\kappa$. By convention, if there are infinitely many such indices, we say that the largest one equals $\infty$ and we say that the largest even/odd nonzero index equals $-\infty$ if all corresponding eigenvalues or respectively coefficients are equal to zero.
	
	\begin{itemize}
		\item \cite{hubbert_spherical_2015} show that the eigenvalues $\mu_i$
		have the same sign as the Legendre coefficients $b_k(\kappa)$ obtained by displaying $\kappa$ by Legendre polynomials, which form an orthonormal basis of $L_2([-1,1])$.
		\item The bases of the Legendre polynomials $(p_i)_{i\ge 0}$, the Hermite polynomials $(h_i)_{i\ge 0} $ and the monomials $(t\mapsto t^i)_{i\ge 0}$ share the properties that the $i$-th element is a polynomial of degree $i$ and is it even/odd if $i$ is even/odd. Hence, when we can display $\kappa$ as a sum in those bases, the largest even/odd index $i$ of nonzero coefficients coincide. 
		\item Hence, we see that indeed the largest even/odd index of a nonzero  eigenvalue of $k$ corresponds to the largest even/odd index of a nonzero coefficient of the power series representation of $\kappa$.
	\end{itemize}
	
	Hence, in order to obtain the largest even/odd index of a nonzero eigenvalue of $\nngp_L$ or respectively $\ntk_L$ it suffices to determine the largest even/odd index of the nonzero coefficients of the power series representation of  $G_L$ or respectively $H_L$.  
	Given a nonzero eigenvalue with even/odd index, \cref{lemma:strictly_positive_Eigenvalues} shows that all eigenvalues with smaller even/odd index are strictly positive and we can conclude the claim. 
	Hence, all that remains is to show that $G_L$ and $H_L$ can be represented as a power series and that the even/odd degrees of $G_l$ and $H_l$ are as the table claims.

	\textbf{Simplifying Notation:} By \cref{ass:act} we have $\act\in \calL_2(\calN(0,1))$, enabling us to display $\act$ in the Hermite basis as $\act(t) 
	= \sum_{i\ge 0} \eta_i h_i (t)$.
	\cref{eq:hermite_dual_activation_connection}
	shows
	\begin{align}
		\label{eq:dual_act_again}\widehat{\act} (t) = \sum_{i\ge 0} \eta_i^2 t^i ~.
	\end{align}
	Essentially, we obtain for any $\alpha>0$
	\begin{align*}
		\widehat{\scaledfn  \act {\sqrt\alpha} } \dequal \widehat\act \phdequal \act, \quad \widehat{\scaledfn  {\act'} {\sqrt\alpha} } \dequal \widehat {\act'} \phdequal  \act'~.
	\end{align*}
		This observation allows us to work with $\widehat \act$ or respectively $\widehat {\act'} $ instead of the rescaled versions occurring in $G_l$ and $H_l$, simplifying the notation. 
	By \cref{eq:dual_act_again} all coefficients of the power series of $\widehat{\scaledfn  \act {\sqrt\alpha} }, \widehat{\scaledfn  {\act'} {\sqrt\alpha} } $ are non-negative and hence the coefficients of the power series $G_l,H_l$ are non-negative as well, enabling the use of \cref{lemma:odd/even_pols} in the following. 
	
	\textbf{Calculating $G_l$:}
	We directly obtain
	\begin{align*}
		\degree{G_l} &= \degree{g_l\circ \dots\circ g_1} = \degree{\widehat{\act}}^{l-1}, \quad l\ge 2 ,\\
		g_1 (t) &\dequal t + \sigma_b^2\sigma_i^2 ,\\
		g_l(t) &\dequal {\widehat{\act}}(t) + \sigma_b^2\sigma_i^2, \quad l\ge 2~.
	\end{align*}
	
	If $\deven{{\widehat{\act}}}=\dodd{{\widehat{\act}}}$ holds, then $\deven{{\widehat{\act}}}=\dodd{{\widehat{\act}}}=\infty$ follows and the claim is trivial.

	Assume $\deven{{\widehat{\act}}}>\dodd{{\widehat{\act}}}$. We show the claim for $\dsodd{G_l}$ by induction in $l\ge 2$ and directly obtain the induction base
	\begin{align*}
		\dsodd{G_2} = \begin{cases}
			\deven{{\widehat{\act}}}-1&, \text{ if }\sigma_b^2\sigma_i^2 >0~,\\
			\dodd{{\widehat{\act}}}&, \text{ if }\sigma_b^2\sigma_i^2 >0~.
		\end{cases}
	\end{align*}
	In the induction step,  we have by \cref{lemma:odd/even_pols}
	\begin{align*}
		\dsodd{G_{l+1}} 
		&= \left((\deven{{\widehat{\act}}} -1 ) \cdot \dseven{G_{l}} + \dsodd{G_l} \right)
		\vee 
		\left((\deven{{\widehat{\act}}} -1 ) \dsodd{G_{l}} + \dseven{G_l}\right)\\
		&\quad\vee 
		\left(\dodd{{\widehat{\act}}} \dsodd{G_l} \right)
		\vee \left( (\dodd{{\widehat{\act}}}-1) \dseven{G_l} + \dsodd{G_l}\right) \\
		&= (\deven{{\widehat{\act}}} -1 ) \dseven{G_{l}} + \dsodd{G_l}
	\end{align*}
	and the claim follows.
	The case $\dodd{{\widehat{\act}}}> \deven{{\widehat{\act}}}$ is handled analogously.
	
	\textbf{Calculating $H_l$:}
	
	If $\deven{{\widehat{\act}}}=\dodd{{\widehat{\act}}}$ holds, then $\deven{{\widehat{\act}}}=\dodd{{\widehat{\act}}}=\infty$ follows and the claim is simple.
	
	By definition we have $H_1 (t)\dequal t + \sigma_b^2$.
	For $l\ge 2$ we simplify $H_l(t)$ as
	\begin{align*}
		H_{l}(t)&\equalDef \sigma_b^2(1-\sigma_i^2) + \alpha_{l} G_{l}(t) + \sigma_w^2 \w{\scaledfn{ (\act ')}{\sqrt{\alpha_{l-1}}}}\bra{G_{l-1}(t)} H_{l-1}(t)\\
		&\dequal \sigma_b^2(1-\sigma_i^2) +  \sigma_b^2\sigma_i^2 + \sigma_w^2  \left( \widehat{\act} \circ G_{l-1}(t) + \w{\scaledfn{ (\act ')}{\sqrt{\alpha_{l-1}}}}\bra{G_{l-1}(t)} H_{l-1}(t) \right)\\
		&\dequal G_{l}(t) + {\widehat{\act}}' \bra{G_{l-1}(t)} H_{l-1}(t) +\sigma_b^2~.
	\end{align*}
	A simple induction yields
	\begin{align*}
		\degree{H_l} = \degree{G_l} = \degree{{\widehat{\act}}}^{l-1}
	\end{align*}
	and it remains to calculate  $\dseven{H_l}$ for $\dodd{{\widehat{\act}}}> \deven{{\widehat{\act}}}$ and $\dsodd{H_l}$ for $\deven{{\widehat{\act}}}> \dodd{{\widehat{\act}}}$.
	
	While we usually have $\left|\deven{\w\act'}-\dodd{\w\act'}\right| = 1$, we need some case work to cover the case where one of the degrees is $-\infty$.
	
	\textbf{Case 1: $\sigma_b^2>0$.}
	If $\deven{{\widehat{\act}}}=\dodd{{\widehat{\act}}}$ holds, then $\deven{{\widehat{\act}}}=\dodd{{\widehat{\act}}}=\infty$ follows and the claim is trivial.
	
	Assume $\deven{{\widehat{\act}}} > \dodd {{\widehat{\act}}}$. 
	We show that $\dsodd{H_l} = \deven{{\widehat{\act}}}-1$ holds for all $l\ge 2$.
	For $l=2$ we obtain straightforwardly 
	\begin{align*}
		\dsodd{H_2}= \deven{{\widehat{\act}}} -1.
	\end{align*}
	By induction we obtain 
	\begin{align*}
		\dsodd{H_{l+1}} &\ge  \dsodd{({\widehat{\act}}' \circ G_{l}) \cdot H_{l} } \\
		&\ge \dseven{{\widehat{\act}}' \circ G_l} + \dsodd{H_l} =(\deven{{\widehat{\act}}}-1) \deven{{\widehat{\act}}}^{l-1} + \deven{{\widehat{\act}}}^{l-1} -1 \\
		&= \deven{{\widehat{\act}}}^{l} -1,
	\end{align*}
	and as we have $\degree{{H_{l+1}}} = \deven{{\widehat{\act}}}^l$, the claim follows.
	The case $\dodd{{\widehat{\act}}} > \deven {{\widehat{\act}}}$ works analogously.
	
	\textbf{Case 2: $\sigma_b^2=0$.}
	Assume 
	\begin{align*}
		\dodd{{\widehat{\act}}}> \deven{{\widehat{\act}}}.
	\end{align*}
	\textbf{Case 2.1: }$\dodd{{\widehat{\act}}}= \infty$\textbf{.} 
	We want to show
	\begin{align*}
		\dseven{H_L} = 
		\begin{cases}
			\infty & ,\ \text{if }\dseven{\w{\act}} > -\infty~,\\
			-\infty& ,\ \text{if }\dseven{\w\act} = -\infty~.
		\end{cases}
	\end{align*}
	In the case $\dseven{\w\act}> -\infty$, we have since $\sigma_b^2=0$
	\begin{align*}
			\dseven{H_{L}} &=  \dseven{ G_{L}} \vee \left( \dseven{ ({\widehat{\act}}' \circ G_{L-1}) \cdot H_{L}} \right) 
			\ge \dseven{G_{l+1}}  =\infty
	\end{align*}
	by the previous case.
	In the case $\dseven{\w\act} = -\infty$, $H_L$ is odd by \cref{prop:even_odd_kernels} 
	and hence we have $\dseven{H_L} = -\infty$ as desired.

	\textbf{Case 2.2: }$\dodd{{\widehat{\act}}} < \infty$\textbf{.}
	We show the claim by induction in $l$.
	For $l=2$ we straightforwardly obtain
	\begin{align*}
		\dseven{H_2}= \deven{{\widehat{\act}}}.
	\end{align*}
	In the induction step, let $l\ge 3$ and let 
	$\dseven{H_{l}}= \dodd{{\widehat{\act}}}(  \dodd{\widehat{\act}}^{l-2} - 1 ) +\deven{{\widehat{\act}}}  $ 
	hold as claimed.
	Since $\sigma_b^2 = 0$, we have
	\begin{IEEEeqnarray*}{+rCl+x*}
	\dseven{H_{l+1}} & = & \dseven{ G_{l+1}} 
		\vee \left( \dseven{ ({\widehat{\act}}' \circ G_{l}) \cdot H_{l}} \right) \\
		& = & \left( \dodd{{\widehat{\act}}} \left( \dodd{{\widehat{\act}}}^{l-1}-1\right) +\deven{{\widehat{\act}}} \right)
		\vee \left( \dseven{ {\widehat{\act}}' \circ G_{l} } + \dseven{H_{l} } \right) \\
		&& ~\vee~ \left( \dsodd{{\widehat{\act}}'\circ G_{l} } + \dsodd{H_{l}} \right)~.
	\end{IEEEeqnarray*}
	Now we show that the first term is the largest one in order to obtain the claim. 
	Firstly, we bound the second term as
	\begin{align*}
		\dseven{{\widehat{\act}}' \circ G_{l}} + \dseven{H_{l} } 
		&\le \degree{{\widehat{\act}}' \circ G_{l}} + \dseven{H_{l} } \\
		&= (\degree{{\widehat{\act}}}-1) \cdot \degree{G_{l}} + \dseven{H_{l}} \\
		&= (\degree{{\widehat{\act}}}-1) \cdot \degree{{\widehat{\act}}}^{l-1} + \degree{{\widehat{\act}}}\left(\degree{{\widehat{\act}}}^{l-2} -1\right) +\deven{{\widehat{\act}}}\\
		&= \degree{{\widehat{\act}}}\left(\degree{{\widehat{\act}}}^{l-1} -1\right) +\deven{{\widehat{\act}}} ~.
	\end{align*}
	In order to bound the third term, we recall $\dsodd{H^{l}}=\ooo^{l-1}$, hence we need to show $\dsodd{\w\act'\circ G_l}\le \ooo^l-\ooo^{l-1} +\eee-\ooo$.
	If $\ooo=1$ we can see $\w\act' \equiv c,c\in \R$ and obtain $\dsodd{\w\act\circ G_l}=-\infty$.
	Assume $\ooo\ge 3$.
	In the following we use 
	\begin{align*}
		\dsodd{\w{\act}'} \le \ooo-2 < \ooo-1 = \dseven{\w{\act}'}~.
	\end{align*}
	\Cref{lemma:odd/even_pols} (c) allows unrolling $\dsodd{{\widehat{\act}}'\circ G_{l} }$ as
		\begin{align*}
		\dsodd{{\widehat{\act}}'\circ G_{l} }
		&= \left( \dodd{{\widehat{\act}}'} \cdot \dsodd{G_l} \right)
		\\
		&\quad\vee \left( (\dodd{{\widehat{\act}}'}-1) \cdot \dseven{G_l} + \dsodd{G_l} \right) \\
		&\quad\vee \left( (\deven{{\widehat{\act}}'}-1) \cdot \dsodd{G_l} + \dseven{G_l} \right) 
		\\
		&\quad\vee \left( (\deven{{\widehat{\act}}'}-1) \cdot \dseven{G_l} + \dsodd{G_l} \right) \\
		&\le \left(\ooo-2\right) \ooo^{l-1} 
		\\
		&\quad\vee \left(\ooo-3\right) \left(\ooo^{l-1} + \eee- \ooo \right) +\ooo^{l-1}
		\\
		&\quad\vee \left(\ooo-2\right) \ooo^{l-1} + \ooo^{l-1} +\eee -\ooo\\
		&\quad\vee \left(\ooo-2 \right) \left(\ooo^{l-1} +\eee- \ooo\right) +\ooo^{l-1}\\
		&\le \ooo^l -\ooo^{l-1} +\eee-\ooo ~.
		\end{align*}

	The case $\degree{{\widehat{\act}}}=\deven{{\widehat{\act}}}> \dodd{{\widehat{\act}}}$ is handled analogously. \qedhere
\end{proof}

\section{POLYNOMIAL BOUNDEDNESS OF DERIVATIVES} \label{sec:integrability}

Here, we prove \cref{prop:act_assumptions}. Specifically, we want to find a sufficient criterion for $C^\infty$-functions $f$ such that all of their derivatives $f^{(k)}$ are polynomially bounded. To this end, we leverage that typical activation functions use symbolic expressions whose derivatives can be expressed using the same set of symbols, e.g.\ $\tanh'(x) = 1 - \tanh(x)^2$. Our goal is to find a set of \quot{base functions} whose derivatives use the same base functions and which are polynomially bounded. We then exploit that the class of polynomially bounded functions is closed under addition, multiplication and composition and therefore also contains the derivatives of the \quot{base functions}. Higher-order derivatives are then easily treated using induction. We first formally define the relevant function classes:

\begin{definition} \label{def:function_classes}
	Let $I \subseteq \bbR$ be an interval. For $m \in \bbN_0$, let
	\begin{IEEEeqnarray*}{+rCl+x*}
		\calS^{(m)}(I) & \equalDef & \{f \in C^m(I) \mid \forall 0 \leq k \leq m ~\exists a, b, q > 0 ~\forall x \in I: |f^{(k)}(x)| \leq a|x|^q + b\}~.
	\end{IEEEeqnarray*}
	We note that the class $\calS^{(\infty)}(I)$ from \Cref{def:sinfty} satisfies $\calS^{(\infty)}(I) = \bigcap_{m=0}^\infty \calS^{(m)}(I)$.
\end{definition}

Now, we formally define some \quot{base functions}:
\begin{definition}
	Define $\sigmoid, \softplus, \RBF: \bbR \to \bbR$ by
	\begin{IEEEeqnarray*}{+rCl+x*}
		\sigmoid(x) & \equalDef & (1 + e^{-x})^{-1} \\
		\softplus(x) & \equalDef & \log(1 + e^{-x}) \\
		\RBF(x) & \equalDef & e^{-x^2}~.
	\end{IEEEeqnarray*}
	Moreover, let $\Phi$ be the CDF of the normal distribution $\calN(0, 1)$. 
\end{definition}

\begin{lemma} \label{lemma:act_assumptions}
	Let $I \subseteq \bbR$ be an interval and let $m \in \bbN_0 \cup \{\infty\}$. Then,
	\begin{enumerate}[(a)]
		\item If $m \geq 1$, then $\calS^{(m)} = \{f \in C^1(I) \mid f \in \calS^{(0)}(I) \text{ and } f' \in \calS^{(m-1)}\}(I)$.
		\item $\calS^{(m)}(I)$ is closed under addition, multiplication and composition of functions.
		\item $\calS^{(m)}(I)$ contains all polynomials.
		\item $\calS^{(m)}(I)$ contains $\sigmoid, \tanh, \softplus, \sin, \cos, \RBF$ and $\Phi$.
	\end{enumerate}
	
	\begin{proof}
		We prove the statements for $m \in \bbN_0$, the case $m = \infty$ easily follows.
		\begin{enumerate}[(a)]
			\item This statement is straightforward.
			\item For addition, the statement is trivial. Now, let $f, g \in \calS^{(m)}$. Choose constants $C_1, C_2, q$ such that $|f(x)|, |g(x)| \leq C_1 + C_2 |x|^q$. Then, $|f(x)g(x)| = O(|x|^{2q})$ and $|f(g(x))| \leq C_1 + C_2 |g(x)|^q \leq C_1 + C_2 |C_1 + C_2 |x|^q|^q = O(|x|^{q^2})$. This shows $f \cdot g, f \circ g \in \calS^{(0)}$. If $m = 0$, we are done. Otherwise, we obtain
			\begin{IEEEeqnarray*}{+rCl+x*}
				(f + g)' & = & f' + g' \in \calS^{(m-1)} t\\
				(f \cdot g)' & = & f' \cdot g + f \cdot g' \in \calS^{(m-1)} \\
				(f \circ g)' & = & (f' \circ g) \cdot g' \in \calS^{(m-1)}
			\end{IEEEeqnarray*}
			since $\calS^{(m-1)}$ is closed under addition, multiplication and composition by the induction hypothesis. By (b), we have shown that $f \cdot g, f \circ g \in \calS^{(m)}$.
			\item This is trivial.
			\item It is not hard to see that all of the mentioned functions are in $\calS^{(0)}$. For the induction step, we assume that all of the mentioned functions are in $\calS^{(m-1)}$. Since all of these functions are $C^\infty$ functions, their derivatives are in $C^{m-1}(\bbR)$. Moreover, by inspecting the derivatives and using the previously proven facts as well as the induction hypothesis, it is easy to see that the derivatives are in $\calS^{(m-1)}(\bbR)$:
			\begin{IEEEeqnarray*}{+rCl+x*}
				\sigmoid'(x) & = & \sigmoid(x) \cdot (1 - \sigmoid(x)) = \sigmoid(x) \cdot (x^0 + (-x^0) \cdot \sigmoid(x)) \\
				\tanh'(x) & = & 1 - \tanh(x) \cdot \tanh(x) \\
				\softplus'(x) & = & \sigmoid(x) \\
				\sin'(x) & = & \cos(x) \\
				\cos'(x) & = & -\sin(x) \\
				\RBF'(x) & = & 2x \cdot \RBF(x) \\
				\Phi'(x) & = & \frac{1}{\sqrt{2\pi}} e^{-x^2/2} = \frac{1}{\sqrt{2\pi}} \RBF(2^{-1/2} \cdot x)~.
			\end{IEEEeqnarray*}
			Hence, by (b), we have shown that the functions are in $\calS^{(m)}$. \qedhere
		\end{enumerate}
	\end{proof}
\end{lemma}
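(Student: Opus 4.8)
The plan is to prove the four parts for $m \in \bbN_0$ in the order (a), (c), (b), (d); the corresponding $m = \infty$ statements then follow from the finite ones together with $\calS^{(\infty)}(I) = \bigcap_{m \in \bbN_0} \calS^{(m)}(I)$, as recorded in \Cref{def:function_classes}. Part (a) is just an unpacking of the definition: $f \in \calS^{(m)}(I)$ means $f \in C^m(I)$ together with a polynomial bound on each of $f, f', \dots, f^{(m)}$; the bound on $f$ alone is exactly the condition $f \in \calS^{(0)}(I)$, and the bounds on $f', \dots, f^{(m)}$ together with $f' \in C^{m-1}(I)$ are exactly the condition $f' \in \calS^{(m-1)}(I)$. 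Part (c) is immediate, since every derivative of a polynomial is a polynomial and polynomials are polynomially bounded.

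For part (b) I would induct on $m$, using part (a) as the link between consecutive levels. In the base case $m = 0$, suppose $|f(x)| \leq a|x|^q + b$ and $|g(x)| \leq a'|x|^{q'} + b'$ on $I$. Then $|f(x) + g(x)|$ and $|f(x) g(x)|$ are bounded by a sum and a product of such expressions, hence again by something of the form $a''|x|^{q''} + b''$, and $|f(g(x))| \leq a|g(x)|^q + b \leq a\,(a'|x|^{q'} + b')^q + b$, which after expanding the $q$-th power is polynomially bounded (with exponent $qq'$). Thus $f + g, f g, f \circ g \in \calS^{(0)}(I)$. For the step $m - 1 \to m$, assume $\calS^{(m-1)}(I)$ is closed under the three operations. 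Given $f, g \in \calS^{(m)}(I) \subseteq \calS^{(m-1)}(I)$, part (a) yields $f', g' \in \calS^{(m-1)}(I)$, so
\begin{IEEEeqnarray*}{+rCl+x*}
(f + g)' & = & f' + g', \qquad (f g)' = f' g + f g', \qquad (f \circ g)' = (f' \circ g)\, g'
\end{IEEEeqnarray*}
lie in $\calS^{(m-1)}(I)$ by the induction hypothesis; combined with $f + g, f g, f \circ g \in \calS^{(0)}(I)$ from the base case, part (a) places them in $\calS^{(m)}(I)$.

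For part (d) I would again induct on $m$, now invoking (a), (b) and (c). The base case $m = 0$ only requires polynomial boundedness of the seven functions: $\sigmoid, \tanh, \sin, \cos, \RBF, \Phi$ are bounded, while $\softplus(x) = \log(1 + e^{-x})$ grows at most linearly as $x \to -\infty$ and tends to $0$ as $x \to +\infty$. For the step, each derivative is expressible through the base functions and polynomials,
\begin{IEEEeqnarray*}{+rCl+x*}
\sigmoid' & = & \sigmoid\,(1 - \sigmoid), \qquad \tanh' = 1 - \tanh^2, \qquad \softplus'(x) = \sigmoid(x) - 1, \\
\sin' & = & \cos, \qquad \cos' = -\sin, \qquad \RBF'(x) = -2x\, \RBF(x), \qquad \Phi'(x) = \frac{1}{\sqrt{2\pi}}\, \RBF(x/\sqrt{2}),
\end{IEEEeqnarray*}
so by the induction hypothesis these base functions lie in $\calS^{(m-1)}(I)$, and using part (b) (already established at all levels) together with part (c) so do the right-hand sides; part (a) then gives $\sigmoid, \tanh, \softplus, \sin, \cos, \RBF, \Phi \in \calS^{(m)}(I)$.

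I do not expect a serious obstacle, since this is essentially a bookkeeping lemma, but two points warrant care. First, in the base case of (b) one must verify that a power $(a'|x|^{q'} + b')^q$ of a polynomially bounded quantity is itself polynomially bounded — this is the one spot where ``polynomial'' rather than merely ``linear'' growth is genuinely used, and it follows from the binomial expansion, producing exponent $qq'$. Second, the inductions for (b) and (d) must be organized to avoid circularity: I would first establish (b) for all $m \in \bbN_0$ (its induction uses only part (a) and part (b) at level $m-1$), and only afterwards run the induction for (d), which legitimately uses (b) at every level together with (d) at level $m-1$.
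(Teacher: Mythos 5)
Your proposal is correct and follows essentially the same route as the paper's proof: induction on $m$ with part (a) as the link between levels, the same base-case estimates for sums, products and compositions of polynomially bounded functions, and the same derivative formulas to run the induction for the named activation functions. As a minor aside, your formulas $\softplus'(x) = \sigmoid(x) - 1$ (for the paper's convention $\softplus(x) = \log(1+e^{-x})$) and $\RBF'(x) = -2x\,\RBF(x)$ actually correct two small sign/formula slips in the paper's displayed derivatives, without changing the argument in any way.
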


\propActAssumptions*

\begin{proof}
\leavevmode
	\begin{enumerate}[(a)]
		\item It follows from elementary calculations that the functions $f(x) = \exp(ax)$ are in $\calS^{(\infty)}((-\infty, 0])$ for all $a \geq 0$. For the other functions, this follows from \Cref{lemma:act_assumptions}.
		\item This follows directly from (a). \qedhere
	\end{enumerate}
\end{proof}

\section{SMOOTHNESS OF GP SAMPLE PATHS} \label{sec:appendix:path_smoothness}

To prove \Cref{theorem:path_smoothness}, we need the following result, which is a variant of Thm.\ 7.4 of \citet{Milan_2001}:

\begin{theorem}[Theorem 1.2 in \citealt{steinwart_gppaths_24}]\label{thm:steinwart_gp_path_variant}
		Let $H_1,H_2$ be Hilbert spaces on a set $T$ and let $X$ be a centered Gaussian process with covariance kernel $k$ such that $H_1$ is the RKHS of $k$.
		Then the following statements hold true:
		\begin{enumerate}[a)]
			\item If $H_1 \hookrightarrow H_2$ and furthermore this embedding is a Hilbert-Schmidt operator, then there exists a version $Y$ of $X$ with $\bbP(Y \in H_2 )= 1$.
			\item Otherwise, for all versions $Y$ of $X$ we have $\bbP(Y\in H_2 )=0$.
		\end{enumerate}
\end{theorem}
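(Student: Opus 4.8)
The plan is to prove both implications via the canonical $L^2$-expansion of $X$ along an orthonormal basis of its RKHS, so that everything reduces to elementary Hilbert--Schmidt/trace-class bookkeeping plus two standard facts about Gaussian measures on Hilbert spaces. Throughout I use that $H_2$ is separable and that its point evaluations $\delta_t$ are continuous (i.e.\ $H_2$ is a separable reproducing kernel Hilbert space on $T$) --- the setting in which the result is applied, e.g.\ to $H^s(\bbS^d)$ with $s>d/2$. Write $K_2(\cdot,t)\in H_2$ for the Riesz representative of $\delta_t$ on $H_2$ and, when defined, $\iota\colon H_1\to H_2$ for the inclusion. Fix an orthonormal basis $(e_i)_{i\in I}$ of $H_1=\calH_k$ and let $\Psi\colon\calH_k\to L^2(\Omega)$ be the canonical isometry onto the Gaussian Hilbert space generated by $X$, determined by $\Psi(k(\cdot,t))=X_t$; since $\Psi$ is an isometry, $\xi_i\equalDef\Psi(e_i)$ are jointly Gaussian, uncorrelated and standard, hence i.i.d.\ $\calN(0,1)$, and applying $\Psi$ to the expansion $k(\cdot,t)=\sum_i e_i(t)e_i$ (reproducing property) gives $X_t=\sum_i\xi_i e_i(t)$ with convergence in $L^2(\Omega)$.

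For (a), assume $\iota$ is Hilbert--Schmidt, i.e.\ $\sum_i\|\iota e_i\|_{H_2}^2<\infty$. Then the partial sums $Y_n\equalDef\sum_{i\le n}\xi_i\,\iota(e_i)$ satisfy $\mathbb E\|Y_m-Y_n\|_{H_2}^2=\sum_{n<i\le m}\|\iota e_i\|_{H_2}^2$, so $(Y_n)$ is Cauchy in $L^2(\Omega;H_2)$ and converges to an $H_2$-valued random variable $Y$; in particular $\mathbb P(Y\in H_2)=1$. Passing to an a.s.-convergent subsequence and testing against $K_2(\cdot,t)$ yields $Y(t)=\langle Y,K_2(\cdot,t)\rangle_{H_2}=\sum_i\xi_i e_i(t)=X_t$ almost surely for each $t$, so $Y$ is a version of $X$.

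For (b), suppose some version $Y$ of $X$ has $\mathbb P(Y\in H_2)>0$; I will show $\iota$ is then necessarily defined and Hilbert--Schmidt, which is the required contrapositive. Since $H_2$ is (under the standing separability assumption) a measurable linear subspace of the path space, the zero--one law for Gaussian measures forces $\mathbb P(Y\in H_2)=1$. Consequently $g\mapsto\langle Y,g\rangle_{H_2}$ is Gaussian on the dense span of $\{K_2(\cdot,t):t\in T\}$ --- where it equals $Y(t)=X_t$ --- hence Gaussian on all of $H_2$ by $L^2$-limits, so $\mathbb P_Y$ is a centered Gaussian (Radon) measure on the separable Hilbert space $H_2$. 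Now invoke two classical facts. First, by Fernique's theorem $\mathbb E\|Y\|_{H_2}^2<\infty$, so the covariance operator $C_Y\in\calL(H_2)$ of $\mathbb P_Y$ is trace class, whence the Cameron--Martin space $\mathbb H\equalDef C_Y^{1/2}(H_2)$ embeds into $H_2$ as a Hilbert--Schmidt operator (the embedding is $C_Y^{1/2}$ composed with an isometry). Second, the Cameron--Martin space of a Gaussian process measure, regarded as a space of functions on $T$, coincides isometrically with the RKHS of its covariance kernel; here that kernel is $k$, so $\mathbb H=\calH_k=H_1$ with equal norms. Combining, $H_1\subseteq H_2$ and $\iota\colon H_1\hookrightarrow H_2$ is Hilbert--Schmidt, contradicting the hypothesis of case (b). Hence under that hypothesis every version $Y$ satisfies $\mathbb P(Y\in H_2)=0$.

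\textbf{Main obstacle.} The forward direction and the bookkeeping with $\Psi$ are formal. The substance is in (b): invoking the Gaussian zero--one law (together with the routine measurability of $H_2$ inside the path space) to upgrade positive probability to full probability, and then the chain ``$\mathbb P_Y$ Gaussian on the separable Hilbert space $H_2$ $\Rightarrow$ (Fernique) finite second moment $\Rightarrow$ trace-class covariance $\Rightarrow$ Hilbert--Schmidt Cameron--Martin embedding $=$ (RKHS identification) $H_1\hookrightarrow H_2$''. These inputs are standard but are the genuine content; once they are in place the equivalence is immediate.
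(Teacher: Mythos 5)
This statement is not proven in the paper at all: it is imported verbatim as Theorem~1.2 of \citealt{steinwart_gppaths_24} (itself a variant of Theorem~7.4 in \citealt{Milan_2001}), so there is no internal argument to compare yours against. What you have written is essentially a reconstruction of the standard proof underlying those references. Part~(a) is correct and complete under your standing assumptions: the isometry $\Psi$, the expansion $X_t=\sum_i \xi_i e_i(t)$, the $L^2(\Omega;H_2)$-Cauchy estimate from the Hilbert--Schmidt hypothesis, and the identification of the limit as a version via the representers $K_2(\cdot,t)$ are all sound. Part~(b) is also structured correctly (zero--one law to upgrade positive to full probability, then Fernique, trace-class covariance, Hilbert--Schmidt Cameron--Martin embedding, and the identification of the Cameron--Martin space of $\bbP_Y$ with $\calH_k=H_1$).

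Two caveats are worth flagging. First, you prove the result under additional hypotheses that the quoted statement does not contain: $H_2$ separable with continuous point evaluations (so that $K_2(\cdot,t)$ exists and $\{K_2(\cdot,t)\}$ is total). The cited theorem is stated, and proved in \citealt{steinwart_gppaths_24}, in greater generality and with care about exactly these structural assumptions; your version suffices for the paper's application, where $H_2=H^{d/2+\alpha-\eps}(\bbS^d)$ is a separable RKHS, but it is not the full strength of the quoted result. Second, the step you label ``routine measurability'' --- that $\{f: f\in H_2\}$ is a measurable linear subspace of the path space with respect to the cylinder $\sigma$-algebra, so that the Gaussian zero--one law applies and $\bbP(Y\in H_2)$ is even well defined for an arbitrary version $Y$ --- is precisely one of the nontrivial points that Luki\'c--Beder and Steinwart spend effort on (via countable determination of the $H_2$-norm from point evaluations on a suitable countable set). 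Invoking it without proof is acceptable as a citation of standard theory, but it is part of the genuine content of the theorem rather than bookkeeping, and your argument would be incomplete if those references were not available to lean on.
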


Now, we can prove \cref{theorem:path_smoothness}:

\thmPathSmoothness*

\begin{proof}
	We will apply Theorem \ref{thm:steinwart_gp_path_variant} with 
	\begin{align*}
		H_1 := H^{d+\alpha}(\bbS^d)\, , \qquad \mbox{ and } \qquad H_2 := H^{d/2+\alpha+ \eps}(\bbS^d)\, ,
	\end{align*}
	where 
	$\eps \in (-\alpha, \infty)$. 
	
	Here we first note that in the case
	$\eps> d/2$ we have 
	$ H^{d+\alpha}(\bbS^d)\not\hookrightarrow H^{d/2+\alpha+\eps}(\bbS^d)$ and hence \emph{ii)} of 
	Theorem \ref{thm:steinwart_gp_path_variant} gives \emph{i)} of Theorem \ref{theorem:path_smoothness} for such $\eps$.
	
	It thus remains to consider $\eps\in (-\alpha, d/2)$. Since in this case the embedding 
	$ H^{d+\alpha}(\bbS^d)\hookrightarrow H^{d/2+\alpha+\eps}(\bbS^d)$ exists,  
	Theorem \ref{thm:steinwart_gp_path_variant} shows that we need to  investigate whether 
	this embedding   is a Hilbert-Schmidt operator.	To this end we apply \cite{brauchart_characterization_2013} where it is shown 
	that the
	Sobolev space $H^r(\bbS^d)$ is given by
	\begin{align*}
		H^r(\bbS^d) &\equalDef \{ f\in L^2(\bbS^d) \mid \norm{f}_{H^r(\bbS^d)} <\infty\},\\
		\norm f_{H^r(\bbS^d)} &\equalDef \sum_{l=0}^\infty \sum_{i=1}^{N_{l,d}} \lambda_{l,r} \hat f_{l,i}
	\end{align*}
	for a sequence 
	$\lambda_{l,r} = \Theta_{\forall l} (1+l)^{2r}$, where 
	$\hat f_{l,i}$ 
	denotes the Laplace-Fourier coefficients given by
	\begin{align*}
		\hat f_{l,i} \equalDef \int_{\bbS^d} f(x) Y_{l,i} (x) \d\mu_{\bbS^d} (x)~. 
	\end{align*}
	Furthermore $N_{l,d}=\Theta_{\forall l}((l+1)^{d-1})$ holds.
	Hence, $(\lambda_{l,r}^{-1/2} Y_{l,i})_{l\ge 0, 1\le i\le N_{l,d}}$ forms an ONB of $H^r(\bbS^d)$ that diagonalizes the embedding operator.
	The Hilbert-Schmidt norm of the embedding $H^{d+\alpha}(\bbS^d) \hookrightarrow H^{d/2+\alpha+\eps} (\bbS^d)$ is now straightforwardly given, cf. \citet[Chapter 11.3]{Birman_1987}, by
	\begin{align}
		\label{MDS_eq:hs_norm_embedding}
		\norm {H^{d+\alpha}(\bbS^d) \hookrightarrow H^{d/2+\alpha+\eps} (\bbS^d)}_{\text{HS}}
		=\sum_{l=0}^\infty \sum_{i=1}^{N_{l,d}} \frac{\lambda_{l,d/2+\alpha+\eps}}{\lambda_{l,d+\alpha}}~.
	\end{align}
	The asymptotics of $\lambda_{l,s}$ and $N_{l,d}$ 
	yield
	\begin{align*}
		\sum_{i=1}^{N_{l,d}} \frac{\lambda_{l,d/2+\alpha+\eps}}{\lambda_{l,d+\alpha}} = \Theta_{\forall l} (l+1)^{d-1} \frac{\Theta_{\forall l}(l+1)^{d+2\alpha+2\eps}}{\Theta_{\forall l}(l+1)^{2d +2\alpha}} = \Theta_{\forall l} (l+1)^{2\eps-1}
	\end{align*} 
	and consequently, we see that \eqref{MDS_eq:hs_norm_embedding} is finite if and only if $\eps<0$. This shows \emph{ii)} as well as \emph{i)} 
	in the remaining case $0\le \eps\leq  d/2$.
	\qedhere	
\end{proof}

\section{ACTIVATION QUADRATURE} \label{sec:appendix:activation_quadrature}

Here, we derive a way to numerically approximate the dual activation for activations $f$ that are smooth everywhere except possibly at zero. The dual activation involve an integral with the term $f(x)f(y)$, and in order to use Gauss-Legendre quadrature, we need to decompose the integration domain into regions where $f(x)f(y)$ is smooth. Our construction is based on separate quadratures for the four quadrants $x, y \geq 0$, $x, -y \geq 0$, $-x, y \geq 0$, and $-x, -y \geq 0$.

Let $f: \bbR \to \bbR$ and let $\rho \in [-1, 1]$. Moreover, let $X, Y \sim \calN(0, 1)$ be independent and let $\sigma_x \equalDef \sqrt{\frac{1+\rho}{2}}, \sigma_y \equalDef \sqrt{\frac{1-\rho}{2}}$. Define the (centered) random variables
\begin{IEEEeqnarray*}{+rCl+x*}
U & \equalDef & \sigma_x X + \sigma_y Y \\
V & \equalDef & \sigma_x X - \sigma_y Y
\end{IEEEeqnarray*}
Then, 
\begin{IEEEeqnarray*}{+rCl+x*}
\bbE[U^2] & = & \sigma_x^2 + \sigma_y^2 = 1 \\
\bbE[UV] & = & \sigma_x^2 - \sigma_y^2 = \rho \\
\bbE[V^2] & = & \sigma_x^2 + \sigma_y^2 = 1~.
\end{IEEEeqnarray*}
Hence, we have
\begin{IEEEeqnarray*}{+rCl+x*}
\begin{pmatrix}
U \\ V
\end{pmatrix} \sim \calN\left(\begin{pmatrix}
0 \\ 0
\end{pmatrix}, \begin{pmatrix}
1 & \rho \\
\rho & 1
\end{pmatrix}\right)~.
\end{IEEEeqnarray*}
We then want to compute
\begin{IEEEeqnarray*}{+rCl+x*}
\hat f(\rho) = \bbE[f(U, V)] & = & \bbE[f(\sigma_x X + \sigma_y Y, \sigma_x X - \sigma_y Y)]~.
\end{IEEEeqnarray*}
We approximate the integral over a quadrant using the standard normal pdf $\phi$ as
\begin{IEEEeqnarray*}{+rCl+x*}
g(\rho) & \equalDef & \int_{(0, \infty)^2} f(u, v) \diff P_{U, V}(u, v) \\
& = & \int_0^\infty \int_{-\frac{\sigma_x}{\sigma_y} x}^{\frac{\sigma_x}{\sigma_y} x} f(\sigma_x x + \sigma_y y, \sigma_x x - \sigma_y y) \phi(x) \phi(y) \diff y \diff x \\
& \approx & \int_0^{c\sigma_y} \int_{-\frac{\sigma_x}{\sigma_y} x}^{\frac{\sigma_x}{\sigma_y} x} f(\sigma_x x + \sigma_y y, \sigma_x x - \sigma_y y) \phi(x) \phi(y) \diff y \diff x \\
&& ~+~ \int_{c\sigma_y}^{c(\sigma_y+1)} \int_{-c\sigma_x}^{c\sigma_x} f(\sigma_x x + \sigma_y y, \sigma_x x - \sigma_y y) \phi(x) \phi(y) \diff y \diff x \\
& \defEqual & A + B
\end{IEEEeqnarray*}
The restricted integration domains in the second integral $B$ should not cause a large error since $\phi(x)\phi(y)$ is small for $x \geq c(\sigma_y+1) \geq c$ and also for $x \geq c\sigma_y, |y| \geq c\sigma_x$ since either $\sigma_y$ or $\sigma_x$ are $\geq \frac{1}{\sqrt{2}}$. We can now further simplify $A$ using the substitutions $x \defEqual c\sigma_y x'$, $y \defEqual \frac{\sigma_x}{\sigma_y} x y' = c\sigma_x x' y'$:
\begin{IEEEeqnarray*}{+rCl+x*}
A & = & \int_0^1 \int_{-1}^1 c\sigma_y \cdot c\sigma_x x' \cdot f(c\sigma_x \sigma_y x' (1 + y'), c\sigma_x \sigma_y x' (1 - y')) \phi(c\sigma_y x') \phi(c\sigma_x x' y') \diff y' \diff x'~.
\end{IEEEeqnarray*}
Now, we simplify $B$ using the substitutions $x \defEqual c\sigma_y + cx''$ and $y \defEqual c\sigma_x y''$:
\begin{IEEEeqnarray*}{+rCl+x*}
B & = & \int_0^1 \int_{-1}^1 c \cdot c\sigma_x \cdot f(c\sigma_x\sigma_y(1+y'')+c\sigma_x x'', c\sigma_x\sigma_y(1-y'')+c\sigma_x x'') \\
&& \qquad \qquad ~\cdot~\phi(c\sigma_y + cx'') \phi(c\sigma_x y'') \diff y'' \diff x''~.
\end{IEEEeqnarray*}
Both integrals can be approximated using Gauss-Legendre quadrature along both axes.

For the plots in \Cref{fig:deep_spectra} and \Cref{fig:shallow_spectra}, we use $c=12$ and a $50 \times 50$ grid of Gauss-Legendre points for each 2D integral. To compute the eigenvalues from the NTK, we use the Gegenbauer quadrature implementation from \citet{bordelon2020spectrum} with 1000 quadrature points. We provide code for reproducing the figures at \url{https://github.com/dholzmueller/beyond_relu}.

\end{appendices}

\end{document}